\documentclass[sn-mathphys-num]{sn-jnl}% Math and Physical Sciences Numbered Reference Style 
\usepackage{graphicx}%
\usepackage{multirow}%
\usepackage{amsmath,amssymb,amsfonts}%
\usepackage{amsthm}%
\usepackage{mathrsfs}%
\usepackage[title]{appendix}%
\usepackage{xcolor}%
\usepackage{textcomp}%
\usepackage{manyfoot}%
\usepackage{booktabs}%
\usepackage{algorithm}%
\usepackage{algorithmicx}%
\usepackage{algpseudocode}%
\usepackage{listings}%
\usepackage{booktabs}
\usepackage{graphicx}
\newtheorem{Definition}{Definition}

\usepackage{textgreek}
\usepackage{romannum}
\usepackage{tabularx}
\usepackage{longtable}
\usepackage{graphicx}
\usepackage{bm}
\usepackage{subcaption}

\usepackage{microtype}
\usepackage{graphicx}
\usepackage{booktabs} % for professional tables
\usepackage{adjustbox}

\usepackage{hyperref}

\usepackage{amsmath}
\usepackage{amssymb}
\usepackage{mathtools}
\usepackage{amsthm}
\usepackage{multirow}

\usepackage{tabularx} % 请在导言区添加此包
\usepackage{ragged2e} % 提供更好的对齐控制

\usepackage{siunitx}  % 必选：对齐小数点并控制精度
\theoremstyle{thmstyleone}%
\newtheorem{theorem}{Theorem}%  meant for continuous numbers
\newtheorem{proposition}[theorem]{Proposition}% 

\theoremstyle{remark}
\theoremstyle{thmstyletwo}%
\newtheorem{example}{Example}%
\newtheorem{remark}{Remark}%
\newtheorem{lemma}{Lemma}
\newtheorem{assumption}{Assumption}
\newtheorem{corollary}{Corollary}
\theoremstyle{thmstylethree}%

\begin{document}
	
	\pagenumbering{arabic}
	\setcounter{page}{1}
	\pagestyle{plain}

%\title[A Limit Theory of Foundation Models: \\
%A Perspective of Deciphering the Phenomenon of Intelligent Emergence and Scaling Laws]{A Limit Theory of Foundation Models: \\ A Perspective of Deciphering the Phenomenon of Intelligent Emergence and Scaling Laws}

\title[A Limit Theory of Foundation Models: \\
A Mathematical Approach to Understanding Emergent Intelligence and Scaling Laws]{A Limit Theory of Foundation Models: \\ A Mathematical Approach to Understanding Emergent Intelligence and Scaling Laws}

%%=============================================================%%
%% GivenName	-> \fnm{Joergen W.}
%% Particle	-> \spfx{van der} -> surname prefix
%% FamilyName	-> \sur{Ploeg}
%% Suffix	-> \sfx{IV}
%% \author*[1,2]{\fnm{Joergen W.} \spfx{van der} \sur{Ploeg} 
%%  \sfx{IV}}\email{iauthor@gmail.com}
%%=============================================================%%

\author[1]{\fnm{Jun} \sur{Shu}}\email{junshu@mail.xjtu.edu.com}

\author[1]{\fnm{Junxiong } \sur{Jia}}\email{ jjx323@mail.xjtu.edu.cn}
%\equalcont{These authors contributed equally to this work.}

\author[1]{\fnm{Deyu} \sur{Meng}}\email{dymeng@mail.xjtu.edu.cn}

\author[1]{\fnm{Zongben} \sur{Xu}*}\email{zbxu@mail.xjtu.edu.cn}
\equalcont{Corresponding author.}

\affil[1]{\orgdiv{School of Mathematics and Statistics}, \orgname{Xi'an Jiaotong University}}
% \orgaddress{\city{Xi'an}, \postcode{910049}, \state{Shaanxi Province}, \country{China}}}

%\affil[2]{ \orgname{Pazhou Lab (Huangpu)}}
	 %\orgaddress{\city{Guangzho}, \postcode{510700}, \state{ Guangdong Province}, \country{China}}}

%\affil[3]{\orgdiv{Pazhou Lab (Huangpu),}, \orgname{Organization}, \orgaddress{\street{Street}, \city{City}, \postcode{610101}, \state{State}, \country{Country}}}

%%==================================%%
%% Sample for unstructured abstract %%
%%==================================%%

\abstract{
	%Emergent intelligence has played a major role in the modern AI development, manifesting as previously unobserved capabilities enabled by scaling model parameters. While existing studies primarily rely on empirical observations to characterize this phenomenon, a rigorous theoretical framework remains underexplored.
%Our key observation is that intelligence arises as an emergent phenomenon characterized by a transition from finite knowledge to effectively infinite knowledge. This perspective naturally motivates the possibility of utilizing limit tool to formalize intelligent emergence, as that finding irrational numbers in mathematics through the limit of rational numbers. 
%Specifically, we introduce a performance function \(\mathcal{E}(N, P, K)\), dependent on data size $N$, model size $P$ and training steps $K$, to quantify intelligence behavior. Then existence of the limit \(\lim_{N,P,K \to \infty} \mathcal{E}(N,P,K)\) characterizes the conditions for intelligent emergence of foudation models, as well as the stability and practical utility of the underlying architecture. Once the limit exists, the parameter-limit architecture (referred to as the limit architecture) embodies the emergent intelligence, which rationally corresponds to the learning behavior of this limit system. Moreover, we show that the scaling laws of foundation models are approximately governed by the convergence rate of \(\mathcal{E}(N, P, K)\) to its asymptotic value $\mathcal{E}(\infty,\infty,\infty)$.

Emergent intelligence has played a major role in the modern AI, manifesting as previously unobserved capabilities enabled by scaling model parameters. While existing studies primarily rely on empirical observations to characterize these phenomena, a rigorous theoretical framework remains underexplored.
This study attempts to develop a mathematical approach to formulate emergent intergence from the perspective of limit theory. Specifically, we introduce a performance function \(\mathcal{E}(N, P, K)\), dependent on data size $N$, model size $P$ and training steps $K$, to quantify intelligence behavior. We posit that intellgence emerges as a transition from finite to effectively infinite knowledge, and thus the emergent intelligence occurs only when the limit \(\mathcal{E}(\infty,\infty,\infty) = \lim_{N,P,K \to \infty} \mathcal{E}(N,P,K) \) exists, and whenever it exists, the emergent abilities corresponds to the limiting behavior. We define a parameter-limit infinite-dimensional system, called as the limit architecture, to embody the emergent abilities of foundation models, and we show that emergent intelligence rationally corresponds to the learning behaivor of this limit architecture.
By employing the tools from nonlinear Lipschitz operator theory, particularly the characteristic number $\mathrm{Lip}(T)$ of Lipschitz operator, the Lipschitz dual operator theory and the spectral property of compact operators, we prove that the necessary and sufficient conditions for existence of the limit architecture are (\romannumeral1) $\mathrm{Lip}(T_i) \leq 1$ for all basic functional blocks $T_i$, $i > K_0$ for some integer $K_0$; and (\romannumeral2) there exists a generalized projection operator $T$ such that $\|T_i - T\| \leq \epsilon_i$, where $\sum_{i=1}^\infty \epsilon_i < \infty$. Furthermore, we cast the scaling laws of foundation models as the convergence rates of \(\mathcal{E}(N, P, K)\) to its asymptotic value $\mathcal{E}(\infty,\infty,\infty)$. Under general conditions, we establish the scaling laws of foundation models by leveraging the Lipschitz operator theory and empirical process theory.

%are approximately governed by the convergence rate of \(\mathcal{E}(N, P, K)\) to its asymptotic value $\mathcal{E}(\infty,\infty,\infty)$.

%Our key observation is that intelligence arises as an emergent phenomenon characterized by a transition from finite knowledge to effectively infinite knowledge. This perspective naturally motivates the possibility of utilizing limit tool to formalize intelligent emergence, as that finding irrational numbers in mathematics through the limit of rational numbers. 
%Then existence of the limit \(\lim_{N,P,K \to \infty} \mathcal{E}(N,P,K)\) characterizes the conditions for intelligent emergence of foudation models, as well as the stability and practical utility of the underlying architecture. Once the limit exists, the parameter-limit architecture (referred to as the limit architecture) embodies the emergent intelligence, which rationally corresponds to the learning behavior of this limit system. Moreover, we show that 
%

%This limit theory helps reveal that emergent. Furthermore, we derive the scaling law of foundation models by leveraging the tools of Lipschitz operator and covering number.
\qquad   
The established theories show that: (\romannumeral1) the emergent intelligence of foundation models appears subject to the existence of the limit architecture, and it is governed by three key factors-the number of training steps, the size of the training data, and the model size, all of which are in turn crucially determined by properties of the constituting functional blocks $\{T_i\}$; (\romannumeral2) the emergent intelligence occurs only at the critical case when $\mathrm{Lip}(T)=1$. 
This is a finding consistent with those empirically observed by Simon Vock and Christian Meisel \cite{vock2025critical} and M{\"u}ller et al. \cite{muller2025critical}; (\romannumeral3) the emergent intelligence is essentially determined by an infinite-dimensional system, yet can be effectively approximated in practice through a finite-dimensional architecture, offering a solid theoretical evidence for the practical applicability of foundation models in general, and the universal weight subspace hypothesis proposed by Kaushik et al. \cite{kaushik2025universal} in particular. 
(\romannumeral4) The established scaling laws shed some lights on the question posed by Simon et al.~\cite{simon2026there}: 
``can we develop a theory of scaling laws that both explains why power laws arise and predicts their exponents a priori''.
We apply the established theories to the foundation models with GPT-like architectures, showing that the GPT-2-like foundation models do exhibit intelligent emergence, whereas GPT-1-like models do not, and for GPT-2 like models, their scaling law with respect to model size follows a power-law.
We provide a series of experiments to support the rationality and correctness of the established theories.
%Our empirical results all corroborate these theoretical findings.

%By introducing tools from nonlinear Lipschitz operator theory, we establish necessary and sufficient conditions for the existence of the limit architecture, and demonstrate that an infinite-dimensional limit architecture can be effectively approximated by finite composition of stacked basic functional blocks (a finite-dimensional architecture). Empirical results further corroborate these theoretical findings.

 }

\maketitle

\section{Introduction}\label{sec1}

Foundation models~\cite{bommasani2021opportunities} have revolutionized the AI community across a wide range of domains, including natural language processing~\cite{brown2020language,achiam2023gpt,touvron2023llama,team2023gemini,chowdhery2023palm}, computer vision~\cite{ramesh2021zero,chen2020generative,kirillov2023segment,liu2025grounding}, multimodal learning~\cite{wang2024visionllm,liu2024visual,team2023gemini,brooks2024video,radford2021learning}, graph learning~\cite{hou2022graphmae,tian2023heterogeneous}, and programming languages~\cite{moura2021lean}. These models demonstrate remarkable capabilities in understanding human instructions, exhibiting knowledge and logical reasoning, and even aligning with human values, thereby paving a promising path toward general artificial intelligence~\cite{bubeck2023sparks}.
The crux behind these successes lies the phenomenon of emergent intelligence in foundation models~\cite{wei2022emergent,srivastava2023beyond,ganguli2022predictability}. The concept of emergence has its origins in physics and biology~\cite{anderson1972more,hwang2012emergent,johnson2002emergence}. A central insight is captured by the principle of ``More Is Different'' articulated by Nobel laureate Philip Anderson~\cite{anderson1972more}. This idea suggests that ``quantitative changes in a system can lead to qualitative changes in behavior'' adapted from \cite{steinhardt2022future}.

Recently, Wei et al. \cite{wei2022emergent} provided a more clear definition of emergent ability in large language models (LLMs) as: ``abilities that are not present in smaller-scale models but are present in large-scale models; thus they cannot be predicted by
simply extrapolating the performance improvements on smaller-scale models''. Such emergent abilities were first observed in the GPT-3 family \cite{brown2020language}, and further motivated a costly effort to train ever larger models on ever larger datasets, in the hope of unlocking new capabilities.
It is now widely recognized that scaling up foundation models (e.g., data size, model size, and training steps) can consistently lead to improved performance and sample efficiency across a wide range of downstream tasks. Empirically, the generalization loss of foundation models has been observed to follow a predictable power-law relationship with respect to data size, model size and training steps, a property commonly referred to as the neural scaling law~\cite{kaplan2020scaling}.
In practice, to reduce the cost of discovering effective training strategies, researchers typically validate ideas through small-scale experiments and then extrapolate their performance to larger regimes using scaling laws. By leveraging such reliable extrapolation, it becomes possible to iterate efficiently at smaller scales while identifying the approaches most likely to perform optimally in full-scale training. As a result, neural scaling laws have become a standard principle in the development of state-of-the-art foundation models, including Chinchilla \cite{hoffmann2022training}, PaLM \cite{chowdhery2023palm}, GPT-4~\cite{achiam2023gpt}, LLaMA
\cite{touvron2023llama}, Claude \cite{anthropic2024claude}, Gemini \cite{team2024gemini}, Qwen \cite{bai2023qwen}, DeepSeek \cite{guo2025deepseek},
and many others~\cite{zhai2022scaling,bachmann2024scaling,muennighoff2023scaling,besiroglu2024chinchilla,sorscher2022beyond,sardanabeyond2024,ruan2024observational,gadre2024language,du2024understanding}.

Although emergent abilities and neural scaling laws have been empirically validated across a wide range of foundation models, theoretical understanding of these phenomena remains very limited. In this paper, we attempt to propose a rigorous mathematical framework to analyze emergent intelligence and scaling laws.
Our central observation is that intelligence arises as an emergent phenomenon characterized by a transition from finite knowledge to effectively infinite knowledge. This perspective naturally motivates the possibility of using limit theory to formalize emergent intelligence, as that finding irrational numbers in mathematics through limit of rational numbers.
Specifically, we introduce a three-dimensional function $\mathcal{E}(N, P, K)$ to quantify the intelligent behavior (i.e., generalization capability) of foundation models, where \(N\) represents the data size (i.e., the number of data points for training foundation models), \(P\) represents the model size (i.e., the number of model parameters), and \(K\) represents the training steps of optimization algorithms (closely related to the amount of computation).
To understand emergent intelligence, it is essential to study the asymptotic behavior of $\mathcal{E}(N, P, K)$ in the limit as $N \to \infty$, $P \to \infty$, and $K \to \infty$, corresponding to the regime of infinite knowledge, infinite architecture and infinite training steps.
Under this formulation, the emergent intelligence can then be rationally characterized by the value and existence of the limit
\(
\lim_{N,P,K \to \infty} \mathcal{E}(N,P,K).
\)
Furthermore, when this limit exists (denoted by $\mathcal{E}(\infty,\infty,\infty)$), the scaling laws of foundation models can then be explicitly evaluated by the rate at which $\mathcal{E}(N, P, K)$ converges to $\mathcal{E}(\infty,\infty,\infty)$.

%As a result, the scaling law of foundation models could be characterized by the limit behavior/speed of \(\mathcal{E}(N, P, K)\) when \(N \to \infty, P \to \infty, K \to \infty\), which could be further utilized to interpret intelligent emergent ability of foundation models. 

%To depict the foundation models with infinite parameters, we introduce an infinite-dimensional systems called ``limit architecture'', defined as the infinite composition of stacked basic functional blocks. Besides, the properties and behaviors of the limit architecture, as quantified by its generalization performance \(\varepsilon(\infty, \infty, \infty)\), elucidate the fundamental mechanisms underlying intelligent emergence in foundation models.
%We introduce a metric called Lip number $\text{Lip}(T)$ to characterize the existence of the limit architecture, and show the necessary and sufficient condition for its existence in foundation models. 
%Furthermore, we proposed a standard error decomposition of \(\varepsilon(N, P, K)\) - \(\varepsilon(\infty, \infty, \infty)\), dividing it into the weight error \(\varepsilon(N, P, K) -\varepsilon(N, P,\infty)\), architecture error \(\varepsilon(N, P, \infty) - \varepsilon(N, \infty, \infty)\), and the sample error \(\varepsilon(N, \infty, \infty) - \varepsilon(\infty, \infty, \infty)\). 
%We formalize the conditions under which these errors arise and establish corresponding upper bounds, which enable us to derive explicit scaling laws in terms of data, model, and computational budget.

To characterize foundation models with increasing-parameters, we introduce an infinite-dimensional learning system called as the \emph{limit architecture}, that is defined as the infinite composition of stacked basic functional blocks. The properties and behavior of this limit architecture, as quantified by its generalization performance $\varepsilon(\infty, \infty, \infty)$, provide fundamental insights into the mechanisms underlying emergent intelligence in foundation models.
We introduce the Lip number, $\mathrm{Lip}(T)$, as a key metric to characterize the existence of the limit architecture, and establish the necessary and sufficient conditions under which such a limit exists.
Moreover, we propose a standard error decomposition of
\(
\varepsilon(N, P, K) - \varepsilon(\infty, \infty, \infty),
\)
which separates the total error into three components: the \emph{optimization (weight) error}
\(
\varepsilon(N, P, K) - \varepsilon(N, P, \infty),
\)
the \emph{architecture error}
\(
\varepsilon(N, P, \infty) - \varepsilon(N, \infty, \infty),
\)
and the \emph{statistical (sample) error}
\(
\varepsilon(N, \infty, \infty) - \varepsilon(\infty, \infty, \infty).
\)
We further formalize the conditions under which these error components arise and the corresponding upper bounds can be estimated, which enable us to derive explicit scaling laws with respect to data size, model size, and training steps, respectively.

In summary, our theoretical findings are mainly four-fold:

\begin{itemize}
	\item \textbf{The criteria for existence of emergent intelligence.}
	We show that the existence of emergent intelligence critically depends on the properties of the basic functional blocks. Specifically, let $\{T_i\}, i=1,2,\cdots,$ denote the sequence of basic functional blocks in a foundation model. These blocks satisfy the self-mapping and quasi-asymptotic regularity conditions, and there exists a constant $K_0$ such that $\mathrm{Lip}(T_i) \leq 1$ for all $i > K_0$. Moreover, the sequence $\{T_i\}$ converges to a generalized projection operator $T$, i.e., $T_i \to T$, with $\|T_i - T\| \in \ell_1$, which corresponds to a \textit{condensing property}. We then prove that these conditions are necessary and sufficient for the existence of emergent intelligence, and under the conditions,
	the limit architecture of a foundation model can be well approximated by a finite composition of stacked basic functional blocks. The established theory supports positively the universal weight subspace hypothesis proposed by Kaushik et al. \cite{kaushik2025universal}, and the discretization hypothesis proposed by Simon et al. \cite{simon2026there}.
	
	\item \textbf{The existence of scaling laws.} %We use the standard error decomposition tool to analyze the discrepancy between $\mathcal{E}(N,P,K)$ and $\mathcal{E}(\infty,\infty,\infty)$, and illustrate that the scaling laws of training steps and model size follow exponential law, and scaling law of data size follows a power law. %Furthermore, when the basic functional blocks possess favorable properties, the scaling law may transition from a power-law regime to an exponential rate.	
	We employ the standard error decomposition to analyze the discrepancy between $\mathcal{E}(N,P,K)$ and $\mathcal{E}(\infty,\infty,\infty)$, and provide upper bound estimations for the weight error, architecture error and sample error respectively. The obtained bounds reveal that the scaling law of training steps follows an exponential law, and scaling laws of model and data size follow a power law, consistent with those experimentally observed by Kaplan et al.~\cite{kaplan2020scaling} and Hoffmann et al.~\cite{hoffmann2022training}, among others.
	 Additionally, the obtained estimations show that the exponents of scaling laws are essentially determined by the condition number of loss function, the characteristic number $\mathrm{Lip}(T)$ of basic functional blocks and the metric entropy of the hypothesis class. This findings partially answer the question of how to predict the exponents of scaling laws a priori, as recently raised by Simon et al.~\cite{simon2026there}.

	\item \textbf{Theoretical analysis on GPT-like architectures.} We apply the established theories to the models with GPT-like architectures, showing that GPT-2-like models yield a stable architecture, namely, their limit architecture exists, whereas GPT-1-like models do not. We further show that GPT-2-like models do exhibit intelligent emergence, and their scaling law of model size follows a power law.

	\item \textbf{Empirical validation.} We present a series of experiments to justify the correctness and rationality of developed theories. More specifically,  
	we provide experiments to demonstrate that the Lip number does effectively characterizes the properties of basic functional blocks and can serve as a useful criterion for evaluating the usability of network architectures, i.e., yielding a stable architecture. Empirical analysis on state-of-the-art open-source foundation models  supports also the condensing property as a core condition for the existence of emergent intelligence.
\end{itemize}

The rest of the paper is organized as follows. Related work is reviewed in Section~\ref{eqs}. The formulation of emergent intelligence and scaling laws from a limit-theoretic perspective are presented in Section~\ref{sec2}. Section~\ref{sec3} introduces the main theoretical tools, including the standard error decomposition framework, the limit architecture, and the nonlinear Lipschitz operators theory. The necessary and sufficient conditions for existence of the limit architecture are established in Section~\ref{limitsa}. Sections~\ref{sec4} and~\ref{sec5} present the existence conditions for emergent intelligence and the corresponding scaling laws. Experimental validations of the theoretical results are provided in Sections~\ref{es} and~\ref{condensing}. We finally conclude the research with some useful remarks.

\section{Related Work} \label{eqs}

%\subsection{Multidisciplinary Insights for Intelligent Emergence and Inapplicability for Foundation Models}

\subsection{Emergence Ability in Foundation Models.}
Emergence as an idea has long been discussed in domains such as system science, physics and biology \cite{anderson1972more,hwang2012emergent,johnson2002emergence}.
For example, Anderson \cite{anderson1972more} states that ``emergent abilities for complex systems are those properties that (\romannumeral1) emerge at each level of complexity and (\romannumeral2) cannot be understood simply by analyzing the single components `behavior' ''. Hopfield \cite{hopfield1982neural} studies emergent properties in simple-structured neural networks with neurons having elementary properties, and states that ``computational properties of use to biological organisms or the construction of
computers can emerge as collective properties of systems having a large number of simple equivalent components (or neurons)''. For foundation models, Wei et al., \cite{wei2022emergent} add two important concepts based on previous definitions, including the unpredictability and the magnitude of the performance increase, and state that ``An ability is emergent if it is not present in smaller models but is present in larger models. Emergent abilities would not have been directly predicted by
extrapolating a scaling law from small-scale models. When visualized via a scaling curve, emergent abilities show a clear pattern-performance that is near-random until a
certain critical threshold of scale is reached, after which performance increases to substantially above random''. This definition would be the first LLM-specific definition and has been most used in the current academic literatures \cite{ganguli2022predictability,schaeffer2023emergent,lu2024emergent}. 
Recent work \cite{schaeffer2023emergent} shows that the seemingly sharp and unpredictable emergence of capabilities is largely driven by the use of nonlinear evaluation metrics, such as accuracy. In contrast, when performance is assessed using linear metrics (e.g., token edit distance), model improvements tend to be smooth and predictable.

Two different approaches have been adopted to analyze emergent intelligence. In complex system science \cite{holland1998emergence}, specific dynamic models are used to characterize the behavior of emergence. Scaling is often applied in complex systems to demonstrate how ``more is different'', and emergent abilities could be understood through an analogy to phase transitions, e.g., \cite{o2015backbones,kempes2019scales,chen2023sudden,nakaishi2024critical}. These approaches often analyze emergent property with temporal variable, while foundation models primarily focus on variables like model size $P$, data size $D$ and training steps $T$. This renders such approached unsuitable for direct use in the present context. 
Besides, the ``broken symmetry" assumption in physics \cite{anderson1972more}, grounded in a specific ``material basis", also does not lend itself directly to modeling the emergent intelligence of foundation models.

For foundation models, some studies explore to use different tools to help understand emergent abilities, e.g., analyzing loss function and training dynamics \cite{du2024understanding}, quantization\cite{liu2024emergent}, task complexity \cite{wu2024u}, etc. Besides, emergent abilities are manifested through specific capabilities, such as in-context learning \cite{chan2022data,shi2024larger,raventos2023pretraining}, symbolic abstraction \cite{al2025emergence}, RL-enhanced reasoning \cite{guo2025deepseek,snell2024scaling,shao2024deepseekmath}, LLMs-powered agents \cite{huang2024understanding,zhao2024expel}, etc. 
All those approaches are predominantly empirical, with relatively little work on rigorous mathematical foundations. 
In this paper, we make an attempt to develop a preliminary mathematical framework for formalizing emergent intelligence from the perspective of limit theory.

\subsection{Scaling Law in Foundation Models}
Scaling laws have been proposed to analyze the behavior of deep learning across diverse domains and tasks. Investigations into the relationship between generalization error, training data size, and model capacity date back to before the era of deep learning \cite{caponnetto2007optimal}.
For deep neural networks, Hestness et al. \cite{hestness2017deep} observe that the performance of networks improves according to a power-law scaling behavior across a variety of domains, including machine translation, language modeling, image processing, and speech recognition. Importantly, these scaling relationships are shown to persist across model improvements. Recently, Kaplan et al. \cite{kaplan2020scaling} pushes the scale of these studies further into large language models, studying power laws for models up to 1.5B parameters trained on 23B tokens to determine the optimal allocation of a fixed computing budget.
Subsequent studies \cite{hoffmann2022training} revisit this issue and find that Kaplan scaling law significantly underestimates the amount of data required for optimal training, although substantial procedural differences make it difficult to pinpoint the source of this discrepancy. 
As a result, many subsequent LLMs \cite{muennighoff2023scaling,touvron2023llama} have been trained following the Chinchilla scaling laws \cite{hoffmann2022training}.
Besiroglu et al.~\cite{besiroglu2024chinchilla} revisit the methodology of Hoffmann et al.~\cite{hoffmann2022training} and report that the confidence intervals in the original study appear implausibly narrow.
In a related line of research, Porian et al.~\cite{porian2024resolving} examine discrepancies between the scaling laws proposed by Kaplan et al.~\cite{kaplan2020scaling} and those derived under the Chinchilla framework~\cite{hoffmann2022training}.

Since then, researchers have investigated various aspects of scaling in language models.
Hernandez et al.~\cite{hernandez2021scaling} analyze scaling laws governing transfer across distributions in fine-tuning settings.
Aghajanyan et al.~\cite{aghajanyan2023scaling} explore scaling behaviors in multimodal foundation models.
Poli et al.~\cite{poli2024mechanistic} extend scaling studies to hybrid architectures such as Mamba~\cite{gu2024mamba}, demonstrating the effectiveness of this emerging model family. Krajewski et al. \cite{krajewski2024scaling} characterize differences in scaling properties between dense transformers and mixture of expert models. 
Gadre et al. \cite{gadre2025language} show that the loss of over-trained models, trained past compute-optimality, is predictable, and propose a scaling law relating loss to average downstream task performance.
\cite{chen2025revisiting} revisits scaling laws by examining the impact of data quality and training strategies on model performance.
A key limitation of existing scaling laws is their disregard for inference costs, which dominate the long-term expenses of utilizing large models in real-world applications. Sardana et al. \cite{sardana2024beyond} modify the Chinchilla scaling laws to account for both the computational and real-world costs of inference. The growing adoption of LLMs in reasoning systems also highlights the need for scaling frameworks that explicitly account for inference costs \cite{snell2024scaling,brown2024large,luo2024improve,guan2025rstar}.

Recently, several theoretical studies of scaling laws are proposed. The scaling law for linear models with structured (nonisotropic) covariates, including infinite-width kernel regression, have been analyzed using tools from statistical physics and random matrix theory \cite{bordelon2020spectrum,canatar2021spectral,simon2023the,bahri2024explaining,hastie2022surprises,mei2022generalization}. For examples, Mei et al. \cite{mei2022generalization} analyze a linear model with random projections of isotropic covariates. Bahri et al. \cite{bahri2024explaining} consider a linear teacher-student model under a power-law spectrum assumption on the covariates, and they show that the test loss of the ordinary least square estimator decreases following a power law in data size when the model size is infinite. Furthermore, some researchers analyze (precise) generalization error of simple
closed-form estimators such as ridge regression \cite{cui2021generalization,maloney2022solvable,defilippis2024dimension,atanasov2024scaling}.

While they study the limiting effects of model size and data size, some works also study the dependence of training steps.
Bordelon et al. \cite{bordelon2024dynamical} consider a linear random feature model and analyze the test loss of the solution found by (batch) gradient flow. They
focus on the bottleneck regimes where two of the data size, model size and traning steps are infinite, and they show that the risk has a power-law decay in the remaining quantity. Lin et al. \cite{lin2024scaling} consider a linear predictor trained by one-pass SGD with geometrically decaying stepsizes, and they demonstrate that small variance error is due to the implicit regularization effect of SGD. High dimensional limits of SGD have been analyzed with Volterra integral equations in the offline case \cite{paquette2021sgd} or with recursive matrix equations in the online case \cite{varre2021last}. Based on this theoretical foundation, \cite{paquette20244+} uses phase-plane analysis to characterize scaling laws in the compute-limited, infinite-data regime. Li et al. \cite{li2025functional} establish a functional scaling law that captures the full loss trajectory under arbitrary learning rate schedules. Besides, existing scaling analyses for the additive setting \cite{michaud2023quantization,nam2024exactly} explicitly decompose the loss into an independent sum, simplifying the analysis due to task decoupling. 
On the other hand, some researches study the (online) SGD training dynamics and sample complexity of learning a two-layer neural network, e.g., \cite{renemergence,benarous2025learning}. Recently, Olsen et al. \cite{olsen2025sgd} developed a continuous-time SDE framework that connects SGD dynamics to spectral evolution of weight matrices, providing the theoretical explanation for the empirically observed ``bulk+tail'' spectral structure in trained networks. 
However, relatively little theory is known about how these three factors—model size, data size, and training steps—jointly determine scaling behavior in deep architectures. In this paper, we aim to address this gap.

%Arora and Goyal \cite{arora2023theory} derived a theory characterizing how LMs' complex skills can be derived as a composition of base skills.

%For example, in the field of natural language processing, foundation models drive the successes of applications such as ChatGPT, GPT-4 \cite{achiam2023gpt}, and Llama \cite{touvron2023llama}. Similarly, foundation models, have reported considerable success in , graph learning \cite{hou2022graphmae,tian2023heterogeneous}, programming language \cite{moura2021lean} applications, etc.

\section{Formulation and Modelling for Emergent Intelligence and Scaling Laws}\label{sec2}

%\subsection{Why we adopt the perspective of limit theory?}
%
%In this paper, we introduce the limit tools in mathematics to formulating intelligent emergence. In fact, the concept of limits has been a powerful tool for expanding the boundaries of understanding throughout the history of mathematics. For example, the irrational numbers could be represented by the limit of rational numbers, e.g., 
%\begin{align*}
%	e & = \lim_{n \to \infty}\sum_{i=0}^{n} \frac{1}{i!} = \lim_{x \to \infty} \left( 1 + \frac{1}{x} \right)^x; \quad
%	\pi = \lim_{n \to \infty} 4 \sum_{i=0}^{n} \frac{(-1)^i}{2i+1}; \\
%	\sqrt{2} & = \lim_{n \to \infty} x_n, \; x_n = 1 + \frac{1}{1 + x_{n-1}}, \; x_0 = 1.
%\end{align*}
%
%
%
%As it can be seen, the irrational numbers are the emergent behavior exhibited by the limits of rational numbers, and the propety of the irrational numbers could be characterized by the limits of rational numbers. 
%Actually, emergent behavior could be understood as those characteristics/properties that arise beyond what is known or expected. In this paper, we consider that emergence is the process of infinitely approaching the known or expected, yet never fully reaching it. Therefore, emergence represents the limit behavior of known states. Based on this understanding, we explore to formulate intelligent emergence and scaling laws from the perspective of limits.
%

%\subsection{Modelling of Intelligent Emergence and Scaling Laws}

\subsection{Prerequisites}
To obtain a well-trained foundation model, it often needs to pre-determine the network architecture and the size of neural network, collect massive data available for the training process, and then choose a proper optimization algorithm to train the model.
Formally, we can use a triplet $\mathcal{M} = \{f_W, \mathcal{D}, \mathcal{A}\}$ to represent a foundation model, where \(f_W, \mathcal{D}, \mathcal{A}\) denote the network architecture, training dataset and learning algorithm of the foundation models, respectively. Table \ref{ss} summarizes the main notations used in this paper.

\begin{table}[htb]
	\centering
	\begin{tabular}{ll}
		\hline
		\textbf{Symbol} & \textbf{Description} \\
		\hline
		
		$\mathcal{M}$& Foundation model\\
		$f_W$  & Network architecture of foundation model with parameter $W$\\
		$f_0$ &  Initial architecture \\
		$f^*$ & Limit architecture \\
		$\mathcal{D}$  & Training dataset of foundation model  \\ 
		$\mathcal{A}$  & Learning algorithm of foundation model \\
		$N$ & Data size \\
		$P$ & Model size  \\
		$K$ & Training steps  \\
		$T_i$ & $i$-th basic functional block in the network \\
		$\mathcal{L}_2(\mathbb{R}^{m}, \mathbb{R}^{n})$ & The space of square-integrable functions  \\
		$X,D$ &  Uniformly convex Banach space and its closed convex subset\\
		$E,E^*$ &  Bacach space and its dual space. \\
		$\mathbb{D}(T), \mathbb{R}(T)$ & The domain and range of the operator $T$ \\
		$\mathscr{L}(D)$ &  The set of all Lipschitz opertors satisfying self-mapping\\
		$L$ & The number of basic functional blocks \\
		
		$W_k$ & Network parameters at $k$-th algorithm iteration \\
		$W^*$ & Optimal network parameter trained on dataset $\mathcal{D}$ \\
		
		$\ell, \partial\ell$ & Loss function and its gradient with respect to network parameters\\		
		$R(f_W)$ & Population risk of model $f_W$\\
		$E(N, P, T)$ & Excess risk of foundation model  \\
		$\mathcal{E}(N, P, T)$ & Performance function of foundation model \\
		
		$\delta(N,P,K)$ & Neural scaling law of foundation model\\
		
		$\mathrm{Lip}(T)$ & Lip number of operator $T$ \\
		$\mathrm{L}(T)$ & glb-Lipschitz constant of operator $T$ \\
		$T'$ & Fr\'{e}chet derivative of operator $T$\\
		
		$\rho(\cdot)$ & Spectral radius of a linear operator \\
		
		$\mathrm{Fix}(T)$ & Fixed point set of operator $T$ \\
		
		\hline
	\end{tabular} \vspace{2mm}
	\caption{Main notations used in this paper.} \label{ss}
\end{table}

Figure \ref{fig1} shows the model architecture of the GPT-family \cite{radford2018improving,radford2019language,brown2020language}, which implies that a large-scale foundation model is always equipped with periodic structural functional blocks. With this observation, we assume that the model architecture \(f_\mathcal{W}\) consists of $L$ basic functional blocks. Generally, the $i$-th basic functional block could be defined as a functional operator $T_i: \mathbb{R}^{n_i} \to \mathbb{R}^{n_{i+1}}$, where the first input layer satisfies $n_1 = d$. Note that $T_i$ is a nonlinear Lipschitz operator defined over the space $\mathbb{R}^{n_i}$, and maps it into the space $\mathbb{R}^{n_{i+1}}$.
Thus the architecture \(f_{W}\) of this foundation model can be represented as a composition of $L$ functional operators, that is, 
\begin{align} 
f_{W}(x) = T_{L}\circ T_{{L-1}} \circ \cdots \circ T_{1}\circ f_0(x),  \ \forall x \in \mathbb{R}^{d}\label{Eqarch},
\end{align}
with parameters $W = [w_1,w_2,\cdots,w_L]$, where $w_i$ is the parameter of $T_i$, and $f_0$ is the initial architecture, that is the input embedding. 
%For the simplicity, we also let $f_{{K}}=\{T_i\}_{i=1}^K$.
%where $T_{w_{i}}: D \to D, i \in [K]$, and $D$ is a subset of $\mathbb{R}^d$.  
Without loss of generality, we assume the width of the model $f_{{W}}(x)$ is finite, and let $P$ represent the total number of model parameters, i.e., $P = \sum_{i=1}^K |w_i|$, where $|w_i|$ denotes the cardinality of $w_i$. Here, we have dropped explicit dependence of $W$ on $P$ for brevity.
%To avoid abuse of notation, we use $f_{P}(x)$ to denote the foundation models with model parameters $P$ in the following.

The dataset $\mathcal{D} = \{(x_i,y_i)\}_{i=1}^N$, drawn i.i.d. from a probability distribution $\mathbb{P}(x,y)$, which simulates the input and output
of the model to be estimated, and the size of dataset is $|\mathcal{D}|=N$. 
For a given loss function $\ell$, we apply empirical risk minimization principle to define the model \(f_{W}\) with training dataset $\mathcal{D}$, that is
\[
W^* = \arg\min_{{W}\in \mathcal{W}} \ell(f_{W},\mathcal{D}) \overset{\triangle}{=} \frac{1}{N}\sum_{i=1}^{N} \ell(f_{W}(x_i),y_i).
\]
One often employs a gradient descent algorithm to train the model \(f_{W}\) via
\begin{align}\label{eqsss}
	{W}_{k+1} = \mathcal{A}({W}_{k},\mathcal{D},\partial \ell, \gamma ), k=0, 1, \cdots,K, \end{align}
where $\mathcal{A}$ denotes any gradient descent algorithm, ${W}_{k}$ is model parameter of the $t$-th iteration step,
$\partial \ell$ denotes the gradient of loss function with respect to the model parameter ${W}$, and $\gamma$ is the learning rate. 
Here, we assume $W^* = \lim\limits_{K\to \infty} W_K$ under some mild condtions commonly used in the optimization community. Particularly, we assume $\lim_{K \to \infty}\partial \ell(f_{W_K},\mathcal{D})= \partial \ell(f_{W^*},\mathcal{D}) = 0$.

Let $R(f_{W}) = \mathbb{E}_{(x,y)\sim \mathbb{P}(x,y)} \ell(f_{{W}}(x),y)$, and then the excess risk is defined by
\begin{align}
E(N,P,K) = R(f_{{W}_K}) - \inf_{f_{W}}R(f_{{W}}).
\end{align}
By definition, the excess risk $E(N,P,K)$ depends on the data size \(N\), the model size \(P\) and the training steps \(K\).
%In the following, we assume that the optimal model parameter ${W}^*$ exists, i.e., ${W}^* = \arg\min_{{W}} R(f_{{W}}) $, and satisfies $  \mathbb{E}_{(x,y)\sim P(x,y)} \partial \ell(f_{{W}}(x),y) = 0$, and we denote $\partial \ell({W}^*) = 0$ for simplicity.

\begin{figure}[t]
	\centering
	\includegraphics[width=0.8\textwidth]{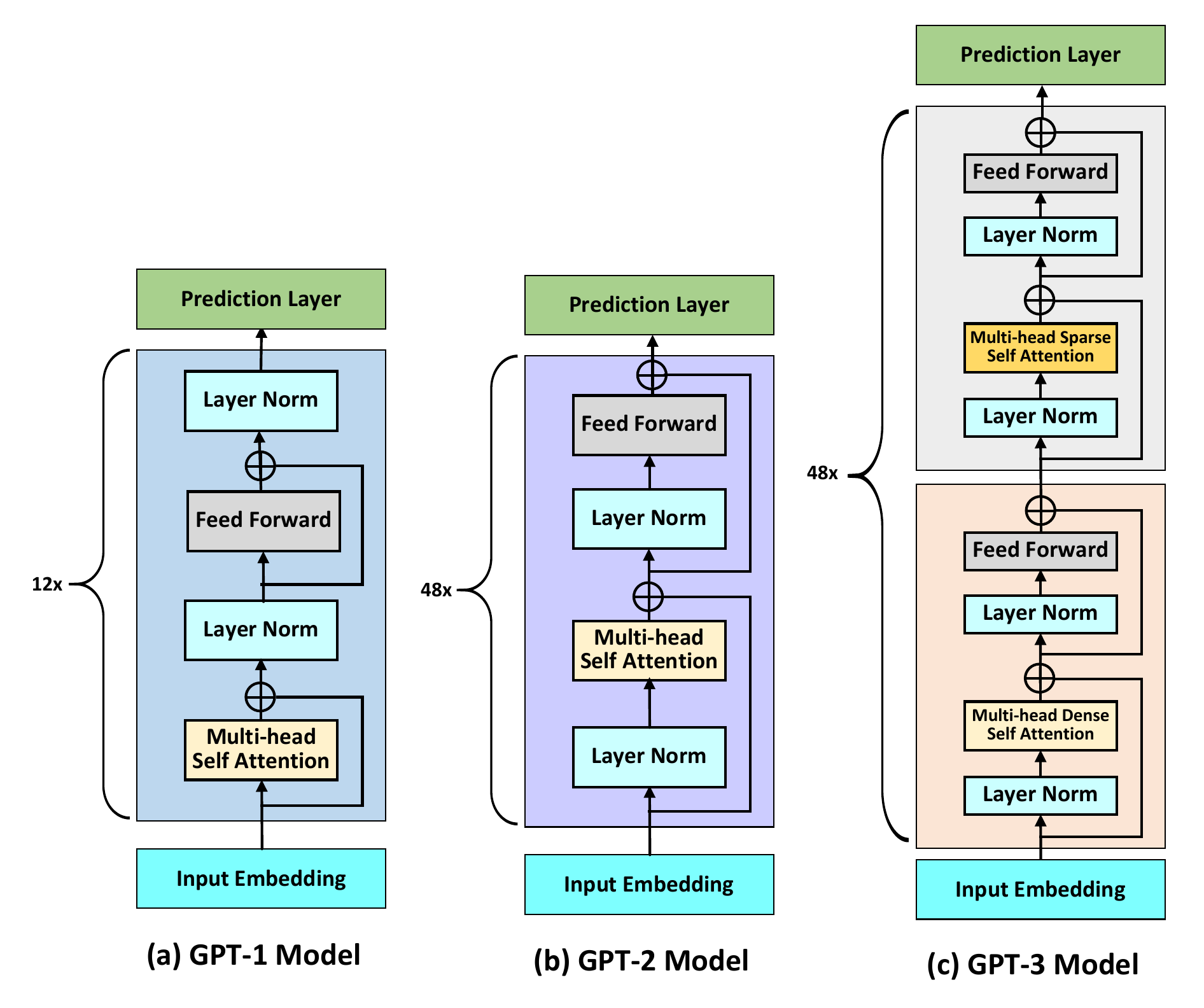}
	\caption{Illustration of GPT family models. (a) GPT-1 model \cite{radford2018improving} is stacked by original Transformer architecture (see Example \ref{ex7}) with 12 layers; (b) GPT-2 model \cite{radford2019language} is stacked by revised Transformer architecture (see Example \ref{ex8}) with 48 layers; (c) GPT-3 model \cite{brown2020language} use the same model and architecture as GPT-2, with the exception that we use alternating dense and locally banded sparse attention patterns in the layers of the transformer, similar to the Sparse Transformer \cite{child2019generating}.}\label{fig1}
\end{figure}

\subsection{Performance Function of Foundation Models}

We assume that a performance function $\mathcal{E}(N,P,K)$ for a fixed foundation model $\mathcal{M} = \{f_W, \mathcal{D}, \mathcal{A}\}$ is closely related to the excess risk function \(E(N,P,K)\), and it satisfies

(\romannumeral1) the measurability condition, i.e.,
\begin{align}\label{aaa}
	\mathcal{E}(N,P,K) = g(E(N,P,K)), \end{align}
where $g:\mathbb{R}^+_0 \to \mathbb{R}^+_0$ is a right-continuous function, and $\mathbb{R}^+_0 = \{x \in \mathbb{R} | x \geq 0\}$.

(\romannumeral2) the interchange condition for the limit function, i.e.,
\begin{align}
	\lim\limits_{N,P,K \to \infty} \mathcal{E}(N,P,K)  & = \lim\limits_{N,P \to \infty} \lim\limits_{T \to \infty} \mathcal{E}(N,P,K),   \label{eqc1}\\
	\lim\limits_{N,P \to \infty} \mathcal{E}(N,P,\infty) &  = \lim\limits_{N \to \infty} \lim\limits_{P \to \infty} \mathcal{E}(N,P,\infty). \label{eqc2}
\end{align}

The above condition (\ref{aaa}) says that the value of performance function $\mathcal{E}(N,P,K)$ for the foundation model is closely (linearly or nonlinearly) related to that of the excess risk function. 
When $g$ is an identity function, the performance function degenerates into the excess risk. In particular, the performance function $\mathcal{E}(N,P,K)$ could be discrete or discontinous. For example, 
	\begin{align*} 
		\mathcal{E}(N,P,K) & = \text{Classification Accuracy} = \mathbb{E}_{(x,y)\sim \mathbb{P}(x,y)} \mathbf{1}(\arg\max_k [f_{W_K} (x)]_k = y) 		
		\\
	& \approx \mathbb{E}_{(x,y)\sim \mathbb{P}(x,y)} {\rm{Pr}} (\arg\max_k [f_{W_K} (x)]_k = y) \\
	& = \mathbb{E}_{(x,y)\sim \mathbb{P}(x,y)} \exp(\log ({\rm{Pr}} (\arg\max_k [f_{W_K} (x)]_k = y)))  \\
	& \approx \mathbb{E}_{(x,y)\sim \mathbb{P}(x,y)} \exp(-\ell(f_{W_K} (x),y)) := \exp(- R(f_{{W}_K}))\\
	& = C  \exp(- E(N,P,K)),
	\end{align*}
	where $C = \exp(\inf_{f_{W}}R(f_{{W}}))$. In this case, the performance function $\mathcal{E}(N,P,K)$ can appear sharp and unpredictable changes with increasing scale, i.e., emergent intelligence may occurs. More related discussions and examples please refer to \cite{schaeffer2024emergent}.

\begin{remark}
	\normalfont
	Observe that
	\begin{align*}
		& \lim\limits_{N,P,K \to \infty} \mathcal{E}(N,P,K)   = \lim\limits_{N,P,K \to \infty} g( E(N,P,K))	=\lim\limits_{N,P \to \infty} g \left(\lim\limits_{K \to \infty}E(N,P,K)\right)
		\\
		& =    \lim\limits_{N,P \to \infty} g \left( E(N,P,\infty)\right) =\lim\limits_{N,P \to \infty} \mathcal{E}(N,P,\infty) =  \lim\limits_{N,P \to \infty} \lim\limits_{K \to \infty} \mathcal{E}(N,P,K), 
	\end{align*}
	where the second equality holds since $g$ is right-continuous (as $K$ increase, $E(N,P,K)$ is non-increasing), and thus it is rational for Eq.(\ref{eqc1}) to hold.

%Observing that
%	\begin{align*}
%	& \lim\limits_{N,P,K \to \infty} \mathcal{E}(N,P,K)   =  \lim\limits_{N,P \to \infty} \lim\limits_{T \to \infty} \mathcal{E}(N,P,K) =\lim\limits_{N,P \to \infty} \mathcal{E}(N,P,\infty) 	\\
%	 & =    \lim\limits_{N,P \to \infty} g \left( E(N,P,\infty)\right) =\lim\limits_{N,P \to \infty} g \left(\lim\limits_{T \to \infty}E(N,P,K)\right) = \lim\limits_{N,P,K \to \infty} g( E(N,P,K))	, 
%	  \end{align*}
%	 where the second equality holds since $g$ is right-continuous (as $T$ increase, $\mathcal{R}(N,P,K)$ is nonincreasing), and thus it is rational for Eq.(\ref{eqc1}) to hold. 
%In fact, $N,P$ are often pre-determined during the execution of the learning process, and we usually firstly take the limit for training steps $T$.
	 
	Similarly, one has
	  \begin{align*}
	& \lim\limits_{N,P \to \infty} \mathcal{E}(N,P,\infty)   = \lim\limits_{N,P \to \infty} g \left( E(N,P,\infty)\right)=\lim\limits_{N \to \infty} g \left( \lim\limits_{P \to \infty} E(N,P,\infty) \right)\\
	&  =  \lim\limits_{N \to \infty} g \left( E(N,\infty,\infty) \right) =  \lim\limits_{N \to \infty} \mathcal{E}(N,\infty,\infty) = \lim\limits_{N \to \infty} \lim\limits_{P \to \infty} \mathcal{E}(N,P,\infty),
	\end{align*}
		 where the second equality holds since $g$ is right-continuous (as $N,P$ increase, $E(N,P,\infty)$ is nonincreasing), and thus it is rational for Eq.(\ref{eqc2}) to hold. 
	
\end{remark}

\subsection{Formulation of Emergent Intelligence and Scaling Laws}
\begin{Definition} [Emergent Intelligence]
If the limit $\lim\limits_{N,P,K \to \infty} \mathcal{E}(N,P,K)$ exists in any sense, we say that the foundation model $\mathcal{M} = \{f_W, \mathcal{D}, \mathcal{A}\}$ displays intelligent emergenence, and $\mathcal{E}(\infty,\infty,\infty) = \lim\limits_{N,P,K \to \infty} \mathcal{E}(N,P,K)$ is the emergent intelligence (performance) exhibited by $\mathcal{M}$.
%
%
% and denote the limit as \(\mathcal{E}(\infty,\infty,\infty)\), i.e.,
%\begin{align}
%	\mathcal{E}(\infty,\infty,\infty) = \lim\limits_{N,P,K \to \infty} \mathcal{E}(N,P,K).
%\end{align}
\end{Definition}

Our key insight is that intelligence emerges as a transition from finite to effectively infinite knowledge, architecture and training steps, and thus we can cast intelligent emergence as the existence of the limit \(\lim_{N,P,K \to \infty} \mathcal{E}(N,P,K)\). 
Note that the existence of such limit also underlies the applicability and stability of foundation models regarding the concerned tasks. In fact, a trained foundation model is applicable only 
when \(N\) is sufficiently large (indicating extensive knowledge acquisition), \(P\) is sufficiently large (indicating a large scale of model parameters), and \(T\) is sufficiently large (indicating sufficiently well-trained for the models), the model can exhibit a very stable learning behavior, hopefully, the expected favorable performance.
This is characterized exactly by the existence of limit \(\lim_{N,P,K \to \infty} \mathcal{E}(N,P,K)\), or by definition, for any $\varepsilon>0$, and for any sufficient large $N,P,K$, one has
 \[\mathcal{E}(\infty,\infty,\infty) - \varepsilon \leq \inf_{N, P,T} \mathcal{E}(N,P,K) \leq \sup_{N, P,T} \mathcal{E}(N,P,K) \leq   \mathcal{E}(\infty,\infty,\infty) + \varepsilon.\]
Here \(\mathcal{E}(\infty,\infty,\infty)\) quantifies the performance of the foundation model after displaying emergence, i.e., achieving an ideal and even unexpected performance. It is noted that when the range of $\mathcal{E}(N,P,K)$ is discrete or discontinous, the performance of the foundation model might exhibit ``abrupt performance improvement'', i.e., emergent abilities.

\begin{Definition} [Scaling Laws]
		We use $\delta(N,P,K)$ to represent the rate at which $\mathcal{E}(N,P,K)$ converges to $\mathcal{E}(\infty,\infty,\infty)$, that is,
	\begin{equation}
		|\mathcal{E}(N,P,K) - \mathcal{E}(\infty,\infty,\infty)| \leq \delta(N,P,K). 
	\end{equation}
	$\delta(N,P,K)$ is called a neural scaling law, which predicts a path for improving the models via scaling up data size $N$, model size $P$, and training steps $K$.

\end{Definition}

Typical forms of scaling laws contain exponential law, power law, etc. Taking the scaling law of $P$ as an example, if $\mathcal{E} = A^P, A \in (0,1)$, we say it is an exponential law (as $A^P = \exp(-P\log(1/A))$); and if $\mathcal{E} = \frac{1}{P^C}, C>0$, it is a power law.
The scaling laws characterize the relationship between the performance of a foundation model and the scaling of key factors, like data size $N$, model size $P$, and training steps $K$. 
When fixing two of $N,P,K$, the convergence rate refers to the scaling law for the remaining factor. For example, the scaling law of training steps is the convergence rate of $K$ when $N,P$ are fixed, the scaling law of model size is the convergence rate of $P$ when $N,K$ are fixed, and the scaling law of data size is the convergence rate of $N$ when $P,K$ are fixed.

In essence, emergent intelligence in foundation models appears subject to existence of the limit
\(
\lim_{N,P,K \to \infty} \mathcal{E}(N,P,K).
\)
Whenever this limit exists, the scaling laws correspond to the characterization of the convergence rates governing how $\mathcal{E}(N,P,K)$ approaches its asymptotic limit $\mathcal{E}(\infty,\infty,\infty)$.

%In a nutshell, the existence of intelligent emergence for foundation models turns into the problem of discriminating the existence of the limit $\lim\limits_{N,P,K \to \infty} \mathcal{E}(N,P,K)$; if the limit exists, then the scaling laws turns into the estimation of convergence rates for performance function $\mathcal{E}(N,P,K)$ approaching to the limit $\mathcal{E}(\infty,\infty,\infty)$.

%we formulate  and  in  as the problem of the existence of the limit and the estimation of convergence rates for performance function $\mathcal{E}(N,P,K)$.

\section{Research Tools} \label{sec3}

\subsection{Standard Error Decomposition}
To study existence of the limit of performance function $\mathcal{E}(N,P,K)$, we employ the following error decomposition framework commonly adopted in statistical learning theory (SLT) community \cite{mohri2018foundations},
\begin{align} \label{eqdecom}
	& \mathcal{E}(N,P,K) - \mathcal{E}(\infty,\infty,\infty)   \notag \\
	= &	\underbrace{\mathcal{E}(N,P,K) - \mathcal{E}(N,P,\infty)}_{\text{weight error}} + \underbrace{\mathcal{E}(N,P,\infty) - \mathcal{E}(N,\infty,\infty)}_{\text{architecture error}} + \underbrace{\mathcal{E}(N,\infty,\infty) - \mathcal{E}(\infty,\infty,\infty)}_{\text{sample error}}.
\end{align}
As seen, to estimate $\mathcal{E}(N,P,K) - \mathcal{E}(\infty,\infty,\infty)$, one can bound the weight (optimization) error, the architecture error and the sample (statistical) error, respectively, and then combine these bounds together to estimate the final error.

%\begin{remark}
Traditional SLT framework assumes that the function class $\mathcal{F} = \{f_W, W \in \mathcal{W}\}$ is given, where $f_W$ is defined as in Eq.(\ref{Eqarch}). This implies that function $f_W$ in $\mathcal{F}$ exists, and the model size is finite. Here we consider the model size $P \to \infty$, leading to some essential difficulties. Specifically, the architecture error is subject to the existence of $f_W$ when model size $P \to \infty$, and the convergence rate of the model with finite parameters approximates the model with infinite parameters. Existing works on such study mainly consider shallow neural networks with infinite parameters (e.g., 2-layer neural network with infinite width \cite{jacot2018neural,mei2018mean}), while the theoretical analysis of deep neural networks with infinite depth has been lack of research. The central difficulty is whether a neural network with infinite depth defines a stable architecture. The key challenge lies in analyzing the infinite-depth limit, namely, whether infinite-depth neural networks constitute a meaningful and stable architecture.
Besides, the sample error quantifies the deviation between the solution learned from finite data and the infinite-data associated with the limit architecture. Existing studies on this side primarily analyze the estimation error of finite-parameter models, whereas the infinite-parameter regime remains largely unexplored \cite{de2023convergence,liu2024deep}. In this work, we provide a theoretical analysis of the architecture error and the sample error in Sections \ref{modelscaling} and \ref{datascaling}, respectively. Compared with classical machine learning theory, the new theories to be developed will provide convergence analysis for infinitely deep networks, along with corresponding generalization bounds.

%\end{remark}

%\begin{remark}
%	The existence of $\mathcal{E}(N,P,\infty)$ is guaranteed by the existence of optimal weight $\mathcal{W}^*$ of foundation model.
%\end{remark}

\subsection{Limit Architecture of Foundation Models}

Without loss of generality, we suppose that the width of $f_W$ remains finite, while its depth tends to infinity as the model size goes to infinity.
To investigate existence of the limit $\mathcal{E}(\infty,\infty,\infty)$, we first study whether a foundation model exists as its size tends to infinity. We define a limit architecture as the final convergent foundation model when the model size tends to infinity.

\begin{Definition}[Limit Architecture of a Foundation Model]
	If the limit $T^* = \lim_{L \to \infty} \prod_{i=1}^{L} T_i$ exists, then we call $f^* = T^*\circ f_0$ a limit architecture of the foundation model. 
	
	%$\lim_{P \to \infty} f_{{W}}(x)$ exists, let $f^*(x) = \lim_{P \to \infty} f_{{W}}(x)$, and we call $f^*$ %where $W$ denotes the model parameter of the limit architecture.   %and $f_0$ in Eq.(\ref{Eqarch}) the initial architecture.
\end{Definition}

If the limit architecture $f^* = \left(\prod_{i=1}^\infty T_i\right) f_0$ exists, we say that the foundation model induces a \textbf{stable architecture}, which defines an infinite-dimensional learning system
\[
y = f^*(x), \quad x \in \mathbb{R}^d,\; y \in \mathbb{R}^n.
\]
It has infinite-dimensional parameters. The sample error defined in Eq.~(\ref{eqdecom}) characterizes the deviation between the solution obtained from finite amount of data and the oracle of the infinite system. We will present a upper bound estimations on such sample error in Sections \ref{datascaling}.

By the above definition, the limit architecture $f^*$ of a foundation model could be represented as 
\begin{align}\label{eqlimit}
	f^* = \left(\prod_{i=1}^\infty T_i\right) f_0,
\end{align}
or equivalently, defined by
\[f_{i+1} = T_{i+1}f_{i}, i=0,1,2,\cdots, \]
where $f_0$ is the initial architecture, and $T_i \in \mathcal{L}_2(\mathbb{R}^{n_i}, \mathbb{R}^{n_{i+1}})$ is the nonlinear Lipschitz operator from function $f_i \in \mathbb{R}^{n_i}$ to function $f_{i+1} \in \mathbb{R}^{n_{i+1}}$.
Thus the existence of limit architecture is convert to the convergence of infinite operator products $\left(\prod_{i=1}^\infty T_i\right)$ in $L^2$ space, and, it is seen that whenever the limit architecture exists, $f^*$ is nothingelse but the common fixed points of the countable family of nonlinear operators $\{T_i\}_{i=1}^\infty$. This motivates us to design a feasible architecture in practice through searching the common fixed points of a family of nonlinear operators.
In the following, we introduce a mathematical tool to investigate the convergence of infinite operator products.

%From examples \ref{exlinear}, we can see that the necessary condition for the existence of the limit architecture $f^*_{\mathcal{W}}$ is $\|T_i\| \in \mathcal{L}_2(\mathbb{R}^{n_i}, \mathbb{R}^{n_{i+1}})$. Based on this observation, we assume $\mathcal{N} = \{f_{\mathcal{W}} \in \mathcal{L}_2, \mathcal{W} = \{w_i\}_{i=1}^\infty\}$, where $w_i$ denote the model weight of basic block $T_i$. In the following, we define the limit architecture in the function space $\mathcal{N}$. However, under what conditions does the limit architecture of $f_{\mathcal{W}}$ exist within $\mathcal{N}$? The next section provides a criterion for this existence.

\subsection{Nonlinear Lipschitz Operators} \label{lips}

In the present study, we assume $X$ is a uniformly convex Banach space, and $D$ is a nonempty closed, bounded and convex subset of $X$, which are typically chosen as $\mathbb{R}^{d}$ when analyzing neural architectures. Assume that $\mathcal{L}_2(\mathbb{R}^{m}, \mathbb{R}^{n})$ is an \( L^2 \)-integrable function space 
defined on domain \(\mathbb{R}^m\) with range in \( \mathbb{R}^n \). In other words, when a function $T \in \mathcal{L}_2(\mathbb{R}^{m}, \mathbb{R}^{n}), x \in \mathbb{R}^{m}$, then $T(x) = (T_1(x),T_2(x),\cdots,T_n(x))$ satisfies $T_i(x) \in \mathcal{L}_2(\mathbb{R}^{m}, \mathbb{R})$ for each $i$. The basic block $T_i$ of neural network $f_W$ could be studied in such an \( L^2 \)-integrable function space.

 We say that an operator $T: D \to X$ is a Lipschitz operator, if there exists a distance function $d(\cdot,\cdot)$ such that
%defined on $\mathcal{L}_2(\mathbb{R}^{m}, \mathbb{R}^{n})$,  
\begin{align} 
	\mathbb{K} d(x,y) \leq d(Tx,Ty) \leq \mathbb{L} d(x,y), \forall x,y \in D,
\end{align}
where $\mathbb{K} \geq 0,  \mathbb{L}<\infty$ are constants independent of $x$ and $y$. We use \( \mathscr{L}(D,X) = \{ T : D \to X \mid T \text{ is a Lipschitz operator} \} \) to denote the family of all Lipschitz continuous operators. In our study, we assume that $T$ satisfies the self-mapping condition, i.e., $T \in \mathscr{L}(D)$, which maps \( D \) into itself.

%\begin{figure}[t]
%	\centering
%	\includegraphics[width=0.5\textwidth]{./figure/nonexpansive}
%	\caption{Geometric Interpretation of the quasi-asymptotic regularity property (Eq.(\ref{s})) of a nonexpansive operator.}\label{fignon}
%\end{figure}

Let
\begin{align*}
	L(T) =& \sup \left\{ \frac{d(Tx,Ty)}{d(x,y)} \Big| x \neq y, x,y \in \mathbb{D}(T)   \right\}
	%\ell(T) =& \inf \left\{ \frac{d(Tx,Ty)}{d(x,y)} \Big| x \neq y, x,y \in \mathbb{D}(T)  \right\},
\end{align*}
be the lub-Lipschitz constants of $T$ \cite{soderlind1986bounds}, where $\mathbb{D}(T) $ is the domain of $T$.
If $L(T)\leq 1$, $T$ is said to be a non-expansive operator; if $L(T)< 1$, $T$ is a contraction operator. Let $\mathrm{Fix}(T)=\{x\in D: Tx = x\}$ denote the fixed point set of $T$. As is well known, if $T\in \mathscr{L}(D)$ is nonexpansive, then $T$ has fixed points \cite{Browder1965,Gohde1965,Kirk1965}. 
We say that an operator $T: D \to X$ is a quasi-Lipschitz operator, if there exists $\mathbb{K}' \geq 0,  \mathbb{L}'<\infty$ such that
%defined on $\mathcal{L}_2(\mathbb{R}^{m}, \mathbb{R}^{n})$,  
\begin{align*} 
	\mathbb{K}' d(x,p) \leq d(Tx,Tp) \leq \mathbb{L}' d(x,y), \forall x\in D, p \in \mathrm{Fix}(T).
\end{align*}
Especially, if $T$ satisfies $$d(Tx,p) \leq d(x,p), \forall x\in D, p \in \mathrm{Fix}(T),$$ we say $T$ is a quasi-non-expansive operator. A quasi-non-expansive operator is said to be of \textbf{quasi-asymptotic regularity} property, if there exist a strictly increasing function $\psi: \mathbb{R}_0^+ \to \mathbb{R}_0^+$ with $\psi(0)=0$, such that
\begin{align} \label{s}
	d(Tx,p) \leq d(x,p)- \psi(d(x,Tx)), \forall x\in D, p \in \mathrm{Fix}(T).
\end{align}
Geometrically, the quasi-asymptotic regularity property requires that the action of $T$ not only does not increase the distance from \(x\) to \(p\), but can reduce this distance by at least \(\psi(d(x,Tx))\). 

It should be noted that the quasi-asymptotic regularity property is a mild and commonly satisifed condition for a Lipschitz operator in applications. For example, when $T: X \to D$ is a projection operator in a Hilbert space $X$, i.e., defined by 
\[Tx = \arg\min_{y \in D} \|x-y\|, \forall x \in X,\] it then satisfies that
\[ \|Tx-p\|^2 \leq \|x-p\|^2 -\|x-Tx\|^2, \forall x \in X, p \in D,  \] 
and hence $T$ is of quasi-asymptotic regularity property with $\psi(t) = t^2$. More generally, if $T\in \mathcal{L}(D)$ is any non-expansive operator, its average version $T_{\alpha} = \alpha I +(1-\alpha)T, 0<\alpha<1,$ satisfies 
\[
\|T_{\alpha}x - p\|^2 \leq \|x-p\|^2 - \frac{(1-\alpha)}{\alpha} \|x-T_{\alpha}x\|^2.
\]
Hence, it is of quasi-asymptotic regularity property with $\psi(t) =\frac{1-\alpha}{\alpha}t^2$. 

The residual connection type operator $T=I+A$ can be written as
\[T = I + A = (1-\alpha)I + \alpha\left(I+\frac{1}{\alpha}A\right), 0<\alpha<1. \]
If $A$ is a dissipative operator \cite{lumer1961dissipative}, then $I+\frac{1}{\alpha}A$ is a non-expansive operator. 
Consequently, $T$ is an average version of non-expansive operator $I+\frac{1}{\alpha}A$, and thus $T$ is a non-expansive operator and of quasi-asymptotic regularity property. The residual connection commonly eppears in the basic functional blocks of foundation models, e.g., see Figure \ref{fig1}. And in Sections \ref{eqmodel} and \ref{limitsa}, we will further demonstrate the crucial role of such property of the residual connection type operator in assurance of intelligence emergence of the models.
%We will assume such quasi-asymptotic regularity property throughout this study whenever a Lipschitz operator is discussed.

%Acturally, let $x = x_n = T^n x_0$, we have 
%\[d(T^{n+1}x_0,p)  \leq d(T^nx_0,p)- \psi(d(T^nx_0,T^{n+1}x_0)), \forall x_0\in D, p \in \mathrm{Fix}(T).\]
%Then one has $\sum_{i=0}^n \psi(d(T^ix_0,T^{i+1}x_0)) \leq \sum_{i=0}^n [d(T^ix_0,p)-d(T^{i+1}x_0,p)] \leq d(x_0,p)$, which shows that $\sum_{i=0}^\infty \psi(d(T^ix_0,T^{i+1}x_0))\leq \infty$. Thus $\psi(d(T^nx_0,T^{n+1}x_0)) \to 0$. Since $\psi$ is a strictly increasing function, we have $d(T^nx_0,T^{n+1}x_0) \to 0, \forall x_0 \in D$. This implies the asymptotic regularity property of $T$. Moreover, asymptotic regularity is a necessary condition for convergence to a fixed point: if \(x_n\to p\in\operatorname{Fix}(T)\), then \(d(x_n,Tx_n)\to0\).
%
%
%Generally, the projection operator $T=P$ with $P^2=P$ and averaged operator $T = (1-\alpha) I + \alpha S, 0<\alpha<1$ where \(S\) is nonexpansive, satisfy the quasi-asymptotic regularity property (Eq.(\ref{s})). The proof can be found in Appendix \ref{regu}.

\subsubsection{Characteristic Number $\mathrm{Lip}(T)$}
\begin{Definition} [Lip number and Dahlquist number of a nonlinear Lipschitz operator]
Suppose $T \in \mathscr{L}(D)$ and let
%$T:D\to D \in \mathcal{L}_2(\mathbb{R}^{m}, \mathbb{R}^{m})$ is a self-mapping operator, let
\begin{align*}
	\mathrm{Lip}(T) = \lim\limits_{n\to \infty}[L(T^n)]^{\frac{1}{n}}, \quad
	m(T) = \lim\limits_{h\to 0^+} \frac{1-L(I-hT)}{h}.
\end{align*}
We call $\mathrm{Lip}(T)$ and $m(T)$ the Lip number and Dahlquist number of $T$, respectively. 
\end{Definition}
The Lip number of $T$ was early introduced by Xu and Wang \cite{xu1996} in 1996, and Dahlquist number of $T$ was introduced by Germund Dahlquist \cite{dahlquist1958stability} for linear operators in 1958, and extended to the nonlinear Lipschitz operators in 1986 \cite{soderlind1986bounds,Wang1996,soderlind2006logarithmic}. We can verify several equivalent characterizations of $\mathrm{Lip}(T)$ as in Proposition \ref{prop1}. Based on this, it would become more direct to estimate values of $\mathrm{Lip}(T)$. 
%\begin{remark}
%Lip	constant $\mathrm{Lip}(T)$  was introduced by Xu and Wang \cite{xu1996} in 1996.
 %and $\mu(T)$ was introduced by Germund Dahlquist \cite{dahlquist1958stability} in 1958, which was further extended to nonlinear maps, please refered to \cite{soderlind1986bounds,Wang1996,soderlind2006logarithmic} for more details . 
 %In particular, when $T$ is a linear operator, $\mathrm{Lip}(T)$ is the spectral radius of $T$, i.e., $\mathrm{Lip}(T) = \rho(T)$.
 % and $\mu(T)$ is the smallest eigenvalue of $T$ (Please refer to Example \ref{ex2}). 
%\end{remark}

%The following proposition provides an equivalent characterization of $\mathrm{Lip}(T)$. 

\begin{proposition} [$\mathrm{Lip}(T)$ evaluation] \label{prop1}
Suppose $T \in \mathscr{L}(D)$ is continuously differentiable. Let $T'$ denote Fr\'{e}chet derivative of $T$, and $\rho(\cdot)$ is the spectral radius of a linear operator. Then 

(\romannumeral1)  If $\mathrm{Lip}(T)<1$, then $\mathrm{Lip}(T) = \rho(T'(x^*))$, where $x^*$ is the unique fixed point of $T$.

(\romannumeral2)  If $\mathrm{Lip}(T)\leq 1$, then \[\mathrm{Lip}(T) = \sup_{x^* \in \mathrm{Fix}(T)}\rho(T'(x^*)),\] where $\mathrm{Fix}(T) = \{x \in D| Tx = x \}$ is the fixed point set of $T$.

(\romannumeral3)  Generally, one has \[\mathrm{Lip}(T) = \sup_{x\in D}  \rho(T'(x)).
\]
\end{proposition}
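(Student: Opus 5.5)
The plan is to prove (iii) first and then read off (i) and (ii) from it together with fixed-point information; every step rests on a mean-value estimate for $T^n$.

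\textbf{Upper bound in (iii).} Since $D$ is convex and $T$ is $C^1$, the mean-value inequality gives
\[L(T^n)=\sup_{x\in D}\|(T^n)'(x)\|.\]
The chain rule then writes
\[(T^n)'(x)=T'(x_{n-1})\cdots T'(x_0),\qquad x_k=T^kx_0 .\]
Here the orbit $x_k$ stays in $D$ by the self-mapping assumption. To pass from norms of these products to spectral radii we use the fact that $D$ is bounded and closed, in practice a compact subset of $\mathbb{R}^d$. By continuity of $T'$ on this compact set, for every $\varepsilon>0$ one can find a single norm $|\cdot|_\varepsilon$, equivalent to the original with constant $C_\varepsilon$, such that $|T'(x)|_\varepsilon\le \sup_{y\in D}\rho(T'(y))+\varepsilon$ for all $x\in D$. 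This is the step I expect to be the main obstacle. Pointwise, a norm adapted to $\rho(T'(x))+\varepsilon$ always exists, but a \emph{uniform} adapted norm is not automatic: the joint spectral radius of the family $\{T'(x)\}$ can exceed $\sup_x\rho(T'(x))$. The first thing I would try is to handle it through Rota--Strang type constructions, using compactness and continuity. If that fails, I would reinterpret the right-hand side of (iii) as a joint spectral radius, in the spirit of Xu--Wang \cite{xu1996}. Granting the uniform norm, we get
\[L(T^n)\le C_\varepsilon\bigl(\sup_{x\in D}\rho(T'(x))+\varepsilon\bigr)^n .\]
Taking $n$-th roots, letting $n\to\infty$ and then $\varepsilon\to0$ gives $\mathrm{Lip}(T)\le\sup_{x\in D}\rho(T'(x))$.

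\textbf{Lower bound in (iii).} I would localize at a point $x$. Since $T'$ is continuous at $x$, one has $L(T^n)\ge\|(T^n)'(x)\|$, and along orbits that stay near $x$ this quantity behaves like $\|T'(x)^n\|$. Gelfand's formula then gives $\|T'(x)^n\|^{1/n}\to\rho(T'(x))$. When the orbit of $x$ wanders away, I would instead restrict to periodic or fixed points and combine this with the upper bound. In this way the identity is at least established at fixed points, which is all that (i) and (ii) require.

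\textbf{Parts (i) and (ii).} For (i), if $\mathrm{Lip}(T)<1$ then $L(T^n)<1$ for some $n$, so $T^n$ is a contraction. Its fixed point $x^*$ is unique, and it is also the unique fixed point of $T$. Every orbit converges to $x^*$ at rate $\mathrm{Lip}(T)^n$. Conversely, linearizing at $x^*$ gives $T^n(x^*+h)-x^*=T'(x^*)^nh+o(\|h\|)$, which forces $\mathrm{Lip}(T)\ge\rho(T'(x^*))$. For the reverse inequality, the asymptotic rate along convergent orbits is governed by $T'(x^*)$; using the continuity of $T'$ near $x^*$ and the upper-bound argument restricted to a small invariant ball, we get $\mathrm{Lip}(T)\le\rho(T'(x^*))$. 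For (ii), if $\mathrm{Lip}(T)\le1$, the same local argument at each $x^*\in\mathrm{Fix}(T)$ gives the lower bound $\mathrm{Lip}(T)\ge\sup_{x^*\in\mathrm{Fix}(T)}\rho(T'(x^*))$. For the upper bound I would use the quasi-asymptotic regularity of Section \ref{lips}: it yields $d(T^nx,T^{n+1}x)\to0$, so orbits accumulate on $\mathrm{Fix}(T)$. The growth exponent of $\|(T^n)'(x)\|$ is therefore determined by the derivative near the fixed-point set, and the uniform-norm argument can be applied on a neighborhood of $\mathrm{Fix}(T)$.
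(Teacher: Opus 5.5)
Your argument for (i) is sound and is essentially the paper's. The lower bound is $L(T^n)\ge\|T'(x^*)^n\|$ together with Gelfand's formula. The upper bound uses an adapted norm for $T'(x^*)$ on a small neighbourhood of $x^*$, which every orbit enters after a uniform number of steps because some $T^m$ contracts the bounded set $D$. None of this needs (iii).

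The plan breaks at (iii), and the two steps you flagged cannot be repaired, because (iii) is false. Take $D=[-1,1]$ and $T(x)=x^2/4$. Then $T^n(x)=4(x/4)^{2^n}$, so $L(T^n)=2^n4^{1-2^n}$ and $\mathrm{Lip}(T)=0=\rho(T'(0))$, as (i) predicts. But $\sup_{x\in D}\rho(T'(x))=1/2$. So (i) and (iii) are incompatible, and the lower bound $\mathrm{Lip}(T)\ge\rho(T'(x))$ at points whose orbit ``wanders away'' is simply false. Reinterpreting the right-hand side as the joint spectral radius of $\{T'(x):x\in D\}$ does not help either, since that is also $1/2$ here. $\mathrm{Lip}(T)$ measures growth of derivative products along actual orbits, so a uniform adapted norm over all of $D$ is not available in general.

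The paper's proof of (iii) has the same two defects. Its upper bound passes from ``$\mathrm{Lip}(T)\le\sup_x\|T'(x)\|$ for every equivalent norm'' to $\sup_x\rho(T'(x))$. But in fact $\inf_{\|\cdot\|}\sup_x\|T'(x)\|\ge\sup_x\rho(T'(x))$, with strict inequality possible. Its lower bound applies $L(T)=\sup_x\|T'(x)\|$ to the metric $d^*$, which is not induced by a norm.

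For (ii), your upper bound relies on quasi-asymptotic regularity, which is not a hypothesis of the proposition, and without it (ii) fails. The map $T(x)=-\mathrm{sgn}(x)(2x^2-|x|^3)$ on $[-1,1]$ is $C^1$, maps $D$ into itself, and has $\mathrm{Fix}(T)=\{0\}$ with $T'(0)=0$. Yet $\{1,-1\}$ is a $2$-cycle with $(T^2)'(1)=T'(-1)T'(1)=1$, so $\mathrm{Lip}(T)\ge 1$. In the variable $u=1/(1-|x|)$, each step lowers $u$ by at most $1$ with derivative at most $1$. This gives $L(T^n)\le (n+1)^2$, hence $\mathrm{Lip}(T)=1\neq 0$. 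The paper's proof of (ii) tacitly assumes that every iterate of every point stays within $\delta$ of $\mathrm{Fix}(T)$.

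Even with quasi-asymptotic regularity added, ``orbits accumulate on $\mathrm{Fix}(T)$'' does not by itself control $\sup_x\|(T^n)'(x)\|$. Orbits may drift along a continuum of fixed points, and a single adapted norm on a neighbourhood of $\mathrm{Fix}(T)$ meets the same obstruction as in (iii). What does work in finite dimensions is the semi-uniform subadditive ergodic theorem: $\lim_n\frac1n\log\sup_{x\in D}\|(T^n)'(x)\|$ equals the largest top Lyapunov exponent over $T$-invariant probability measures. Quasi-asymptotic regularity forces every orbit to converge to a fixed point, so by Poincar\'e recurrence these measures live on $\mathrm{Fix}(T)$. The ergodic ones are then Dirac masses $\delta_{x^*}$ with exponent $\log\rho(T'(x^*))$.
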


The proof of Proposition \ref{prop1} is presented in Appendix \ref{secA2}. Propositions \ref{prop1} shows that the Lip number $\mathrm{Lip}(T)$ is a natural extension of the spectral radius of linear operators.
It can be seen that $\mathrm{Lip}(T)$ is an invariant constant with respect to any strong equivalence distance, even if it is defined by means of a specific norm. Thus $\mathrm{Lip}(T)$ is independent of the strong equivalence distance, particularly independent of the choice of norm.
We will apply $\mathrm{Lip}(T)$ to characterize existence of the limit architecture of foundation models. The examples presented in Section 
\ref{eqmodel} will illustrate that the number $\mathrm{Lip}(T)$ of various basic functional blocks $T_i$ of specific foundation models can be very well estimated. 
%\begin{proposition} [Estimation for $\mu(T)$] \label{prop2}
%	Suppose $T \in \mathcal{L}(D)$ is continuously differentiable, then 
%	\[\mu(T) = \inf_{x ,y\in D, x \neq y} \left\{ \frac{\langle Tx-Ty,x-y \rangle}{||x-y||^2}\right\}.\]
%\end{proposition}

%The proof could be found in Appendix . Propositions \ref{prop2} illustrates that the $\mu(T)$ is the extension of the smallest eigenvalue of linear operators in some sense. Besides, it could be considered as the minimal monotone constant of nonlinear operator \(T\). We will use $\mathrm{Lip}(T)$ and $\mu(T)$ to characterize the existence of the limit architecture of foundation models.

%\subsubsection{Quasi-Asymptotic Regular Operator}

\subsubsection{Lipschitz Dual Operator}

%Let $E$ be a Banach space, $E^*$ be its dual space, and $D$ a closed subset of $E$.
%It is straightforward to verify that \( L(\cdot) \) is a seminorm on \( \mathscr{L}(D) \).
It is straightforward to verify that \( L(\cdot) \) is a seminorm on \( \mathscr{L}(D,X) \). Specifically, if \( T \) is a bounded linear operator from \( X \) to \( X \), then the restriction of $T$ on $D$ satisfies
\( T_D \in \mathscr{L}(D, X) \) on \( D \), and \( L(T_D) \leq \|T\| \), where $\|\cdot\|$ denote any norm of $T$. 
 Lemma \ref{lemmas} below shows that \( L(\cdot) \) acturally is also a norm under certain constraint, the proof of of which is given in Appendix \ref{dual}.
%Specifically, if \( T \) is a bounded linear operator from \( E \) to \( E \), then the restriction of $T$ on $C$ satisfies that
%\( T_C \in \mathscr{L}(C, E) \) on \( C \), and \( L(T_C) \leq \|T\| \).

\begin{lemma} \label{lemmas}
	Let \( 0 \in D \), and \( \mathscr{L}_0(D) = \{ T \in\mathscr{L}(D) \mid T(0) = 0 \} \). Then \( L(\cdot) \) is a norm on \( \mathscr{L}_0(D) \), and \( (\mathscr{L}_0(D), L(\cdot)) \) is a Banach space.
\end{lemma}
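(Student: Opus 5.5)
The plan is to verify the norm axioms directly from the definition of $L(\cdot)$, and then prove completeness with the usual argument for spaces of Lipschitz functions: pointwise limits plus a uniform Lipschitz estimate. I take the distance $d$ to be the one induced by the norm of $X$, i.e.\ $d(x,y)=\|x-y\|$, which is the setting in which $L(\cdot)$ is called a seminorm just before the lemma.

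\textbf{Linear structure.} Sums and scalar multiples of self-mappings of $D$ need not map $D$ into itself. I will therefore first work in the vector space $\mathscr{L}_0(D,X)=\{T\in\mathscr{L}(D,X)\mid T(0)=0\}$ with pointwise operations. I then regard $\mathscr{L}_0(D)$ as the subset of self-mappings inside it. This subset is closed, which I show at the end.

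\textbf{Norm axioms.} Homogeneity $L(\alpha T)=|\alpha|L(T)$ is immediate from the supremum. For subadditivity, for $x\neq y$,
\[
\frac{\|(S+T)x-(S+T)y\|}{\|x-y\|}\le \frac{\|Sx-Sy\|}{\|x-y\|}+\frac{\|Tx-Ty\|}{\|x-y\|}\le L(S)+L(T),
\]
and then I take the supremum over $x\neq y$. The only point where $T(0)=0$ is needed is definiteness. If $L(T)=0$, then for every $x\in D$,
\[
\|Tx\|=\|Tx-T0\|\le L(T)\|x\|=0,
\]
so $T\equiv 0$. The same estimate gives the key inequality $\|Tx\|\le L(T)\|x\|$ for all $x\in D$, which I use below.

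\textbf{Completeness.} Let $\{T_n\}$ be Cauchy for $L(\cdot)$. Applying the key inequality to $T_n-T_m$ gives $\|T_nx-T_mx\|\le L(T_n-T_m)\|x\|$. Hence $\{T_nx\}$ is Cauchy in the Banach space $X$ for each $x\in D$, and I define $Tx=\lim_n T_nx$.

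1. Clearly $T0=0$.
2. Fix $\varepsilon>0$ and choose $N$ with $L(T_n-T_m)<\varepsilon$ for $n,m\ge N$. For $x\neq y$ and $n\ge N$,
\[
\frac{\|(T_n-T_m)x-(T_n-T_m)y\|}{\|x-y\|}<\varepsilon .
\]
3. Letting $m\to\infty$ for fixed $x,y$ gives the same ratio with $T$ in place of $T_m$, bounded by $\varepsilon$. Taking the supremum yields $L(T_n-T)\le\varepsilon$ for all $n\ge N$.
4. Consequently $L(T)\le L(T_N)+\varepsilon<\infty$, so $T$ is Lipschitz and $T\in\mathscr{L}_0(D,X)$, and $T_n\to T$ in $L(\cdot)$.

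If every $T_n$ maps $D$ into $D$, then $Tx$ is a limit of points of the closed set $D$, so $T(D)\subset D$. Thus the self-mapping operators form a closed subset, and $\mathscr{L}_0(D)$ is complete for the metric induced by $L(\cdot)$.

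\textbf{Main obstacle.} The analysis itself is routine. The step needing care is the interchange of the limit in $m$ with the supremum over pairs $(x,y)$ in step 3: it works because the bound holds uniformly in $x,y$ before passing to the limit, so it survives the limit pair by pair. The other point to state explicitly is that ``Banach space'' refers to $\mathscr{L}_0(D,X)$, with $\mathscr{L}_0(D)$ a closed, hence complete, subset of it. This is because closure of the self-mapping class under addition is not automatic.
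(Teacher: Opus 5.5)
Your proof is correct and follows the same route as the paper's: the inequality $\|T_nx-T_mx\|\le L(T_n-T_m)\|x\|$ gives pointwise limits, the limit $T$ is then shown to be Lipschitz, and closedness of $D$ gives $T(D)\subset D$. Your version is more complete than the paper's in two places: the paper only shows the limit is Lipschitz (via $L(T)\le 2+\sup_n L(T_n)$) and never proves $L(T_n-T)\to 0$, which your step 3 supplies; and your device of working in $\mathscr{L}_0(D,X)$ handles the fact, glossed over in the paper, that self-maps of a bounded $D$ need not be closed under addition or scalar multiplication.
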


In the following, we assume \( 0 \in D \) without loss of generality. With the Banach space $\mathscr{L}_0(D)$, the Lipschitz dual of a Lipschitz operator can be defined.
\begin{Definition}[Lipschitz dual space] \label{dess}
	Let \( \mathbb{K} \) be a number field, the Banach space \(\mathscr{L}_0(D, \mathbb{K}) \) is called the Lipschitz dual space of \( D \),  and denote it by \( D_L^* \).
\end{Definition}

\begin{Definition}[Lipschitz dual operator]\label{des}
	Let \( T \in \mathscr{L}_0(D) \), and define the mapping \( T_L^*: D_L^* \to D_L^* \) by:
	\[
	(T_L^* f)(x) = f(Tx), \quad \forall f \in D_L^*, \, x \in D,
	\]  
	then \( T_L^* \) is called the Lipschitz dual operator of \( T \).
\end{Definition}
\begin{lemma}
	For any \( x \in D \),  
	\(
	\|x\| = \sup_{f \in D_L^*, L(f) \leq 1} |f(x)|.
	\)
\end{lemma}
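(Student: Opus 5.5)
Since every element of \(D_L^*\) is a scalar function, \(D_L^*=\mathscr{L}_0(D,\mathbb{K})\), the plan is to prove the two inequalities separately. The upper bound comes from the Lipschitz condition together with \(f(0)=0\). The lower bound comes from producing, for each fixed \(x\), a norm-one Lipschitz functional that attains \(\|x\|\). The hypothesis \(0\in D\), assumed after Lemma~\ref{lemmas}, is used in both directions.

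\emph{Upper bound.} Take \(f\in D_L^*\) with \(L(f)\le 1\). Since \(f(0)=0\) and \(0\in D\),
\[
|f(x)|=|f(x)-f(0)|\le L(f)\,\|x-0\|\le \|x\|.
\]
Here we read the distance \(d\) in the definition of \(L(\cdot)\) as the norm distance \(d(x,y)=\|x-y\|\), which is the setting in which the statement is meaningful. Taking the supremum over all such \(f\) gives \(\sup|f(x)|\le\|x\|\).

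\emph{Lower bound.} Fix \(x\in D\). If \(x=0\), both sides vanish, so assume \(x\ne 0\). There are two natural witnesses.

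The simplest is the nonlinear functional \(f(y)=\|y\|\) restricted to \(D\). It satisfies \(f(0)=0\), and the reverse triangle inequality \(\bigl|\|y\|-\|z\|\bigr|\le\|y-z\|\) gives \(L(f)\le 1\). It is real-valued, so it is admissible whether \(\mathbb{K}=\mathbb{R}\) or \(\mathbb{C}\), and it gives \(f(x)=\|x\|\).

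The second witness is linear, which is useful if one wants to connect this result with the classical dual \(X^*\). By Hahn--Banach there is \(\varphi\in X^*\) with \(\|\varphi\|=1\) and \(\varphi(x)=\|x\|\). Its restriction \(\varphi|_D\) vanishes at \(0\), and
\[
|\varphi(y)-\varphi(z)|\le\|\varphi\|\,\|y-z\|,
\]
so \(L(\varphi|_D)\le 1\). This is also the remark made before Lemma~\ref{lemmas}, namely \(L(T_D)\le\|T\|\).

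Either witness shows that the supremum is attained and equals \(\|x\|\).

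\emph{Main obstacle.} The only delicate point is confirming that the witness genuinely belongs to \(D_L^*\). It must be defined on \(D\), vanish at \(0\), and have finite \(L\). The last property is exactly what the Lipschitz estimates above provide. Membership in the Banach space of Lemma~\ref{lemmas} then follows directly from Definition~\ref{dess}, since nothing beyond these three conditions is required.
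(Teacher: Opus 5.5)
Your proof is correct and complete. The upper bound is exactly the Lipschitz estimate against $f(0)=0$, and either of your witnesses gives the lower bound.

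The paper never proves this lemma: it is stated in Section~\ref{lips} and restated in Appendix~\ref{dual} without argument. The only point of comparison is how the paper uses the identity in the proof of Proposition~\ref{props}. There it realizes $\|Tx-Ty\|$ by a norm-one $f\in X^*$ and then passes to the restriction $f_D$, which is your Hahn--Banach witness.

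Your nonlinear witness $y\mapsto\|y\|$ is more elementary. It needs no Hahn--Banach theorem and no compatibility between the scalar field of $X$ and $\mathbb{K}$.

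The linear witness has a different advantage: it immediately gives the two-point form $\|u-v\|=\sup_{f\in D_L^*,\,L(f)\le 1}|f(u)-f(v)|$ for $u,v\in D$. That is the form the paper actually needs, since $u-v$ need not lie in $D$. On the nonlinear route you would instead use the translate $f(y)=\|y-v\|-\|v\|$. It vanishes at $0$, is $1$-Lipschitz, and satisfies $f(u)-f(v)=\|u-v\|$.
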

The notions of Lipschitz dual space and dual operator were early introduced by Peng and Xu \cite{ji1999novel,ji2002novel} in 1999 \& 2002. We can verify the following important consequence of Definitions \ref{dess} and \ref{des} as presented in Proposition \ref{props} below. The proof is also presented in Appendix \ref{dual}.

\begin{proposition} \label{props}
	Let \( T \in \mathscr{L}_0(D) \), then \( T_L^* \) is a bounded linear operator on \( D_L^* \), and \( \rho(T_L^*) = \mathrm{Lip}(T) \).
\end{proposition}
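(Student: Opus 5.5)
We prove the two assertions of Proposition \ref{props} in order: first that $T_L^*$ is a bounded linear operator on $D_L^*$, then the spectral radius identity. For the first, we check that $T_L^*$ maps $D_L^*$ into itself. Given $f\in D_L^*=\mathscr{L}_0(D,\mathbb{K})$, the composition $f\circ T$ vanishes at $0$ because $T(0)=0$ and $f(0)=0$. It is Lipschitz with
\[
L(f\circ T)\le L(f)\,L(T).
\]
Linearity in $f$ is immediate from the pointwise definition $(T_L^*f)(x)=f(Tx)$. Hence $\|T_L^*\|\le L(T)$, where $\|\cdot\|$ is the operator norm induced by the norm $L(\cdot)$ of Lemma \ref{lemmas}.

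The central step is the exact identity $\|T_L^*\|=L(T)$. Applying it to $T^n$ will give the result, because $(T^n)_L^*=(T_L^*)^n$ follows directly from the definition, since $f\circ T^n=(f\circ T^{n-1})\circ T$. Gelfand's formula on the Banach space $D_L^*$ (Banach by Lemma \ref{lemmas}) then yields
\[
\rho(T_L^*)=\lim_n\|(T_L^*)^n\|^{1/n}=\lim_n L(T^n)^{1/n}=\mathrm{Lip}(T).
\]
This also shows that the limit defining $\mathrm{Lip}(T)$ exists, and we could alternatively get that from submultiplicativity of $L(T^n)$ and Fekete's lemma.

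For the reverse inequality $\|T_L^*\|\ge L(T)$ we use the norming lemma stated just before the proposition: $\|z\|=\sup\{|f(z)| : f\in D_L^*,\ L(f)\le 1\}$. Fix $x\ne y$ in $D$ and $\varepsilon>0$. We need some $f$ with $L(f)\le1$ and
\[
|f(Tx)-f(Ty)|\ge (1-\varepsilon)\|Tx-Ty\|.
\]
For this it suffices that the norming holds for differences, i.e. that $\|u-v\|=\sup_{L(f)\le1}|f(u)-f(v)|$ for $u,v\in D$. We get it from Hahn--Banach: take a norm-one linear functional $\varphi$ with $\varphi(u-v)=\|u-v\|$ and restrict it to $D$. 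This restriction lies in $D_L^*$, satisfies $L(\varphi|_D)\le1$, and vanishes at $0$. With such an $f$,
\[
L(T_L^*f)\ge \frac{|f(Tx)-f(Ty)|}{\|x-y\|}\ge(1-\varepsilon)\frac{\|Tx-Ty\|}{\|x-y\|}.
\]
Taking the supremum over $x\ne y$ and letting $\varepsilon\to0$ gives $\|T_L^*\|\ge L(T)$.

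The main obstacle is this norming step. It needs the distance $d$ to come from the norm of $X$, or at least to be strongly equivalent to it, so that linear functionals are available. We therefore take $d(x,y)=\|x-y\|$ in the definition of $L(\cdot)$. If another strongly equivalent distance were used, the equality $\|T_L^*\|=L(T)$ would hold only up to constants, but those constants disappear under the $n$-th root, so the identity $\rho(T_L^*)=\mathrm{Lip}(T)$ still holds.
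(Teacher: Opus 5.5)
Your proposal is correct and follows essentially the same route as the paper. The upper bound $\|T_L^*\|\le L(T)$ comes from $L(f\circ T)\le L(f)L(T)$, the reverse bound from restricting Hahn--Banach norming functionals to $D$, and the conclusion from $(T_L^*)^n=(T^n)_L^*$ together with Gelfand's formula. Your remark that $d$ must be the norm metric, or strongly equivalent to it, states explicitly an assumption the paper uses without comment.
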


%Proposition \ref{props} shows that the $\mathrm{Lip}(T) $ could be computed by $\rho(T_L^*)$. We could analyze the bounded linear operator \( T_L^* \) to investigate nonlinear operator $T$. This provide a graceful tool that bridge relationship between \( T_L^* \) and \(T\).

Proposition \ref{props} shows that $\mathrm{Lip}(T)$ can be computed by $\rho(T_L^*)$. This enables us to study the nonlinear operator $T$ through the bounded linear operator \( T_L^* \). In this way, we obtain a powerful tool for establishing the relationship between \( T_L^* \) and \(T\).

\subsubsection{Spectral Decomposition Theorem of Compact Operators}

\begin{lemma} [Decomposition Theorem of Compact Operators, p. 421 in \cite{riesz2012functional}] \label{lemma5}
	Let \( T \) be a compact operator of Banach space \(E\), and let \( \sigma \) and \( \bar{\sigma} \) be two complementary isolated parts of its spectrum; we permit \( \sigma \) or \( \bar{\sigma} \) to be empty. We can then decompose the space \( E \) into the vector sum of two linearly independent subspaces, \( \mathcal{M}_\sigma \) and \( \mathcal{M}_{\bar{\sigma}} \), each of which is transformed by \( T \) into itself, and with the property that the transformation \( T \) restricted to \( \mathcal{M}_\sigma \) or \( \mathcal{M}_{\bar{\sigma}} \) has its spectrum equal to \( \sigma \) or \( \bar{\sigma} \), respectively.  	
	The parallel projection of \( E \) onto \( \mathcal{M}_\sigma \) in the direction of \( \mathcal{M}_{\bar{\sigma}} \) is equal to the integral  	
	\[
	P_\sigma = -\frac{1}{2\pi i} \int_{\partial D} R_z \, dz,
	\]	
	taken along the boundary of an arbitrary domain \( D \) which is admissible with respect to \( T \) and such that \( \sigma = \sigma(T) \cap D \), where $R_z$ is a holomorphic function of $z$. 
	%We have \( P_\sigma = I \), \( P_\sigma = 0 \) if and only if \( \sigma \) coincides with \( \sigma(T) \) and \( \bar{\sigma} \) is empty.  
\end{lemma}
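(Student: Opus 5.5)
The plan is to prove the statement by the Riesz holomorphic functional calculus. Write $R_z=(T-zI)^{-1}$ for $z$ in the resolvent set $\rho(T)=\mathbb{C}\setminus\sigma(T)$; by the Neumann series this is a holomorphic operator-valued function, which is the sign convention that makes the minus in $P_\sigma=-\frac{1}{2\pi i}\int_{\partial D}R_z\,dz$ correct. Compactness of $T$ is used mainly to describe $\sigma(T)$: it is countable, its only possible accumulation point is $0$, and every nonzero point is an isolated eigenvalue. Hence splitting into complementary isolated (relatively open-and-closed) parts $\sigma,\bar\sigma$ is meaningful, and an admissible domain $D$ exists. Here admissible means bounded by finitely many rectifiable Jordan curves lying in $\rho(T)$, with $\sigma\subset D$ and $\bar\sigma\cap\overline{D}=\emptyset$. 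If $\sigma$ is empty, take $D=\emptyset$, so $P_\sigma=0$.

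\textbf{Step 1 (well-definedness).} Show that $P_\sigma$ does not depend on the admissible $D$. The difference of two such integrals is an integral of the holomorphic $R_z$ over a cycle in $\rho(T)$ that winds zero times around every point of $\sigma(T)$. It therefore vanishes by the vector-valued Cauchy theorem, obtained by applying functionals $\phi\in E^*$ and using the scalar theorem.

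\textbf{Step 2 (idempotence and commutation).} Choose two admissible domains $D\Subset D'$. Write $P_\sigma^2$ as a double integral over $\partial D\times\partial D'$ and apply the resolvent identity $R_zR_w=(R_z-R_w)/(z-w)$. Then use Fubini and the scalar Cauchy formula: $\frac{1}{2\pi i}\int_{\partial D'}\frac{dw}{w-z}=1$ for $z\in\partial D$, and $\int_{\partial D}\frac{dz}{z-w}=0$ for $w\in\partial D'$. This gives $P_\sigma^2=P_\sigma$. Since $T$ commutes with every $R_z$, it also commutes with $P_\sigma$. Set $\mathcal{M}_\sigma=P_\sigma E$ and $\mathcal{M}_{\bar\sigma}=(I-P_\sigma)E$. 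These are closed subspaces with $E=\mathcal{M}_\sigma\oplus\mathcal{M}_{\bar\sigma}$, both invariant under $T$, and $P_\sigma$ is the parallel projection onto $\mathcal{M}_\sigma$ along $\mathcal{M}_{\bar\sigma}$.

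\textbf{Step 3 (spectra of the restrictions).} This is the main step. Fix $\lambda\notin\sigma$ and choose the admissible $D$ with $\lambda\notin\overline D$. Define
\[
S_\lambda=-\frac{1}{2\pi i}\int_{\partial D}\frac{R_z}{\lambda-z}\,dz .
\]
Using $(T-\lambda I)R_z=I+(z-\lambda)R_z$, one checks that $(T-\lambda I)S_\lambda=S_\lambda(T-\lambda I)=P_\sigma$. The term $\int_{\partial D}\frac{dz}{\lambda-z}$ vanishes because $\lambda\notin\overline D$. Also $S_\lambda$ commutes with $P_\sigma$. So $S_\lambda$ restricted to $\mathcal{M}_\sigma$ inverts $(T-\lambda)|_{\mathcal{M}_\sigma}$, which gives $\sigma(T|_{\mathcal{M}_\sigma})\subset\sigma$.

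For $\mathcal{M}_{\bar\sigma}$, represent $I-P_\sigma$ as $-\frac{1}{2\pi i}\int R_z\,dz$ over $\partial D''$, where $D''$ is a domain containing $\bar\sigma$ and disjoint from $\sigma$. To justify this, take a large circle $|z|=r$ with $r>\|T\|$, on which $-\frac{1}{2\pi i}\int R_z\,dz=I$ from the Neumann series, and deform the contour by Step 1. Repeating the construction above then gives $\sigma(T|_{\mathcal{M}_{\bar\sigma}})\subset\bar\sigma$.

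Finally, for any direct sum of invariant closed subspaces, $T-\lambda$ is invertible on $E$ exactly when it is invertible on both summands. Hence $\sigma(T)=\sigma(T|_{\mathcal{M}_\sigma})\cup\sigma(T|_{\mathcal{M}_{\bar\sigma}})$, and since $\sigma\cap\bar\sigma=\emptyset$, both inclusions are equalities.

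The main obstacle I expect is the bookkeeping in Step 3: ensuring the contours used for $\bar\sigma$ are legitimate when $0\in\bar\sigma$ is an accumulation point, and handling the empty cases (for example $\sigma(T|_{\{0\}})=\emptyset$). I would handle the first by always placing contours in $\rho(T)$ and never through $0$, relying only on $\sigma$ being closed and isolated from $\bar\sigma$. As a remark, when $0\notin\sigma$ compactness further gives that $\mathcal{M}_\sigma$ is finite-dimensional, though this is not needed for the statement.
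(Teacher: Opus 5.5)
Your proposal is correct and follows essentially the intended proof. The paper gives no proof of its own; it quotes the result from Riesz--Sz.-Nagy, where the decomposition theorem is proved by the same Riesz contour-integral argument you outline: independence of the admissible domain by Cauchy's theorem, idempotence via the resolvent identity, and the spectra of the restrictions via a local inverse of $T-\lambda I$.

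None of your Steps 1--3 uses compactness. This is consistent with the theorem holding for any bounded operator, so the lemma applies to the paper's $T_L^*$ whether or not that operator is compact.

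There is one harmless sign slip in Step 3. With $S_\lambda=-\frac{1}{2\pi i}\int_{\partial D}\frac{R_z}{\lambda-z}\,dz$, the identity $(T-\lambda I)R_z=I+(z-\lambda)R_z$ gives $(T-\lambda I)S_\lambda=S_\lambda(T-\lambda I)=-P_\sigma$, not $P_\sigma$; the scalar case $E=\mathbb{C}$, $T=t$ confirms this. Define $S_\lambda=-\frac{1}{2\pi i}\int_{\partial D}\frac{R_z}{z-\lambda}\,dz$ instead. The conclusion $\sigma(T|_{\mathcal{M}_\sigma})\subset\sigma$, and everything after it, is unaffected.
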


%\begin{proposition}\label{propq}
%	Let \( T \in \mathscr{L}_0(D) \), then \( T_L^* \) is a completely continuous operator or compact operator.
%\end{proposition}
Suppose $ \sup\limits_{\lambda \in \sigma(T_L^*)} |\lambda| = \rho(T_L^*) = \mathrm{Lip}(T)=1$, where $\sigma(T_L^*)$ is the spectrum set of linear operator $T_L^*$. If $T$ is quasi-asymptotic regular, we can derive that $\sigma(T_L^*) \cap \{\lambda \in \mathbb{C}: |\lambda| = 1\}=\{1\}$ (see Appendix \ref{b31}). In this case, let $\sigma = {1}, \bar{\sigma} = \sigma(T_L^*) - \sigma$. Since \( T_L^* \) is a bounded linear operator on \( D_L^* \), we have the following decomposition based on Lemma \ref{lemma5}, 
\[
D_L^* = \mathcal{M}_\sigma + \mathcal{M}_{\bar{\sigma}},
\]
where $\mathcal{M}_\sigma = \{f \in D_L^* | P_{\sigma} f = f\}, \mathcal{M}_{\bar{\sigma}} = \{f \in D_L^* | P_{\sigma} f = 0\}$, and $P_{\sigma}$ is the orthogonal projection onto the proper subspace corresponding to the part of the spectrum  $\sigma$.
Meanwhile, the study of  \( T_L^* \) could be decomposed into 
\[
T_L^* = (T_L^*)_1 + (T_L^*)_2,  
\]
where
\[
(T_L^*)_1 = T_L^*|_{\mathcal{M}_\sigma},  (T_L^*)_2 = T_L^*|_{\mathcal{M}_{\bar{\sigma}}}.
\]
Thus, for every $f \in D_L^*$, we have 
\begin{align*}
	f &= f_1 + f_2 = P_{\sigma}f +(I-P_{\sigma}) f, \\
	T_L^*f &= T_L^*f_1 + T_L^*f_2 = (T_L^*)_1f+(T_L^*)_2f.
\end{align*}
Furthermore, one has 
\[
(T_L^*)^2f = T_L^*(T_L^*f) = T_L^*(T_L^*f_1 + T_L^*f_2) = (T_L^*)^2f_1 + (T_L^*)^2f_2 =  [(T_L^*)^2]_1f+[(T_L^*)^2]_2f.
\]
Similarly one has
\[
(T_L^*)^nf =  [(T_L^*)^n]_1f+[(T_L^*)^n]_2f.
\]
Since $\rho((T_L^*)_1) =1, \rho((T_L^*)_2) <1$, we have
\[
(T_L^*)^nf \to (T_L^*)_1f + 0 = P_{\sigma}f,  n \to \infty.
\]
This analysis shows that, under the critical condition $\rho(T_L^*) =1$, the iterations $(T_L^*)^nf$ converge to a projection of $f$ onto the invariant subspace $P_{\sigma}f$ associated with $\sigma = {1}$ when $n \to \infty$. Based on the definition of Lipschitz dual operator, we could further study the convergence behavior of $T^n$. 
The results are presented in Section \ref{limitsa}. Here we utilize the tools of Lipschitz dual operator and spectral decomposition of compact operators to help address setting of nonlinear operators.

\subsection{Examples: Lip Numbers of Basic Functional Blocks of Foundation Models} \label{eqmodel}

In this section, we demonstrate that the Lip numbers can be well evaluated by Proposition \ref{prop1} for various typical basic functional blocks of foundation models.

\begin{example}[\bf{Linear Model}] \label{ex2}
	Consider the linear model $T(X) = WX, W \in \mathbb{R}^{d\times d}, X \in \mathbb{R}^d, T: \mathbb{R}^d \to \mathbb{R}^d$. Then $\mathrm{Lip}(T)$ is given by \[\mathrm{Lip}(T) = \sup\limits_{x\in \mathbb{R}^d}  \rho(T'(x)) = \rho(W) = \sqrt{\lambda_{\max} (W^\top W)}.\] 
	This implies that the Lip numbers of linear model is determined by the largest eigenvalue of weight matrix $W$.
	%	and the Dahlquist constant \[\mu(T) =\inf_{x ,y\in D, x \neq y} \left\{ \frac{\langle Wx-Wy,x-y\rangle}{||x-y||^2}\right\}   =  \lambda_{\min}(W) .\]
\end{example}

\begin{example}[\bf{MLP Model}] \label{ex1}
	Consider the single hidden layer (or 2-layer) MLP model $T(X) =MLP(X) = W_2(\sigma(W_1X)), x \in \mathbb{R}^d, W_1 \in \mathbb{R}^{n_1 \times d}, W_2 \in \mathbb{R}^{n_2 \times n_1},
	T: \mathbb{R}^d \to \mathbb{R}^{n_2}$, $\sigma(\cdot)$ is the activation function with $1-$Lipschitz continuous, e.g., ReLU, Leaky ReLU, SoftPlus, Tanh, Sigmoid, ArcTan or Softsign, etc. Generally, suppose that the activation function is ReLU, which is commonly used in most deep neural networks and foundation models. Then the Lip number can be estimated by
	\begin{align*}
		\mathrm{Lip}(T)& = \sup\limits_{x\in \mathbb{R}^{d}}  \rho(T'(x)) = \sup\limits_{x\in \mathbb{R}^{d}} \rho(W_1^\top Diag(\sigma'(W_1x))W_2^\top) \\
		& \leq \rho(W_2W_1) \leq \rho(W_2)\rho(W_1) = \sqrt{\lambda_{\max}(W_2^\top W_2)\lambda_{\max}(W_1^\top W_1)}.
	\end{align*}
	%and the Dahlquist constant is computed by
	%\begin{align*}
	%	\mu(T) & =\inf_{x ,y\in D, x \neq y} \left\{ \frac{\langle W_2(\sigma(W_1x))-W_2(\sigma(W_1y)),x-y\rangle}{||x-y||^2}\right\} \\
	%& = \inf_{x ,y\in D, x \neq y}  \left\{ \frac{\langle W_2(\sigma(W_1x)-\sigma(W_1y)),x-y\rangle}{||x-y||^2}\right\} \\
	%& \approx \inf_{x ,y\in D, x \neq y}  \left\{ \frac{\langle W_2 Diag(\sigma'(W_1z)) (W_1(x-y)),x-y\rangle}{||x-y||^2}\right\} \\
	%& = \sqrt{\lambda_{\min}(W_1^\top W_2^\top W_2W_1)} .
	%\end{align*}
	Generally, for $L$-layer MLP model $T(x) = W_L(\sigma (W_{L-1}(\cdots W_2(\sigma(W_1x)))))$, the Lip number could be upper bounded by
	\[\mathrm{Lip}(T) \leq \prod_{k=1}^L\rho(W_k). \]
\end{example}

\begin{example}[\bf{Self-Attention Model}] 
	Consider the single head self-attention block \(T(X) = SA(X) = \mathrm{softmax}\left(\frac{X W^Q (X W^K)^{\top}}{\sqrt{d}}\right) X W^V := PX W^V, \quad X\in \mathbb{R}^{N \times d}$, $\quad W^K, W^Q \in \mathbb{R}^{d \times d_K}, W^V \in \mathbb{R}^{d \times d_V}, T: \mathbb{R}^{N \times d} \to \mathbb{R}^{N \times d_V}\). 
		Then the Lip number satisfies
		$$\mathrm{Lip}(T) \leq \rho(W^V) \left(1+  \frac{B^2}{\sqrt{d}} \rho(W^K) \rho(W^Q) \right)$$if $X$ is bounded, and particularly $\|X\|_F \leq B $. And $\mathrm{Lip}(T) = \infty$ if $X$ is unbounded.
%	\begin{align*}
%		\mathrm{Lip}(T)& = \sup\limits_{x\in \mathbb{R}^{N \times d}}  \rho(T'(x)) = \infty,
%	\end{align*}

\end{example}
The proof is presented in Appendix \ref{attention}. This example shows that the Lip number of standard self-attention model is bounded only if $X$ is bounded. Otherwise, $\mathrm{Lip}(T)$ is unbounded if $X$ is unbounded. This explains why attention models always require finite input lengths in practice. This result can be extended to the multi-head cases.

%To improve training stability of self-attention model, Kim et al. \cite{kim2021lipschitz} proposed L2-distance attention to repalace the standard dot-product attention, whose Lip number is bounded conditioning on $W^Q = W^K$. In \cite{qilipsformer}, Qi et al. proposed a scaled cosine similarity attention, and illustrated the novel self-attention block is Lipschitz continuous, by making the norm $W^Q,W^K,W^V$ be bounded. 
%In practice,

%The proof is presented in \ref{attention}. As we can see, the Lip number of standard self-attention model is unbounded, which may cause potential instability problems in training self-attention model. In addition, when $X$ is bounded, and $W^Q,W^K,W^V$ have bounded, the Lip number could be bounded. The conclusion can be easily applied to multi-head self-attention model.

\begin{example}[\bf{Multi-Head Self-Attention Model}] \label{eq5}
	Consider the multi-head self-attention model $T(X) =MSA(X) =$
	\begin{align*}
	 \left[\mathrm{softmax}\left(\frac{X W_1^Q (X W_1^K)^{\top}}{\sqrt{d}}\right) X W_1^V,\cdots, \mathrm{softmax}\left(\frac{X W_K^Q (X W_K^K)^{\top}}{\sqrt{d}}\right) X W_K^V\right]W^O, \end{align*}
	where \( X \in \mathbb{R}^{N \times d}, W_i^K, W_i^Q \in \mathbb{R}^{d \times d_K}, W_i^V \in \mathbb{R}^{d \times d_V}, W^O \in \mathbb{R}^{Kd_V \times d}, 
	T: \mathbb{R}^{N \times d} \to \mathbb{R}^{N \times d}\). Then the Lip number $\mathrm{Lip}(T)$ satisfies
		$$\mathrm{Lip}(T) \leq \rho(W^V) \left(1+  \frac{B^2}{\sqrt{d}} \rho(W^K) \rho(W^Q) \right),$$ when $X$ is bounded. Otherwise $	\mathrm{Lip}(T)=\infty$ when $X$ is unbounded. 
	
\end{example}

\begin{example}[\bf{LayerNorm}] \label{ex6}
	Let $X \in \mathbb{R}^d$, the LayerNorm operator \cite{lei2016layer} is defined by
	\begin{align*}
		T(X)  & = LN(X) =  \mathbf{\gamma} \odot N(X) + \mathbf{\beta},  
	\end{align*} 
	where $\mathbf{\gamma},\mathbf{\beta}\in \mathbb{R}^d$ are learnable affine parameters, $\odot$ is the element-wise product, and $N(X)   = \frac{X - \mu}{\sigma}, \mu  = \frac{1}{d} \sum_{i=1}^{d} X_i, \sigma = \sqrt{\frac{1}{d} \sum_{i=1}^{d} (X_i - \mu)^2 + \epsilon},  \epsilon>0$. Then Lip number of $T$ satisfies
	\begin{align*}
		\mathrm{Lip}(T)\leq  \|\gamma\|_{\infty}\epsilon^{-\frac{1}{2}}.
	\end{align*}
\end{example}
The proof is presented in Appendix \ref{layernorm}.
Based on Examples \ref{ex1}, \ref{eq5} and \ref{ex6}, we can further study the Lip number of the Transformer model.

%\begin{example}[Model with Residual Connection]
%	Consider the model with residual connection $F(X) = X + T(X)$, where $T(X)$ could be chosen from Example \ref{ex2}-\ref{ex6}. We can rewritten $F(X)$ as $F(X) = X - \alpha ((-\frac{1}{\alpha})T(x))$.	
%	Then if $\mathrm{Lip}(T) \leq B $ for some constant $B>0$, and these exist a constant $0 <\alpha \leq \frac{2m}{B^2}$, where $m = \inf_{Tx\neq Ty, x,y \in D(T)} \frac{\left\langle Tx-Ty, x-y \right\rangle}{\|Tx-Ty\|^2}$, then the Lip number $\mathrm{Lip}(F)$ satisfies $\mathrm{Lip}(F) \leq 1$. 
%	
%	%given by \[\mathrm{Lip}(T) = \sup\limits_{x\in  \mathbb{R}^{d}}  \rho(T'(x)) = \rho(I+W) = \sqrt{\lambda_{\max} ((I+W)^\top (I+W))}.\]
%	%	and the Dahlquist constant is computed by \[\mu(T) =\inf_{x ,y\in D, x \neq y} \left\{ \frac{\langle (I+W)(x-y),x-y\rangle}{||x-y||^2}\right\} = \lambda_{\min}(I+W).\]
%\end{example}

\begin{example}[\bf{Transformer with Post-LayerNorm}] \label{ex7}
	In this case, the basic functional block $T$ is defined by $T(X) = T_2 \circ (I+T_3) \circ T_2 \circ (I+ T_1)(X)$, where $T_1=MSA, T_2 = LN, T_3 = MLP$ are multi-head self-attention model, LayerNorm and MLP model defined as in Examples \ref{eq5}, \ref{ex6} and \ref{ex1}, respectively, and $I$ is the identity operator.
	Then \[\mathrm{Lip}(T)= \infty.\]
%	\begin{align*}
%		\mathrm{Lip}(T)& = \infty.
%		%\mathrm{Lip}(T_2)  \cdot \mathrm{Lip}(I+T_3) \cdot \mathrm{Lip}(T_2) \cdot \mathrm{Lip}(I+T_1)  \\
%		%& \leq  \epsilon^{-1} \max_i |\gamma_i|^2 \left(\frac{D^2-2}{D}\right)^2 \sqrt{\lambda_{\max}(I+W_2^\top W_2)\lambda_{\max}(I+W_1^\top W_1)} \cdot \infty = \infty
%	\end{align*}
\end{example}

\begin{example}[\bf{Transformer with Pre-LayerNorm}] \label{ex8}
	In this case, the basic functional block $T$ is defined by $T(X) = T_2 \circ T_1 (X)$ with $T_1 = I + F_1, T_2 = I + F_2$, and $F_1= MLP\circ LN, F_2 = MSA\circ LN$, $I=$ identity operator, where $MLP, MSA, LN$ are MLP model, Multi-Head Self-Attention Model and LayerNorm defined in Example \ref{ex1}, \ref{eq5} and \ref{ex6}, respectively.
%	$(I+T_3 \circ T_2) \circ  (I+ T_1\circ T_2 )(X)$, which is composed by the basic functional blocks
%	$T_1(X)$= Multi-Head Self-Attention Model  defined in Example \ref{eq5},
%	$T_2(X) = \mathrm{LayerNorm}(X)$ defined in Example \ref{ex6},
%	$T_3(X) = \mathrm{MLP}(X)$ defined in Example \ref{ex1}, 
%	and the identity operator $I$.  
If $F_1, F_2$ are dissipative operators, i.e., $m(F_1)<0, m(F_2)<0$, then $T_i (i=1,2)$ satisfies $\mathrm{Lip}(T_i) \leq 1$ with the quasi-asymptotic regularity property. Consequently,

(\romannumeral1) $\mathrm{Lip}(T) \leq 1$;

(\romannumeral2) $T$ is of quasi-asymptotic regularity property.
\end{example}

The proofs of Examples \ref{ex7} and \ref{ex8} are presented in Appendices \ref{postlayernorm} and \ref{prelayernorm}, respectively. It should be observed that the residual connections in Transformers takes a crucial role in ensuring that the Lip number \(\mathrm{Lip}(T)\leq 1\) and $T$ is of quasi-asymptotic regularity property. In fact, we can see that $F_1= MLP\circ LN, F_2 = MSA\circ LN$ are Lipschitz operators based on Examples \ref{ex1}, \ref{eq5} and \ref{ex6}. If $F_1,F_2$ are dissipative operators, residual connection type operators $T_1=I+F_1,T_2=I+F_2$ satisfy that $\mathrm{Lip}(T_1) \leq 1, \mathrm{Lip}(T_2)  \leq 1$ and $T_1,T_2$ are of 
quasi-asymptotic regularity property

%the Lip number of residual connection type operators $T_1=I+F_1,T_2=I+F_2$ satisfy $\mathrm{Lip}(T_1) \leq 1, \mathrm{Lip}(T_2)  \leq 1$ if $F_1,F_2$ are dissipative operators, and 

All these examples show that Lip number of the basic functional blocks used frequently in foundation models can be computationally verifiable. This supports that the Lip number tool introduced in Section \ref{lips} could be practically powerful in performance assessment of emergent intergence for foundation models in practice. To be specific,
we estimate the Lip numbers of GPT-1 and GPT-2 models in subsequent Examples \ref{ex9} and \ref{ex10}.
\begin{example}[\bf{GPT-1 model}] \label{ex9}
	GPT-1 model is constructed by $T(X) = T_2 \circ \prod_{i=1}^{12} T_{1i} \circ f_0$, where $f_0$ is the input embedding, $T_{1i}, i=1,2,\cdots, 12,$ are the Transformer with post-LayerNorm defined in Example \ref{ex7}, and $T_2$ is the prediction layer defined in Example \ref{ex2}. 
	Then \[\mathrm{Lip}(T)= \infty.\]
\end{example}

\begin{example}[\bf{GPT-2 model}] \label{ex10}
	GPT-2 model is constructed by $T(X) = T_2 \circ \prod_{i=1}^{48} T_{1i} \circ f_0$, where $f_0$ is the input embedding, $T_{1i}, i=1,2,\cdots, 48,$ are the Transformer with pre-LayerNorm defined in Example \ref{ex8}, and $T_2$ is the prediction layer defined in Example \ref{ex2}. 
	Then the basic blocks $\mathrm{Lip}(T_{1i}) (i=1,\cdots,48)$ in GPT-2 satisfies:
	
	(\romannumeral1) $\mathrm{Lip}(T_{1i}) \leq 1, i=1,\cdots,48$;
	
	(\romannumeral2) $T_{1i} (i=1,\cdots,48)$ are of quasi-asymptotic regularity property.
	
	%the Lip numbers  of  $\mathrm{Lip}(T_{1i}) \leq 1$ and $T_{1i} (i=1,\cdots,48)$ are of quasi-asymptotic regularity property.
\end{example}

As shown in Figure \ref{fig1}, GPT-1 \cite{radford2018improving} is built on a Transformer model with post-LayerNorm layer, whereas GPT-2 \cite{radford2019language} adopts a pre-LayerNorm layer design. This difference leads to a striking discrepancy in the estimated Lip numbers. Paticularly, the basic functional blocks $T_{1i}$ of GPT-2 satisfy $\mathrm{Lip}(T_{1i})\leq 1, i=1,2,\cdots, 48$, which is crucial for yielding a stable architecture. We will further discuss this in the next section.

\section{The Necessary and Sufficient Conditions for Existence of Limit Architecture} \label{limitsa}

%Given a foundation model $\mathcal{M} = \{f_W, \mathcal{D}, \mathcal{A}\}$, 
Observing that $f_{W} = (\prod_{i=1}^K T_i) f_0, T_i: \mathbb{R}^{n_i} \to \mathbb{R}^{n_{i+1}} $, we could add virtual neurons and constrain all connection weights with the virtual neurons to be zero, so as to ensure $T_i$ to be a self-mapping.  We thus can assume $T_i :=T_i: \mathbb{R}^{n}  \to \mathbb{R}^{n}$, where $n =\max\{n_i, i = 1,2,\cdots\}$. Besides, we assume that $T_i$ satisfies the quasi-asymptotic regularity property.
%Without loss of generality, we assume that each \( T_i \) is a self-mapping from \( D \) to \( D \).
The following Theorems \ref{th1a}-\ref{th1c} present the necessary and sufficient conditions for existence of the limit architecture when the basic functional blocks $\{T_i\}, i=1,2,\cdots$ are chosen from one fixed Lipschitz operator, finite and countable collections of Lipschitz operators, respectively.  The proofs are all given in Appendix \ref{b3}.

\begin{theorem} \label{th1a}
Suppose that the basic functional blocks of foundation models share a common Lipschitz operator $T$, i.e., $T_i = T$ for all $i = 1,2,\ldots$. If $T$ satisfies the self-mapping and the quasi-asymptotic regularity conditions, then the necessary and sufficient condition for existence of the limit architecture $f^*$ in Eq.~(\ref{eqlimit}) is $\mathrm{Lip}(T) \leq 1$.
\end{theorem}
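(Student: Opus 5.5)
We have to show two things: if $\mathrm{Lip}(T)\le 1$ then the iterates $T^n f_0$ converge for every initial architecture $f_0$, and if $\mathrm{Lip}(T)>1$ then the limit architecture can fail to exist. Throughout, the limit architecture $f^*=\lim_{L\to\infty}T^L f_0$ should be understood pointwise in $D$ (equivalently in $\mathcal{L}_2$, since $D$ is bounded).

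\textbf{Sufficiency.} Assume $\mathrm{Lip}(T)\le 1$.

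First, $\mathrm{Fix}(T)\neq\emptyset$. The cleanest route is to reduce to the nonexpansive case, $L(T)\le 1$, where the Browder--G\"ohde--Kirk theorem applies on the closed bounded convex set $D$ in the uniformly convex space $X$. From $\mathrm{Lip}(T)=\lim_n L(T^n)^{1/n}\le 1$ one only gets $L(T^n)\le(1+\varepsilon)^n$ for large $n$. To get $L(T)\le 1$ itself, I would pass to an equivalent metric
\[
d'(x,y)=\sup_{n\ge 0}\frac{d(T^n x,T^n y)}{(1+\varepsilon)^n}.
\]
This metric makes $T$ $(1+\varepsilon)$-Lipschitz. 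I would use the stated invariance of $\mathrm{Lip}$ under strongly equivalent distances to absorb the $\varepsilon$. Alternatively, I would invoke the standard fixed-point theorem for asymptotically nonexpansive maps (Goebel--Kirk) directly.

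Second, the orbit $x_n=T^n f_0$ is Fej\'er monotone. Fix $p\in\mathrm{Fix}(T)$. The quasi-asymptotic regularity (\ref{s}) gives
\[
d(x_{n+1},p)\le d(x_n,p)-\psi(d(x_n,x_{n+1})).
\]
Hence $d(x_n,p)$ is nonincreasing, and telescoping yields $\sum_n\psi(d(x_n,Tx_n))\le d(x_0,p)<\infty$. Since $\psi$ is strictly increasing with $\psi(0)=0$, this gives asymptotic regularity: $d(x_n,Tx_n)\to 0$.

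Third, I upgrade this to convergence. In finite dimensions, $D\subset\mathbb{R}^n$ is compact, so some subsequence $x_{n_k}\to q$. Continuity of $T$ together with $d(x_n,Tx_n)\to0$ gives $q\in\mathrm{Fix}(T)$. Fej\'er monotonicity with respect to $q$ then forces the whole sequence to converge to $q$. In a general uniformly convex space I would instead use demiclosedness of $I-T$ (Browder) to get weak convergence. For strong convergence I would use the Lipschitz dual picture: by Proposition \ref{props}, $\rho(T_L^*)=\mathrm{Lip}(T)=1$, and quasi-asymptotic regularity gives $\sigma(T_L^*)\cap\{|\lambda|=1\}=\{1\}$. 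The spectral decomposition of Lemma \ref{lemma5} then shows $(T_L^*)^n f\to P_\sigma f$ for every $f\in D_L^*$. Thus $f(T^nx)$ converges for every Lipschitz functional $f$, and the norming identity $\|x\|=\sup_{L(f)\le1}|f(x)|$ identifies the limit as a point $f^*=T^\infty f_0$. This gives existence of the limit architecture, equal to a fixed point of $T$.

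\textbf{Necessity.} Assume $\mathrm{Lip}(T)>1$. By Proposition \ref{prop1}(iii), $\rho(T'(x_0))>1$ at some $x_0$. Also $\rho(T_L^*)>1$ by Proposition \ref{props}, so $\|(T_L^*)^n\|$ grows geometrically. From the definition $(T_L^*)^nf=f\circ T^n$, this means there are Lipschitz functionals $f$ and points $x$ such that $f(T^nx)$ do not stay bounded or Cauchy. If $T^nf_0$ converged for every $f_0\in D$, the limit map $T^\infty$ would satisfy $T^\infty\circ T=T^\infty$. A uniform boundedness argument on $\{T^n\}$ would then give $\sup_n L(T^n)<\infty$, which contradicts $\mathrm{Lip}(T)>1$. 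Alternatively, one can argue locally: near a fixed point with $\rho(T'(p))>1$, an unstable-manifold argument shows that nearby orbits leave any neighborhood, contradicting convergence to $p$.

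\textbf{Main obstacle.} The two delicate points are the fixed-point existence under $\mathrm{Lip}(T)\le 1$ rather than $L(T)\le 1$, and the uniform-boundedness step in necessity. For the latter I would first try a Baire-category argument on the sets $\{x: \sup_n d(T^nx,T^ny)\le k\,d(x,y)\}$.
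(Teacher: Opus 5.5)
Your necessity argument has a gap that cannot be closed.

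\begin{itemize}
\item Geometric growth of $\|(T_L^*)^n\|=L(T^n)$ says that the Lipschitz constants of $f\circ T^n$ blow up, not the values. On bounded $D$ one always has $|f(T^nx)|\le L(f)\sup_{z\in D}\|z\|$, and $f(T^nx)$ may well converge.
\item There is no uniform boundedness principle for nonlinear iterates. Pointwise, even uniform, convergence of $T^n$ gives no bound on $\sup_nL(T^n)$, so no Baire argument will produce one.
\item The unstable-manifold route needs a fixed point with $\rho(T'(p))>1$. $\mathrm{Lip}(T)>1$ does not supply one, and (\ref{s}) excludes it: $\|Tx-p\|\le\|x-p\|$ forces $\|T'(p)\|\le 1$ at an interior fixed point of differentiability.
\end{itemize}

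In fact the implication is false. Let $D=[0,1]$ and $T(0)=0$. On each $[2^{-k-1},2^{-k}]$, $k\ge 0$, let $T$ be the tent map of slope $\pm 2$ vanishing at both endpoints, with peak value $2^{-k-1}$. Then:
\begin{itemize}
\item $T$ is a $2$-Lipschitz self-map with $0\le Tx\le x$ and $\mathrm{Fix}(T)=\{0\}$, so (\ref{s}) holds with $\psi(t)=t$ (with equality).
\item $T^nx\le 2^{-n}$, so $T^nf_0\to 0$ for every $f_0$ and the limit architecture exists.
\item Pulling back along the increasing branches gives a point whose $j$-th iterate lies in the open increasing branch on $[2^{-j-1},2^{-j}]$ for $j=0,\dots,n$. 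Near it $T^n$ is affine with slope $2^n$, so $L(T^n)=2^n$ and $\mathrm{Lip}(T)=2>1$.
\end{itemize}
The paper's own necessity step (``$L(T^N)\ge C^N\to\infty$, a contradiction'') has exactly the same hole.

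For sufficiency, your finite-dimensional argument is correct. It runs: Fej\'er monotonicity and asymptotic regularity from (\ref{s}); a cluster point by compactness, which is fixed by continuity and asymptotic regularity; then convergence of the whole orbit by monotonicity of $d(x_n,q)$. It is much more elementary than the paper's route through $T_L^*$, its asserted compactness, the Riesz projection of Lemma \ref{lemma5}, and the unproved reverse inequality \eqref{eqxx}.

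Its first step must be replaced, though.
\begin{itemize}
\item The $\varepsilon$ cannot be absorbed, because $d'$ depends on $\varepsilon$. Nonexpansiveness in any metric strongly equivalent to $d$ would force $\sup_nL(T^n)<\infty$, whereas $\mathrm{Lip}(T)\le 1$ only controls $L(T^n)^{1/n}$.
\item $d'$ is not a uniformly convex norm, so Browder--G\"ohde--Kirk does not apply.
\item Goebel--Kirk needs $L(T^n)\le k_n\to 1$, which you do not have.
\end{itemize}
In $\mathbb{R}^n$ just use Brouwer, or $T(0)=0$ in the paper's $\mathscr{L}_0(D)$ setting. Note that the argument then never uses $\mathrm{Lip}(T)\le 1$, which is consistent with the example above.

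The infinite-dimensional strong-convergence step should be abandoned. $T_L^*$ is not compact in general: for $T=I$ it is the identity on the infinite-dimensional space $D_L^*$, even when $D\subset\mathbb{R}^n$. So neither $\sigma(T_L^*)\cap\{|\lambda|=1\}=\{1\}$ nor a spectral gap for the complementary part is available. Moreover, in $\ell^2$ the iterates of an averaged nonexpansive map, which meets all the hypotheses, can converge weakly but not strongly (Genel--Lindenstrauss).
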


%Besides, the convergent fixed points relies on the initial architecture $f_0$.

By examining the Lip numbers in Examples~\ref{ex7} and \ref{ex8}, we can immediately see that GPT-2-like models admit the existence of the limit architecture, while GPT-1-like models do not. In consequence, GPT-2-like models are theoretically more favorable for training deep Transformer models based on Theorem \ref{th1a}. The empirical results in Section~\ref{es} will confirm such instability of GPT-1-like models and demonstrate the superior training efficiency of GPT-2-like models.

% By analyzing the Lip number in Example \ref{ex7} and \ref{ex8}, we could see that GPT-2 model is potential effective in training deep Transformer model from this theorem. The empirical result in Section \ref{es} further verifies that the instability issues in training GPT-1 type models, and further demonstrates the superiority of training efficacy for GPT-2 type models.
%	
%	As shown in Figure \ref{fig1}(a), GPT-1 model \cite{radford2018improving} is constructed by 12 layer transformer model with post-LayerNorm. 
%	The empirical result in Section \ref{es} further verifies that the instability issues in training GPT-1 type models.
%	which may cause potential instability problems in training deep Transformer model. 
%	The proof is presented in \ref{prelayernorm}. As we can see, the combination of pre-LayerNorm and Attention model could guarteen the boundedness of the Lip number. Note that the assembly manner of LayerNorm and Attention is the main difference between GPT-1 \cite{radford2018improving} and GPT-2 model \cite{radford2019language} (as shown in Figure \ref{fig1}(b)).
%	The empirical result in Section \ref{es} further demonstrates the superiority of training efficacy for GPT-2 type models.

\begin{theorem} \label{th1b}
	
	Suppose that the basic functional blocks $T_i, i = 1,2,\ldots$ of a foundation model are selected from a finite collection of operators $\{S_i\}_{i=1}^r$, and \(
	\bigcap_{i=1}^r \mathrm{Fix}(S_i) \neq \varnothing
	\). Assume that each $S_i$ satisfies the self-mapping and the quasi-asymptotic regularity conditions.
	Then the necessary and sufficient condition for existence of the limit architecture in Eq.~(\ref{eqlimit}) is $\mathrm{Lip}(S_i) \leq 1$ for all $i \in [r]$.
	
%	Suppose that basic functional blocks $T_i, i=1,2,\cdots$ of the foundation models are chosen from $S_i, i \in [r]$ satisfying self-mapping condition, where $r$ is finite constant, and 	\(
%	\bigcap_{i=1}^r \mathrm{Fix}(S_i) \neq \emptyset.
%	\)
%	Then the necessary and sufficient conditions for the existence of limit architecture $f^*$ in Eq.(\ref{eqlimit}) is $\mathrm{Lip}(S_i) \leq 1, i \in [r]$.
\end{theorem}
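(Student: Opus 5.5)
Sufficiency and necessity will be handled separately, reducing as far as possible to Theorem \ref{th1a} and to the quasi-asymptotic regularity inequality (\ref{s}).

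\textbf{Sufficiency.} Assume $\mathrm{Lip}(S_i)\le 1$ for all $i\in[r]$ and fix $p\in\bigcap_{i=1}^r\mathrm{Fix}(S_i)$. Put $x_0=f_0$ and $x_{k}=T_kx_{k-1}$ with $T_k\in\{S_1,\dots,S_r\}$.
\begin{enumerate}
\item Apply (\ref{s}) to each $S_i$ with this common $p$. This gives $d(x_k,p)\le d(x_{k-1},p)-\psi_{j_k}(d(x_{k-1},x_k))$, where $T_k=S_{j_k}$. Taking $\psi=\min_i\psi_i$, which is still strictly increasing with $\psi(0)=0$, we get that $d(x_k,p)$ is nonincreasing and $\sum_k\psi(d(x_{k-1},x_k))\le d(x_0,p)<\infty$.
\item Hence $d(x_{k-1},x_k)\to 0$, and $\{x_k\}$ is bounded in the closed bounded convex set $D$ of the finite-dimensional (or uniformly convex) space $X$.
\item Take a cluster point $q$. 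Since only finitely many operators occur, some $S_i$ is applied infinitely often along a subsequence converging to $q$. From $d(x_{k-1},S_ix_{k-1})\to0$ on that subsequence and continuity of $S_i$, we get $q\in\mathrm{Fix}(S_i)$.
\item The hard part is to show that $q$ is a \emph{common} fixed point of every $S_j$ that is used infinitely often, i.e.\ a fixed point of every $S_j$ that $T_k$ equals infinitely often. The plan is to use the vanishing step sizes: the iterates near $q$ move by $o(1)$ under any operator applied afterward. Combining this with the nonexpansiveness consequence of $\mathrm{Lip}(S_j)\le1$, in the form $\lim[L(S_j^n)]^{1/n}\le1$ (equivalently $\rho((S_j)_L^*)\le1$ by Proposition \ref{props}), should let us propagate $q$ to a fixed point of each $S_j$.
\item Once $q$ is a fixed point of every operator used after some index, rerun the first step with $p=q$. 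Then $d(x_k,q)$ is nonincreasing and has a subsequence tending to $0$, so $x_k\to q$. Thus the limit $T^*f_0=\lim_L\prod_{i=1}^L T_i f_0$ exists.
\end{enumerate}
I expect step 4 to be the main obstacle. If the direct argument stalls, the fallback is a finite-cycle argument on the finitely many indices. Alternatively, one can invoke the known result that products of finitely many quasi-asymptotically regular (strongly quasi-nonexpansive) maps with a common fixed point converge.

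\textbf{Necessity.} Suppose the limit architecture exists for every admissible selection of blocks from $\{S_i\}$, in particular for the constant selection $T_k\equiv S_i$. Then the limit $\lim_L S_i^L f_0$ exists, and Theorem \ref{th1a}, applied to the single operator $S_i$, forces $\mathrm{Lip}(S_i)\le1$. Doing this for each $i\in[r]$ gives the condition. I would state explicitly that existence is understood for all selection sequences, or at least for those using each $S_i$ persistently; otherwise necessity fails trivially for operators that are never used.
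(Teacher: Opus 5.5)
Steps 1–3 and 5 are sound once $D$ is compact. Step 4, however, carries the entire sufficiency proof, and the mechanism you propose for it does not work.

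Vanishing steps $d(x_{k-1},x_k)\to0$ only say that each applied map nearly fixes the point it acts on. They say nothing about $S_jq$ unless $S_j$ acts while the orbit is near $q$. A priori the orbit could drift slowly between neighbourhoods of $\mathrm{Fix}(S_1)$ and $\mathrm{Fix}(S_2)$, with $S_2$ acting only far from $q$. The bound $\lim_n[L(S_j^n)]^{1/n}\le1$ (equivalently $\rho((S_j)_L^*)\le1$) is an asymptotic growth bound. It carries no information about where $\mathrm{Fix}(S_j)$ lies, so it cannot propagate fixedness.

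The missing idea is to play the strict decrease in (\ref{s}) against the Fej\'er monotonicity of your step 1. This is the Elsner–Koltracht–Neumann argument for paracontractions, i.e. the ``known result'' you allude to, and it holds in finite dimensions. The argument runs as follows.
\begin{itemize}
\item Since $d(x_k,p)\downarrow d_p$, every cluster point $q$ satisfies $d(q,p)=d_p$.
\item For the finitely many $j$ with $S_jq\neq q$, continuity gives a ball $B(q,\varepsilon)$ on which $d(x,S_jx)\ge\frac12d(q,S_jq)$. Hence $d(S_jx,p)\le d(x,p)-\eta$ there, with $\eta=\min_j\psi(\frac12d(q,S_jq))>0$.
\item Once $d(x_k,p)<d_p+\eta$, an iterate lying in $B(q,\varepsilon)$ can only be followed by a map fixing $q$, since otherwise $d(x_{k+1},p)<d_p$. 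Such a map does not increase $d(\cdot,q)$, by (\ref{s}) with $p=q$.
\item Since $q$ is a cluster point, for every $\varepsilon'\le\varepsilon$ some late iterate enters $B(q,\varepsilon')$ and never leaves it. So $x_k\to q$.
\item All maps applied from then on fix $q$, so $q\in\mathrm{Fix}(S_j)$ for every $S_j$ used infinitely often.
\end{itemize}

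This argument has two consequences you should note. First, it never uses $\mathrm{Lip}(S_i)\le1$. Second, it needs strong cluster points, so drop the ``(or uniformly convex)'' hedge. In infinite-dimensional Hilbert space, products of finitely many orthogonal projections taken in a suitable non-periodic order need not converge in norm, even though each projection satisfies (\ref{s}) on bounded sets and has $\mathrm{Lip}\le1$.

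The paper avoids all of this by proving only the periodic case $f_{n+1}=Uf_n$ with a fixed word $U=S_1^{\alpha_1}\cdots S_r^{\alpha_r}$. It applies the single-operator result to $U$ and identifies $\mathrm{Fix}(U)=\bigcap_i\mathrm{Fix}(S_i)$ by strict convexity. Your finite-cycle fallback is that route and likewise covers only periodic orders, whereas the statement allows arbitrary ones.

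Your necessity argument is the paper's: pass to a constant selection and invoke the single-operator necessity. Your caveat that existence must be assumed for all selections (at least the constant ones) is a fair clarification that the paper leaves implicit. But this only relocates the problem to Theorem \ref{th1a}. Its necessity rests on $L(T^N)\ge C^N\to\infty$, and unbounded $L(T^N)$ does not contradict convergence of the orbits $T^Nf_0$.

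Here is a counterexample. On $D=[0,1]$ let $T0=0$, and let $T$ be the continuous piecewise-linear zigzag with slopes $\pm2$ bouncing between the lines $y=x/4$ and $y=x/2$, with breakpoints accumulating only at $0$.
\begin{itemize}
\item $\mathrm{Fix}(T)=\{0\}$ and $|Tx|=|x|-|x-Tx|$, so (\ref{s}) holds with $\psi(t)=t$.
\item $T^Nx\le2^{-N}x\to0$, uniformly on $D$.
\item Yet $|(T^N)'(x)|=2^N$ for all but countably many $x$, so $L(T^N)=2^N$ and $\mathrm{Lip}(T)=2$.
\end{itemize}
So if limits are meant as orbit limits, which is all any sufficiency argument here produces, the necessity direction is false already for $r=1$. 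It would hold only for limits in the norm $L(\cdot)$ of $\mathscr{L}_0(D)$, where convergence forces $\sup_NL(T^N)<\infty$.
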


	In practice, foundation models are constructed from existing basic functional blocks $S_i, i \in [r]$, like Attention, MLP, etc. Theorem \ref{th1b} implies that if the Lip number of each basic block satisfies $\mathrm{Lip}(S_i)\leq 1$, then the limit architecture exists.
	
	In what follows, we say $T$ is a \textbf{generalized projection operator}, if it satisfies the following properties: (\romannumeral1) $T|_{\mathrm{Fix}(T)} = I$; (\romannumeral2) $T$ is quasi-asymptotic regular; and (\romannumeral3) $\mathrm{Lip}(T) \leq 1$. %Theorem \ref{th1c} shows that $T_i$ convergences to a generalized projection operator if the limit architecture exists.

\begin{theorem} \label{th1c}
	Suppose that the basic functional blocks $T_i$ ($i = 1,2,\ldots$) of a foundation model satisfy the self-mapping and the quasi-asymptotic regularity conditions. Then the necessary and sufficient condition for the existence of the limit architecture $f^*$ in Eq.~(\ref{eqlimit}) are that:
	
	(\romannumeral1) $\mathrm{Lip}(T_i) \leq 1$ for all $i \geq K_0$, for some constant $K_0$;
	
	(\romannumeral2) there exists a generalized projection operator $T$ such that $\|T_i - T\|  \leq \epsilon_i$, and $\sum_{i=1}^\infty \epsilon_i < \infty$.

%	Suppose that basic functional blocks $T_i, i=1,2,\cdots$ of the foundation models satisfies self-mapping condition, then the necessary and sufficient conditions for the existence of limit architecture $f^*$ in Eq.(\ref{eqlimit}) are (1) $\mathrm{Lip}(T_i)\leq 1, i >K $ for some constant $K$; (2) there exist a fixed operator $T$ with $\mathrm{Lip}(T)\leq 1$, such that $\|T_i-T\|= \epsilon_i$, where $\sum_{i=1}^\infty \epsilon_i < \infty$. 
\end{theorem}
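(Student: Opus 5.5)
We prove the two directions separately, using Theorem \ref{th1a} as the model case and a perturbation argument on top of it. Throughout, write $U_{m,n}=T_n\circ T_{n-1}\circ\cdots\circ T_{m+1}$ for the partial products. Existence of the limit architecture means that $U_{0,n}f_0$ converges for every admissible initial architecture $f_0\in D$.

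\emph{Sufficiency.} Assume (i) and (ii). By discarding the first $K_0$ blocks we may suppose $\mathrm{Lip}(T_i)\le 1$ for all $i$, replacing $f_0$ by $U_{0,K_0}f_0\in D$ via self-mapping.

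1. Since $T$ is a generalized projection operator, Theorem \ref{th1a} applies to $T$. Hence $T^n x\to Px$ for each $x\in D$, where $P$ maps into $\mathrm{Fix}(T)$. Because $T|_{\mathrm{Fix}(T)}=I$, the limit $P$ satisfies $TP=P$ and $P^2=P$.

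2. For $p\in\mathrm{Fix}(T)$ we have $\|T_ix-p\|\le \|T_ix-Tx\|+\|Tx-p\|\le \epsilon_i+\|x-p\|$, using quasi-nonexpansiveness of $T$. This gives the quasi-Fejér bound $\|x_{i}-p\|\le\|x_{i-1}-p\|+\epsilon_i$ for the iterates $x_i=T_ix_{i-1}$. Since $\sum\epsilon_i<\infty$, the distances $\|x_i-p\|$ converge and the sequence $(x_i)$ is bounded.

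3. The quasi-asymptotic regularity of $T$ then yields $\psi(\|x_{i-1}-Tx_{i-1}\|)\le \|x_{i-1}-p\|-\|x_i-p\|+2\epsilon_i$. The right-hand side is summable, so $\|x_{i}-Tx_{i}\|\to 0$.

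4. We then need to identify the limit. In finite dimension, which is the setting $\mathbb{R}^n$ of this section, boundedness gives a cluster point $x^*$. Continuity of $T$ makes $x^*\in\mathrm{Fix}(T)$. Applying step 2 with $p=x^*$ shows that $\|x_i-x^*\|$ converges, and along the subsequence its limit is $0$, so the whole sequence converges. In a general uniformly convex $X$ the cluster point is only weak, and one would replace this step by Opial/demiclosedness. Alternatively, one can use the Lipschitz dual route: show that the dual products $(T_{n})^*_L\cdots(T_{1})^*_L$ converge using Proposition \ref{props}, $\|(T_i)^*_L-T^*_L\|\lesssim\epsilon_i$, and the spectral splitting of Lemma \ref{lemma5} around the eigenvalue $1$. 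The perturbed product then converges to $P_\sigma$ plus a summable error.

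\emph{Necessity.} Assume $U_{0,n}f_0\to f^*(f_0)$ for all $f_0$. Define $T$ as the limiting map, so $T=\lim_n U_{m,n}$ for large $m$ (in the tail sense), or take $T$ to be the operator to which the $T_i$ must converge. Then $T_i\circ T\approx T$. Passing to the limit in $U_{0,n+1}=T_{n+1}U_{0,n}$ shows that each $T_{n}$ fixes the limit set asymptotically and that $T|_{\mathrm{Fix}(T)}=I$.

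For (i), we argue by contradiction. If $\mathrm{Lip}(T_i)>1$ for infinitely many $i$, then by Proposition \ref{prop1}(iii) there are points where $\rho(T_i'(x))>1$. The contribution of these blocks to $L(U_{0,n})$ grows, which contradicts the uniform boundedness of $\{U_{0,n}\}$ on $D$. That uniform boundedness follows from pointwise convergence by a Banach–Steinhaus argument in $\mathscr{L}_0(D)$, using Lemma \ref{lemmas}.

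For (ii), pointwise convergence of $U_{0,n}$ gives $\|T_{n+1}y_n-y_n\|\to0$ along orbits, which suggests $T_i\to T$. Deriving summability $\sum\|T_i-T\|<\infty$ rather than mere convergence is where I would lean on the dual formulation. Since $(U_{0,n})^*_L$ converges in operator norm, the increments $(U_{0,n+1})^*_L-(U_{0,n})^*_L$ must be controlled. One would then read off $\epsilon_i$ from these increments restricted to the complementary spectral subspace $\mathcal{M}_{\bar\sigma}$, where $T^*_L$ contracts geometrically.

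The main obstacle is the summability claim in necessity. Pointwise convergence of a product of operators does not in general force the factors to converge summably. I expect the argument must use the geometric contraction on $\mathcal{M}_{\bar\sigma}$ given by $\rho((T_L^*)_2)<1$ to convert convergence of the products into an $\ell_1$ bound on $\|T_i-T\|$, possibly after choosing $T$ suitably among operators with the same fixed-point projection.
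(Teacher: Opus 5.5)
\textbf{Necessity has a genuine gap, and as stated it cannot be closed.} Your construction of $T$ as ``the operator to which the $T_i$ must converge'' assumes the factors converge. Convergence of $U_{0,n}f_0$ only constrains $T_{n+1}$ on the range of $U_{0,n}$.

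Take $D=[-1,1]$ with $T_{2k-1}x=x/2$ and $T_{2k}x=x/3$. Both maps are self-maps in $\mathscr{L}_0(D)$ with $\mathrm{Fix}=\{0\}$. Both satisfy $|T_ix|\le|x|-\psi(|x-T_ix|)$ with $\psi(t)=t$, and $\mathrm{Lip}(T_i)\le 1/2$. The products converge to $0$ for every $f_0$, even in the Lipschitz norm. Yet $\|T_i-T\|\to0$ would force $\|T_{2k-1}-T_{2k}\|\to0$. In fact $T_{2k-1}-T_{2k}$ is the map $x\mapsto x/6$, whose norm is $1/6$ both in the sup norm and in $L(\cdot)$.

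So (ii) is not necessary. The obstruction is not ``convergence versus summability'': the $T_i$ need not converge at all, and no spectral-gap argument on $\mathcal{M}_{\bar\sigma}$ can produce the $\epsilon_i$. The paper's proof does not get past this either. It simply asserts that convergence of $A_k$ gives existence of $\lim_k T_k$. Its summability step bounds $\|A_kf_0-T^kf_0\|\le M\sum_{i\le k}\epsilon_i$ only from above, which contradicts nothing when $\sum_i\epsilon_i=\infty$.

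Your argument for (i) also fails. The uniform boundedness principle needs linear maps. For the linear duals $(U_{0,n})^*_L$ you would need $\sup_n L(F\circ U_{0,n})<\infty$ for each $F\in D_L^*$, and pointwise convergence does not provide this. Moreover, a factor with $\mathrm{Lip}(T_i)>1$ need not inflate $L(U_{0,n})$, since it acts only on the range of the earlier product: if $T_1\equiv0$, then $U_{0,n}\equiv0$ whatever follows.

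Take $T_1\equiv0$ and $T_i=S$ for $i\ge2$, where $S(x)=\tfrac{x}{2}\cos(4\ln|x|)$ and $S(0)=0$. This shows (i) is not necessary either. $S$ is a quasi-asymptotically regular self-map of $[-1,1]$, because $|Sx|\le|x|/2$ lets you take $\psi(t)=t/3$. Yet $S'\approx2.03$ at every point of the orbit of $x_0=\exp\bigl(-\tfrac{1}{4}\arccos(2e^{-\pi/2})\bigr)$, so $\mathrm{Lip}(S)>2$. The paper gives no argument for (i) beyond ``we would make a contradiction.''

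\textbf{Sufficiency is correct, by a different and cleaner route than the paper's.} You get the quasi-Fej\'er inequality $\|x_i-p\|\le\|x_{i-1}-p\|+\epsilon_i$ directly from $\sup_D\|T_ix-Tx\|\le\epsilon_i$ (or from $L(T_i-T)\sup_D\|x\|$) together with quasi-nonexpansiveness of $T$. You then derive $\|x_i-Tx_i\|\to0$ from the quasi-asymptotic regularity of $T$, and conclude norm convergence by compactness in $\mathbb{R}^n$.

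The paper instead routes the perturbation through the Lipschitz dual and the reverse-Lipschitz inequality (\ref{eqxx}). It asserts $\|TA_k-A_k\|\to0$ without deriving it, and obtains only weak convergence via Opial's condition or demiclosedness. That route targets infinite-dimensional uniformly convex spaces.

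Yours gives norm convergence in the setting the paper actually uses, with every step justified. It also shows that (i) is never used. $\mathrm{Lip}(T)\le1$ serves only to produce a fixed point of $T$, which Brouwer already gives for a continuous self-map of the compact convex set $D\subset\mathbb{R}^n$.
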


Empirically speaking, the model weights of different basic functional blocks are typically different, meaning that the same network architecture may exhibit different functional properties across layers. Theorem~\ref{th1c} addresses this in a more general setting and shows that, once a limit architecture exists, the sequence of basic functional blocks exhibits a \textbf{condensing property}, that is, $T_i \to T$ as $i \to \infty$, and the convergence speed is sufficiently fast. 
In other words, the basic functional blocks of foundation models tend to a common generalized projection operator $T$ after sufficiently large number of layers.

Theorem \ref{th1c} together Examples \ref{ex8} and \ref{ex10} implies immediately the following conclusion on GPT-2 like foudation models.

\begin{corollary}
If $T_{1i} = T_{1i}^{(2)} \circ T_{1i}^{(1)},i = 1,2,\ldots$ are the basic functional block operators specified by GPT-2 like architecture, as defined in Examples \ref{ex8} and \ref{ex10}. Then the limit architecture $T(X) = T_2 \circ \prod_{i=1}^{\infty} T_{1i} \circ f_0$ exists. 
\end{corollary}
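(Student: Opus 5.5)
The corollary follows by checking the two hypotheses of Theorem~\ref{th1c} for the sequence $T_{1i}$, and then composing with the fixed outer maps $f_0$ and $T_2$. We write $T_{1i} = T_{1i}^{(2)}\circ T_{1i}^{(1)}$ with $T_{1i}^{(1)} = I + MLP_i\circ LN_i$ and $T_{1i}^{(2)} = I + MSA_i\circ LN_i$, as in Example~\ref{ex8}.

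\emph{Step 1: reduction.} As in the discussion preceding Theorem~\ref{th1a}, we pad the blocks with virtual neurons so that each $T_{1i}$ is a self-map of a closed bounded convex set $D\subset\mathbb{R}^n$. Boundedness of $D$ is also what keeps the attention part Lipschitz, by Example~\ref{eq5}. Once $T^*=\lim_{L\to\infty}\prod_{i=1}^{L}T_{1i}$ exists, the full model is obtained as follows. Since $T_2$ is linear, hence continuous, by Example~\ref{ex2}, the limit $T_2\circ T^*\circ f_0$ exists. So it suffices to prove existence of $T^*$.

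\emph{Step 2: condition (i) and quasi-asymptotic regularity.} We assume, as in Examples~\ref{ex8} and \ref{ex10}, that $MLP_i\circ LN_i$ and $MSA_i\circ LN_i$ are dissipative. Then each residual factor is an averaged non-expansive operator, and Example~\ref{ex10} gives $\mathrm{Lip}(T_{1i})\le 1$ together with quasi-asymptotic regularity for every $i$. This verifies the standing hypotheses of Theorem~\ref{th1c}, and it verifies condition (i) with $K_0=1$.

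\emph{Step 3: condition (ii), the main obstacle.} Nothing in Examples~\ref{ex8} and \ref{ex10} forces the blocks to condense. The corollary must therefore be read as applying to GPT-2 like architectures whose layer weights $(W_i^{Q},W_i^{K},W_i^{V},W_i^{O},W_{1,i},W_{2,i},\gamma_i,\beta_i)$ converge summably, $\sum_i\|\theta_i-\theta\|<\infty$, to limiting weights $\theta$. This is the condensing property the paper associates with existence of the limit.

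Granting this, the plan is:
\begin{enumerate}
\item Use the Lipschitz dependence of MSA, MLP and LN on their parameters over the bounded set $D$ to get $\|T_{1i}-T\|\le C\|\theta_i-\theta\|=:\epsilon_i$, where $T$ is the block with weights $\theta$. The bounds in Examples~\ref{ex1}, \ref{eq5} and \ref{ex6} supply the constant $C$.
\item Show that $T$ is a generalized projection operator. Quasi-asymptotic regularity and $\mathrm{Lip}(T)\le1$ come from Example~\ref{ex8} applied to the limit weights, since dissipativity ($m<0$) is preserved in the limit, or at least $m\le 0$, which still gives non-expansiveness.
\item Check $T|_{\mathrm{Fix}(T)}=I$, which holds by definition of the fixed-point set.
\end{enumerate}
The delicate point is that a limit of operators with $m<0$ may only satisfy $m\le0$. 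If so, I would fall back on showing that the averaged form $(1-\alpha)I+\alpha(\cdot)$ still yields the inequality (\ref{s}) with $\psi(t)=\frac{1-\alpha}{\alpha}t^2$ uniformly in $i$.

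\emph{Step 4: conclusion.} With conditions (i) and (ii) verified, the sufficiency direction of Theorem~\ref{th1c} gives existence of $T^*$. Step 1 then yields the limit architecture $T_2\circ\prod_{i=1}^\infty T_{1i}\circ f_0$.
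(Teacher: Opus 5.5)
Your route is the paper's. Its whole justification is the sentence that Theorem~\ref{th1c} together with Examples~\ref{ex8} and \ref{ex10} yields the corollary: the examples give $\mathrm{Lip}(T_{1i})\le 1$ and quasi-asymptotic regularity (condition (i) with $K_0=1$), and $f_0$, $T_2$ are harmless outer maps. Your Step~3 diagnosis is correct, and it exposes a gap in the paper's argument rather than in yours. Nothing in those examples restricts how the weights vary with $i$, so condition (ii) is never verified there. As written, the citation only covers cases where (ii) is automatic, e.g.\ the weight-tied case $T_{1i}\equiv T$ with $\epsilon_i=0$. That case is equivalently Theorem~\ref{th1a}, since $T$ is a generalized projection operator by Example~\ref{ex8}. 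So put your summable-convergence condition into the statement as a hypothesis, rather than presenting it as a reading of the statement.

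Inside your Step~3, the primary argument of item~2 fails for two reasons.
\begin{itemize}
\item $m\le 0$ does not give non-expansiveness. For a skew-symmetric $S\neq 0$ one has $m(S)=m(-S)=0$, yet $\|I+hS\|_2=\sqrt{1+h^2\|S\|_2^2}>1$ for every $h>0$.
\item $m$ is controlled by the Lipschitz seminorm, $|m(F)-m(G)|\le L(F-G)$, while item~1 at best yields uniform closeness on $D$. For ReLU blocks the activation boundaries move with the weights, so $L(T_{1i}-T)$ need not tend to $0$ even when $\theta_i\to\theta$.
\end{itemize}
Make your fallback the main argument, in the two-point form proved in Appendix~\ref{prelayernorm}. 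Suppose $\|T_{1i}x-T_{1i}y\|^2\le\|x-y\|^2-\nu\|(I-T_{1i})x-(I-T_{1i})y\|^2$ with $\nu>0$ independent of $i$; this uniformity is an extra requirement on the averaging constants. The inequality then passes to the pointwise limit, so $T$ is non-expansive. Putting $y=p\in\mathrm{Fix}(T)$ and using boundedness of $D$ gives (\ref{s}) with $\psi(t)=\nu t^2/(2\,\mathrm{diam}\,D)$.

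Item~1 also needs repair. Examples~\ref{ex1}, \ref{eq5} and \ref{ex6} bound Lipschitz constants with respect to the input, not the weights, so they do not supply $C$. You need parameter-stability bounds on bounded inputs, of the type (\ref{stabilityW-ScalingLawOfDataSize}). You should also state that $\|T_{1i}-T\|$ means $\sup_{x\in D}\|T_{1i}x-Tx\|$, since that is what such bounds deliver.

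Finally, Step~1 overstates what padding does. Virtual neurons only make every block a map $\mathbb{R}^n\to\mathbb{R}^n$. They do not produce a closed bounded convex $D$ with $T_{1i}(D)\subset D$ for all $i$, and nothing in Examples~\ref{ex8} and \ref{ex10} provides one. Self-mapping of such a $D$ is a standing hypothesis of Theorem~\ref{th1c} (listed explicitly in Corollary~\ref{coro}), and you should carry it as one. Nor is $D$ what keeps the attention Lipschitz: in the pre-LayerNorm block it acts on $\mathrm{LN}(x)$, which is bounded for every $x$.
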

%Note that $T_{1i} = T, i =1,2,\cdots$, where $T$ denotes the Transformer with pre-layerNorm defined in Example \ref{ex8}, the desired conclusion follows directly from Theorem~\ref{th1a} since $\mathrm{Lip}(T) \leq 1$.

\begin{remark} 
%\textbf{Remarks:}
\normalfont
%\\

\begin{enumerate}
	\renewcommand{\labelenumi}{(\roman{enumi})}
	\item From Appendix \ref{appendixb}, we can see that it is the nonlinear Lipschitz operator tools introduced in Section \ref{lips} that play a crucial part in the proof of Theorems \ref{th1a}-\ref{th1c}. The significance of Theorems \ref{th1a}-\ref{th1c} is not only the characterization of existence of the limit architecture of foundation models, but also the generalization of several classifical nonlinear analysis results. In fact, the general conditions for existence and convergence of fixed points for a family of operators 	are $L(T) \leq 1$, which holds under a specific chosen metric $d(\cdot,\cdot)$. In contrast, the results presented in Theorems \ref{th1a}-\ref{th1c} show that this dependence on metric can be eliminated: a fixed point exists and the self-iteration orbit $\{T^nf_0\}$ converges whenever $\mathrm{Lip}(T) \leq 1$, regardless of the particular choice of metric.

	\item If $\mathrm{Lip}(T) < 1$, the convergence of $T^nf_0$ is immediate, whereas if $\mathrm{Lip}(T) > 1$, the iterates $T^nf_0$ may diverge. $\mathrm{Lip}(T)=1$ thus is a critical case at which a very complex dynamics might occur. Theorems \ref{th1a}-\ref{th1c} show however that the critical case $\mathrm{Lip}(T) = 1$ does play a fundamental role in enabling emergent abilities of foundation models. Indeed, when $\mathrm{Lip}(T) <1$, the operator admits a unique fixed point, which leads to a unique architecture for any initial $f_0$, thus seriously restricting the richness of the induced limit architecture. While when $\mathrm{Lip}(T) =1$, $T$ may have many different fixed points, and the convergent fixed points of $T^nf_0$ then depend on the initial architecture, thus may exhibit diverse properties of different limit architectures. This supports that the emergent intergence of foundation models most likely comes from the critical setting $\mathrm{Lip}(T)=1$. This claim is acturally consistent with the recent empirical findings of Simon Vock and Christian Meisel \cite{vock2025critical} and M{\"u}ller et al. \cite{cai2025learning}, and thus, Theorems \ref{th1a}-\ref{th1c} provide a first theoretical evidence of these findings.

	\item Recently, Xu and Zhang \cite{xu2024uniform,xu2022convergence} studied the convergence of deep neural networks as the depth of the
	networks tends to infinity. They employed a convergence condition of infinite matrix products. The key result is presented in the following:
	
	Let $\|\cdot\|$ be the matrix norm with property $\|AB\| \leq \|A\|\|B\|$ for all matrices $A,B$. Suppose that the weight matrices $\mathbf{W}_1 \in \mathbb{R}^{n\times d}$, $\mathbf{W}_l \in \mathbb{R}^{n\times n}$, $k \geq 2$ satisfy 
	\begin{equation}
		\mathbf{W}_l = \mathbf{I} + \mathbf{P}_l, l \geq 2, \text{ and }  \quad \sum_{l=2}^{\infty} \|\mathbf{P}_l\| < +\infty, 
	\end{equation}
	then the limit $\lim\limits_{L\to \infty} \prod_{l=1}^{L} W_l$ exists.  
	
	It can be seen that Theorem \ref{th1c} generalizes the above result from the linear case to the nonlinear case. Indeed, let the generalized projection operator $T$ be $\mathbf{I}$, $T_i$ be weight matrices $\mathbf{W}_i$($i \geq 2$), and $\epsilon_i=\|\mathbf{W}_i-\mathbf{I}\|$ in Theorem \ref{th1c}, then it is clear that $\sum_{l=2}^{\infty} \|\mathbf{W}_i-\mathbf{I}\|=\sum_{l=2}^{\infty} \|\mathbf{P}_l\| < \infty$, namely, all the conditions of Theorem \ref{th1c} are satisfied, thus the limit $\lim\limits_{L\to \infty} \prod_{l=1}^{L} W_l$ exists. %Therefore, our theoretical results extend the scope of applicability and provide a more precise characterization of existence of the limit architectures for foundation models.

\end{enumerate}

\end{remark}

\section{Existence of Emergent Intelligence}  \label{sec4}
Recall that a function $\ell$ is $L_0$-Lipschitz if \(
\|\ell(x) -  \ell(y)\| \le L_0 \|x - y\|, 
\)
and a function $\ell$ is $G$-smooth if $\nabla \ell$ is $G$-Lipschitz, i.e.,\(
\|\nabla \ell(x) - \nabla \ell(y)\| \le G \|x - y\|.
\)
It is $\mu$-strongly convex if
\(
\ell(y) \ge \ell(x) + \langle \nabla \ell(x), y - x \rangle + \frac{\mu}{2}\|y - x\|^2.
\) When $\mu=0$, it is a convex function. We write $A \lesssim B$ if there exists a constant $C > 0$ such that \(A \le C B\).

We demostrate in this section that the limit of $\mathcal{E}(N,P,K)$ could help identify the existence of emergent intelligence in a foundation model. Wnenever it exists, the limiting behavior $\mathcal{E}(\infty,\infty,\infty)$ highlights the emergent intelligence. With this understanding, we present and verify the following result.

\begin{theorem} \label{the2}
	
	Suppose the following conditions hold:
	
	(\romannumeral1) The loss function $\ell$ is convex and $G$-smooth, and the learning rate $\gamma_k \leq 1/L$.
	
	(\romannumeral2) The basic functional blocks $\{T_i, i \in [L]\}$ satisfy the self-mapping condition and $\mathrm{Lip}(T_i) \leq 1$ for all $i > K_0$ for some integer $K_0$, and there exists a generalized projection operator $T$ such that $\|T_i - T\| \leq \epsilon_i$, where $\sum_{i=1}^\infty \epsilon_i < \infty$.

	(\romannumeral3) The training data $\{(x_i,y_i), \in [N]\}$ are independently and identically distributed (i.i.d.) samples from the distribution \( \mathbb{P}(x,y) \).
	
	Then the foundation models $\mathcal{M} = \{f_W, \mathcal{D}, \mathcal{A}\}$ display emergent intelligence, i.e.,
	\(\lim\limits_{N,P,K \to \infty} \mathcal{E}(N,P,K)\) exists.
\end{theorem}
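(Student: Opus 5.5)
We follow the standard error decomposition (\ref{eqdecom}) and the interchange conditions (\ref{eqc1})--(\ref{eqc2}). Existence of $\lim_{N,P,K\to\infty}\mathcal{E}(N,P,K)$ is reduced to the existence of the three iterated limits: first in $K$ with $N,P$ fixed, then in $P$ with $N$ fixed, then in $N$. Since $\mathcal{E}=g(E)$ with $g$ right-continuous, and each limit is approached monotonically from above ($E$ is non-increasing in each variable, as in the Remark after (\ref{eqc2})), it suffices to prove that the excess risk $E(N,P,K)$ has the corresponding iterated limits. We then transfer these limits through $g$.

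\emph{Step 1 (weight error, $K\to\infty$).} Fix $N$ and $P$. Condition (i) says $\ell$ is convex and $G$-smooth with step size at most $1/G$. The classical descent lemma for gradient descent gives
\[
\ell(f_{W_K},\mathcal{D})-\ell(f_{W^*},\mathcal{D})\lesssim \|W_0-W^*\|^2/K,
\]
and we use the standing assumption $W_K\to W^*$. Continuity of $W\mapsto R(f_W)$, which holds because each block is Lipschitz and $\ell$ is smooth, then yields $E(N,P,K)\to E(N,P,\infty)$. Hence $\mathcal{E}(N,P,\infty)=\lim_K\mathcal{E}(N,P,K)$ exists.

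\emph{Step 2 (architecture error, $P\to\infty$).} With finite width, $P\to\infty$ means $L\to\infty$. Condition (ii) is exactly the hypothesis of Theorem~\ref{th1c}, so $\prod_{i=1}^L T_i f_0\to f^*$ in $L^2$. We will show that the trained predictors $f_{W^*_P}$ converge to an optimal element of the limit class, so that $E(N,P,\infty)\to E(N,\infty,\infty)$. The plan is to use the nesting of the hypothesis classes, whose union is dense in the class generated by limit architectures, together with the Lipschitz continuity of $\ell$ on the bounded set $D$. This nesting already gives monotonicity of the optimal empirical risk, and the existence of $f^*$ identifies the limit value.

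\emph{Step 3 (sample error, $N\to\infty$).} By condition (iii), the empirical risk over the limit class converges to the population risk almost surely, or in probability, by a uniform law of large numbers. To apply it, we need the limit class to be totally bounded. We plan to obtain this from the self-mapping into the bounded set $D$, the uniform Lipschitz control $\mathrm{Lip}(T_i)\le1$ for $i>K_0$, and the summable perturbations $\epsilon_i$. Together these make the limit class a Lipschitz image of a compact parameter set, so a Glivenko--Cantelli or covering-number argument applies. This gives $E(N,\infty,\infty)\to E(\infty,\infty,\infty)$.

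The main obstacle is Step 2. Theorem~\ref{th1c} controls convergence for fixed weights, but the ERM weights $W^*$ change with $P$, so we need convergence that is uniform over the weight sequences. We would first try to make the tail estimate of Theorem~\ref{th1c} uniform: for $L>K_0$, nonexpansiveness gives
\[
\Bigl\|\prod_{i\le L'}T_i f_0-\prod_{i\le L}T_i f_0\Bigr\|\le C\sum_{i>L}\epsilon_i ,
\]
with $C$ depending only on $\sup_{x\in D}\|x\|$, and this bound is uniform in the weights. It then yields uniform convergence of the risks.
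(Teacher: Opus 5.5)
Your decomposition is the paper's. Its proof uses Proposition~\ref{gd} for the weight error, Theorem~\ref{th1c} for the architecture error, and a pointwise strong law of large numbers for the sample error, always for one fixed sequence of blocks. It therefore never addresses the dependence of the trained weights on $P$ and $N$ that you correctly single out. The problem is that your repair of this point fails, because nonexpansiveness does not give your tail bound.

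Take $D=[-1,1]$ and $T_ix=Tx=x/2$ for all $i$. Then $\mathrm{Fix}(T)=\{0\}$, $\mathrm{Lip}(T)=1/2$, and $|Tx|=|x|-|x-Tx|$. So $T$ is a generalized projection operator in the paper's sense, with quasi-asymptotic regularity for $\psi(t)=t$, and condition (ii) holds with $\epsilon_i=0$. Yet for $f_0\neq 0$,
\[
\Bigl\|\prod_{i\le L+1}T_if_0-\prod_{i\le L}T_if_0\Bigr\|=2^{-L-1}\|f_0\|>0=C\sum_{i>L}\epsilon_i .
\]
Let $x_L=\prod_{i\le L}T_if_0$. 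Telescoping only controls $\bigl\|\prod_{i\le L'}T_if_0-T^{L'-L}x_L\bigr\|\lesssim\sum_{i=L+1}^{L'}\epsilon_i$. The leftover term $\|T^{L'-L}x_L-x_L\|$ depends on how fast the iterates of $T$ settle, and (ii) gives no rate for that. Your estimate would hold, with $\sum_{i\ge L}\epsilon_i$ and a constant of order $\sup_{x\in D}\|x\|$, if $T$ were idempotent. In that case $y=Tx_{L-1}$ is a fixed point of $T$ with $\|x_L-y\|\lesssim\epsilon_L$, and nonexpansiveness gives $\|x_{L'}-y\|\lesssim\sum_{i=L}^{L'}\epsilon_i$. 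But the paper's definition does not impose $T^2=T$; its condition $T|_{\mathrm{Fix}(T)}=I$ holds for every operator.

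Even for idempotent $T$, the bound is uniform in the weights only if $\sup_W\sum_{i>L}\epsilon_i(W)\to 0$, whereas (ii) is a statement about a single sequence. Take blocks $T_ix=(1-w_i/2)x$ on $[-1,1]$, $T=I$, and weights $w=e_m$, i.e.\ a single block $x\mapsto x/2$ at depth $m$. Every such sequence satisfies (ii), but $\sup_m\|x_L-\lim_{L'}x_{L'}\|=\|f_0\|/2$ for every $L$. The weight-to-limit map is also discontinuous in the product topology: $e_m\to 0$, but the limit architecture is $f_0/2$ at every $e_m$ and $f_0$ at $w=0$.

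So neither the uniform convergence you need in Step~2 nor the ``Lipschitz image of a compact parameter set'' argument in Step~3 follows from the hypotheses of Theorem~\ref{the2}. You would need extra quantitative assumptions imposed uniformly over the weight class; compare the strong quasi-nonexpansiveness and error-bound conditions that Theorem~\ref{th8} adds to obtain a rate. Alternatively, for Step~3, you could get total boundedness from equicontinuity, but that needs a Lipschitz bound shared by all blocks, including the first $K_0$, which (ii) does not provide.

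Finally, even with uniform tail control, Step~2 has a second gap. Nesting gives monotone convergence of the \emph{minimal empirical} risk, but $E(N,P,\infty)$ is the \emph{population} excess risk of the selected minimizer. With $N$ fixed, distinct (near-)minimizers of the empirical risk can have different population risks, and $f_{W^*_P}$ may alternate between them. So the claim that ``$f_{W^*_P}$ converges to an optimal element of the limit class'' needs an identifiability assumption, or an analysis of the particular selection rule (here the gradient-descent limit). You have supplied neither.
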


The proof of this theorem is presented in Appendix \ref{existence}. Note that the conditions 	(\romannumeral1), (\romannumeral2),	(\romannumeral3) in Theorem \ref{the2} respectively ensure the existence of $\mathcal{E}(N,P,\infty)$, $\mathcal{E}(N,\infty,\infty)$ and  $\mathcal{E}(\infty,\infty,\infty)$. It says that when a foundation model is well-trained (condition (\romannumeral1)), the corresponding limit architecture exists (condition (\romannumeral2)), and the training data are sufficiently large and of high quality (condition (\romannumeral3)), the corresponding foundation model then inclines to display emergent intelligence.  

Theorem \ref{the2} reveals that the emergent intelligence of foundation models occurs largely depending on the properties of the basic functional blocks. Of course, it also depends precisely on the training data and training steps. Thus, it underlies the design of foundation models. Specifically, a feasible architecture (may display emergent intelligence) can be obtained by stacking multiple basic functional blocks, provided that they satisfy condition (\romannumeral2) in Theorem \ref{the2}. 
Corollary \ref{coro} below shows that the foundation models with GPT-2-like architectures do display emergent intelligence.

\begin{corollary}\label{coro}
		Suppose the following conditions hold:
	
	(\romannumeral1) The loss function $\ell$ is convex and $G$-smooth, and the learning rate $\gamma_k \leq 1/L$.
	
	(\romannumeral2) The model $f_W$ is contructed by a GPT-2-like architecture, that is, $f_W = T_2 \circ \prod_{i=1}^{L} T_{1i} \circ f_0 $, as defined in Example \ref{ex10}. The basic functional blocks $T_{1i}$ ($i = 1,2,\cdots,L$) satisfy the self-mapping and quasi-asymptotic regularity conditions.
	
	(\romannumeral3) The training data $\{(x_i,y_i), \in [N]\}$ are independently and identically distributed (i.i.d.) samples from the distribution \( \mathbb{P}(x,y) \).
	
	Then the GPT-2-like foundation models $\mathcal{M} = \{f_W, \mathcal{D}, \mathcal{A}\}$ display emergent intelligence, i.e.,
	\(\lim\limits_{N,P,K \to \infty} \mathcal{E}(N,P,K)\) exists.
\end{corollary}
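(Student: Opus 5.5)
The plan is to reduce Corollary \ref{coro} to Theorem \ref{the2}. Conditions (\romannumeral1) and (\romannumeral3) of the corollary are literally conditions (\romannumeral1) and (\romannumeral3) of Theorem \ref{the2}, so the only work is to verify condition (\romannumeral2) of Theorem \ref{the2} for the GPT-2-like blocks $T_{1i}$. Once this is done, the existence of $\lim_{N,P,K\to\infty}\mathcal{E}(N,P,K)$ follows at once.

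First, the Lip number bound. By Example \ref{ex10}(\romannumeral1), which rests on Example \ref{ex8} applied to each factor $T_{1i}=T^{(2)}_{1i}\circ T^{(1)}_{1i}$ with dissipative residual parts $F_1=MLP\circ LN$ and $F_2=MSA\circ LN$, we have $\mathrm{Lip}(T_{1i})\le 1$ for every $i$. So the requirement $\mathrm{Lip}(T_i)\le 1$ for $i>K_0$ holds with $K_0=0$. The self-mapping condition is assumed in (\romannumeral2) of the corollary, after the virtual-neuron embedding described at the start of Section \ref{limitsa}. Example \ref{ex8}(\romannumeral2) gives quasi-asymptotic regularity, which is also assumed.

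Second, the condensing property. We need a generalized projection operator $T$ with $\sum_i\|T_{1i}-T\|<\infty$. I would not try to construct $T$ by hand. Instead I would invoke the necessity direction of Theorem \ref{th1c}: once the limit architecture $\prod_{i=1}^\infty T_{1i}\circ f_0$ exists, Theorem \ref{th1c} supplies such a $T$ with summable $\epsilon_i$. The existence of the limit architecture is exactly the preceding (unnumbered) corollary on GPT-2-like models. The final linear prediction layer $T_2$ is a fixed bounded operator (Example \ref{ex2}), so composing with $T_2$ preserves convergence of $\prod T_{1i} f_0$. The limit architecture of $f_W$ is therefore $T_2\circ f^*$.

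The main obstacle is this second step. It relies on the GPT-2 corollary, which in the paper is obtained from Theorem \ref{th1c} together with Examples \ref{ex8} and \ref{ex10}, and those examples only certify condition (\romannumeral1) of Theorem \ref{th1c}. The summability of $\|T_{1i}-T\|$ is really an extra structural hypothesis on the trained weights: the condensing property observed empirically in Section \ref{condensing}. I would therefore state this dependence explicitly, reading the corollary's hypotheses as including the existence of the GPT-2 limit architecture, or equivalently the condensing property. Once that is granted, all three hypotheses of Theorem \ref{the2} hold and the conclusion follows. A last minor check is that $T_2$ does not break the smoothness or convexity argument for the weight error. Since $T_2$ is linear, $\ell\circ T_2$ remains convex and $G\|T_2\|^2$-smooth, so after rescaling the step size $\gamma_k$ the optimization part of Theorem \ref{the2} still applies.
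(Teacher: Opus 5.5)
Your reduction is the one the paper intends. Corollary~\ref{coro} has no separate proof; it is meant to be read off from Theorem~\ref{the2} using Examples~\ref{ex8} and~\ref{ex10} and the unnumbered GPT-2 corollary before it. The weak point you flag is a genuine gap, and it is the paper's gap as well, not something introduced by your write-up. The corollary's hypotheses and Examples~\ref{ex8},~\ref{ex10} supply the standing assumptions of Theorem~\ref{th1c} and its condition (i), but not its condition (ii). Even so, the GPT-2 corollary is said to follow ``immediately'', and nothing in the paper produces a generalized projection $T$ with $\sum_i\|T_{1i}-T\|<\infty$ for blocks with layer-dependent weights. The block-wise properties those examples certify cannot force a limit on their own. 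Take $D=[0,1]$ and residual maps $T_i=I+F_i$ with dissipative $F_i=\tfrac12(c_i-\cdot)$ (so $m(F_i)=-\tfrac12$), where $c_i=0$ for odd $i$ and $c_i=1$ for even $i$. These are self-maps with $\mathrm{Lip}(T_i)=\tfrac12$ and quasi-asymptotic regularity (with $\psi(t)=t$). Yet the odd- and even-indexed products $T_k\cdots T_1f_0$ converge to $\tfrac13$ and $\tfrac23$ respectively, so no limit architecture exists. Neither your argument nor the paper's proves the corollary as stated, and an added hypothesis is unavoidable.

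Make that hypothesis explicit in the direct form. Add condition (ii) of Theorem~\ref{the2} verbatim for the blocks $T_{1i}$, with $K_0=0$, so the corollary becomes a literal instance of the theorem. Do not route it through existence of the limit architecture plus the necessity half of Theorem~\ref{th1c}. That route is circular here, since the GPT-2 corollary itself needs the condensing property. The paper's necessity argument also does not establish summability: it only bounds $\|A_k-T^k\|$ above by $M\sum_{i\le k}\epsilon_i$, and a diverging upper bound contradicts nothing. The one case where the stated hypotheses already suffice is weight tying, $T_{1i}\equiv T$, which is the setting of Theorem~\ref{th1a}. Then $\epsilon_i=0$, and $T$ is a generalized projection: $T|_{\mathrm{Fix}(T)}=I$ holds automatically, $\mathrm{Lip}(T)\le 1$ comes from Example~\ref{ex8}, and quasi-asymptotic regularity is assumed.

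Two smaller points. First, $\mathrm{Lip}(T_{1i})\le 1$ in Examples~\ref{ex8} and~\ref{ex10} presupposes that the residual branches $F_1,F_2$ of every block are dissipative. The corollary does not list this, so you should state it. Second, your remark about $T_2$ and rescaling $\gamma_k$ is unnecessary. Hypothesis (i) is imposed on the loss as a function of the trainable parameters $W$, which is how Proposition~\ref{gd} enters the proof of Theorem~\ref{the2}. The prediction layer is therefore already included, and the step-size condition should stay as stated.
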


\begin{remark} 
	%\textbf{Remarks:}
	\normalfont
Observe that \( \prod_{i={1}}^{\infty}T_i =  \prod_{i={1}}^{N}T_i + \left(\prod_{i={1}}^{\infty}T_i - \prod_{i={1}}^{N}T_i\right)\). We then can conclude that though the limit architecture is an infinite-dimensional learning system, it could be effectively approximated by a finite product $\prod_{i=1}^{N} T_i $, that is, by a finite-dimensional learning system. The approximation error is bounded by the tail error $\prod_{i={1}}^{\infty}T_i - \prod_{i={1}}^{N}T_i$, which defines a rapidly decaying infinite-dimensional system. Consequently, the limit architecture is in fact a special infinite-dimensional system that consists of a finite system plus a fast decaying infinite system. This provides a theoretical understanding of why foundation models may work well.
Very recently, Kaushik et al., \cite{kaushik2025universal}  empirically observed that the deep neural networks, when applicable, may always converge to shared low-dimensional parametric subspaces regardless of initialization, task, and domain. Considering the general setting independent of initialization, task, or domain in Theorem \ref{the2}, Theorem \ref{the2} thus provides a precise theoretical justification of the empirical findings of Kaushik et al. \cite{kaushik2025universal}.
The experimental results in Section \ref{condensing} will further support our theorical justification.

For the linear regression or random features regression, it is possible to determine the precise form of $\mathcal{E}(\infty,\infty,\infty)$ further. For example, previous studies have computed the precise asymptotics of the test error, e.g., \cite{hastie2022surprises,mei2022generalization,ba2022high,liang2022precise}. We leave the precise asymptotics estimation for the general model defined in Eq.(\ref{eqlimit}) for future study.
	
	\end{remark}

\section{Scaling Laws of Foundation Models} \label{sec5}

When the limit of $\mathcal{E}(N,P,K)$ exists, we can estimate the discrepancy between $\mathcal{E}(N,P,K)$ and $\mathcal{E}(\infty,\infty,\infty)$ using the standard error decomposition in Eq.(\ref{eqdecom}). 

\subsection{Scaling Law of Training Steps}
\begin{theorem}  \label{th7}
		Assume that $\ell$ is $G$-smooth and $\mu$-strongly convex. Choose $\gamma_k \leq 1/G$, then iterates $\{W_k\}_{k \ge 0}$ of GD algorithm defined by Eq.(\ref{eqsss}) on $\ell$ satisfy
		\begin{equation}
	|	\mathcal{E}(N, P, K) - \mathcal{E}(N, P, \infty)| \lesssim  \beta^K %=   e^{-T\ln\left(\frac{1}{\beta}\right)}, 
		\end{equation}
where $\beta=1-1/\kappa$, and $\kappa = G/\mu$ is the condition number of $\ell$.
\end{theorem}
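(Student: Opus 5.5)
The plan is to reduce the statement to the classical linear-convergence bound for gradient descent on a smooth, strongly convex objective, and then to transfer that bound through the performance map $g$.

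I read the assumption as being about the training objective $F(W):=\ell(f_W,\mathcal{D})$ viewed as a function of $W$: $F$ is $G$-smooth and $\mu$-strongly convex in $W$. Its minimizer is $W^*$, and $F(W^*)=\mathcal{E}$-relevant limit value, consistent with the standing assumption $W^*=\lim_K W_K$ and $\partial\ell(f_{W^*},\mathcal{D})=0$.

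\textbf{Step 1: linear convergence of the iterates.} Take $\mathcal{A}$ to be plain gradient descent, $W_{k+1}=W_k-\gamma_k\nabla F(W_k)$ with $\gamma_k\le 1/G$. Two standard inequalities are used:
\begin{itemize}
\item the descent lemma, $F(W_{k+1})\le F(W_k)-\frac{\gamma_k}{2}\|\nabla F(W_k)\|^2$, which follows from $G$-smoothness;
\item the Polyak--{\L}ojasiewicz inequality, $\|\nabla F(W)\|^2\ge 2\mu\,(F(W)-F(W^*))$, which follows from strong convexity.
\end{itemize}
Combining them gives
\[
F(W_{k+1})-F(W^*)\le (1-\gamma_k\mu)\bigl(F(W_k)-F(W^*)\bigr).
\]
With $\gamma_k=1/G$ the factor is $1-1/\kappa=\beta$. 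Iterating yields $F(W_K)-F(W^*)\le\beta^K(F(W_0)-F(W^*))$. Using $\frac{\mu}{2}\|W-W^*\|^2\le F(W)-F(W^*)$, this also gives $\|W_K-W^*\|^2\le \frac{2}{\mu}\beta^K(F(W_0)-F(W^*))$.

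\textbf{Step 2: from the empirical objective to the excess risk $E(N,P,K)$.} Here $E(N,P,\infty)=R(f_{W^*})-\inf R$, so
\[
|E(N,P,K)-E(N,P,\infty)|=|R(f_{W_K})-R(f_{W^*})|.
\]
I would bound this by Lipschitz continuity of $W\mapsto R(f_W)$ near $W^*$, i.e.\ by $C\|W_K-W^*\|$. This uses that $\ell$ is Lipschitz on the bounded region $D$ and that each block $T_i$ is Lipschitz in its weights, so the constant is finite for fixed $N,P$. The result is a rate $\beta^{K/2}$.

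To recover $\beta^K$ itself, I would instead Taylor-expand $R(f_W)$ at $W^*$ to second order. If $R$ is smooth in $W$ and its gradient at $W^*$ is small, this gives $|R(f_{W_K})-R(f_{W^*})|\lesssim\|W_K-W^*\|^2\lesssim\beta^K$. Alternatively, since $\lesssim$ hides constants and $\beta^{K/2}=\tilde\beta^K$ with $\tilde\beta=\sqrt{\beta}$, I would state whichever version matches the paper's looser reading.

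\textbf{Step 3: transfer through $g$.} Since $\mathcal{E}=g(E)$, I need $g$ to be locally Lipschitz near $E(N,P,\infty)$. Right-continuity alone does not give a rate, so I would add this mild assumption explicitly. Examples such as $g(t)=C e^{-t}$ satisfy it. This step then multiplies the bound by a constant and yields $|\mathcal{E}(N,P,K)-\mathcal{E}(N,P,\infty)|\lesssim\beta^K$.

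\textbf{Main obstacle.} The main obstacle is Step 2: converting the optimization gap on the empirical loss into a gap in population risk with the same rate $\beta^K$. The cleanest route, which I would try first, is to note that the implicit constant may depend on $N$ and $P$ (both fixed here). Then local Lipschitz/smoothness of $W\mapsto R(f_W)$ on a bounded neighborhood containing all iterates, which is guaranteed because strong convexity keeps the iterates in a sublevel ball, suffices.
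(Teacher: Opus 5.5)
Your Step 1 is what the paper relies on. The paper gives no argument of its own beyond citing Bach's book and Karimi et al., i.e.\ the descent lemma plus the Polyak--{\L}ojasiewicz inequality, giving $F(W_K)-F(W^*)\le\beta^K\bigl(F(W_0)-F(W^*)\bigr)$. It then tacitly identifies this optimization gap with $E(N,P,K)-E(N,P,\infty)$ and passes it through a Lipschitz $g$ (the constant $L_g$ used in its later appendix proofs). So your Step~3 assumption is the one the paper uses implicitly, and you are right that the paper never addresses the empirical-versus-population mismatch.

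The genuine problem is how you close Step~2. Passing to distances through $\tfrac{\mu}{2}\|W-W^*\|^2\le F(W)-F(W^*)$ gives only $\|W_K-W^*\|\lesssim\beta^{K/2}$, and $\beta^{K/2}$ is not $\lesssim\beta^K$. The ratio $\beta^{-K/2}$ is unbounded in $K$, so no constant absorbs it, even one depending on $N$ and $P$. Rewriting it as $(\sqrt{\beta})^K$ changes the base, which the statement fixes as $1-1/\kappa$. The second-order expansion does not rescue this either. $W^*$ minimizes the \emph{empirical} objective, so generically $\nabla_W R(f_{W^*})\neq 0$. A merely small gradient still leaves the first-order term $\bigl|\langle\nabla_W R(f_{W^*}),W_K-W^*\rangle\bigr|$, which is again of order $\beta^{K/2}$. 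So the claim in your last paragraph, that local Lipschitzness of $W\mapsto R(f_W)$ suffices, is false with the distance bound you actually derived.

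The missing ingredient is a direct contraction of the iterates, rather than a detour through function values. For $F$ that is $\mu$-strongly convex and $G$-smooth, combine $\langle\nabla F(x)-\nabla F(y),x-y\rangle\ge\frac{\mu G}{\mu+G}\|x-y\|^2+\frac{1}{\mu+G}\|\nabla F(x)-\nabla F(y)\|^2$ with $\|\nabla F(x)-\nabla F(y)\|\ge\mu\|x-y\|$. For $\gamma\le 1/G$ (hence $\gamma\le 2/(\mu+G)$) this gives $\|x-\gamma\nabla F(x)-y+\gamma\nabla F(y)\|\le(1-\gamma\mu)\|x-y\|$. In the $C^2$ case, simply write $W_{k+1}-W^*=(I-\gamma\bar H_k)(W_k-W^*)$, where the averaged Hessian $\bar H_k$ has spectrum in $[\mu,G]$.

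Taking $y=W^*$ and $\gamma_k=1/G$ yields $\|W_K-W^*\|\le\beta^K\|W_0-W^*\|$, with all iterates in the ball of radius $\|W_0-W^*\|$ about $W^*$. Use your first-order Lipschitz bound for $W\mapsto R(f_W)$ on that ball, with constant $L_R$ say, together with Lipschitz $g$. This gives $|\mathcal{E}(N,P,K)-\mathcal{E}(N,P,\infty)|\le L_gL_R\,\beta^K\|W_0-W^*\|$, with no condition on $\nabla_W R(f_{W^*})$. Note also that for $\gamma_k<1/G$ the factor is only $1-\gamma_k\mu>\beta$. So the stated base genuinely requires $\gamma_k=1/G$, as you assumed, despite the statement's looser ``$\gamma_k\le 1/G$''.
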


Theorem \ref{th7} is essentially proven in \cite{bach2024learning,karimi2016linear}, showing that the scaling law of training steps follows an exponential law in general. When $\ell$ is a general convex function, the scaling law would follow an power law.
Existing theories \cite{sra2011optimization,jain2017non} often deduce that the derived coefficient $\beta = 1 - \mu/G$ is independent of $N,P$, i.e., data size and model size. Nevertheless‌, such kind of correlation must be taken into account for the analysis of foundation models to make it better comply with training practice of foundation models. Recently, Du et al. \cite{du2019gradient} derived $\beta_{N,P} = 1- \mathcal{O}\left(\frac{1}{N^2}\cdot \frac{1}{2^{\mathcal{O}(P)}}\right)$ for the deep fully-connected neural networks. Notice that here $\beta_{N,P} \to 1$, when $N, P \to \infty$. Such a bound implies that when \( N \) and \( P \) are sufficiently large, the convergence speed of model weight learning becomes extremely slow.  In other word, 
the estimation of $\beta_{N,P}$ highlights the extreme complexity of training large foundation models, and this is also closely related to the quality of training data and the choice of loss function. More refined estimations are clearly needed for this issue, and we leave this in our future study.

\subsection{Scaling Law of Model Size} \label{modelscaling}

\begin{theorem}\label{th8}
%	Assume the functional blocks $\{T_i\}_{i=1}^K$ of foundation models belong to the class of nonlinear Lipschitz operators, and $\ell$ is $L$-Lipschitz in its first argument. 
%	If the foundation model is constructed by \( f_{k+1} = T_i(f_k) \), and the Lip number satisfies \( \text{Lip}(T_i) < 1 \), then the limit architecture satisfies the following estimation:  
%	\[
%	|\mathcal{E}(N, P, \infty) - \mathcal{E}(N, \infty, \infty)| \lesssim  (\mathrm{Lip}(T))^P %=   e^{-P |\ln(\mathrm{Lip}(T))|}.
%	%\leq \frac{(\mathrm{Lip}(T))^K}{1 - \mathrm{Lip}(T)} \|f_0(x) - f^*(x)\| = \mathcal{O}\left(e^{-P |\ln(\mathrm{Lip}(T))|}\right).
%	\]
%	(2) If the network is constructed in the ``A-mode'', i.e., \( f_{k+1} = f_k - \alpha_k A(f_k) \), where \( A = I - T \), \( f_0 = I \), and \( \alpha_k \in (0, 1) \), then when \( m(A) > 0 \), the limit architecture satisfies the following estimation:  
%	\[
%	\mathcal{E}(N, P, 0) - \mathcal{E}(N, \infty, 0) \leq e^{-m(A)} \sum_{k=0}^P \alpha_k \|f_0(x) - f^*(x)\| = \mathcal{O}\left(e^{-m(A) \log P} \vee e^{-m(A) P^{1-s}}\right).
%	\]
Assume that the basic functional blocks $\{T_k\}_{k=1}^K$ of a foundation model belong to the class of nonlinear Lipschitz operators, and the model is construcred through $f_{k+1}=T_k(f_k)$. Assume also that

(\romannumeral2) $\mathrm{Lip}(T_i) \leq 1$ for all $i > K_0$ for some integer $K_0$, and there exists a generalized projection operator $T$ such that $\|T_i - T\| \leq \epsilon_i$, where $\sum_{i=1}^\infty \epsilon_i < \infty$;

(\romannumeral3) $T_k$ satisfies $\gamma$-strongly quasi-nonexpansive propety, that is, \(\|T_kf - f^* \|^2 \leq \|f-f^*\|^2 -\gamma \|T_kf-f\|^2\), for any $f^*\in \cap_{i=1}^\infty\mathrm{Fix}(T_i), f\in \cap_{i=1}^\infty \mathbb{D}(T_i)$, and for some constant $\gamma>0$, and there exists a constant $C>0$, such that $\|f-f^*\| \leq C \|f-T_kf\|$ for any integer $k$.

Then the limit architecture satisfies the estimation:
\[
\big|\mathcal{E}(N, P, \infty) - \mathcal{E}(N, \infty, \infty)\big|
\;\lesssim\; P^{-\frac{1}{2}}.
\]
\end{theorem}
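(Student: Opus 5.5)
We reduce the architecture error to the distance between the depth-$P$ network and its infinite-depth limit, and then bound that distance by a summation argument over the iterates $f_k$. First, by Theorem \ref{th1c}, condition (ii) guarantees that the limit $f^*=\lim_k f_k$ exists and lies in $\cap_i \mathrm{Fix}(T_i)$. Since the width is fixed, the model size $P$ is proportional to the depth, so it suffices to prove $\|f_P-f^*\|\lesssim P^{-1/2}$ with $f_P=T_P\circ\cdots\circ T_1 f_0$.

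To transfer this to $\mathcal{E}$, we use the fact that the loss and $g$ are Lipschitz on the bounded set $D$. Because $D$ is bounded, both $f_P$ and $f^*$ lie in $D$, and we assume the prediction head is Lipschitz. Then $|R(f_P)-R(f^*)|\lesssim \|f_P-f^*\|$. Passing through $g$ would need $g$ to be Lipschitz, not merely right-continuous, so we will assume this, or else state the bound for $g$ locally Lipschitz near $E(N,\infty,\infty)$. This reduction is routine and gives $|\mathcal{E}(N,P,\infty)-\mathcal{E}(N,\infty,\infty)|\lesssim \|f_P-f^*\|$.

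\textbf{Key step (telescoping).} Apply the $\gamma$-strong quasi-nonexpansiveness in (iii) with $f=f_k$ and $T_k f_k=f_{k+1}$:
\[
\|f_{k+1}-f^*\|^2\le \|f_k-f^*\|^2-\gamma\|f_{k+1}-f_k\|^2 .
\]
Summing from $k=0$ to $P-1$ gives $\gamma\sum_{k=0}^{P-1}\|f_{k+1}-f_k\|^2\le \|f_0-f^*\|^2\le \mathrm{diam}(D)^2$. The same inequality shows that $\|f_k-f^*\|$ is nonincreasing in $k$. Now use the error bound in (iii), $\|f_k-f^*\|\le C\|f_k-T_kf_k\|=C\|f_k-f_{k+1}\|$. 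Together with monotonicity this yields
\[
P\,\|f_P-f^*\|^2\le \sum_{k=0}^{P-1}\|f_k-f^*\|^2\le C^2\sum_{k=0}^{P-1}\|f_{k+1}-f_k\|^2\le \frac{C^2}{\gamma}\|f_0-f^*\|^2 ,
\]
which gives $\|f_P-f^*\|\le \frac{C}{\sqrt{\gamma}}\|f_0-f^*\|\,P^{-1/2}$.

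\textbf{Main obstacle.} The subtle point is the choice of $f^*$. Strong quasi-nonexpansiveness must be applied with the specific limit point $f^*$, which requires $f^*\in\cap_i\mathrm{Fix}(T_i)$ and not merely $f^*\in\mathrm{Fix}(T)$. The remark after the limit-architecture definition, that $f^*$ is a common fixed point, is what justifies this, and we will either invoke it or prove it from $\|T_i-T\|\le\epsilon_i\to0$ together with continuity. We also note that the two inequalities in (iii) actually give geometric decay, $\|f_{k+1}-f^*\|^2\le(1-\gamma/C^2)\|f_k-f^*\|^2$, which is stronger. The stated $P^{-1/2}$ rate nevertheless follows from the summation argument above, which uses only square-summability, and we will present that argument as the main line.
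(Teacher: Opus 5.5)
Your argument is correct, and it reaches the bound through a different monotonicity than the paper uses. Both proofs telescope the strong quasi-nonexpansive inequality to get $\gamma\sum_k\|f_{k+1}-f_k\|^2\le\|f_0-f^*\|^2$. Both close with $|\mathcal{E}(N,P,\infty)-\mathcal{E}(N,\infty,\infty)|\le L_gL_0\|f_P-f^*\|$, and the paper likewise simply takes $g$ to be $L_g$-Lipschitz.

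The paper then claims the \emph{residuals} $\|T_kf_k-f_k\|$ are nonincreasing. It bounds the last residual by the average and applies $\|f-f^*\|\le C\|f-T_kf\|$ only once, at the final iterate. Its residual monotonicity, however, is derived as $\|T_kf_k-f_k\|=\|T_kf_k-T_kf_{k-1}\|\le\|f_k-f_{k-1}\|=\|T_kf_{k-1}-f_{k-1}\|$. This uses $f_k=T_kf_{k-1}$, whereas in fact $f_k=T_{k-1}f_{k-1}$, so the chain closes only when consecutive blocks coincide. It also passes to a $T_k$-dependent equivalent norm, while (iii) is stated in the original one.

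You instead use Fej\'er monotonicity of $\|f_k-f^*\|$, which follows from (iii) alone, and apply the error bound at every step, averaging distances rather than residuals. This is valid for genuinely varying $T_k$ and does not use $\mathrm{Lip}(T_k)\le1$ for the rate. Your route is therefore the sound version of the argument.

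Your side remark is also right. Combining the two parts of (iii) gives $\|f_{k+1}-f^*\|^2\le(1-\gamma/C^2)\|f_k-f^*\|^2$, so under these hypotheses the architecture error decays geometrically in $P$. The bound $P^{-1/2}$ is far from sharp.

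One correction to your ``main obstacle'' paragraph: neither justification you offer works. Theorem~\ref{th1c} does not say the limit lies in $\cap_i\mathrm{Fix}(T_i)$; its proof only yields a point of $\mathrm{Fix}(T)$. Passing to the limit in $f_{k+1}=T_kf_k$ using $\|T_k-T\|\to0$ and continuity likewise gives only $f_\infty\in\mathrm{Fix}(T)$, not $f_\infty\in\mathrm{Fix}(T_i)$ for each $i$.

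You do not need either. Fix any $f^*\in\cap_i\mathrm{Fix}(T_i)$; nonemptiness is implicit in (iii), and the paper's proof also just starts from such a point. Your estimate $\|f_P-f^*\|\le C\gamma^{-1/2}\|f_0-f^*\|P^{-1/2}$ then shows $f_P\to f^*$, which identifies the limit architecture with $f^*$. Condition (ii) plays no role in your argument. In fact, substituting a second common fixed point for $f$ in the error bound of (iii) shows the common fixed-point set is a singleton.
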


The proof of Theorem \ref{th8} is given in Appendix \ref{modelsize}. As shown in the theorem, the scaling law of the model size will follow a power law when $\mathrm{Lip}(T_i)\leq 1$. It is noted, however, that if $\mathrm{Lip}(T_i)< 1$, the scaling law of the model size may be fast, following an exponential law with exponent $\prod_{i=1}^{P}\mathrm{Lip}(T_i)$. We notice that $\mathrm{Lip}(T_i)=1$ in general cannot imply an appropriate scaling law even it is known to be convergent. 
Here we have imposed a slightly stronger quasi-asymptotic regularity condition that $\{T_i\}_{i=1}^K$ are $\gamma$-strongly quasi-nonexpansive operators, to deduce the power law. This condition, as noticed in Section \ref{lips}, as well as in Example \ref{ex8}, any averaged version of non-expansive operators in general and any Transformer with pre-LayerNorm can naturally be satisfied. Nevertheless, it is still open how to determine a rate of convergence for a generic nonlinear Lipschitz operator with $\text{Lip}(T) = 1$.

%Generally, it is known \cite{halperin1962product,cegielski2012iterative} that an $\alpha$-averaged nonexpansive operator $T=(1-\alpha)I+\alpha F, 0<\alpha <1$ is a $\gamma$-strongly quasi-nonexpansive operator with $\gamma = \frac{\alpha}{1-\alpha}$, where $F$ is a non-expansive operator. 

Theorem \ref{th8}, combined with Example \ref{ex8} and \ref{ex10}, we can immediately derive the scaling law of model size for GPT-2-like models.

\begin{corollary}\label{corco}
	Assume the GPT-2-like architecture is contructed by $f_W = T_2 \circ \prod_{i=1}^{L} T_{1i} \circ f_0 $, as defined in Example \ref{ex10}. We further assume that there exists a constant $C>0$, such that $\|f-f^*\| \leq C \|f-T_{1i}f\|$ for any integer $k$. Then the limit architecture satisfies the following estimation:
	\[
	\big|\mathcal{E}(N, P, \infty) - \mathcal{E}(N, \infty, \infty)\big|
	\;\lesssim\; P^{-\frac{1}{2}}.
	\]
\end{corollary}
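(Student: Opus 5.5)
The proof reduces Corollary \ref{corco} to Theorem \ref{th8}: we check that its hypotheses (\romannumeral2) and (\romannumeral3) hold for the GPT-2-like blocks $T_{1i}$, and then show that the output layer $T_2$ does not change the rate.

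\textbf{Condition (\romannumeral2).} By Example \ref{ex10}, each $T_{1i} = T^{(2)}_{1i}\circ T^{(1)}_{1i}$ satisfies $\mathrm{Lip}(T_{1i})\le 1$ and is quasi-asymptotically regular. The first half of (\romannumeral2) therefore holds with $K_0=0$.

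For the condensing half, we use the Corollary following Theorem \ref{th1c}, which asserts that the GPT-2 limit architecture exists. Since the blocks are self-mappings and quasi-asymptotically regular, the necessity direction of Theorem \ref{th1c} then gives a generalized projection operator $T$ with $\|T_{1i}-T\|\le\epsilon_i$ and $\sum_i\epsilon_i<\infty$. This step needs care: that Corollary was stated as an immediate consequence of Theorem \ref{th1c} and Example \ref{ex10}, so the argument would be circular unless the existence is taken from it directly. If necessary, I would add the condensing property as an implicit standing assumption on GPT-2-like models, as the corollary's phrasing suggests.

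\textbf{Condition (\romannumeral3).} I need the $\gamma$-strong quasi-nonexpansiveness of each $T_{1i}$. By Example \ref{ex8}, the factors $T^{(j)}_{1i}=I+F_j$ with $F_j$ dissipative can be written, for some $0<\alpha<1$, as averaged operators
\[
T^{(j)}_{1i}=(1-\alpha)I+\alpha\left(I+\tfrac1\alpha F_j\right),
\]
where $I+\tfrac1\alpha F_j$ is nonexpansive. The inequality recalled in Section \ref{lips} then makes each factor strongly quasi-nonexpansive with constant $\gamma_j=(1-\alpha)/\alpha$. The composition of two strongly quasi-nonexpansive operators sharing a fixed point is again strongly quasi-nonexpansive, with constant $\gamma=\min(\gamma_1,\gamma_2)/2$. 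To see this, apply the inequality to each factor in turn and use $\|a+b\|^2\le 2\|a\|^2+2\|b\|^2$ to combine the two residual terms. The common fixed points come from $\mathrm{Fix}(F_j)$-type zeros, and I will take their existence as part of the self-mapping setting. The error-bound requirement $\|f-f^*\|\le C\|f-T_{1i}f\|$ is assumed explicitly in the corollary.

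\textbf{Output layer and conclusion.} With these hypotheses verified, Theorem \ref{th8} gives
\[
\Big\|\prod_{i=1}^{L}T_{1i}f_0-f^*\Big\|\lesssim L^{-1/2}.
\]
The linear prediction layer $T_2$ has finite Lipschitz constant $\rho(W)$ by Example \ref{ex2}, so composing with it multiplies the bound only by a constant. The performance functional inherits the rate through the same Lipschitz-loss argument used in Theorem \ref{th8}.

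Finally, I convert depth into parameter count. Each block has a fixed width, so $P\asymp L$ up to the fixed embedding and head sizes, and therefore $L^{-1/2}\asymp P^{-1/2}$. The main obstacle is the condensing property: it is not derived from the architecture itself, so I would rely on the existence corollary or make the assumption explicit.
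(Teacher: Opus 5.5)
Your reduction follows the paper's own route. The paper obtains Corollary~\ref{corco} ``immediately'' from Theorem~\ref{th8} together with Examples~\ref{ex8} and~\ref{ex10}, with the error bound assumed, and your handling of $T_2$ and of $P\asymp L$ is fine. Your two-factor estimate with constant $\min(\gamma_1,\gamma_2)/2$ is a valid but cruder substitute for the appendix proof that a composition of averaged maps is averaged with $\nu=\nu_1\nu_2/(\nu_1+\nu_2)$. That proof works for all pairs $x,y$, so it does not need the two factors to share a fixed point. However, the obstacle you flag is a genuine gap as written. Nothing in Examples~\ref{ex8} or~\ref{ex10} gives the condensing half of hypothesis (ii) of Theorem~\ref{th8}. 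The GPT-2 existence corollary is unsupported on exactly this point, and the paper has the same hole. Adding condensing as an assumption proves a weaker statement than the one claimed.

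The way out is to stop using Theorem~\ref{th8} as a black box. Its rate argument never uses $\sum_i\epsilon_i<\infty$, and the hypotheses you did verify already give the estimate. Take $f_{k+1}=T_{1,k+1}f_k$ and $f^*\in\bigcap_i\mathrm{Fix}(T_{1i})$. Strong quasi-nonexpansiveness plus the assumed error bound give $\|f_{k+1}-f^*\|^2\le\|f_k-f^*\|^2-\gamma\|T_{1,k+1}f_k-f_k\|^2\le(1-\gamma/C^2)\|f_k-f^*\|^2$. Hence $\|f_L-f^*\|\le(1-\gamma/C^2)^{L/2}\|f_0-f^*\|$, which is much better than $L^{-1/2}$. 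This also disposes of two other weak points. First, applying the error bound with $f$ equal to a second common fixed point shows that $\bigcap_i\mathrm{Fix}(T_{1i})$ is a single point. So $f^*$ is automatically the limit architecture, and no existence theorem is needed beyond assuming a common fixed point exists. Second, it avoids the residual-monotonicity step $\|T_Kf_K-f_K\|^2\le\frac{1}{K+1}\sum_k\|T_kf_k-f_k\|^2$ in the paper's proof of Theorem~\ref{th8}. That step silently identifies $f_k$ with $T_kf_{k-1}$, whereas actually $f_k=T_{k-1}f_{k-1}$. It is therefore justified only when all blocks coincide, so it fails precisely in the GPT-2 setting, where blocks carry different weights. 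Finally, state explicitly that $\gamma$ must be uniform in $i$, i.e. the averaging parameters must stay bounded away from $1$. Saying ``for some $0<\alpha<1$'' block by block is not enough.
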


This shows that GPT-2-like foundation models may follow a power law.

\subsection{Scaling Law of Data Size} \label{datascaling}

Let us first recall the definition on covering numbers of a family of functions. For more illustrations, please refer to \cite{zhang2002covering,bartlett2002rademacher}.
\begin{Definition}[Covering number]\label{def-CoveringNumber}
	For a given class of vector-valued functions $\mathcal{F}$, the covering number 
	\[
	\mathcal{N}_{\infty}(\mathcal{F};\epsilon;\{x_i\}_{i=1}^{N};\|\cdot\|)
	\]
	is the smallest size of a collection (a cover) $\mathcal{C} \subset \mathcal{F}$ such that for all $f\in \mathcal{F}$, there exists a $\hat{f}\in\mathcal{C}$ satisfying
	\[
	\max_{i} \|f(x_i) - \hat{f}(x_i)\| \leq \epsilon.
	\]
	Further, we define
	\[ 
	\mathcal{N}_{\infty}(\mathcal{F};\epsilon;N;\|\cdot\|) = \sup_{\{x_i\}_{i=1}^{N}} \mathcal{N}_{\infty}(\mathcal{F};\epsilon;\{x_i\}_{i=1}^{N};\|\cdot\|).
	\]
	If $\mathcal{F}$ is real-valued (instead of vector-valued), we drop the norm from the notation. And the metric entropy of $\mathcal{F}$ is defined by $\log\mathcal{N}_{\infty}(\mathcal{F};\epsilon;\{x_i\}_{i=1}^{N};\|\cdot\|)$.
\end{Definition}

\begin{theorem}\label{th9}
	Let $\mathcal{D} = \{x_i,y_i\}_{i=1}^N$ be the training dataset as in Theorem \ref{the2}, and let $\ell$ be a $G$-bounded loss function that is $L_0$-Lipschitz.  
	Consider a function class $\mathcal{F}$ such that $|f| \leq A < \infty$ for all $f\in\mathcal{F}$ and 
	$\log\mathcal{N}_{\infty}(\mathcal{F};\epsilon;\{x_i\}_{i=1}^N) \leq C_{\mathcal{F}}/\epsilon^2$ for all $\{x_i\}_{i=1}^N\in \mathcal{X}^N$. Then for any $\delta > 0$, with probability at least $1-\delta$, it holds that
	\begin{align*}\label{estimateSLDS2}
		\begin{split}
			& |\mathcal{E}(N, \infty, \infty) -  \mathcal{E}(\infty, \infty, \infty)|  \\
			\leq& C \sqrt{\frac{C_{\mathcal{F}}}{N}}
			\left( 1 + \log\!\left( A\sqrt{\frac{N}{C_{\mathcal{F}}}} \right) + 2G\sqrt{\frac{\log(1/\delta)}{2N}} \right) \lesssim  N^{-\frac{1}{2}}. %= e^{-1/2 \ln(N)}.
		\end{split}
	\end{align*}
	Here, $C$, $C_{\mathcal{F}}$ are some constants.
	%the terms \(\inf_{f_W} R(f_W)\) and \(\ell(f,\mathcal{D})\) can hardly be removed, as they represent intrinsic errors, which are typically assumed to be small.

\end{theorem}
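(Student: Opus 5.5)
The bound in Theorem~\ref{th9} is a uniform deviation bound between the empirical risk and the population risk over $\mathcal{F}$, so I will prove it by the standard empirical-process chain: symmetrization, Rademacher complexity, Dudley's entropy integral, and McDiarmid's inequality. First I reduce the left-hand side to such a deviation. By the measurability condition (\ref{aaa}) and the right-continuity of $g$, it suffices to bound $|E(N,\infty,\infty)-E(\infty,\infty,\infty)|$, absorbing the local modulus of $g$ into $C$. I interpret $\mathcal{E}(N,\infty,\infty)$ as the population risk of the empirical minimizer $\hat f_N\in\mathcal{F}$ of the limit architecture; this minimizer exists by Theorem~\ref{the2}, conditions (i) and (ii). I interpret $\mathcal{E}(\infty,\infty,\infty)$ as $\inf_{f\in\mathcal{F}}R(f)$. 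The usual decomposition $R(\hat f_N)-R(f^\star)\le [R(\hat f_N)-\hat R_N(\hat f_N)]+[\hat R_N(f^\star)-R(f^\star)]$ then gives
\[
|E(N,\infty,\infty)-E(\infty,\infty,\infty)|\le 2\sup_{f\in\mathcal{F}}|R(f)-\hat R_N(f)|,
\]
so everything reduces to controlling $\Phi(\mathcal{D})=\sup_{f}|R(f)-\hat R_N(f)|$.

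\textbf{Concentration.} Changing one sample changes $\Phi$ by at most $2G/N$, because $\ell$ is $G$-bounded. McDiarmid's inequality then gives, with probability at least $1-\delta$,
\[
\Phi\le \mathbb{E}\Phi+2G\sqrt{\log(1/\delta)/(2N)}.
\]
This is the source of the last term in the bracket; I will arrange the constants so that it appears multiplied by $\sqrt{C_{\mathcal{F}}/N}$, or else simply absorb it into $C$.

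\textbf{Complexity.} Symmetrization gives $\mathbb{E}\Phi\le 2\,\mathbb{E}\,\mathfrak{R}_N(\ell\circ\mathcal{F})$. Since $\ell$ is $L_0$-Lipschitz, an $\epsilon$-cover of $\mathcal{F}$ in the empirical sup-norm yields an $L_0\epsilon$-cover of $\ell\circ\mathcal{F}$. I can therefore apply Dudley's entropy integral directly to the loss class, which avoids the vector-valued contraction lemma:
\[
\mathfrak{R}_N\le \inf_{\alpha>0}\Big(4\alpha+\frac{12}{\sqrt N}\int_\alpha^{A}\sqrt{\log\mathcal{N}_\infty(\mathcal{F};\epsilon/L_0;N)}\,d\epsilon\Big).
\]
The empirical $L_2$ covering number is dominated by the $\ell_\infty$ one, and the diameter is at most $A$ because $|f|\le A$. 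Substituting $\log\mathcal{N}\le C_{\mathcal{F}}/\epsilon^2$, the integrand becomes $\sqrt{C_{\mathcal{F}}}/\epsilon$, and the integral equals $\sqrt{C_{\mathcal{F}}}\log(A/\alpha)$. Choosing $\alpha=\sqrt{C_{\mathcal{F}}/N}$ gives
\[
\mathfrak{R}_N\lesssim \sqrt{C_{\mathcal{F}}/N}\,\bigl(1+\log(A\sqrt{N/C_{\mathcal{F}}})\bigr).
\]
This is exactly the logarithmic factor in the statement. Combining the three steps and collecting constants into $C$ yields the displayed bound. The final statement $\lesssim N^{-1/2}$ holds up to the logarithmic factor.

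\textbf{Main obstacle.} The obstacle is not the empirical-process calculation, which is routine. It is the first reduction: justifying that the class $\mathcal{F}$ generated by the infinite-depth limit architectures of Theorem~\ref{th1c} satisfies the uniform bound $|f|\le A$ and the entropy hypothesis, and that $\mathcal{E}(N,\infty,\infty)$ really is the risk of an empirical minimizer over $\mathcal{F}$. Here I would rely on the statement's assumptions on $\mathcal{F}$ together with Theorem~\ref{the2}, rather than derive the entropy bound. I would also use right-continuity of $g$ only in the regime where $E$ is decreasing toward its limit, so that the transfer from $E$ to $\mathcal{E}$ is legitimate. If that transfer is too delicate, I will state the bound for $g=\mathrm{id}$ and note that it extends to Lipschitz $g$ at the cost of a constant.
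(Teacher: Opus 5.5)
Your proposal is correct in substance, but it reaches the bound through a different reduction than the paper uses.

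\textbf{Comparison of routes.} The paper never runs the empirical-process argument. It quotes Lemma A.5 of \cite{edelman2022inductive} (its Lemma~\ref{generalizationLemma}) as a black box. Your symmetrization--Dudley--McDiarmid chain is essentially the proof of that lemma. There are two slips in it, harmless up to constants. Once you pass to the loss class, the integrand is $L_0\sqrt{C_{\mathcal F}}/\epsilon$, not $\sqrt{C_{\mathcal F}}/\epsilon$. The integral runs up to the radius of the centered loss class, which is of order $L_0A$ rather than $A$. Taking $\alpha=L_0\sqrt{C_{\mathcal F}/N}$ then gives your expression multiplied by $L_0$.

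The genuine difference is how $|\mathcal E(N,\infty,\infty)-\mathcal E(\infty,\infty,\infty)|$ is tied to the uniform deviation.
\begin{itemize}
\item The paper assumes realizability ($\inf_{f_W}R(f_W)=0$), exact interpolation ($\hat R=0$), $g(0)=0$, and a Lipschitz constant $L_g$ for $g$. It then bounds $g(R(\hat f))\le L_g\bigl(R(\hat f)-\hat R(\hat f)\bigr)$. In the agnostic case it keeps an extra term $L_g\bigl(\hat R(f)-\inf_{f_W}R(f_W)\bigr)$ (Remark~\ref{notZeroRemark}) and calls it intrinsic.
\item Your ERM decomposition, $R(\hat f_N)-R(f^\star)\le 2\sup_{f\in\mathcal F}|R(f)-\hat R_N(f)|$, needs neither realizability nor interpolation. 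It also shows that the paper's extra term is controlled by the same deviation whenever the trained model is an empirical minimizer over the class whose entropy you bound.
\end{itemize}
Your route costs a factor $2$ and requires the minimizer property. Theorem~\ref{the2} assumes rather than supplies that property, but an $O(N^{-1/2})$-approximate minimizer is enough. The paper's route only needs $\hat R(\hat f)=0$.

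\textbf{Two steps must be changed.}

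First, right-continuity of $g$ gives no modulus of continuity and therefore no rate. So ``absorbing the local modulus of $g$ into $C$'' fails. You need $g$ to be Lipschitz near $E(\infty,\infty,\infty)$. The paper uses exactly this tacitly through $L_g$, even though the statement does not list it. Your fallback is therefore mandatory, not optional.

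Second, McDiarmid produces the confidence term additively:
\[
|E(N,\infty,\infty)-E(\infty,\infty,\infty)|\lesssim L_0\sqrt{\frac{C_{\mathcal F}}{N}}\Bigl(1+\log\bigl(A\sqrt{N/C_{\mathcal F}}\bigr)\Bigr)+G\sqrt{\frac{\log(1/\delta)}{N}}.
\]
No $\delta$-independent $C$ turns this into the displayed form, where $2G\sqrt{\log(1/\delta)/(2N)}$ is multiplied by $\sqrt{C_{\mathcal F}/N}$. That form is in fact too strong, as the following example shows.
\begin{itemize}
\item Take the class $\{f\equiv 0,\ f\equiv-\eta\}$, the loss $\ell(u,y)=L_0uy$, labels $y=\pm1$, and $\mathbb E\,y=-\theta$. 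Then $C_{\mathcal F}=\eta^2\log 2$ is admissible.
\item Choose $\theta$ just below $\sqrt{2\log(1/\delta)/N}$. The empirical minimizer then selects $-\eta$ with probability larger than $\delta$.
\item Its excess risk is $L_0\eta\theta$. The bracketed bound is of order $\eta N^{-1/2}\log N$. The excess risk exceeds it once $\log(1/\delta)\gg\log^2N$, with $\log(1/\delta)=o(N)$.
\end{itemize}
The bracketed placement comes from how Lemma~\ref{generalizationLemma} is transcribed in the paper; the underlying Rademacher argument gives the additive term. Prove and state the additive version. It still yields the advertised $N^{-1/2}$ rate up to the $\log N$ factor.
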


The proof of Theorem \ref{th9} is presented in Appendix \ref{dara}. Theorem \ref{th9} shows that the scaling law of data size follows a power law, which is aligned with typical results of learning theory \cite{bach2024learning}. In Appendices \ref{mlp} and \ref{trans}, we will provide a detailed estimations on covering numbers of MLP and Transformer models even with infinite depth under some mild conditions.

\subsection{The Overall Scaling Law }
\begin{theorem} \label{ee}
	Under the conditions of Theorems \ref{th7}, \ref{th8}, \ref{th9}, for any $\delta > 0$, with probability at least $1-\delta$, the following estimation holds:
	\begin{align*}
			|\mathcal{E}(N, P, K) - \mathcal{E}(\infty, \infty, \infty) | \lesssim  N^{-\frac{1}{2}} + P^{-\frac{1}{2}} + \beta^K,
	\end{align*} 
	where $\beta=1-1/\kappa$, and $\kappa = G/\mu$ is the condition number of $\ell$.
	\end{theorem}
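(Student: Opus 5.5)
The plan is to combine the three preceding scaling results through the standard error decomposition (\ref{eqdecom}) and the triangle inequality. Write
\[
|\mathcal{E}(N,P,K)-\mathcal{E}(\infty,\infty,\infty)| \le |\mathcal{E}(N,P,K)-\mathcal{E}(N,P,\infty)| + |\mathcal{E}(N,P,\infty)-\mathcal{E}(N,\infty,\infty)| + |\mathcal{E}(N,\infty,\infty)-\mathcal{E}(\infty,\infty,\infty)|.
\]
Bounding each term by the matching earlier theorem gives the estimate at once.

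First, the weight error is deterministic once $N$ and $P$ are fixed. Theorem \ref{th7} applies because $\ell$ is $G$-smooth and $\mu$-strongly convex with $\gamma_k\le 1/G$, and it gives a bound $C_1\beta^K$ with $\beta=1-1/\kappa$.

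Second, the architecture error is bounded by $C_2P^{-1/2}$ via Theorem \ref{th8}. Its hypotheses (ii)--(iii) on the blocks $\{T_i\}$ are part of the standing assumptions. Together with Theorem \ref{th1c}, these hypotheses also guarantee that $\mathcal{E}(N,\infty,\infty)$ is well defined.

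Third, the sample error is controlled by Theorem \ref{th9}. On an event of probability at least $1-\delta$, it gives the bound
\[
C\sqrt{C_{\mathcal{F}}/N}\,\bigl(1+\log(A\sqrt{N/C_{\mathcal{F}}})+2G\sqrt{\log(1/\delta)/(2N)}\bigr),
\]
which that theorem summarizes as $\lesssim N^{-1/2}$. Since the first two bounds hold deterministically, intersecting with this single event costs no further probability. Summing the three bounds and absorbing constants into $\lesssim$ yields the claim.

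The main obstacle is uniformity of the constants. The implicit constant in Theorem \ref{th7} may depend on $N$ and $P$, through the initial gap $\mathcal{E}(N,P,0)-\mathcal{E}(N,P,\infty)$ and possibly through $\kappa$ itself, as the paper notes when citing the $\beta_{N,P}$ of Du et al. Similarly, the constant in Theorem \ref{th8} may depend on $N$ through $\|f_0-f^*\|$.

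To handle this, I would state that $\lesssim$ hides constants uniform over $N$ and $P$. I would justify that uniformity from three facts:
\begin{itemize}
\item boundedness of $D$, which bounds $\|f_0-f^*\|$ by $\mathrm{diam}(D)$;
\item the $G$-boundedness and $L_0$-Lipschitz property of $\ell$, which together with right-continuity of $g$ convert distances between architectures into performance differences;
\item treating $\kappa$ as fixed, as in the hypothesis of Theorem \ref{th7}.
\end{itemize}
The logarithmic factor in Theorem \ref{th9} is absorbed the same way as in that theorem's own summary, so I would state the result modulo logarithmic factors.
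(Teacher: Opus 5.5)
Your proposal is correct and follows the paper's intended argument: the paper gives no separate proof, and the theorem comes from applying the triangle inequality to the decomposition (\ref{eqdecom}) and inserting the bounds of Theorems \ref{th7}, \ref{th8} and \ref{th9}, where only the sample-error bound is probabilistic. Your attention to constants that are uniform in $N,P$ and to the $\log N$ factor goes beyond the paper; however, right-continuity of $g$ gives no quantitative control, and what is actually needed (and what the appendix proofs silently use) is a Lipschitz constant $L_g$ for $g$.
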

The above theorem presents a joint estimation on scaling law of foundation model $\mathcal{M} = \{f_W, \mathcal{D}, \mathcal{A}\}$. It shows that the overall scaling law exhibits a power law with exponent \(1/2\) on both the model size and the data size, together with an exponential law that is dependent of the condition number of the loss function. This scaling law is consistent with those experimentally observed by Kaplan et al.~\cite{kaplan2020scaling}, Hoffmann et al.~\cite{hoffmann2022training}, and others.
Generally, such a scaling law provides a theoretical foundation for determining the data size, model size, and the training costs, particularly the ratio between data and model size for training a foundation model in practice, e.g., \cite{hoffmann2022training,kaplan2020scaling}. 
In a very recent study \cite{simon2026there}, Simon et al. asked that ``can we develop a theory of scaling laws that both explains why power laws arise and predicts their exponents a priori''. Theorem \ref{ee} provides, to some extent, a answer to the question. In particular, Theorem \ref{ee} reveals that the scaling law arises from the convergence rates of \(\mathcal{E}(N, P, K)\) to its limit, and the exponents are respectively determined by the condition number of loss function, the characteristic number $\mathrm{Lip}(T)$ of basic functional blocks and the metric entropy of the hypothesis class.

%\section{Experimental Results}

\section{Empirical Validation (\uppercase\expandafter{\romannumeral1}): Roles of $\mathrm{Lip}(T)$ }  \label{es}
In this section, we empirically show that the Lip number $\mathrm{Lip}(T)$ does play a crucial role in assess‌ing the usability of different network architectures. In section \ref{sec:experiments}, we make a comparative analysis between GPT-1 and GPT-2 models trained on Pile Dataset. In section \ref{norm}, we analyze the Lip numbers of GPT-1 and GPT-2 models, demonstrating that the Lip number does indicate the performance and stabilility of foundation models.

%\subsection{Comparative Analysis between GPT-1 and GPT-2 Architectures}
%

%\textbf{Model Architectures:} 
%
%
%
%While both models utilize a Transformer decoder-only framework, the GPT-2 architecture introduces several critical refinements over GPT-1. These include the relocation of Layer Normalization (Pre-norm configuration), modified weight initialization schemes, and an expanded vocabulary. We investigate whether these architectural improvements translate into better sample efficiency and reasoning capabilities under identical data constraints.
\subsection{Comparative Analysis between GPT-1-like and GPT-2-like Architectures on Pile Dataset}\label{sec:experiments}

In this subsection, we present an experimental comparison between the GPT-1~\cite{radford2018improving} and GPT-2~\cite{radford2019language} model architectures (see Figure \ref{fig1}(a)(b)). The main difference of GPT-1-like and GPT-2-like architectures is in the location of Layer Normalization. The theoretical analysis in Section \ref{eqmodel} has demonstrated that the Transformer model with pre-LayerNorm (GPT-2 like architectures) possesses better propety than that with post-LayerNorm (GPT-1 like architectures) in terms of $\mathrm{Lip}(T)$ measure. We will empirically verify this theoretical results to illuminate the role of $\mathrm{Lip}(T)$.
To ensure a fair evaluation of the architectural impact on performance, both models were standardized to a 1-billion (1B) parameter configuration and trained on a fixed budget of 20-billion (20B) tokens following~\cite{liu2024regmix}. 

%\textbf{Training Data:} 
We utilized the 17 publicly available domains from \textit{The Pile} dataset~\cite{gao2020pile} as our evaluation task. The task information are shown in Table~\ref{tab:pile_stats_full}.

\begin{table}[ht]
	\centering
	\caption{An overview of the 17 available domains in the Pile dataset used in our experiments.}
	\label{tab:pile_stats_full}
	\tiny
	\newcolumntype{L}{>{\RaggedRight\arraybackslash}X} 
	\begin{tabularx}{\textwidth}{l r L} % l:左对齐, r:右对齐, L:自动换行左对齐
		\toprule
		\textbf{Component} & \textbf{Effective Size} & \textbf{Brief Description} \\
		\midrule
		Pile-CC            & 227.12 GiB & Web crawl data processed via the Pile's extraction pipeline. \\ \addlinespace[2pt]
		PubMed Central     & 180.55 GiB & Full-text biomedical research articles from the NIH. \\ \addlinespace[2pt]
		ArXiv              & 112.42 GiB & Scientific preprints in Physics, Mathematics, and Computer Science. \\ \addlinespace[2pt]
		GitHub             & 95.16 GiB  & Public source code repositories from various programming languages. \\ \addlinespace[2pt]
		FreeLaw            & 76.73 GiB  & Legal court opinions from US federal and state courts. \\ \addlinespace[2pt]
		Stack Exchange     & 64.39 GiB  & Q\&A data from the diverse Stack Exchange network. \\ \addlinespace[2pt]
		USPTO Backgrounds  & 45.81 GiB  & Technical background sections of US patents. \\ \addlinespace[2pt]
		PubMed Abstracts   & 38.53 GiB  & Summaries of biomedical research papers. \\ \addlinespace[2pt]
		Gutenberg (PG-19)  & 27.19 GiB  & Long-form literary works from the Project Gutenberg library. \\ \addlinespace[2pt]
		Wikipedia (en)     & 19.13 GiB  & High-quality encyclopedic content from English Wikipedia. \\ \addlinespace[2pt]
		DM Mathematics     & 15.49 GiB  & Generated mathematics problems covering various topics. \\ \addlinespace[2pt]
		Ubuntu IRC         & 11.03 GiB  & Chat logs from the Ubuntu technical support channels. \\ \addlinespace[2pt]
		EuroParl           & 9.17 GiB   & Multilingual proceedings of the European Parliament. \\ \addlinespace[2pt]
		HackerNews         & 7.80 GiB   & Discussion threads and comments from the technology forum. \\ \addlinespace[2pt]
		PhilPapers         & 4.76 GiB   & Academic research papers and books in Philosophy. \\ \addlinespace[2pt]
		NIH ExPorter       & 3.79 GiB   & Descriptions of NIH-funded research projects and grants. \\ \addlinespace[2pt]
		Enron Emails       & 1.76 GiB   & Historical email communication corpus from Enron Corporation. \\
		\bottomrule
	\end{tabularx}
\end{table}

\textbf{Evaluation Benchmarks:} We evaluated the models across 13 downstream benchmarks covering three primary cognitive dimensions:
\begin{itemize}
	\item \textbf{Common Sense Reasoning:} HellaSwag~\cite{zellers2019hellaswag}, PIQA~\cite{bisk2020piqa}, WinoGrande~\cite{sakaguchi2021winogrande}, Social IQA~\cite{sap2019social}, and COPA~\cite{sarlin2020superglue}.
	\item \textbf{Scientific Knowledge:} SciQ~\cite{welbl2017crowdsourcing}, ARC-Easy~\cite{clark2018think}, and OpenBookQA~\cite{mihaylov2018can}.
	\item \textbf{Logic and Comprehension:} LogiQA~\cite{liu2020logiqa}, Lambada~\cite{paperno2016lambada}, RACE~\cite{lai2017race}, MultiRC~\cite{khashabi2018multirc}, and QQP~\cite{wang2018glue}.
\end{itemize}

\textbf{Results and Discussion:}
The final performance comparison, averaged across multiple evaluation shots, is summarized in Table~\ref{tab:final_results}. The empirical results demonstrate that the GPT-2-like architecture significantly outperforms GPT-1-like one in nearly all task categories. The transition from GPT-1-like to GPT-2-like architecture provides a 36.3\% relative improvement in overall model performance, 
validating the superiority of pre-LayerNorm over post-LayerNorm design. Note that by Examples \ref{ex7} and \ref{ex8}, in post-LayerNorm case, $\mathrm{Lip}(T)=\infty$ but $\mathrm{Lip}(T)\leq 1$ in the pre-LayerNorm case. This hightlights the crucial role of boundedness of $\mathrm{Lip}(T)\leq 1$.

\begin{table}[ht]
\centering
\caption{Performance Comparison between GPT-1-like and GPT-2-like models (Values in \%)}
\label{tab:final_results}
\small
\renewcommand{\arraystretch}{1.1} % 增加行高，让表格更有呼吸感
\begin{tabular}{l S[table-format=2.2] S[table-format=2.2] S[table-format=+2.2]}
	\toprule
	\textbf{Task} & {\textbf{GPT-1 1B (\%)}} & {\textbf{GPT-2 1B (\%)}} & {\textbf{Improvement ($\Delta$)}} \\
	\midrule
	Arc-Easy         & 27.07 & 49.25 & +22.18 \\
	COPA             & 57.00 & 68.17 & +11.17 \\
	HellaSwag        & 25.84 & 41.14 & +15.30 \\
	Lambada          & 0.00  & 29.81 & +29.81 \\
	LogiQA           & 24.58 & 26.96 & +2.38  \\
	MultiRC          & 42.80 & 54.41 & +11.61 \\
	OpenBookQA       & 27.00 & 29.77 & +2.77  \\
	PIQA             & 51.39 & 67.68 & +16.29 \\
	QQP              & 63.18 & 43.07 & -20.11 \\
	RACE             & 20.96 & 30.21 & +9.25  \\
	SciQ             & 22.10 & 78.58 & +56.48 \\
	Social IQA       & 34.80 & 38.80 & +4.00  \\
	WinoGrande       & 49.91 & 51.91 & +2.00  \\
	\midrule
	\textbf{Overall Average} & \textbf{34.89} & \textbf{47.57} & \textbf{+12.68} \\
	\bottomrule
\end{tabular}
\end{table}

\begin{figure}
	\centering
	\includegraphics[width=0.9\linewidth]{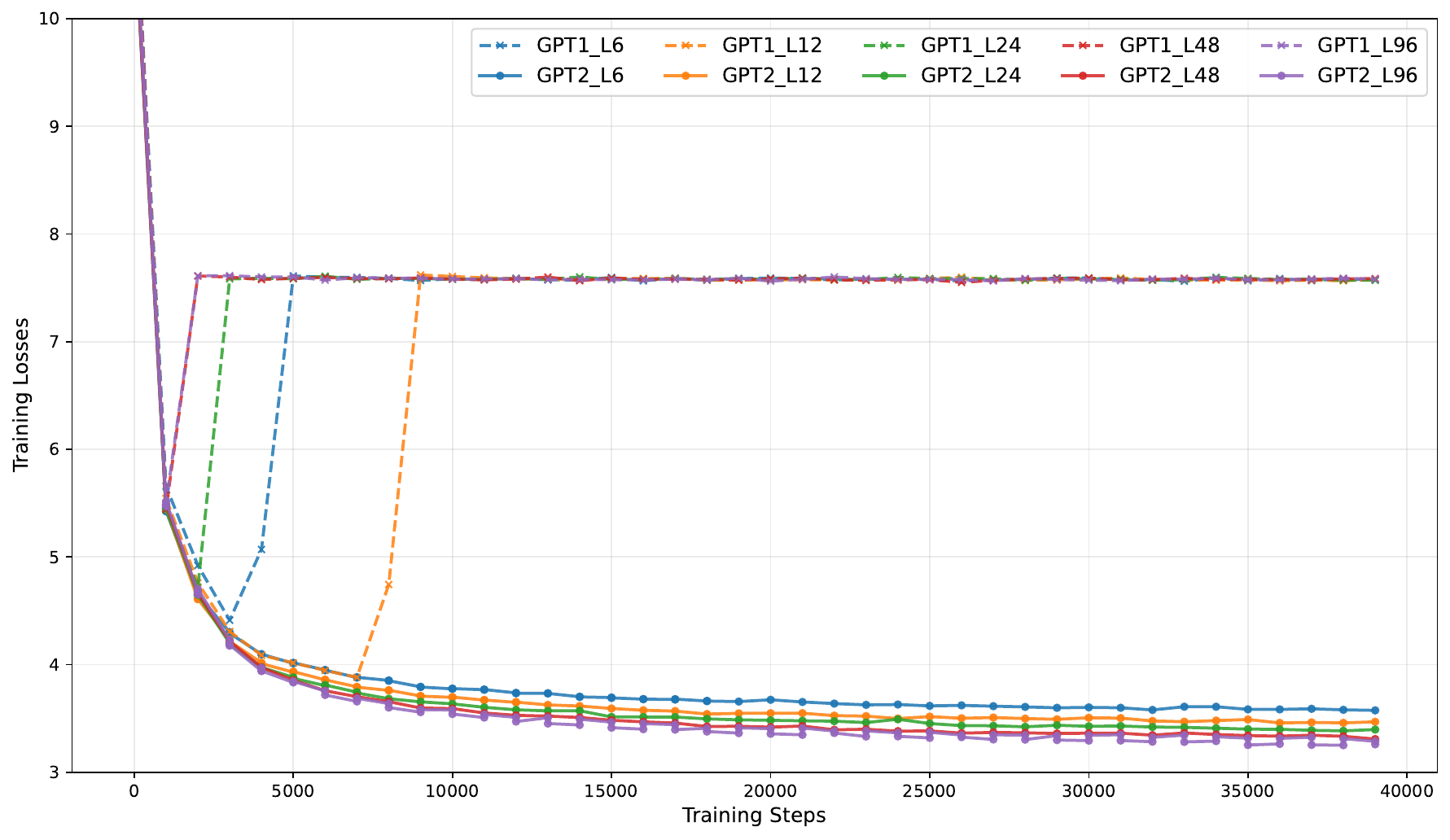}
	\caption{Comparison of the training loss evolution of GPT-1-like and GPT-2-like models on the OpenWebText dataset under the nanoGPT benchmark, with depths of 6, 12, 24, 48, and 96 layers, respectively. GPT-1-like models exhibit invalid NaN/Inf values, which is marked by large values in the figure. }
	\label{fig:openwebtext_loss}
\end{figure}

\begin{figure}
	\centering
	\includegraphics[width=0.9\linewidth]{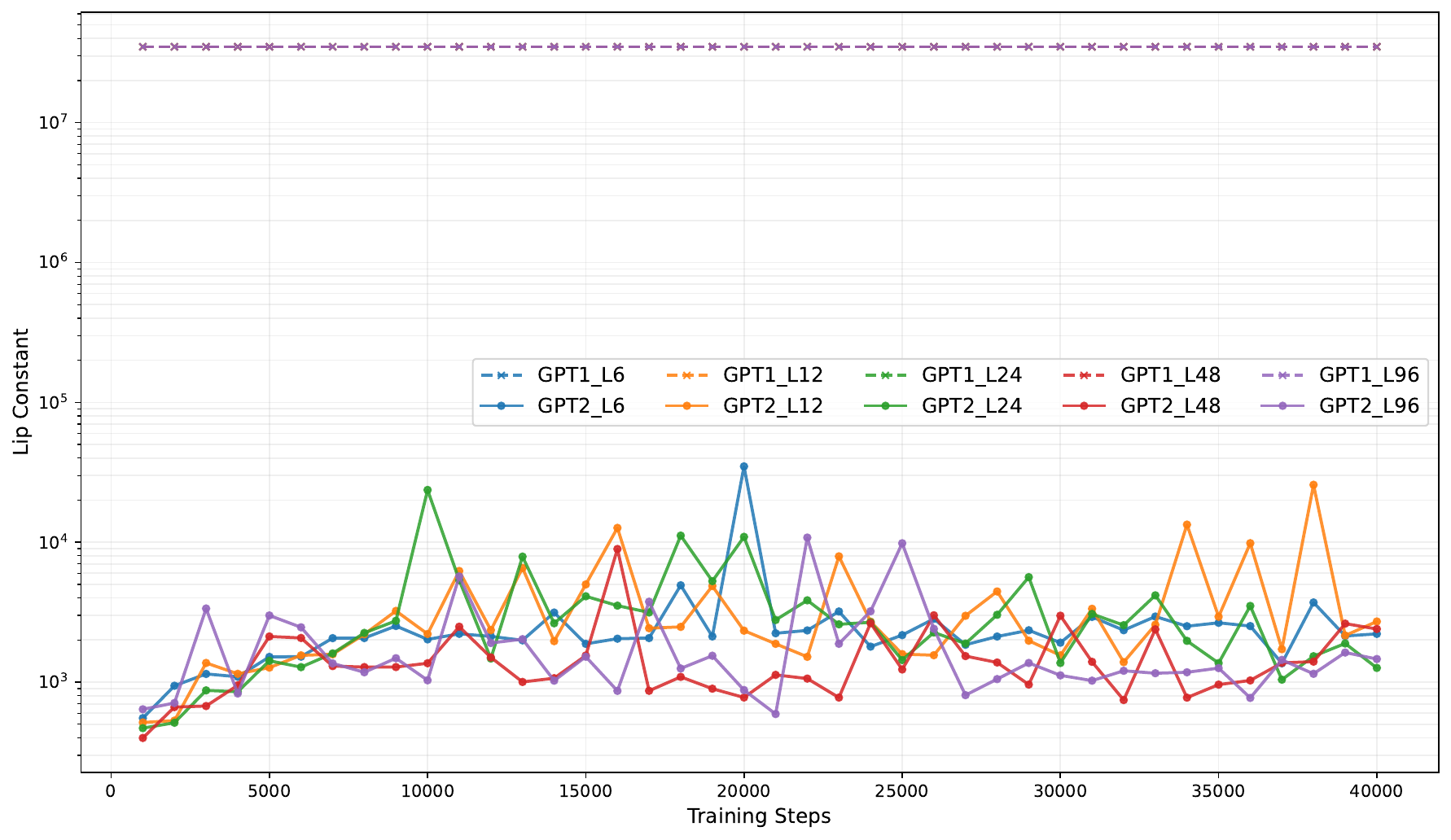}
	\caption{Comparison of the Lip number evolution of GPT-1-like and GPT-2-like models on the OpenWebText dataset under the nanoGPT benchmark, with depths of 6, 12, 24, 48, and 96 layers, respectively. As shown in Examples \ref{ex9} and \ref{ex10}, the Lip number of GPT-2-like model is upper bounded by $\sqrt{\lambda_{max}(W^\top W)}$, in which $W$ is model weights of the last prediction layer, while the Lip number of GPT-1-like models is infinite. The computed Lip numbers corroborates the theoretical results. }
	\label{fig:openwebtext_lower}
\end{figure}

	\subsection{Analysis of Lip Number} \label{norm}

We visualized $\mathrm{Lip}(T)$ under the original GPT-1 and GPT-2 reproduction setting on OpenWebText~\cite{gokaslan2019openwebtext} using the nanoGPT benchmark, with implementation following \cite{karpathy2025nanogpt}. We trained GPT-1-like and GPT-2-like models with depths of 6, 12, 24, 48, and 96 layers.
As shown in Fig.~\ref{fig:openwebtext_loss}, GPT-2-like models consistently achieve lower and more stable training losses across all tested depths as compared to GPT-1-like models. In contrast, after an initial decrease, the training losses of GPT-1-like models abruptly increase and collapse to a high plateau, leading to a degraded performance. This behavior indicates that the GPT-1-like architecture is relatively unstable and difficult to obtain a well-performing model. Moreover, this instability becomes more and more pronounced as the depth increases, suggesting that GPT-1-like architecture is significantly more fragile than GPT-2-like architecture.
The evolution of the Lip numbers for GPT-1-like and GPT-2-like models are illustrated in Fig.~\ref{fig:openwebtext_lower}. It is seen that GPT-2-like models maintain relatively stable Lip numbers with a upper bound across different depths, whereas GPT-1-like models become numerically unstable and even produces invalid NaN/Inf values, consistent with our theoretical analysis in Section~\ref{eqmodel}. Consequently, the training instability of GPT-1-like architecture is not only reflected in the loss curves, but also manifested in the corresponding Lip number dynamics, say, $\mathrm{Lip}(T) = \infty$ computed in Example \ref{ex7}.

These observations suggest that the $\mathrm{Lip}(T)$ does can serve as a very useful indicator of architectural stability and performance. In particular, deeper GPT-2-like models achieve lower training losses, while GPT-1-like models tend to suffer from performance collapse. This further supports that the $\mathrm{Lip}(T)$ of the basic functional blocks provide a principle criterion for assessing architectural stability.
Overall, these empirical findings corroborate our theoretical claims in Section~\ref{limitsa}, showing that the existence of a limit architecture critically depends on the $\mathrm{Lip}(T)$ of the basic functional blocks.

\section{Empirical Validation (\uppercase\expandafter{\romannumeral2}): Condensing Property } \label{condensing}

%\subsection{Models}
To comprehensively investigate the condensing property of foundation models, we selected a diverse suite of state-of-the-art open-source foudation models. Specifically, our experiments encompass dense models of varying parameter scales, including Llama-3.1-8B and Llama-3.1-70B \cite{grattafiori2024llama3}, as well as Qwen-2-7B and Qwen-2-72B \cite{yang2024qwen2}. Furthermore, to verify the generalizability of our findings within sparse architectures, we included the Mixture-of-Experts (MoE) model Deepseek-MoE-16B \cite{dai2024deepseekmoe}. Detailed parameters of these models are demonstrated in Table \ref{tab:models}.

\begin{table}[htbp]
	\centering
	\caption{Basic Parameters and Architectures of the Evaluated Models}
	\label{tab:models}	
		\begin{tabular}{lcccc}
			\toprule
			\textbf{Model} & \textbf{Total Parameters} & \textbf{Layers} & \textbf{Architecture} & \textbf{Context Length} \\
			\midrule
			Llama-3.1-8B        & 8B                  & 32 & Dense & 128K \\
			Llama-3.1-70B       & 70B                 & 80 & Dense & 128K \\
			Qwen-2-7B           & 7B                  & 28 & Dense & 128K \\
			Qwen-2-72B          & 72B                 & 80 & Dense & 128K \\
			Deepseek-MoE-16B    & 16.4B (2.8B Active) & 28 & MoE   & 4K \\
			\bottomrule
		\end{tabular}
\end{table}

\textbf{Datasets:} To ensure that our empirical conclusions are not biased toward a specific domain, we constructed an evaluation benchmark comprising four fundamental Natural Language Processing tasks:
\begin{itemize}
	\item \textbf{Question Answering (QA):} To evaluate reading comprehension and information extraction capabilities, we utilize the SQuAD dataset \cite{rajpurkar2016squad}. This benchmark consists of questions posed by crowdworkers on Wikipedia articles, where the answer to each question is a short text segment (or span) from the corresponding reading passage.
	
	\item \textbf{Mathematical Reasoning (Math):} We select the \textbf{GSM8K} dataset \cite{cobbe2021training} to test the models' multi-step logical and computational proficiency. It comprises high-quality, linguistically diverse grade school math word problems specifically designed to challenge sequential mathematical reasoning.
	
	\item \textbf{Dialogue Summarization (Summarization):} To assess the ability to distill contextual information, we employ \textbf{DialogSum} \cite{chen2021dialogsum}, a large-scale dataset containing real-life scenario conversations alongside corresponding human-annotated summaries.
	
	\item \textbf{Machine Translation (Translation):} We utilize the \textbf{FairTranslate} dataset \cite{jourdan2025fairtranslate} to evaluate cross-lingual alignment and translation accuracy. This English-French benchmark is uniquely designed to simultaneously assess inclusive language generation and the mitigation of non-binary gender biases.
\end{itemize}

\textbf{Experimental procedure:} This experiment aims to examine the information propagation and evolution of intermediate hidden layers across the entire foundation models. Let $K$ denote the total number of layers in a given foundation model. First, we select the $k$-th layer at specific relative depths, where $k$ represents 5\%, 20\%, 40\%, 60\% and 80\% of the total network depth (i.e., $k \in \{0.05K, 0.2K, 0.4K, 0.6K, 0.8K\}$). During a standard forward pass, we extract the hidden state input to the $k$-th layer, denoting it as $a^k$.

Subsequently, we bypass the model's sequential computational graph by treating the extracted intermediate representation $a^k$ as an independent input. Specifically, we feed $a^k$ separately into each individual layer of the model (from layer $1$ to $K$), which produces a set of corresponding single-layer outputs denoted by $\{o_i^k\}_{i=1}^{K}$.
To quantify the functional transformation applied by each layer, we compute both the Euclidean distance and the cosine similarity between the injected input $a^k$ and the corresponding output $o_i^k$ of the $i$-th layer\footnote{It is worth clarifying that although representations from different layers typically reside in distinct semantic spaces, this computation is mathematically valid because the residual connections in Transformers ensure that all hidden states share the same dimensional space. The purpose of this measurement is precisely to detect whether later layers degenerate into approximate identity mappings; a small distance indicates that the layer fails to alter the representation substantively, thereby revealing the "condensing property" of the model.}. Formally, the Euclidean distance is defined as
\[
d_i^k = \| a^k - o_i^k \|_2 , i =1, 2, \cdots, K,
\]
and the cosine similarity is given by
\[
s_i^k = \frac{\langle a^k, o_i^k \rangle}{\|a^k\|_2 \, \|o_i^k\|_2}, i =1, 2, \cdots, K.
\]
We compute the mean and variance of $d_i^k$ and $s_i^k$ across all samples in the dataset. The smaller the distance (i.e., the higher the similarity), the more the layer tends to behave like a projection operator.

\textbf{Experimental results:} Figures \ref{fig:llama31-8b-eu}-\ref{fig:deepseek-moe-16b-cos} show the computed metrics (Euclidean/cosine distance) of different layers for different foundation models. In each figure, rows correspond to different task groups (QA=SQuAD, Math=GSM8K, Summ=DialogSum, and MT=FairTranslate) and columns correspond to selected layer indices $k$.
As evidently shown in these figures, the open-source foundation models exhibit condensing property. Specifically, when the selected intermediate representation $a^k$ is extracted from deeper layers (e.g., $k=0.5K$ or $0.9K$), the distance between $a^k$ and the outputs of the subsequent layers $\{o^k_i\}$ tends to approach zero (indicating extremely high similarity). 
This observation indicates that, beyond a certain depth, the representations across subsequent network layers undergo almost no substantive change. In other words, the first $k$ layers can be viewed as forming an approximate ``fixed point'' for the remaining layers, exhibiting similar functional behavior as the potential limit architecture. The remaining layers can be viewed as approximating the projection operator. This further confirms the condensing property among existing well-performing foundation models, which suggests the condensing property as a core condition for the existence of emergent intelligence, as demonstrated in Section \ref{sec4}.
Notably, this property persists throughout most of the network, with noticeable distributional shifts occurring only in the final few layers, which are likely responsible for task-specific adaptation.

%Based on the plotted distance distributions, we observe a significant representation stagnation phenomenon:  This suggests that deeper hidden states undergo minimal substantive modification when passed through the vast majority of network layers. Effectively, $in^k$ acts as a ``fixed point'' for the computational functions of these layers. This phenomenon is pervasive across the network, with significant distributional shifts only occurring in the final few layers, which are presumably responsible for vocabulary space alignment and logits projection.

\section{Conclusions}
In this work, we have proposed a rigorous mathematical framework for ‌decipher‌ing emergent intelligence and scaling laws of foundation models from a limit perspective. 
We formulate a foudation model as the composition of stacked basic functional blocks $\{T_i\}$, and then introduce the limit architecture $T^* = \prod_{i=1}^\infty T_if_0$, an infinite-dimensional learning system, to embody the infinite-parameters limit of the foundation model. Our theories show that the emergent intelligence originates from the existence of the limit architecture, and the emergent abilities correspond to the learning behavior of this limit architecture. 
By employing the tools from nonlinear Lipschitz operator theory, particularly the characteristic number $\mathrm{Lip}(T)$ of Lipschitz operator, the Lipschitz dual operator theory and the spectral property of compact operators, we have proved that the limit architecture exists if and only if there exists an integer $K_0>0$ such that $\mathrm{Lip}(T_i) \leq 1, \forall i \geq K_0$, and the sequence $\{T_i\}$ possesses the condensing property. We have derived the scaling laws of foundation models, revealing that the performance of foundation models depends on data size, model size and training steps, totally following a scaling law of $N^{-\frac{1}{2}} + P^{-\frac{1}{2}}+\beta^K$. Our theory sheds some lights on the question raised by Simon et al.~\cite{simon2026there}: 
``can we develop a theory of scaling laws that both explains why power laws arise and predicts their exponents a priori''.
Our theoretical analysis shows also that the emergent intelligence is governed by training steps, data size, and model architecture, all of which are however crucially determined by properties of the constituting functional blocks $\{T_i\}$. We show also that the emergent intergence most likely occurs at the critical case when $\mathrm{Lip}(T)=1$. This aligns with the prior findings of Simon Vock and Christian Meisel \cite{vock2025critical} and M{\"u}ller et al. \cite{cai2025learning}, thus providing a theoretical support for these empirical findings. Moreover, we have showed that the emergent intelligence originates from an infinite-dimensional system, but it can be effectively approximated and realized via finite-dimensional architectures. This provides a theoretical explanation on why foudation models may work well with finite architecture. Our theorical analysis further confirms that the limit architecture may evatully evolve to such an infinite system that is constituted of a finite system plus a very fast decaying infinite system, supporting the universal weight subspace hypothesis observated by Kaushik et al. \cite{kaushik2025universal}, and the discretization hypothesis raised by Simon et al. \cite{simon2026there}. Furthermore, we apply the established theories to GPT-2 like architectures, showing that GPT-2-like foundation models can exhibit intelligent emergence, whereas GPT-1-like models do not, and for GPT-2-like models, their scaling law with respect to model size follows a power-law.
Finally, we have provided a series of experiments to support the rationality and correctness of the established theories.
%Empirically, we have demonstrated that the Lip number takes a crucial role in evaluating the usability of foundation models, and could help assess architectural stability.
%We further substantiated the condensing property in state-of-the-art open-source foundation models. 

Our analyses have assumed that the foundation models are constructed by stacking basic functional blocks. An interesting extension is to consider more general composition mechanisms, such as residual connections and mixture-of-experts architectures, and in these settings to derive corresponding conditions for existence of the limit architecture.
Another promising direction is to characterize the precise form of the limit architecture and leverage its properties to provide theoretical guidance for the architectural design of foundation models.
Finally, our limit theory suggests that the foundation models may achieve benign overfitting by enforcing the existence of a limit architecture. This offers a novel perspective on mitigating overfitting, contrasting with the traditional statistical learning paradigm that relies on controlling the complexity of the hypothesis space.

%Task rows map to datasets as QA=SQuAD, Math=GSM8K, Summ=DialogSum, and MT=fair\_translate\_fr. Columns correspond to selected layer indices $k \in \{1, 3, 16, 28\}$. 

%Subsequently, we bypass the model's sequential computational graph. We treat the extracted intermediate representation $a^k$ as an independent input and feed it separately into \textit{every} individual layer of the model (from layer $1$ to $K$). This yields a corresponding set of single-layer outputs, denoted as $\{o^k_i\}_{i=1}^{K}$. We calculate both the Euclidean distance and the cosine similarity between the injected input $a^k$ and output $o^k_i$ of each layers. 
%We compute these metrics across all layers of all samples in the dataset, and visualize the mean and variance of these metrics with respect to samples.

%\subsection{Preliminary Results and Findings}

%\clearpage
% Auto-generated figure gallery. Do not edit manually.
%\section{Comprehensive Visualization}

% Auto-generated figure gallery. Do not edit manually.
%\section{Comprehensive Visualization}
%For each model architecture, we provide two dedicated figures: one for Euclidean distance and one for Cosine similarity. In each figure, rows correspond to task groups (QA, Math, Summ, MT) and columns correspond to the selected layer indices $k$ available in the current result set.

\newcommand{\figurematrixsetup}{%
	\setlength{\tabcolsep}{2pt}%
	\renewcommand{\arraystretch}{1.08}%
	\small%
}

\newcommand{\resultimg}[5]{%
	\begingroup
	\edef\resolvedimagepath{#1/#2#3_#4_input_#5.pdf}%
	\edef\resolvedimage{%
		\noexpand\endgroup
		\noexpand\includegraphics[width=\noexpand\linewidth]{\resolvedimagepath}%
	}%
	\resolvedimage
}

\newcommand{\layerheaderfive}[5]{%
	\textbf{Task} & \textbf{$k=#1$} & \textbf{$k=#2$} & \textbf{$k=#3$} & \textbf{$k=#4$} & \textbf{$k=#5$} \\
}

\newcommand{\taskrowfive}[7]{%
	\textbf{#1} &
	\resultimg{\modeldir}{\modelprefix}{#2}{#3}{\metrictag} &
	\resultimg{\modeldir}{\modelprefix}{#2}{#4}{\metrictag} &
	\resultimg{\modeldir}{\modelprefix}{#2}{#5}{\metrictag} &
	\resultimg{\modeldir}{\modelprefix}{#2}{#6}{\metrictag} &
	\resultimg{\modeldir}{\modelprefix}{#2}{#7}{\metrictag} \\
}

\begin{figure}[p]
	\centering
	\def\modeldir{meta-llama_Llama-3.1-8B}
	\def\modelprefix{datasets_}
	\def\metrictag{eu}
	\figurematrixsetup
	\resizebox{\textwidth}{!}{%
		\begin{tabular}{>{\centering\arraybackslash}m{0.09\textwidth}*{5}{>{\centering\arraybackslash}m{0.172\textwidth}}}
			\toprule
			\layerheaderfive{1}{6}{12}{19}{25}
			\midrule
			\taskrowfive{QA}{rajpurkar_squad}{1}{6}{12}{19}{25}
			\addlinespace[2pt]
			\taskrowfive{Math}{openai_gsm8k}{1}{6}{12}{19}{25}
			\addlinespace[2pt]
			\taskrowfive{Summ}{knkarthick_dialogsum}{1}{6}{12}{19}{25}
			\addlinespace[2pt]
			\taskrowfive{MT}{Fannyjrd_fair_translate_fr}{1}{6}{12}{19}{25}
			\bottomrule
		\end{tabular}%
	}
	\caption{Visualization of Euclidean distance across each layers for Llama-3.1-8.}
	\label{fig:llama31-8b-eu}
\end{figure}

\begin{figure}[p]
	\centering
	\def\modeldir{meta-llama_Llama-3.1-8B}
	\def\modelprefix{datasets_}
	\def\metrictag{cos}
	\figurematrixsetup
	\resizebox{\textwidth}{!}{%
		\begin{tabular}{>{\centering\arraybackslash}m{0.09\textwidth}*{5}{>{\centering\arraybackslash}m{0.172\textwidth}}}
			\toprule
			\layerheaderfive{1}{6}{12}{19}{25}
			\midrule
			\taskrowfive{QA}{rajpurkar_squad}{1}{6}{12}{19}{25}
			\addlinespace[2pt]
			\taskrowfive{Math}{openai_gsm8k}{1}{6}{12}{19}{25}
			\addlinespace[2pt]
			\taskrowfive{Summ}{knkarthick_dialogsum}{1}{6}{12}{19}{25}
			\addlinespace[2pt]
			\taskrowfive{MT}{Fannyjrd_fair_translate_fr}{1}{6}{12}{19}{25}
			\bottomrule
		\end{tabular}%
	}
	\caption{Visualization of cosine similarity across each layers for Llama-3.1-8B.}
	\label{fig:llama31-8b-cos}
\end{figure}
\clearpage

\begin{figure}[p]
	\centering
	\def\modeldir{meta-llama_Llama-3.1-70B}
	\def\modelprefix{datasets_}
	\def\metrictag{eu}
	\figurematrixsetup
	\resizebox{\textwidth}{!}{%
		\begin{tabular}{>{\centering\arraybackslash}m{0.09\textwidth}*{5}{>{\centering\arraybackslash}m{0.172\textwidth}}}
			\toprule
			\layerheaderfive{4}{16}{32}{48}{64}
			\midrule
			\taskrowfive{QA}{rajpurkar_squad}{4}{16}{32}{48}{64}
			\addlinespace[2pt]
			\taskrowfive{Math}{openai_gsm8k}{4}{16}{32}{48}{64}
			\addlinespace[2pt]
			\taskrowfive{Summ}{knkarthick_dialogsum}{4}{16}{32}{48}{64}
			\addlinespace[2pt]
			\taskrowfive{MT}{Fannyjrd_fair_translate_fr}{4}{16}{32}{48}{64}
			\bottomrule
		\end{tabular}%
	}
	\caption{Visualization of Euclidean distance across each layers for Llama-3.1-70B.}
	%	Visualization matrix for Llama-3.1-70B (Dense) using Euclidean Distance. Task rows map to datasets as QA=SQuAD, Math=GSM8K, Summ=DialogSum, and MT=fair\_translate\_fr. Columns correspond to the selected layer indices $k \in \{4, 16, 32, 48, 64\}$. This figure emphasizes magnitude changes in representation space.}
	\label{fig:llama31-70b-eu}
\end{figure}

\begin{figure}[p]
	\centering
	\def\modeldir{meta-llama_Llama-3.1-70B}
	\def\modelprefix{datasets_}
	\def\metrictag{cos}
	\figurematrixsetup
	\resizebox{\textwidth}{!}{%
		\begin{tabular}{>{\centering\arraybackslash}m{0.09\textwidth}*{5}{>{\centering\arraybackslash}m{0.172\textwidth}}}
			\toprule
			\layerheaderfive{4}{16}{32}{48}{64}
			\midrule
			\taskrowfive{QA}{rajpurkar_squad}{4}{16}{32}{48}{64}
			\addlinespace[2pt]
			\taskrowfive{Math}{openai_gsm8k}{4}{16}{32}{48}{64}
			\addlinespace[2pt]
			\taskrowfive{Summ}{knkarthick_dialogsum}{4}{16}{32}{48}{64}
			\addlinespace[2pt]
			\taskrowfive{MT}{Fannyjrd_fair_translate_fr}{4}{16}{32}{48}{64}
			\bottomrule
		\end{tabular}%
	}
	\caption{Visualization of Cosine similarity across each layers for Llama-3.1-70B.}
		%Visualization matrix for Llama-3.1-70B (Dense) using Cosine Similarity. Task rows map to datasets as QA=SQuAD, Math=GSM8K, Summ=DialogSum, and MT=fair\_translate\_fr. Columns correspond to the selected layer indices $k \in \{4, 16, 32, 48, 64\}$. This figure emphasizes directional similarity in representation space.}
	\label{fig:llama31-70b-cos}
\end{figure}
\clearpage

\begin{figure}[p]
	\centering
	\def\modeldir{Qwen_Qwen2-7B}
	\def\modelprefix{datasets_}
	\def\metrictag{eu}
	\figurematrixsetup
	\resizebox{\textwidth}{!}{%
		\begin{tabular}{>{\centering\arraybackslash}m{0.09\textwidth}*{5}{>{\centering\arraybackslash}m{0.172\textwidth}}}
			\toprule
			\layerheaderfive{1}{5}{11}{16}{22}
			\midrule
			\taskrowfive{QA}{rajpurkar_squad}{1}{5}{11}{16}{22}
			\addlinespace[2pt]
			\taskrowfive{Math}{openai_gsm8k}{1}{5}{11}{16}{22}
			\addlinespace[2pt]
			\taskrowfive{Summ}{knkarthick_dialogsum}{1}{5}{11}{16}{22}
			\addlinespace[2pt]
			\taskrowfive{MT}{Fannyjrd_fair_translate_fr}{1}{5}{11}{16}{22}
			\bottomrule
		\end{tabular}%
	}
	\caption{Visualization of Euclidean distance across each layers for Qwen-2-7B.}
		%Visualization matrix for Qwen-2-7B (Dense) using Euclidean Distance. Task rows map to datasets as QA=SQuAD, Math=GSM8K, Summ=DialogSum, and MT=fair\_translate\_fr. Columns correspond to the selected layer indices $k \in \{1, 5, 11, 16, 22\}$. This figure emphasizes magnitude changes in representation space.}
	\label{fig:qwen2-7b-eu}
\end{figure}

\begin{figure}[p]
	\centering
	\def\modeldir{Qwen_Qwen2-7B}
	\def\modelprefix{datasets_}
	\def\metrictag{cos}
	\figurematrixsetup
	\resizebox{\textwidth}{!}{%
		\begin{tabular}{>{\centering\arraybackslash}m{0.09\textwidth}*{5}{>{\centering\arraybackslash}m{0.172\textwidth}}}
			\toprule
			\layerheaderfive{1}{5}{11}{16}{22}
			\midrule
			\taskrowfive{QA}{rajpurkar_squad}{1}{5}{11}{16}{22}
			\addlinespace[2pt]
			\taskrowfive{Math}{openai_gsm8k}{1}{5}{11}{16}{22}
			\addlinespace[2pt]
			\taskrowfive{Summ}{knkarthick_dialogsum}{1}{5}{11}{16}{22}
			\addlinespace[2pt]
			\taskrowfive{MT}{Fannyjrd_fair_translate_fr}{1}{5}{11}{16}{22}
			\bottomrule
		\end{tabular}%
	}
	\caption{Visualization of cosine similarity across each layers for Qwen-2-7B.}
		%Visualization matrix for Qwen-2-7B (Dense) using Cosine Similarity. Task rows map to datasets as QA=SQuAD, Math=GSM8K, Summ=DialogSum, and MT=fair\_translate\_fr. Columns correspond to the selected layer indices $k \in \{1, 5, 11, 16, 22\}$. This figure emphasizes directional similarity in representation space.}
	\label{fig:qwen2-7b-cos}
\end{figure}
\clearpage

\begin{figure}[p]
	\centering
	\def\modeldir{Qwen_Qwen2-72B}
	\def\modelprefix{datasets_}
	\def\metrictag{eu}
	\figurematrixsetup
	\resizebox{\textwidth}{!}{%
		\begin{tabular}{>{\centering\arraybackslash}m{0.09\textwidth}*{5}{>{\centering\arraybackslash}m{0.172\textwidth}}}
			\toprule
			\layerheaderfive{4}{16}{32}{48}{64}
			\midrule
			\taskrowfive{QA}{rajpurkar_squad}{4}{16}{32}{48}{64}
			\addlinespace[2pt]
			\taskrowfive{Math}{openai_gsm8k}{4}{16}{32}{48}{64}
			\addlinespace[2pt]
			\taskrowfive{Summ}{knkarthick_dialogsum}{4}{16}{32}{48}{64}
			\addlinespace[2pt]
			\taskrowfive{MT}{Fannyjrd_fair_translate_fr}{4}{16}{32}{48}{64}
			\bottomrule
		\end{tabular}%
	}
	\caption{Visualization of Euclidean distance across each layers for Qwen-2-72B.}
		%isualization matrix for Qwen-2-72B (Dense) using Euclidean Distance. Task rows map to datasets as QA=SQuAD, Math=GSM8K, Summ=DialogSum, and MT=fair\_translate\_fr. Columns correspond to the selected layer indices $k \in \{4, 16, 32, 48, 64\}$. This figure emphasizes magnitude changes in representation space.}
	\label{fig:qwen2-72b-eu}
\end{figure}

\begin{figure}[p]
	\centering
	\def\modeldir{Qwen_Qwen2-72B}
	\def\modelprefix{datasets_}
	\def\metrictag{cos}
	\figurematrixsetup
	\resizebox{\textwidth}{!}{%
		\begin{tabular}{>{\centering\arraybackslash}m{0.09\textwidth}*{5}{>{\centering\arraybackslash}m{0.172\textwidth}}}
			\toprule
			\layerheaderfive{4}{16}{32}{48}{64}
			\midrule
			\taskrowfive{QA}{rajpurkar_squad}{4}{16}{32}{48}{64}
			\addlinespace[2pt]
			\taskrowfive{Math}{openai_gsm8k}{4}{16}{32}{48}{64}
			\addlinespace[2pt]
			\taskrowfive{Summ}{knkarthick_dialogsum}{4}{16}{32}{48}{64}
			\addlinespace[2pt]
			\taskrowfive{MT}{Fannyjrd_fair_translate_fr}{4}{16}{32}{48}{64}
			\bottomrule
		\end{tabular}%
	}
	\caption{Visualization of cosine similarity across each layers for Qwen-2-72B.}
	%{Visualization matrix for Qwen-2-72B (Dense) using Cosine Similarity. Task rows map to datasets as QA=SQuAD, Math=GSM8K, Summ=DialogSum, and MT=fair\_translate\_fr. Columns correspond to the selected layer indices $k \in \{4, 16, 32, 48, 64\}$. This figure emphasizes directional similarity in representation space.}
	\label{fig:qwen2-72b-cos}
\end{figure}
\clearpage

\begin{figure}[p]
	\centering
	\def\modeldir{deepseek-ai_deepseek-moe-16b-base}
	\def\modelprefix{datasets_}
	\def\metrictag{eu}
	\figurematrixsetup
	\resizebox{\textwidth}{!}{%
		\begin{tabular}{>{\centering\arraybackslash}m{0.09\textwidth}*{5}{>{\centering\arraybackslash}m{0.172\textwidth}}}
			\toprule
			\layerheaderfive{1}{5}{11}{16}{22}
			\midrule
			\taskrowfive{QA}{rajpurkar_squad}{1}{5}{11}{16}{22}
			\addlinespace[2pt]
			\taskrowfive{Math}{openai_gsm8k}{1}{5}{11}{16}{22}
			\addlinespace[2pt]
			\taskrowfive{Summ}{knkarthick_dialogsum}{1}{5}{11}{16}{22}
			\addlinespace[2pt]
			\taskrowfive{MT}{Fannyjrd_fair_translate_fr}{1}{5}{11}{16}{22}
			\bottomrule
		\end{tabular}%
	}
	\caption{Visualization of Euclidean distance across each layers for Deepseek-MoE-16B.}
	%	Visualization matrix for Deepseek-MoE-16B (MoE) using Euclidean Distance. Task rows map to datasets as QA=SQuAD, Math=GSM8K, Summ=DialogSum, and MT=fair\_translate\_fr. Columns correspond to the selected layer indices $k \in \{1, 5, 11, 16, 22\}$. This figure emphasizes magnitude changes in representation space.}
	\label{fig:deepseek-moe-16b-eu}
\end{figure}

\begin{figure}[p]
	\centering
	\def\modeldir{deepseek-ai_deepseek-moe-16b-base}
	\def\modelprefix{datasets_}
	\def\metrictag{cos}
	\figurematrixsetup
	\resizebox{\textwidth}{!}{%
		\begin{tabular}{>{\centering\arraybackslash}m{0.09\textwidth}*{5}{>{\centering\arraybackslash}m{0.172\textwidth}}}
			\toprule
			\layerheaderfive{1}{5}{11}{16}{22}
			\midrule
			\taskrowfive{QA}{rajpurkar_squad}{1}{5}{11}{16}{22}
			\addlinespace[2pt]
			\taskrowfive{Math}{openai_gsm8k}{1}{5}{11}{16}{22}
			\addlinespace[2pt]
			\taskrowfive{Summ}{knkarthick_dialogsum}{1}{5}{11}{16}{22}
			\addlinespace[2pt]
			\taskrowfive{MT}{Fannyjrd_fair_translate_fr}{1}{5}{11}{16}{22}
			\bottomrule
		\end{tabular}%
	}
	\caption{Visualization of cosine similarity across each layers for Deepseek-MoE-16B.}
	%{Visualization matrix for Deepseek-MoE-16B (MoE) using Cosine Similarity. Task rows map to datasets as QA=SQuAD, Math=GSM8K, Summ=DialogSum, and MT=fair\_translate\_fr. Columns correspond to the selected layer indices $k \in \{1, 5, 11, 16, 22\}$. This figure emphasizes directional similarity in representation space.}
	\label{fig:deepseek-moe-16b-cos}
\end{figure}
\clearpage

\bibliography{snbibliography}

\backmatter

\newpage

%\section*{Declarations}
%
%Some journals require declarations to be submitted in a standardised format. Please check the Instructions for Authors of the journal to which you are submitting to see if you need to complete this section. If yes, your manuscript must contain the following sections under the heading `Declarations':
%
%\begin{itemize}
%\item Funding
%\item Conflict of interest/Competing interests (check journal-specific guidelines for which heading to use)
%\item Ethics approval and consent to participate
%\item Consent for publication
%\item Data availability 
%\item Materials availability
%\item Code availability 
%\item Author contribution
%\end{itemize}

%\noindent
%If any of the sections are not relevant to your manuscript, please include the heading and write `Not applicable' for that section. 
%
%%%===================================================%%
%%% For presentation purpose, we have included        %%
%%% \bigskip command. Please ignore this.             %%
%%%===================================================%%
%\bigskip
%\begin{flushleft}%
%Editorial Policies for:
%
%\bigskip\noindent
%Springer journals and proceedings: \url{https://www.springer.com/gp/editorial-policies}
%
%\bigskip\noindent
%Nature Portfolio journals: \url{https://www.nature.com/nature-research/editorial-policies}
%
%\bigskip\noindent
%\textit{Scientific Reports}: \url{https://www.nature.com/srep/journal-policies/editorial-policies}
%
%\bigskip\noindent
%BMC journals: \url{https://www.biomedcentral.com/getpublished/editorial-policies}
%\end{flushleft}

\begin{appendices}

%\section{Example of Linear Model}\label{secA1}

%An appendix contains supplementary information that is not an essential part of the text itself but which may be helpful in providing a more comprehensive understanding of the research problem or it is information that is too cumbersome to be included in the body of the paper.

\section{Proof Details in Section 4 }

\subsection{Proof of Proposition \ref{prop1}}\label{secA2}

\begin{lemma}[\cite{xu1996}]\label{lemma2}
If $T \in \mathscr{L}(D)$ is continuously differentiable, then we have
\[L(T) = \sup_{x \in D} \{ \|T'(x)\|\}\].
\end{lemma}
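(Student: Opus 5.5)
This is the classical identity $L(T)=\sup_{x\in D}\|T'(x)\|$ for a $C^1$ map on a convex set. The plan is to prove the two inequalities separately. Throughout, $D$ is convex, $T$ is continuously (Fréchet) differentiable on $D$, and $d(x,y)=\|x-y\|$ is the norm metric in which $L(\cdot)$ is computed.

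\emph{Upper bound $L(T)\le \sup_{x\in D}\|T'(x)\|$.} Fix $x\neq y$ in $D$. By convexity the whole segment $x+t(y-x)$, $t\in[0,1]$, lies in $D$. Since $X$ is a Banach space and not just $\mathbb{R}$, I will not use a pointwise mean value theorem. Instead I use the integral form
\[Tx-Ty=\int_0^1 T'\bigl(y+t(x-y)\bigr)(x-y)\,dt,\]
which is valid because $t\mapsto T(y+t(x-y))$ is $C^1$ from $[0,1]$ into $X$. If one prefers to avoid Bochner integrals, apply a norming functional $\phi\in X^*$ with $\|\phi\|=1$ and $\phi(Tx-Ty)=\|Tx-Ty\|$, and use the scalar mean value theorem on $t\mapsto \phi(T(y+t(x-y)))$. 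Either way,
\[\|Tx-Ty\|\le \sup_{z\in D}\|T'(z)\|\,\|x-y\|.\]
Dividing by $\|x-y\|$ and taking the supremum over pairs gives the claim.

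\emph{Lower bound $\sup_{x\in D}\|T'(x)\|\le L(T)$.} Fix $x\in D$ and $\varepsilon>0$, and pick a unit vector $h$ with $\|T'(x)h\|\ge \|T'(x)\|-\varepsilon$. By Fréchet differentiability,
\[T(x+sh)-Tx=sT'(x)h+o(s),\]
so $\|T(x+sh)-Tx\|/s\to\|T'(x)h\|$ as $s\to0^+$. Each such quotient is at most $L(T)$, provided $x+sh\in D$. Hence $\|T'(x)\|-\varepsilon\le L(T)$, and letting $\varepsilon\to0$ and taking the supremum over $x$ finishes the proof.

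\emph{Main obstacle.} The lower bound needs the admissible direction $h$ to stay in $D$ for small $s$. This is automatic at interior points, and for a solid $D$ such as $\mathbb{R}^n$ or a ball. At boundary points I would argue by continuity. The set of interior points is dense in a convex body, and $T'$ is continuous, so $\sup_{x\in\mathrm{int}\,D}\|T'(x)\|=\sup_{x\in D}\|T'(x)\|$. If $D$ has empty interior, $T'(x)$ is to be understood on the span of $D-D$, and the same argument runs in the affine hull, where directions $h\in\mathrm{span}(D-D)$ are admissible at relative-interior points. I expect this interpretation of the derivative on a possibly thin domain to be the only delicate point; the rest is routine.
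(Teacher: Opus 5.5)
Your proposal is correct and follows the same two-step route as the paper's proof. The paper bounds $L(T)\le\sup_{x\in D}\|T'(x)\|$ by a mean value inequality along segments, and gets the reverse inequality from the difference quotient $\|T(x+th)-Tx\|/t\to\|T'(x)h\|$; your version is more careful, since the paper invokes a mean value theorem without mentioning convexity and takes $h\in D$ with a two-sided limit without checking that $x+th$ stays in $D$. For thin domains you do not need relative-interior points, which can fail to exist for closed convex sets in infinite dimensions: with $a,b\in D$ set $y=x+s(a-x)$ and $z=x+s(b-x)$, which lie in $D$ by convexity and satisfy $y-z=s(a-b)$, and your integral estimate then gives $\|Ty-Tz\|/\|y-z\|\to\|T'(x)(a-b)\|/\|a-b\|$ at every $x\in D$, with these directions exhausting $\mathrm{span}(D-D)$.
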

\begin{proof}
Since  $T \in \mathscr{L}(D)$ is differentiable, based on the 
mean value theorem, we have 
\[\|Tx-Ty\| \leq \sup_{x \in D} \{ \|T'(x)\|\} \|x-y\|.\]
This implies that $L(T)\leq \sup_{x \in D} \{ \|T'(x)\|\}$.

On the other hand, since $T$ is differentiable on $D$, for any vector $h \in D$, such that $\|h\| = 1$, we have
\[
\lim\limits_{t \to 0} \frac{\|T(x+th)-T(x)-T'(x)\cdot t \cdot h\|}{t} = 0.
\]
This implies that 
\[
\lim\limits_{t \to 0} \frac{\|T(x+th)-T(x)\|}{\|t\cdot h\|} = \|T'(x) \cdot h\|,
\]
and thus $L(T) \geq \|T'(x) \cdot h\|$. By the arbitrariness of \( x \in D \), we have
\[
L(T) \geq  \sup_{x \in D} \{ \|T'(x)\|\}.
\]
Thus we have \[L(T) = \sup_{x \in D} \{ \|T'(x)\|\}\] completing the proof.
\end{proof}

In the following, we provide the proof details of proposition \ref{prop1}.
\begin{proof}
	(1) If $\mathrm{Lip}(T) <1$, then there exists a strong equivalence norm defined on $D$ such that $T$ is a contraction mapping under this metric.
	This implies that $T$ has a unique fixed point on $D$, denoted by $x^*$. 
	
Assume that \( T \) is contraction under the norm $\|\cdot\|$. Since \( T \) is continuously differentiable in the neighborhood of \( x^* \), \( \|T'(x)\| \) is continuous in the neighborhood of \( x^* \). Thus, for any sufficiently small positive number \( \varepsilon > 0 \), there exists \( \delta > 0 \) such that when \( \|x-x^*\| \leq \delta \), \( \left|\|T'(x)\| - \|T'(x^*)\|\right| \leq \varepsilon \). In particular, \( \|T'(x)\|  \leq \|T'(x^*)\| + \varepsilon \). 
Since \( D \) is bounded, by the contraction mapping principle, as \( n \to \infty \), \( \{T^n(x)\} \) converges uniformly to \( x^* \) for all \( x \in D \). Therefore, there exists a positive integer \( N \) such that for all \( n \geq N \), we have \( \|T^n(x) - x^*\| < \delta \), which implies \[ \|T'(T^n(x))\| \leq \|T'(x^*)\| + \varepsilon. \]

Since $T \in L(D)$ is differentiable, by Lemma \ref{lemma2}, we have $\|T'(x)\| \leq L(T) $. Also, for any positive integer \( n \), notice that
\[
(T^n)'(x) = \prod_{i=1}^n T'(T^{i-1}(x)),
\]
then we have
\begin{align*}
L(T^n)^{\frac{1}{n}}  & = \left\{ \sup_{x \in D} \{ \| (T^n)'(x) \|\} \right\}^{\frac{1}{n}} = \sup_{x \in D} \left\{  \| (T^n)'(x) \|^{\frac{1}{n}} \right\} \\
& \leq   \sup_{x \in D} \left\{  \left[ \prod_{i=1}^n \|T'(T^{i-1}(x))\| \right]^{\frac{1}{n}}    \right\} \\
& \leq   \sup_{x \in D} \left\{  \left[ \prod_{i=N+1}^n \|T'(T^{i-1}(x))\| \right]^{\frac{1}{n}}    \right\} \\
& \leq \left(\|T'(x^*)\| + \varepsilon \right)^{\frac{n-N}{n}},
\end{align*}
This implies that
\[
\text{Lip}(T) \leq \lim_{n \to \infty} \left[ \|T'(x^*)\| + \varepsilon \right]^{\frac{n-N}{n}} = \|T'(x^*)\| + \varepsilon.
\]
By the arbitrariness of \( \varepsilon \), we have \( \text{Lip}(T) \leq \|T'(x^*)\| \). Note that \( \text{Lip}(T) \) is independent of the choice of the equivalence norm defined on domain \( D \). Since \( \text{Lip}(T) \leq \|T'(x^*)\| \) holds for any equivalence norm defined on domain \( D \), we have 
\( \text{Lip}(T) \leq \rho(T'(x^*)) \).

On the other hand, for any positive integer \( n \),
\[
\left[ L(T^n)\right]^{\frac{1}{n}} \geq \|(T^n)'(x^*)\|^{\frac{1}{n}} = \left\| \prod_{i=1}^n T'(T^{i-1}(x^*))\right\|^{\frac{1}{n}}  = 
\|T'(x^*)^n\|^{\frac{1}{n}}, 
\]
which implies \( \text{Lip}(T) \geq \lim\limits_{n \to \infty} \|T'(x^*)^n\|^{\frac{1}{n}} = \rho(T'(x^*)) \).
Thus, we have \( \text{Lip}(T) = \rho(T'(x^*)) \). This completes the proof.

(2) We consider $\mathrm{Lip}(T) = 1$. Based on Theorem \ref{thlip}, $T$ has fixed points, denoted by $\mathrm{Fix}(T)$. 
Observing that $\forall x^* \in \mathrm{Fix}(T)$,  by the proof process of (1), we have \( \text{Lip}(T) \geq \rho(T'(x^*)) \). 
Therefore, we have
\[
 \text{Lip}(T) \geq \sup_{x^* \in \mathrm{Fix}(T)} \rho(T'(x^*)).
\]

On the other hand, for any sufficiently small positive number \( \varepsilon > 0 \), there exists \( \delta > 0 \) such that when \( \|x-x^*\| \leq \delta \), \( \left|\|T'(x)\| - \|T'(x^*)\|\right| \leq \varepsilon \). In particular, \( \|T'(x)\|  \leq \|T'(x^*)\| + \varepsilon \). For any positive integer \( n \), we have
\begin{align*}
	L(T^n) & =  \sup_{x \in D} \left\{ \| (T^n)'(x) \| \right\}   \\
	& \leq   \sup_{x \in D} \left\{ \prod_{i=1}^n \|T'(T^{i-1}(x))\|    \right\} \\
	& \leq \sup_{x^* \in \mathrm{Fix(T)}} \left(\|T'(x^*)\| + \varepsilon \right)^{n},  \\
	&  \leq  \left(\sup_{x^* \in \mathrm{Fix(T)}} \|T'(x^*)\| + \varepsilon \right)^{n},  
\end{align*}
and thus 
\begin{align*}
	\text{Lip}(T) = \lim\limits_{n \to \infty} [L(T^n)]^{\frac{1}{n}}  \leq \sup_{x^* \in \mathrm{Fix(T)}} \|T'(x^*)\| + \varepsilon, 
\end{align*}
By the arbitrariness of \( \varepsilon \), we have \( \text{Lip}(T) \leq \sup\limits_{x^* \in \mathrm{Fix(T)}} \|T'(x^*)\| \). Note that \( \text{Lip}(T) \) is independent of the choice of the equivalence norm defined on domain \( D \). Since \( \text{Lip}(T) \leq \sup\limits_{x^* \in \mathrm{Fix(T)}} \|T'(x^*)\| \) holds for any equivalence norm defined on domain \( D \), we have 
\( \text{Lip}(T) \leq \sup\limits_{x^* \in \mathrm{Fix(T)}} \rho(T'(x^*)) \).
Thus, we have \[ \text{Lip}(T) =\sup\limits_{x^* \in \mathrm{Fix(T)}} \rho(T'(x^*)). \] The proof is then completed.

(3) Firstly, we show  $\mathrm{Lip}(T) \leq  \sup\limits_{x\in D}  \rho(T'(x))$.

Since $L(T_1 \cdot T_2) \leq L(T_1) L(T_2), \forall T_1,T_2 \in \mathscr{L}(D)$, for any positive integer \( n \), we have 
\[
\left[ L(T^n)\right] \leq  \left[ L(T)\right]^n   = \left[ \sup\limits_{x\in D} \|T'(x)\| \right]^n,
\]
and thus 
\begin{align*}
	\text{Lip}(T) = \lim\limits_{n \to \infty} [L(T^n)]^{\frac{1}{n}}  \leq \sup\limits_{x\in D} \|T'(x)\|, 
\end{align*}
Note that \( \text{Lip}(T) \) is independent of the choice of the equivalence norm defined on domain \( D \). Since \( \text{Lip}(T) \leq \sup\limits_{x\in D} \|T'(x)\| \) holds for any equivalence norm defined on domain \( D \), we have 
\( \text{Lip}(T) \leq \sup\limits_{x\in D} \rho(T'(x)) \).

Now we show the reverse inequality \( \text{Lip}(T) \geq \sup\limits_{x\in D} \rho(T'(x)) \).

%Since $T$ is differentiable on $D$, for any vector $h \in D$, such that $\|h\| = 1$, we have
%\[
%\lim\limits_{t \to 0} \frac{\|T(x+th)-T(x)-T'(x)\cdot t \cdot h\|}{t} = 0.
%\]
%This implies that 
%\[
%\lim\limits_{t \to 0} \frac{\|T(x+th)-T(x)\|}{\|t\cdot h\|} = \|T'(x) \cdot h\|,
%\]
%and thus $L(T) \geq \|T'(x) \cdot h\|$. Therefore we have $L(T) \geq \|T'(x)\|$. By the arbitrariness of \( x \in D \), we have
%\[
%L(T) \geq \sup\limits_{x\in D}\|T'(x)\|.
%\]

Observing that 
\( \varepsilon > 0 \), by the definition of $\operatorname{Lip}(T)$, there exist  a positive integer \( n \), such that 
  \[ L(T^n) \leq (\operatorname{Lip}(T) + \varepsilon)^n. \]
 Let 
\[
d^*(x,y)  = \sum_{k=1}^n [\operatorname{Lip}(T) + \varepsilon]^{k-1} \cdot d(T^{n-k} x, T^{n-k} y), \quad \forall x, y \in D,
\]
then \( d^*\) is a metric on \( D \),  and satisfies that 
\[
(\operatorname{Lip}(T) + \varepsilon)^{n-1} d(x, y) \leq d^*(x,y) \leq \left\{ \sum_{k=1}^n [\operatorname{Lip}(T) + \varepsilon]^{k-1} [L(T)]^{n-k} \right\} d(x, y), \quad \forall x, y \in D.
\]
This implies that \( d^* \) and \( d \) are strongly equivalent. Under the metric $d^*$, we have
\begin{align*}
	d^*(Tx, Ty) \leq & [\operatorname{Lip}(T) + \varepsilon] \sum_{k=1}^n [\operatorname{Lip}(T) + \varepsilon]^{k-1} d(T^{n-k} x, T^{n-k} y) \\
	& + d(T^n x, T^n y) - [\operatorname{Lip}(T) + \varepsilon]^n d(x, y) \\
	\leq & [\operatorname{Lip}(T) + \varepsilon] d^*(x, y),
\end{align*}
and thus
\( L^*(T) \leq \operatorname{Lip}(T) + \varepsilon \), where $L^*(T)$ denotes the $L(T)$ under $d^*$. Without loss of generality, we choose $d^*$ as $\|\cdot\|$, then we can obtain
\[\operatorname{Lip}(T) + \varepsilon \geq L(T) \geq \sup\limits_{x\in D}\|T'(x)\|.\]
Since the above inequality holds for any equivalence norm defined on domain \( D \), we have 
\( \operatorname{Lip}(T)  \geq \sup\limits_{x\in D}\rho(T'(x))  -\varepsilon\). By the arbitrariness of \( \varepsilon \), we have 
\[\operatorname{Lip}(T)  \geq \sup\limits_{x\in D}\rho(T'(x))  -\varepsilon.\]

Thus, we have \( \text{Lip}(T) = \sup\limits_{x\in D} \rho(T'(x)) \). The proof is completed.
\end{proof}

\subsection{Projection and Averaged Operators Are Asymptotically Regular } \label{regu}

We first clarify that projection operators and averaged operators naturally satisfy the following squared descent condition:
\begin{equation}
	\|Tx-p\|^{2}
	\leq
	\|x-p\|^{2}
	-c\|x-Tx\|^{2},
	\qquad
	x\in D,\quad p\in \operatorname{Fix}(T),
	\label{eq:squared-descent}
\end{equation}
for some constant \(c>0\). On a bounded domain or along a bounded iterative orbit, condition \eqref{eq:squared-descent} further implies
\begin{equation}
	\|Tx-p\|
	\leq
	\|x-p\|
	-
	\psi\bigl(\|x-Tx\|\bigr),
	\qquad
	x\in D,\quad p\in \operatorname{Fix}(T),
	\label{eq:non-squared-descent}
\end{equation}
for an appropriate strictly increasing function \(\psi\colon \mathbb{R}_{+}\to\mathbb{R}_{+}\) satisfying \(\psi(0)=0\).

\subsubsection{Projection Operators}

Let \(C\) be a nonempty closed convex subset of a Hilbert space \(H\), and let
\[
P_Cx = \arg\min_{y\in C} \|x-y\|, \forall x \in H,
\]
be the metric projection onto \(C\). Clearly, \(\operatorname{Fix}(T)=C.\)
For any \(x\in H\) and \(p\in C\), the characterization of the metric projection gives
\begin{equation}
	\left\langle x-P_Cx,\;p-P_Cx\right\rangle
	\leq 0.
	\label{eq:projection-characterization}
\end{equation}
Equivalently,
\[
\left\langle x-P_Cx,\;P_Cx-p\right\rangle
\geq 0.
\]
Since
\[
x-p=(x-P_Cx)+(P_Cx-p),
\]
we obtain
\begin{align}
	\|x-p\|^2
	&=
	\|x-P_Cx\|^2
	+
	\|P_Cx-p\|^2
	\nonumber\\
	&\quad
	+
	2\left\langle x-P_Cx,\;P_Cx-p\right\rangle
	\nonumber\\
	&\geq
	\|x-P_Cx\|^2
	+
	\|P_Cx-p\|^2.
\end{align}
Therefore,
\begin{equation}
		\|P_Cx-p\|^2
		\leq
		\|x-p\|^2
		-
		\|x-P_Cx\|^2
	\label{eq:projection-descent}
\end{equation}
for every \(x\in H\) and \(p\in C\). Hence, projection operator $P_C$ is of quasi-asymptotic regularity property with $\psi(t)=t^2$.

%the metric projection satisfies \eqref{eq:squared-descent} with \(c=1\). 
%
%Geometrically, \(P_Cx\) is not merely no farther from any point \(p\in C\) than \(x\); its squared distance to \(p\) decreases by at least
%\(\|x-P_Cx\|^2.\)
%Moreover, since \(P_C^2=P_C,\)
%we have
%\[
%P_C^{n+1}x=P_C^nx,
%\qquad n\geq 1.
%\]
%Thus, the metric projection is asymptotically regular.

\subsubsection{Averaged Operators}

Let
\begin{equation}
	T_{\alpha}=(1-\alpha)I+\alpha T,
	\qquad
	0<\alpha<1,
	\label{eq:averaged-operator}
\end{equation}
be the \(\alpha\)-average operator of nonexpansive operator $T$.
Since
\(T_{\alpha}x=x
\quad\Longleftrightarrow\quad
Tx=x,\)
we have
\[
\operatorname{Fix}(T_{\alpha})=\operatorname{Fix}(T).
\]
Let \(p\in \operatorname{Fix}(T_{\alpha})\). Since \(Tp=p\), it follows that
\[
T_{\alpha}x-p
=
(1-\alpha)(x-p)
+
\alpha(Tx-p).
\]
Using the identity
\[
\|(1-\alpha)u+\alpha v\|^2
=
(1-\alpha)\|u\|^2
+
\alpha\|v\|^2
-
\alpha(1-\alpha)\|u-v\|^2,
\]
with \(u=x-p,
v=Tx-p,\)
we obtain
\begin{align}
	\|T_{\alpha}x-p\|^2
	=
	(1-\alpha)\|x-p\|^2
	+
	\alpha\|Tx-p\|^2
	-
	\alpha(1-\alpha)\|x-Tx\|^2.
\end{align}
Since \(T\) is nonexpansive and \(Tp=p\), \(\|Tx-p\|
=
\|T_{\alpha}x-Tp\|
\leq
\|x-p\|.\)
Consequently,
\[
\|Tx-p\|^2
\leq
\|x-p\|^2
-
\alpha(1-\alpha)\|x-Tx\|^2.
\]
Furthermore, \(x-T_{\alpha}x
=
\alpha(x-Tx),\)
and hence
\[
\|x-Tx\|^2
=
\frac{1}{\alpha^2}\|x-T_{\alpha}x\|^2.
\]
Therefore,
\begin{equation}
		\|T_{\alpha}x-p\|^2
		\leq
		\|x-p\|^2
		-
		\frac{1-\alpha}{\alpha}
		\|x-T_{\alpha}x\|^2, x\in H, p\in \operatorname{Fix}(T_{\alpha}).
	\label{eq:averaged-descent}
\end{equation} Thus, the \(\alpha\)-averaged operator $T_{\alpha}$ is of quasi-asymptotic regularity property with $\frac{1-\alpha}{\alpha}\psi(t)=t^2$.

\subsubsection{Residual Connection Type Operators}

The residual connection type operator $T=I+A$ can be written as
\[T = I + A = (1-\alpha)I + \alpha\left(I+\frac{1}{\alpha}A\right), 0<\alpha<1. \]
If $A$ is a dissipative operator \cite{lumer1961dissipative}, i.e., $m(A) \leq 0$ or $m(-A) \geq 0$. Then we have
 \[
m(-A)=\lim_{h\to 0^+}\frac{1-L(I+h\cdot A)}{h}.
\]
Since $m(-A) >0$, it follows from the definition of the one-sided limit that, for any
\(\varepsilon\in(0,m(-A))\), there exists \(\delta>0\) such that
\[
\frac{1-L(I+h\cdot A)}{h}\ge m(-A)-\varepsilon,
\]
whenever \(0<h<\delta\). In other word, we have
\[
L(I+h\cdot A) \leq 1-(m(-A)-\varepsilon)\cdot h \leq 1.
\]
It yields $L(I+h \cdot A)\leq 1$ since \(0<\alpha <\delta\).
If $m(-A)=0$, then $L(I+h\cdot A) = 1$.
Then $I+\frac{1}{\alpha}A$ is a non-expansive operator. Evidently, $T$ corresponds to the average version of non-expansive operator $I+\frac{1}{\alpha}A$, and thus $T$ is a non-expansive operator and of quasi-asymptotic regularity property.

%
%satisfies \eqref{eq:squared-descent} with \(c=\frac{1-\alpha}{\alpha}.\)
%
%In particular, when \(\alpha=\frac{1}{2}\), we have
%\[
%\|Tx-p\|^2
%\leq
%\|x-p\|^2
%-
%\|x-Tx\|^2.
%\]
%In this case, \(T\) is firmly nonexpansive. Metric projections are typical examples of firmly nonexpansive operators.

%\subsubsection{From the Squared Condition to the Nonsquared Condition}
%
%Suppose that \(T\) satisfies
%\[
%\|Tx-p\|^2
%\leq
%\|x-p\|^2
%-
%c\|x-Tx\|^2.
%\]
%Let \(a=\|Tx-p\|,
%b=\|x-p\|,
%r=\|x-Tx\|,\)
%then \(b^2-a^2\geq cr^2\).
%Since \(b^2-a^2=(b-a)(b+a),\)
%it follows that \(	b-a
%\geq
%\frac{cr^2}{b+a}.\)
%
%Assume that the domain under consideration, or the iterative orbit, is bounded in the sense that there exists \(R>0\) such that \(\|x-p\|\leq R,
%\|Tx-p\|\leq R.\)
%Then \(a+b\leq 2R.\)
%Therefore, it yields \(b-a
%\geq
%\frac{c}{2R}r^2.\)
%Equivalently,
%\begin{equation}
%		\|Tx-p\|
%		\leq
%		\|x-p\|
%		-
%		\frac{c}{2R}\|x-Tx\|^2.
%	\label{eq:nonsquared-from-squared}
%\end{equation}
%Thus, condition \eqref{eq:non-squared-descent} holds with \(\psi(t)=\frac{c}{2R}t^2.\)
%For the metric projection \(P_C\), since \(c=1\), one may take \(\psi(t)=\frac{t^2}{2R}.\)
%
%For an \(\alpha\)-averaged operator, since \(c=\frac{1-\alpha}{\alpha},\) one may take
%\[
%\psi(t)
%=
%\frac{1-\alpha}{2\alpha R}t^2.
%\]

\subsection{Lip Number $\mathrm{Lip}(T)$ for Different Basic Functional Blocks} 

\subsubsection{Self-Attention} \label{attention}
The single head self-attention \( T: \mathbb{R}^{N \times d}  \to  \mathbb{R}^{N \times d_V}\) can be written as
\[
T(X) = PXW^V = \text{softmax}\left(X A^{\top} X^{\top}\right)XW^V = \begin{bmatrix} T_1(X)^{\top} \\ \vdots \\ T_N(X)^{\top} \end{bmatrix} \in \mathbb{R}^{N \times d_V}, 
\]
where \( A = W^K W^{Q^{\top}} / \sqrt{d} \in \mathbb{R}^{d \times d} \) and \( T_i(X) = \sum_{j=1}^N P_{ij} X_j W^V\) with \( P_{i:}^{\top} = \text{softmax}\left(X A X_i\right) \). 
%Hence \( f \) can be interpreted as a map of each \( x_i \) to a point in the convex hull of \( x_1, \ldots, x_N \). Since \( f \) is a map from \( \mathbb{R}^{N \times D} \) to \( \mathbb{R}^{N \times D} \), 
Then the Lip number $\mathrm{Lip}(T) = \sup\limits_{X \in \mathbb{R}^{N \times d}} \rho(T'(X))$, where
\[
T'(X) = \begin{bmatrix} T'_{11}(X) & \ldots & T'_{1N}(X) \\ \vdots & \ddots & \vdots \\ T'_{N1}(X) & \ldots & T'_{NN}(X) \end{bmatrix} \in \mathbb{R}^{Nd_V \times Nd},
\]
where \( T'_{ij}(X) = \frac{\partial T_i(X)}{\partial X_j} \in \mathbb{R}^{d_V \times d} \). By taking partial derivatives we can show that \( T'_{ij}(X) = W^{V^{\top}} X^{\top} P^{(i)} \left[ E_{ji} X A^{\top} + X A \delta_{ij} \right] + P_{ij} W^{V^{\top}} \) where \( E_{ij} \in \mathbb{R}^{N \times N} \) is a binary matrix with zeros everywhere except the \( (i, j) \)-th entry, \( \delta_{ij} \) is the Kronecker delta, and \( P^{(i)} := \operatorname{diag}(P_{i:}) - P_{i:}^{\top} P_{i:} \). And thus for \( i = j \), we have
\begin{align*}
T_{ii}(X) & = W^{V^{\top}} X^{\top} P^{(i)} E_{ii} X A^{\top} + W^{V^{\top}} X^{\top} P^{(i)} X A + P_{ii} W^{V^{\top}} \\
& =  P_{ii} \left( X_i - \sum_k P_{ik} X_k \right)  (XW^V)_i^{\top} A^{\top} + W^{V^{\top}} X^{\top} P^{(i)} X A + P_{ii}W^{V^{\top}}. 
\end{align*}
Consider the case \( X_i = 0 \). Then \( P_{i:}^{\top} = \text{softmax}\left(X A X_i\right) = \frac{1}{N} \mathbf{1} \). The first term of \( T_{ii}(X) \) disappears since \( X_i = 0 \), and the last term becomes \( \frac{1}{N} W^V \). For the second term, the entries \( [X^{\top} P^{(i)} X]_{ll} = \operatorname{Var}(X_l) \) are unbounded since the latter is equal to the sample variance of \( x_{1l}, \ldots, x_{Nl} \), which can be arbitrarily large. This means $\mathrm{Lip}(T) = \sup\limits_{X \in \mathbb{R}^{N \times d}} \rho(T'(X)) = \infty$.

On the other hand, if $\|X\|_F \leq B$, then we can estimate the $\rho(T'(X))$ by
\[\rho(T'(X)) \leq \|T'(X)\|_2. \]
Observe that $T'(X)$ is a linear operator mapping from $\Delta X$ to $\Delta T(x)$, and 
\begin{align*}
\|\Delta T(X)\|_2 \leq  \|\Delta P\|_F \|XW^V \|_F + \|P\Delta X W^V\|_F,
\end{align*}
where $\Delta T = \Delta P XW^V + P \Delta X W^V, \Delta S = \Delta(XA^\top X^\top) = \Delta X A^\top X^\top + XA^\top (\Delta X)^\top $.
Since $\|\Delta P\|_F \leq \frac{1}{2} \|\Delta S\|_F$, $\|PZ\|_F \leq \|Z\|_F$, we have
\begin{align*}
\|\Delta S\|_F \leq \|\Delta X A^\top X^\top\|_F + \|XA^\top (\Delta X)^\top\|_F \leq 2 \|A\|_2 \|X\|_F \|\Delta X\|_F \leq 2 \|A\|_2 B \|\Delta X\|_F,
\end{align*}
and
\begin{align*}
	\|XW^V \|_F \leq \|X\|_F \|W^V\|_F \leq B\|W^V\|_F.
\end{align*}
Then we can obtain
\begin{align*}
	\|\Delta T\|_F \leq \left(\|W^V\|_2 + B^2 \|A\|_2 \|W^V\|_2 \right) \|\Delta X\|_F.
\end{align*}
Observing that 
$\|\Delta T \| \leq \|T'(X) \| \|\Delta X\|$, and thus 
\begin{align*}
	\sup_{\|X\|_F \leq B} \rho(T'(X)) \leq 	\sup_{\|X\|_F \leq B} \|T'(X)\|_2\leq \|W^V\|_2 (1+  B^2 \|A\|_2 ).
\end{align*}
By $\|A\|_2 \leq \frac{\|W^K\|_2 \|W^Q\|_2}{\sqrt{d}}$, we have
\begin{align*}
	\sup_{\|X\|_F \leq B} \rho(T'(X)) & \leq 	\sup_{\|X\|_F \leq B} \|\Delta T\|_F \leq \|W^V\|_2 \left(1+  \frac{B^2}{\sqrt{d}} \|W^K\|_2 \|W^Q\|_2 \right) \\
& \leq 	\rho(W^V) \left(1+  \frac{B^2}{\sqrt{d}} \rho(W^K) \rho(W^Q) \right),
\end{align*}
which means that
$\mathrm{Lip}(T) = \sup\limits_{X \in \mathbb{R}^{N \times d}} \rho(T'(X)) \leq \rho(W^V) \left(1+  \frac{B^2}{\sqrt{d}} \rho(W^K) \rho(W^Q) \right)$.

\subsubsection{LayerNorm } \label{layernorm}

For \(x\in\mathbb{R}^d\), we define
\[
\mu(x)=\frac{1}{d}\mathbf{1}^\top x,\qquad x_c = x-\mu(x)\mathbf{1},
\]
\[
s(x)=\frac{1}{d}\|x_c\|_2^2,\qquad
\sigma(x)=\sqrt{s(x)+\varepsilon},\quad \varepsilon>0.
\]

The normalized part of LayerNorm is 
\[
\mathrm{LN}_0(x)=\frac{x_c}{\sigma(x)},
\]
and the LayerNorm could be written as
\[
\mathrm{LN}(x)=\gamma\odot \mathrm{LN}_0(x)+\beta.
\]
Let
\[
P = I-\frac{1}{d}\mathbf{1}\mathbf{1}^\top
\]
be the orthogonal projection onto the zero-mean subspace, then \(Px=x_c\) and \(\|P\|_2=1\).
The Jacobian of $\mathrm{LN}_0(x)$ is given by
\[
J(x)=\nabla \mathrm{LN}_0(x)= \frac{1}{\sigma(x)}
\left(
P-\frac{1}{d\sigma(x)^2}x_c x_c^\top
\right).
\]

Define
\[
a(x)=\frac{\|x_c\|_2^2}{d,\sigma(x)^2}
=\frac{s(x)}{s(x)+\varepsilon}\in[0,1),
\]
and let \(u=x_c/\|x_c\|_2)\) when \(x_c\neq0\). Then
\[
\frac{1}{d\sigma(x)^2}x_c x_c^\top = a(x)uu^\top.
\]
Let \[
M(x):=P-a(x)uu^\top,
\]
and we have
\[
\|M(x)\|_2=1.
\]
Observing that
\[
\|J(x)\|_2
=\frac{1}{\sigma(x)}\|M(x)\|_2
=\frac{1}{\sigma(x)}
\le \frac{1}{\sqrt{\varepsilon}},
\]
since \(\sigma(x)=\sqrt{s(x)+\varepsilon}\ge\sqrt{\varepsilon}\).
And thus one has
\[
\sup_{x\in \mathbb{R}^d} \|J(x)\|_2 \le \frac{1}{\sqrt{\varepsilon}}.
\]
Therefore, $\mathrm{Lip}(LN) = \sup\limits_{x \in \mathbb{R}^{d}} \rho(\gamma J(x)) \leq  \max_i |\gamma_i|\epsilon^{-\frac{1}{2}} $.

\subsubsection{Transformer with Post-LayerNorm}  \label{postlayernorm}

 Since $T(X) = T_2 \circ (I+T_3) \circ T_2 \circ (I+ T_1)(X)$, and  the diagonal element
 of $\frac{\partial T_1(X)}{\partial X}$ could be unbounded as illustrated in Section \ref{attention}, then 
 \begin{align*}
 	\mathrm{Lip}(T)& = \sup\limits_{X \in \mathbb{R}^{N \times d}} \rho(T'(X)) \\
 	&= \sup\limits_{X \in \mathbb{R}^{N \times d}} \rho\left(\frac{\partial T(X)}{\partial (I+T_1)(X) }  \frac{\partial (I+T_1)(X)}{\partial X}\right) \\
 	&= \sup\limits_{X \in \mathbb{R}^{N \times d}}  \rho\left( \frac{\partial T(X)}{\partial (I+T_1)(X) }  \frac{\partial T_1(X)}{\partial X} \right)=\infty.
 \end{align*}

\subsubsection{Transformer with Pre-LayerNorm} \label{prelayernorm}

%\begin{example}[Transformer with Pre-LayerNorm] 
%	Consider the single head Transformer model $T(X) = T_2 \circ T_1 (X)$ with $T_1 = I + F_1, T_2 = I + F_2$, and $F_1= MLP\circ LN, F_2 = MSA\circ LN$, $I=$ identity operator, where $MLP, MSA, LN$ are MLP model, Multi-Head Self-Attention Model and LayerNorm defined in Example \ref{ex1}, \ref{eq5} and \ref{ex6}, respectively.
%	%	$(I+T_3 \circ T_2) \circ  (I+ T_1\circ T_2 )(X)$, which is composed by the basic functional blocks
%	%	$T_1(X)$= Multi-Head Self-Attention Model  defined in Example \ref{eq5},
%	%	$T_2(X) = \mathrm{LayerNorm}(X)$ defined in Example \ref{ex6},
%	%	$T_3(X) = \mathrm{MLP}(X)$ defined in Example \ref{ex1}, 
%	%	and the identity operator $I$. 
%	If $I-T_1, I-T_2$ are dissipative operators, that is $m(I-T_1)\leq 0, m(I-T_2) \geq 0$,
%	then the Lip number of $T$ satisfies $\mathrm{Lip}(T) \leq 1$.
%	%	\begin{align*}
%		%		\mathrm{Lip}(T) &\leq  (1 + \max_i |\gamma_i|\epsilon^{-\frac{1}{2}} \rho(W_1)\rho(W_2))\cdot \\
%		%		&	\left(1+ \max_i |\gamma_i|\epsilon^{-\frac{1}{2}} \rho(W^V) \left(1+  \frac{(\|\gamma\|_{\infty}\sqrt{d} + \|\beta\|_2)^2}{\sqrt{d}} \rho(W^K) \rho(W^Q) \right)\right).
%		%	\end{align*}
%\end{example}

\begin{lemma} \label{ssss}
	Considering $T(X) = (I+\alpha \cdot F)(X)$, where $F$ is Lipschitz operator, and $0 < \alpha < \delta$ for some positive constant $\delta>0$. If $I-T$ is an accretive operator, i.e., $m(I-T) \geq 0$, then $\mathrm{Lip}(T) \leq 1$.
\end{lemma}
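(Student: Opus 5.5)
The plan is to reduce the claim to a bound on the glb/lub-Lipschitz constant of $T$ itself, since $\mathrm{Lip}(T)=\lim_n [L(T^n)]^{1/n}\le L(T)$ by submultiplicativity $L(T^n)\le L(T)^n$ (as used in part (3) of Proposition \ref{prop1}). So it suffices to show $L(I+\alpha F)\le 1$ for all $0<\alpha<\delta$, for a suitable $\delta>0$. Note that $I-T=-\alpha F$, so the hypothesis $m(I-T)\ge 0$ is the accretivity of $-\alpha F$, i.e.\ dissipativity of $F$. Since $m(cA)=c\,m(A)$ for $c>0$ (immediate from the definition by the substitution $h\mapsto ch$ in $m(A)=\lim_{h\to0^+}(1-L(I-hA))/h$), this is equivalent to $m(-F)\ge 0$.

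The key step copies the argument for residual connection type operators in Appendix \ref{regu}. Apply the definition
\[
m(-F)=\lim_{h\to 0^+}\frac{1-L(I+hF)}{h}.
\]
If $m(-F)>0$, pick $\varepsilon\in(0,m(-F))$. By the definition of the one-sided limit there is $\delta>0$ such that for $0<h<\delta$,
\[
L(I+hF)\le 1-(m(-F)-\varepsilon)h\le 1 .
\]
Taking $h=\alpha$ gives $L(T)\le 1$, hence $\mathrm{Lip}(T)\le L(T)\le 1$.

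The main obstacle is the borderline case $m(-F)=0$. There the limit only yields $L(I+hF)\le 1+o(h)$, which does not give $\le 1$ outright. I would first try to use the convexity of $h\mapsto L(I+hF)$ on $[0,\infty)$. It follows from
\[
I+((1-t)h_1+th_2)F=(1-t)(I+h_1F)+t(I+h_2F)
\]
and the seminorm property of $L(\cdot)$. Convexity makes $(L(I+hF)-1)/h$ nondecreasing in $h$, with limit $-m(-F)$ as $h\to0^+$, so this route yields only $L(I+hF)\ge 1-m(-F)h$, which is the wrong direction. So I would pass to the Lip number instead of $L$, using the freedom of renorming from the proof of Proposition \ref{prop1}(3). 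For any $\eta>0$ there is a strongly equivalent metric $d^*$ under which the relevant constants are within $\eta$ of their intrinsic values. Combining this with $\mathrm{Lip}(T)=\lim_n L(T^n)^{1/n}$, and with the fact that $\mathrm{Lip}$ is invariant under strongly equivalent metrics, should give $\mathrm{Lip}(T)\le 1+O(\eta)$; letting $\eta\to0$ then yields $\mathrm{Lip}(T)\le 1$.

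If that refinement fails, I would interpret accretivity in the standard Lumer–Phillips sense, $L(I+hF)\le 1$ for all small $h>0$, which is what "dissipative" means in \cite{lumer1961dissipative}. This closes the case directly, and the lemma then follows with $\delta$ chosen from the limit definition.
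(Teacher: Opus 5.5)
Your argument for $m(-F)>0$ is the same as the paper's and is correct; there $\delta$ depends on $F$, and you also fix the paper's slip $\varepsilon\in(0,m(F))$. The gap is the borderline case $m(I-T)=\alpha\,m(-F)=0$. It cannot be closed, because the statement is false there.

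Take $\mathbb{R}^2$ with the Euclidean norm and $F(x)=Jx$, $J=\begin{pmatrix}0&-1\\1&0\end{pmatrix}$. Since $(I+cJ)^\top(I+cJ)=(1+c^2)I$, you get $L(I+hJ)=\sqrt{1+h^2}$. Hence $m(I-T)=\lim_{h\to0^+}\bigl(1-\sqrt{1+\alpha^2h^2}\bigr)/h=0$. But $T=I+\alpha J$ is $\sqrt{1+\alpha^2}$ times a rotation, so $L(T^n)=(1+\alpha^2)^{n/2}$ and $\mathrm{Lip}(T)=\sqrt{1+\alpha^2}>1$ for every $\alpha>0$, whatever $\delta$ is.

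Boundedness of $D$ does not rescue the statement. Take $F(x)=Jx-(x-P_{B_r}x)$, with $P_{B_r}$ the metric projection onto the ball of radius $r$. Then still $m(-F)=0$, and $T$ coincides with $I+\alpha J$ near $0$, so again $\mathrm{Lip}(T)\ge\sqrt{1+\alpha^2}$. Moreover $T$ maps $B_{2r}$ into itself for $0<\alpha\le 4/5$.

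Your own convexity remark already shows $L(I+hF)\ge 1$ for all $h>0$ when $m(-F)=0$; the example shows this inequality can be strict. The paper's proof has the same hole: it simply asserts $L(I+hF)=1$ when $m(-F)=0$, which is false for $F=J$.

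Neither of your rescue routes works.

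\begin{itemize}
\item The renorming step is circular. $\mathrm{Lip}(T)$ is invariant under strongly equivalent metrics, but the hypothesis $m(I-T)\ge 0$ is not. The metric $d^*$ from the proof of Proposition \ref{prop1}(3) only brings $L^*(T)$ down to $\mathrm{Lip}(T)+\varepsilon$, which is the very quantity you need to bound. In the example no norm gives anything below $\sqrt{1+\alpha^2}$.
\item The Lumer--Phillips fallback misreads the definition. Dissipativity of $F$ is the backward (resolvent) condition $\|(I-hF)x-(I-hF)y\|\ge\|x-y\|$ for $h>0$. For bounded linear $F$ this is equivalent to $m(-F)\ge 0$, and it holds for $J$ since $\|(I-hJ)x\|=\sqrt{1+h^2}\,\|x\|$. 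The forward condition $L(I+hF)\le 1$ that you propose to adopt is $L(T)\le 1$ itself, so it assumes the conclusion.
\end{itemize}

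A correct statement needs one of two things. Either assume the strict hypothesis $m(I-T)>0$, where your first argument suffices. Or assume something stronger, e.g.\ in a Hilbert space the cocoercivity condition $\langle Fx-Fy,x-y\rangle\le-\beta\|Fx-Fy\|^2$. This gives $\|(I+hF)x-(I+hF)y\|^2\le\|x-y\|^2-(2\beta h-h^2)\|Fx-Fy\|^2$, and hence $L(I+hF)\le 1$ for $0<h\le 2\beta$.
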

\begin{proof}
Recall that
 \[
 m(F)=\lim_{h\to 0^+}\frac{1-L(I-h\cdot F)}{h},
 \]
 then we have
 \[
 m(-F)=\lim_{h\to 0^+}\frac{1-L(I+h\cdot F)}{h}.
 \]
 Since $m(I-T) \geq 0$, we have $m(-F) \geq 0$.
 If $m(-F)>0$, it follows from the definition of the one-sided limit that, for any
 \(\varepsilon\in(0,m(F))\), there exists \(\delta>0\) such that
 \[
 \frac{1-L(I+h\cdot F)}{h}\ge m(-F)-\varepsilon,
 \]
 whenever \(0<h<\delta\). In other word, we have
 \[
L(I+h\cdot F) \leq 1-(m(-F)-\varepsilon)\cdot h \leq 1.
 \]
 It yields $L(I+h \cdot F)\leq 1$ since \(0<\alpha <\delta\).
 
 If $m(-F)=0$, then $L(I+h\cdot F) = 1$.
 
 Therefore, we have $\mathrm{Lip}(T) \leq L(I+h\cdot F) \leq 1$. 
\end{proof}

To compute the Lip number of $T$ defined in Example \ref{ex8}, we firstly illustrate that the LayerNorm operator could map an unbounded input \(x\in\mathbb{R}^d\) to a bounded output. Following the definition in Section \ref{layernorm}, we can compute the norm of $\mathrm{LN}_0(x)$, i.e.,
\[
\|\mathrm{LN}_0(x)\|_2
= \frac{\|x_c\|_2}{\sqrt{\tfrac1d\|x_c\|_2^2+\varepsilon}}
\le \frac{\|x_c\|_2}{\sqrt{\tfrac1d\|x_c\|_2^2}}
= \sqrt{d}.
\]
Hence,
\[
\|\mathrm{LN}(x)\|_2 \le \|\gamma\|_{\infty}\sqrt{d} + \|\beta\|_2, \quad \forall x\in\mathbb{R}^d .
\]
In practice, $\|\gamma\|_{\infty}$ and $\|\beta\|_2$ are set to be bounded, then the output of $\mathrm{LN}(x)$ lies inside a bounded sphere even if \(\|x\|\to\infty\).

Secondly, we can claim that the $F_1, F_2$ are Lipschitz operators. Since 
\begin{align*}
	\mathrm{Lip}(F_1) & = \mathrm{Lip}(MLP\circ LN) \leq \mathrm{Lip}(MLP)\cdot \mathrm{Lip}(LN) \\
	& \leq \sqrt{\lambda_{\max}(W_2^\top W_2)\lambda_{\max}(W_1^\top W_1)} \cdot  \|\gamma\|_{\infty}\epsilon^{-\frac{1}{2}},
	\end{align*}
thus $F_1$ is a Lipschitz operator. Besides, since $LN(X)$ is a bounded set for all $x\in \mathbb{R}^d$, then $\mathrm{Lip}(MSA \circ LN)$ is bounded as demonstrated in Section \ref{attention}. Actually, $LN(X)$ could be treated as mapping $\mathbb{R}^d$ to a bounded sphere, and then attention operator tends to mapping the bounded sphere to the bounded sphere. Thus $F_2= MSA\circ LN$ is also a Lipschitz operator.

%Lip number of $T_1\circ T_2$ is bounded. Since $T_2(X)$ is a bounded set for all $x\in \mathbb{R}^d$, $\mathrm{Lip}(T_1\circ T_2)$ is bounded as demonstrated in Section \ref{attention}. Actually, $T_2(X)$ could be treated as mapping $\mathbb{R}^d$ to a bounded sphere, and then attention operator and MLP model tend to mapping the bounded sphere to the bounded sphere.

Thildly, we kindly claim that $I+F_1 = I + \alpha_1 \cdot (F_1\cdot \frac{1}{\alpha_1})$, $I+F_2 = I + \alpha_1 \cdot (F_2 \cdot \frac{1}{\alpha_1})$, where $0< \alpha_1, \alpha_2<\delta$ for some positive constant $\delta>0$. Acturally, the term $\frac{1}{\alpha_1}, \frac{1}{\alpha_2}$ can be absorbed into the model weight $W_2$ of MLP model and the model weight $W_1^V, \cdots, W_K^V$ of multi-head self-attention model.

Based on Lemma \ref{ssss}, we have $ \mathrm{Lip}(T_1)\leq 1,  \mathrm{Lip}(T_2)\leq 1$. Thus $\mathrm{Lip}(T) = \mathrm{Lip}(T_2 \circ T_1) \leq \mathrm{Lip}(T_2) \cdot \mathrm{Lip}(T_1) \leq 1$.

Furthermore, we will show that $T$ satisfies quasi-asymptotic regularity property.

	Let \(	T_1=(1-\alpha_1)I+\alpha_1 R_1,
	T_2=(1-\alpha_2)I+\alpha_2 R_2,\)
	where \(	R_1=I+F_1,
	R_2=I+F_2\)
	are nonexpansive operators and \(0<\alpha_1,\alpha_2<1\).
	Then \(T_1\) and \(T_2\) are \(\alpha_1\)- and \(\alpha_2\)-averaged operators,
	respectively.
	
	We shall prove that
	\[
	T=T_1\circ T_2
	\]
	is also an averaged operator.
	
	Recall that an operator \(S\) is \(\alpha\)-averaged if and only if
	\[
	\|Sx-Sy\|^2
	\leq
	\|x-y\|^2
	-
	\frac{1-\alpha}{\alpha}
	\|(I-S)x-(I-S)y\|^2 ,
	\qquad \forall x,y.
	\]	
	Since \(T_i\) is \(\alpha_i\)-averaged, we have
	\[
	\|T_i x-T_i y\|^2
	\leq
	\|x-y\|^2
	-
	\nu_i
	\|(I-T_i)x-(I-T_i)y\|^2 ,
	\qquad i=1,2,
	\]
	where \(	\nu_i:=\frac{1-\alpha_i}{\alpha_i}>0.\)
	
	For \(T=T_1\circ T_2\), we have
	\[
	\begin{aligned}
		\|Tx-Ty\|^2
		&=
		\|T_1T_2x-T_1T_2y\|^2  \\
		&\leq
		\|T_2x-T_2y\|^2
		-
		\nu_1
		\|(I-T_1)T_2x-(I-T_1)T_2y\|^2 .
	\end{aligned}
	\]
	Using the averagedness of \(T_2\), we further get
	\[
	\|T_2x-T_2y\|^2
	\leq
	\|x-y\|^2
	-
	\nu_2
	\|(I-T_2)x-(I-T_2)y\|^2 .
	\]
	Therefore,
	\[
	\begin{aligned}
		\|Tx-Ty\|^2
		\leq{}&
		\|x-y\|^2
		-
		\nu_2
		\|(I-T_2)x-(I-T_2)y\|^2  \\
		&-
		\nu_1
		\|(I-T_1)T_2x-(I-T_1)T_2y\|^2 .
	\end{aligned}
	\]	
	Define
\(	a:=(I-T_2)x-(I-T_2)y\)
	and
\(b:=(I-T_1)T_2x-(I-T_1)T_2y .\)
	Then
	\[
	\begin{aligned}
		(I-T)x-(I-T)y
		&=
		(x-T_1T_2x)-(y-T_1T_2y)  \\
		&=
		(x-T_2x)-(y-T_2y)  \\
		&\quad +
		(T_2x-T_1T_2x)-(T_2y-T_1T_2y)  \\
		&=
		a+b .
	\end{aligned}
	\]	
	By the weighted Cauchy--Schwarz inequality,
	\[
	\nu_2\|a\|^2+\nu_1\|b\|^2
	\geq
	\frac{\nu_1\nu_2}{\nu_1+\nu_2}\|a+b\|^2 .
	\]
	Hence,
	\[
	\|Tx-Ty\|^2
	\leq
	\|x-y\|^2
	-
	\nu
	\|(I-T)x-(I-T)y\|^2 ,
	\]
	where
	\(	\nu:=\frac{\nu_1\nu_2}{\nu_1+\nu_2}>0.\)
	Let \(\alpha\in(0,1)\) be defined by \(	\nu=\frac{1-\alpha}{\alpha}.\)
	Equivalently, we have
	\[
	\alpha=\frac{1}{1+\nu}.
	\]
	Since \(	\nu_i=\frac{1-\alpha_i}{\alpha_i},\)
	we obtain
	\[
	\nu
	=
	\frac{
		\frac{1-\alpha_1}{\alpha_1}
		\frac{1-\alpha_2}{\alpha_2}
	}{
		\frac{1-\alpha_1}{\alpha_1}
		+
		\frac{1-\alpha_2}{\alpha_2}
	}.
	\]
	Thus,
	\[
	\alpha
	=
	\frac{\alpha_1+\alpha_2-2\alpha_1\alpha_2}
	{1-\alpha_1\alpha_2}.
	\]	
	Consequently,
	\[
	\|Tx-Ty\|^2
	\leq
	\|x-y\|^2
	-
	\frac{1-\alpha}{\alpha}
	\|(I-T)x-(I-T)y\|^2 .
	\]
	This shows that \(T=T_1\circ T_2\) is \(\alpha\)-averaged, where \(		\alpha
	=
	\frac{\alpha_1+\alpha_2-2\alpha_1\alpha_2}
	{1-\alpha_1\alpha_2}\).

\section{Existence of the Limit Architecture and Emergent Intelligence}  \label{appendixb}
\subsection{Preliminaries}

We first recall some conclusions of fixed points.

\begin{lemma} [\cite{browder1965nonexpansive_pnas,kirk1965fixed,ray1980fixed}] \label{lemma1}
	Let $K$ be a closed and convex subset of a real Hilbert space $H$. Then $K$ has the fixed point property for nonexpansive mappings if and only if $K$ is bounded.
\end{lemma}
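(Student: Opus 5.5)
The plan splits into the two implications. For \emph{sufficiency} (bounded $\Rightarrow$ fixed point property) I would follow the classical Browder--G\"ohde--Kirk argument specialized to Hilbert space. Fix $x_0\in K$ and a nonexpansive $T:K\to K$. For $t\in(0,1)$ the map $T_t x = tTx+(1-t)x_0$ sends $K$ into $K$ by convexity and is a $t$-contraction. By Banach's principle it has a unique fixed point $x_t$. Then
\[
\|x_t - Tx_t\| = (1-t)\|x_0 - Tx_t\| \le (1-t)\,\mathrm{diam}(K) \to 0,
\]
so $(x_t)$ is an approximate fixed point net. Since $K$ is bounded, closed and convex, it is weakly compact, so some sequence $x_n:=x_{t_n}$ with $t_n\to 1$ converges weakly to a point $x\in K$. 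To conclude I would prove demiclosedness of $I-T$ at $0$ via Opial's identity:
\[
\|x_n - Tx\|^2 = \|x_n - x\|^2 + \|x - Tx\|^2 + 2\langle x_n - x,\, x - Tx\rangle .
\]
Combining this with $\|x_n - Tx\|\le \|x_n - Tx_n\| + \|x_n - x\|$ and taking $\limsup$ gives $\|x-Tx\|^2\le 0$, that is, $Tx=x$.

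For \emph{necessity} I would argue by contraposition: if $K$ is unbounded, I construct a fixed-point-free nonexpansive self-map of $K$. The first case is when the recession cone of $K$ is nontrivial, so there is $d\neq 0$ with $x+sd\in K$ for all $x\in K$ and $s\ge 0$. Put $Tx = P_K(x+d)$; it is nonexpansive because metric projections are. A fixed point $x=P_K(x+d)$ would, by the variational characterization \eqref{eq:projection-characterization}, satisfy $\langle d, y-x\rangle\le 0$ for all $y\in K$. Taking $y=x+sd$ gives $s\|d\|^2\le 0$, a contradiction. In fact in this case $T$ is simply the translation $x\mapsto x+d$.

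The main obstacle is the remaining case. Here $K$ is unbounded but contains no ray, which can happen only in infinite dimensions. For this I would follow Ray's construction. First choose $u_n\in K$ with $\|u_{n+1}\|$ growing very fast relative to $\|u_n\|$. Then build a nonexpansive map that pushes every point of $K$ ``toward the next $u_n$'': compose $P_K$ with a map assembled from pieces along the segments $[u_n,u_{n+1}]$, arranged so that $\|Tx\|$ strictly exceeds $\|x\|$ in a controlled way. The natural first attempt is to prove a uniform lower bound $\inf_{x\in K}\|x-Tx\|>0$, or at least show $\|Tx\|>\|x\|$ everywhere, which rules out fixed points. I expect verifying nonexpansiveness of the glued map across different pieces to be the delicate step. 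There I would rely heavily on the Hilbert-space geometry, namely the parallelogram law and the firm nonexpansiveness of $P_K$, and on the fast growth of $\|u_n\|$ to keep the interactions between pieces negligible.
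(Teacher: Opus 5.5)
Your sufficiency argument (Browder's approximating contractions $T_t$, weak compactness of $K$, and demiclosedness of $I-T$ at $0$ via the Opial identity) is correct. It is the classical Browder--G\"ohde--Kirk proof behind the citations. Your recession-cone case of the converse is also correct.

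The gap is your second case, $K$ unbounded but containing no ray. No map is actually defined there. The only nontrivial facts, that the glued map is nonexpansive and has no fixed point, are left as ``expected to be delicate,'' so this part points to Ray's paper rather than proving anything. (The paper itself gives no proof of the lemma either, only the citations.) Moreover, the first target you name, $\inf_{x\in K}\|x-Tx\|>0$, is unreachable in that case. Suppose $T$ is a nonexpansive self-map of a closed convex $K$ with $\delta=\inf_{x\in K}\|x-Tx\|>0$. By Pazy's theorem, $T^nx/n\to -v$, where $v$ is the least-norm element of $\overline{(I-T)(K)}$ and $\|v\|=\delta$. Letting $n\to\infty$ in $(1-\tfrac{s}{n})x+\tfrac{s}{n}T^nx\in K$ gives $x-sv\in K$ for every $s\ge 0$, so $K$ contains a ray. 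Hence on a ray-free $K$ every nonexpansive self-map satisfies $\inf_{x\in K}\|x-Tx\|=0$.

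The missing idea is that no case distinction is needed. Your recession-cone argument used $x+sd\in K$ only to make $\langle d,y\rangle$ unbounded above over $y\in K$, and such a direction exists for every unbounded $K$. Indeed, if $\sup_{y\in K}|\langle v,y\rangle|<\infty$ for all $v\in H$, the uniform boundedness principle applied to the functionals $v\mapsto\langle v,y\rangle$, $y\in K$, would force $\sup_{y\in K}\|y\|<\infty$. So pick $v$ with $\sup_{y\in K}|\langle v,y\rangle|=\infty$; replacing $v$ by $-v$ if necessary, $\sup_{y\in K}\langle v,y\rangle=+\infty$. Then $Tx=P_K(x+v)$ is a nonexpansive self-map of $K$. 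A fixed point $x$ would satisfy $\langle v,y-x\rangle\le 0$ for all $y\in K$, i.e.\ $x$ would maximize $\langle v,\cdot\rangle$ on $K$, which is impossible. This is the standard short proof of Ray's theorem. It treats sets with and without rays uniformly and makes the gluing construction unnecessary. With this replacement for your second case, the proposal is complete.
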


\begin{theorem} [Demiclosedness principle for asymptotically nonexpansive mapping \cite{xu1991existence}] \label{th12}
	
	Let \( D \) be a nonempty closed convex bounded subset of a uniformly convex Banach space \( X \) and let \( T: D \rightarrow D \) be an asymptotically nonexpansive mapping (\( T \) {is said to be asymptotically nonexpansive if there exists a sequence \( \{k_n\} \) {of real numbers with} \(\lim_{n \to \infty} k_n = 1\) such that } $\|T^n x - T^n y\| \leq k_n \|x - y\| \quad \text{for} \; x, y \in C \; \text{and} \; n = 1, 2, \ldots$ ). Then \( (I - T) \) is demiclosed at zero, i.e. for each sequence \( \{x_n\} \) in \( C \), the conditions \( x_n \rightharpoonup x \) weakly and \( (I - T)x_n \rightarrow 0 \) strongly imply \( (I - T)x = 0 \). 
	
\end{theorem}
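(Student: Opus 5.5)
The approach combines Mazur's lemma with a Bruck-type convexity inequality, which uses the uniform convexity of \(X\). We need the weak limit \(x\) itself to be a fixed point. The asymptotic center of \(\{x_n\}\) is \emph{a} fixed point, but it need not coincide with \(x\) without Opial's condition, so that route is not enough here.

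\emph{Step 1: approximate fixed points for every iterate.} Let \(x_n\rightharpoonup x\) with \(\|x_n-Tx_n\|\to 0\); note \(x\in D\) since \(D\) is closed and convex, hence weakly closed. For each fixed \(m\), telescope and use \(\|T^jx_n-T^{j+1}x_n\|\le k_j\|x_n-Tx_n\|\) to get
\[
\|x_n-T^mx_n\|\le \Bigl(1+\sum_{j=1}^{m-1}k_j\Bigr)\|x_n-Tx_n\|\to 0 \quad (n\to\infty).
\]

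\emph{Step 2: a Bruck-type inequality.} Since \(D\) is bounded and \(X\) is uniformly convex, I would establish the following, adapting Bruck's argument for nonexpansive maps to \(T^m\), which is \(k_m\)-Lipschitz. There is a strictly increasing, continuous, convex \(\gamma:\mathbb{R}^+_0\to\mathbb{R}^+_0\) with \(\gamma(0)=0\) such that, for every convex combination \(\sum_{i=1}^{p}\lambda_iy_i\) of points of \(D\) and every \(m\),
\[
\gamma\Bigl(\Bigl\|T^m\Bigl(\sum_i\lambda_iy_i\Bigr)-\sum_i\lambda_iT^my_i\Bigr\|\Bigr)\le \max_{i,j}\bigl(\|y_i-y_j\|-k_m^{-1}\|T^my_i-T^my_j\|\bigr).
\]
The proof goes by induction on \(p\), using the uniform-convexity inequality
\[
\|\lambda a+(1-\lambda)b\|^2\le\lambda\|a\|^2+(1-\lambda)\|b\|^2-\lambda(1-\lambda)g(\|a-b\|)
\]
on bounded sets. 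The scaling by \(k_m^{-1}\) is what lets the nonexpansive argument go through for \(k_m^{-1}T^m\).

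\emph{Step 3: pass to the limit.} By Mazur's lemma, for each \(N\) choose convex combinations \(y_N\) of \(\{x_n:n\ge N\}\) with \(y_N\to x\) strongly. Apply Step 2 with the points \(x_n\), \(n\ge N\). For indices \(i,j\ge N\), Step 1 gives \(\|T^mx_i-T^mx_j\|\ge\|x_i-x_j\|-o(1)\), so the right-hand side is at most \((1-k_m^{-1})\operatorname{diam}D+o(1)\) as \(N\to\infty\). Again by Step 1, \(\sum\lambda_iT^mx_i\) differs from \(y_N\) by \(o(1)\). Hence
\[
\limsup_N\gamma(\|T^my_N-y_N\|)\le(1-k_m^{-1})\operatorname{diam}D .
\]
Since \(T^m\) is continuous and \(y_N\to x\), this yields \(\gamma(\|T^mx-x\|)\le(1-k_m^{-1})\operatorname{diam}D\). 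Letting \(m\to\infty\) and using \(k_m\to1\), we get \(T^mx\to x\). Finally, \(T\) is continuous, so \(Tx=\lim T^{m+1}x=x\), i.e. \((I-T)x=0\).

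The main obstacle is Step 2. One must make sure that \(\gamma\) can be chosen independently of \(p\), of the weights, and of \(m\), using only the bound on \(\operatorname{diam}D\) and \(\sup_m k_m\). One must also make sure that the defect \(1-k_m^{-1}\) enters additively, so that it vanishes as \(m\to\infty\). I would first try Bruck's original inductive construction applied verbatim to the nonexpansive map \(k_m^{-1}T^m\), extended affinely off \(D\) if needed.
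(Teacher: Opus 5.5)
The paper gives no proof of this statement. It is quoted from Xu's 1991 paper and used as a black box, so there is no internal argument to compare yours with. Your proof is correct and is the standard one for asymptotically nonexpansive maps in uniformly convex spaces. Mazur's lemma turns weak convergence into strong convergence of convex combinations $y_N$ of tails. Bruck's convex-combination inequality, applied to the Lipschitz iterates $T^m$, bounds $\gamma(\|T^my_N-y_N\|)$ by $(1-k_m^{-1})\operatorname{diam}D+o(1)$. You then let $N\to\infty$ and afterwards $m\to\infty$. You are also right that the asymptotic-center argument only yields \emph{some} fixed point and cannot identify it with the weak limit without Opial's condition.

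Two things to tighten.

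First, Step 2 should be cited rather than re-derived. The independence of $\gamma$ from the number $p$ of points is the nontrivial content of Bruck's 1979 theorem. You cannot do without it, because the combinations $y_N$ produced by Mazur's lemma involve unboundedly many points. No adaptation to Lipschitz maps and no extension off $D$ is needed:
\begin{itemize}
\item Replace $k_m$ by $\max\{k_m,1\}$ and translate so that $0\in D$.
\item Then $k_m^{-1}T^m$ is a nonexpansive self-map of $D$.
\item Bruck's theorem gives $\gamma_0$, depending only on $D$, with $\gamma_0\bigl(k_m^{-1}\|T^m(\sum_i\lambda_iy_i)-\sum_i\lambda_iT^my_i\|\bigr)\le\max_{i,j}\bigl(\|y_i-y_j\|-k_m^{-1}\|T^my_i-T^my_j\|\bigr)$.
\item Setting $\gamma(t)=\gamma_0(t/\sup_mk_m)$ gives exactly your inequality, uniformly in $m$.
\end{itemize}

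Second, two small details in Step 3. The normalization $k_m\ge 1$ is what makes the bound $(1-k_m^{-1})\operatorname{diam}D$ valid as written. Replacing $\|T^my_N-\sum_i\lambda_iT^mx_i\|$ by $\|T^my_N-y_N\|$ uses uniform continuity of $\gamma$ on a bounded interval.
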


	\begin{lemma} [\cite{reich1979weak}] \label{eqnonexpansive}
	\textit{Let $D$ be a closed convex subset of a uniformly convex Banach space with a Fréchet differentiable norm, and let $\{S_n\}$ be a sequence of nonexpansive self-mappings of $D$ with a nonempty common fixed point set $F$. If $x_1 \in D$ and $x_{n+1} = S_n x_n$ for $n \geq 1$, then}
	\[
	\lim_{n \to \infty} \langle x_n, J(f_1 - f_2) \rangle \text{ exists for all } f_1, f_2 \in F,
	\]
	where $J$ is the duality map from $D$ to the dual space $D^* $ given by
	\[J(x)=\{x^* \in D^*: \langle x,x^*\rangle=\|x\|^2=\|x^*\|^2\}, \forall x \in D.\]
	\textit{In particular,}
	\[
	\langle q_1 - q_2, J(f_1 - f_2) \rangle = 0,
	\]
	\textit{where $f_1, f_2 \in F$ and $q_1, q_2$ are weak limit points of $\{x_n\}$.}
\end{lemma}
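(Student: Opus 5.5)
The plan follows Reich's original argument: reduce the claim to the existence of $\lim_n \|t x_n + (1-t) f_1 - f_2\|$ for every $t\in[0,1]$, and then differentiate the squared norm at $t=0$ using Fréchet differentiability.

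\emph{Step 1.} Fix $f\in F$. Since each $S_n$ is nonexpansive and $S_n f=f$, we have $\|x_{n+1}-f\|=\|S_nx_n-S_nf\|\le\|x_n-f\|$. Hence $\|x_n-f\|$ is nonincreasing and converges, and $\{x_n\}$ is bounded, say by $R$.

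\emph{Step 2.} Fix $f_1,f_2\in F$ and $t\in[0,1]$, and set $a_n(t)=\|t x_n+(1-t)f_1-f_2\|$. I want to show that $\lim_n a_n(t)$ exists. For this I use Bruck's inequality for nonexpansive maps on bounded convex subsets of a uniformly convex space. It provides a strictly increasing convex $\gamma$ with $\gamma(0)=0$, depending only on $R$ and not on the particular map, such that for every nonexpansive $U$,
\[
\gamma\bigl(\|U(tx+(1-t)y)-(tUx+(1-t)Uy)\|\bigr)\le \|x-y\|-\|Ux-Uy\|.
\]
Apply it to $U_{n,m}=S_{n+m-1}\circ\cdots\circ S_n$ with $x=x_n$ and $y=f_1$. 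This gives $\|U_{n,m}(tx_n+(1-t)f_1)-(tx_{n+m}+(1-t)f_1)\|\le\gamma^{-1}(\|x_n-f_1\|-\|x_{n+m}-f_1\|)$, and the right side tends to $0$ uniformly in $m$ as $n\to\infty$ by Step 1. Since $U_{n,m}f_2=f_2$ and $U_{n,m}$ is nonexpansive, we get $a_{n+m}(t)\le a_n(t)+\gamma^{-1}(\|x_n-f_1\|-\|x_{n+m}-f_1\|)$. Taking $\limsup_m$ and then $\liminf_n$ yields $\limsup a_k(t)\le\liminf a_n(t)$. This step is the main obstacle: the modulus in Bruck's inequality must be uniform over all compositions $U_{n,m}$, and only uniform convexity together with boundedness guarantees this.

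\emph{Step 3.} Put $u=f_1-f_2$ and $h_n=x_n-f_1$, so that $a_n(t)=\|u+th_n\|$. Because the norm is Fréchet differentiable and $\|h_n\|\le 2R$,
\[
\|u+th_n\|^2=\|u\|^2+2t\langle h_n,J(u)\rangle+o(t),
\]
where the $o(t)$ term is uniform in $n$. Fix $t$, and compare $\limsup_n$ and $\liminf_n$ of the left side, which coincide by Step 2. Then divide by $2t$; this gives
\[
\limsup_n\langle h_n,J(u)\rangle-\liminf_n\langle h_n,J(u)\rangle\le o(t)/t .
\]
Letting $t\to0^+$ shows that $\lim_n\langle x_n,J(f_1-f_2)\rangle$ exists. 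If $u=0$ the claim is trivial.

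\emph{Step 4.} Let $q_1,q_2$ be weak limits of subsequences $x_{n_k}\rightharpoonup q_1$ and $x_{m_k}\rightharpoonup q_2$. Since $J(f_1-f_2)$ is a fixed functional in $X^*$, both $\langle q_1,J(f_1-f_2)\rangle$ and $\langle q_2,J(f_1-f_2)\rangle$ equal the limit obtained in Step 3. Subtracting them gives $\langle q_1-q_2,J(f_1-f_2)\rangle=0$.
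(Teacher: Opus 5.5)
Your proof is correct, and it is essentially Reich's original argument. You use Bruck's inequality on the compositions $S_{n+m-1}\circ\cdots\circ S_n$ to show that $\lim_n\|tx_n+(1-t)f_1-f_2\|$ exists, then use Fr\'{e}chet differentiability of the norm (remainder uniform in $n$) to pass to $\lim_n\langle x_n,J(f_1-f_2)\rangle$; the paper gives no proof of its own and simply cites Reich. The one detail to make explicit is that $D$ is not assumed bounded, so apply Bruck's inequality on the bounded closed convex set $D\cap\{z:\|z-f_1\|\le\|x_1-f_1\|\}$, which contains every $x_n$, $f_1$ and the points $tx_n+(1-t)f_1$, so that $\gamma$ depends only on $\|x_1-f_1\|$.
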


	\begin{lemma} \label{lm1}
	Suppose that \(\{a_n\}\), \(\{b_n\}\) and \(\{c_n\}\) are sequences of nonnegative real numbers satisfying the inequality
	\[ a_{n+1} \leq (1 + b_n)a_n + c_n, \quad \text{for all } n \in \mathbb{N}. \]
	If \(\sum_{n=1}^{\infty} b_n < \infty\) and \(\sum_{n=1}^{\infty} c_n < \infty\), then \(\lim_{n \to \infty} a_n\) exists. In particular, if \(\liminf_{n \to \infty} a_n = 0\), then \(\lim_{n \to \infty} a_n = 0\).
\end{lemma}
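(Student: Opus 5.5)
The plan is to telescope the recursion into a product and then apply a squeeze argument. Write $a_n$ for the nonnegative sequence and set $\beta := \prod_{k=1}^{\infty}(1+b_k)$. Since $b_k \ge 0$, we have $\log(1+b_k)\le b_k$, so $\sum_k b_k<\infty$ gives $\beta \le \exp\big(\sum_k b_k\big) < \infty$. Iterating $a_{n+1}\le (1+b_n)a_n + c_n$ from any index $m$ up to $n>m$, and bounding every partial product $\prod_{j}(1+b_j)$ by $\beta$, yields
\begin{equation*}
a_n \le \beta\Big(a_m + \sum_{k=m}^{n-1} c_k\Big), \qquad n>m .
\end{equation*}
With $m=1$ this shows that $\{a_n\}$ is bounded, say by $M:=\beta\big(a_1+\sum_k c_k\big)$.

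Next we sharpen the estimate so that the multiplicative factor disappears. The same iteration from index $m$ gives
\begin{equation*}
a_n \le \prod_{k=m}^{n-1}(1+b_k)\, a_m + \beta\sum_{k=m}^{\infty} c_k .
\end{equation*}
Since $\prod_{k=m}^{n-1}(1+b_k)\le \exp\big(\sum_{k\ge m} b_k\big)=:\eta_m$ and $\eta_m\to 1$, we obtain
\begin{equation*}
a_n \le \eta_m a_m + \beta \sum_{k\ge m} c_k \qquad \text{for all } n>m .
\end{equation*}
Take $\limsup_{n\to\infty}$ on the left, then $\liminf_{m\to\infty}$ on the right. Because $\eta_m\to 1$, the tail $\sum_{k\ge m}c_k\to 0$, and $a_m$ is bounded, this gives $\limsup_n a_n \le \liminf_m a_m$. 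Hence $\lim_{n\to\infty} a_n$ exists, and it is finite because $\{a_n\}$ is bounded.

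The special case is then immediate: if $\liminf_n a_n = 0$, the limit just shown to exist must equal $0$.

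The step needing care is the passage from the $\limsup$ to the $\liminf$. It works only because $\eta_m a_m - a_m \le (\eta_m - 1)M \to 0$, and this is exactly where the boundedness of $\{a_n\}$ from the first step is used. The remaining steps are routine product and tail estimates.
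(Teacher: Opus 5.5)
Your argument is correct and complete: the bound $a_n\le \eta_m a_m+\beta\sum_{k\ge m}c_k$ for all $n>m$, followed by $\limsup_n$ and then $\liminf_m$, gives $\limsup_n a_n\le\liminf_m a_m$, and the first step's bound keeps the limit finite. The paper gives no proof of this lemma. It is listed among the preliminaries as a standard result from the iterative fixed-point literature, and your $\limsup$/$\liminf$ squeeze is essentially the classical proof.

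Your closing remark overstates the role of boundedness. It is what makes the limit finite, but the squeeze step itself can also be done along a subsequence with $a_{m_j}\to\liminf_m a_m$.

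A shorter alternative is to normalize by $P_n=\prod_{k=1}^{n-1}(1+b_k)$.
\begin{itemize}
\item $u_n=a_n/P_n$ satisfies $u_{n+1}\le u_n+c_n$, since $P_{n+1}\ge 1$.
\item Hence $u_n-\sum_{k<n}c_k$ is nonincreasing and bounded below by $-\sum_k c_k$, so it converges, and therefore $u_n$ converges.
\item Then $a_n=P_nu_n$ converges because $P_n\to\beta<\infty$.
\end{itemize}
This replaces your two-sided estimate with a single monotone-convergence step. Your version has the advantage of producing the explicit tail estimate directly.
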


\subsection{Lipschitz Dual Operator} \label{dual}

Generally, let $E$ be Banach space, $E^*$ be the dual space of $E$, and $D$ are the closed subset of $E$.
Let \( \mathscr{L}(D) = \{ T : D \to D \mid T \text{ is a Lipschitz operator} \} \). It is straightforward to verify that \( L(\cdot) \) is a seminorm on \( \mathscr{L}(D) \). Specifically, if \( T \) is a bounded linear operator from \( E \) to \( E \), then the restriction of $T$ on $D$ satisfies that
\( T_D \in \mathscr{L}(D, E) \) on \( D \), and \( L(T_D) \leq \|T\| \).

\begin{lemma}
	Let \( 0 \in C \), and \( \mathscr{L}_0(D) = \{ T \in\mathscr{L}(D) \mid T(0) = 0 \} \), then \( L(\cdot) \) is a norm on \( \mathscr{L}_0(D) \), and \( (\mathscr{L}_0(D), L(\cdot)) \) is a Banach space.
\end{lemma}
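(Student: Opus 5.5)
The plan is to verify the norm axioms directly and then prove completeness with the standard argument for Lipschitz spaces vanishing at a base point. The base point here is $0\in D$.

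One preliminary point needs care: the linear structure. A sum $S+T$ of two self-maps of $D$ need not map $D$ into $D$. I would therefore carry out the argument in $\mathscr{L}_0(D,E)=\{T:D\to E \text{ Lipschitz},\ T(0)=0\}$, where addition and scalar multiplication are pointwise. I would note that when $D$ is itself a closed linear subspace (e.g.\ $D=\mathbb{R}^n$, as in the applications), this space coincides with $\mathscr{L}_0(D)$. In any case $\mathscr{L}_0(D)$ is a closed subset of it, so completeness transfers.

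\textbf{Norm axioms.} Nonnegativity, absolute homogeneity $L(\lambda T)=|\lambda|L(T)$, and subadditivity $L(S+T)\le L(S)+L(T)$ follow at once by taking the supremum of the difference quotients. The only point that uses the constraint $T(0)=0$ is definiteness. If $L(T)=0$, then $\|Tx-Ty\|=0$ for all $x,y$, so $T$ is constant; since $T(0)=0$, we get $T\equiv 0$.

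\textbf{Completeness.} Let $\{T_n\}$ be $L$-Cauchy. The key estimate is that for every $x\in D$,
\[
\|T_nx-T_mx\|=\|(T_n-T_m)x-(T_n-T_m)0\|\le L(T_n-T_m)\,\|x\|.
\]
So $\{T_nx\}$ is Cauchy in $E$, uniformly on bounded sets. Hence $Tx:=\lim_n T_nx$ exists, because $E$ is Banach. Moreover $Tx\in D$ whenever all $T_n$ map into $D$, because $D$ is closed, and clearly $T(0)=0$. Next, given $\varepsilon>0$, choose $N$ with $L(T_n-T_m)\le\varepsilon$ for $n,m\ge N$. For fixed $x\neq y$,
\[
\|(T_n-T_m)x-(T_n-T_m)y\|\le\varepsilon\|x-y\|.
\]
Letting $m\to\infty$ gives $\|(T_n-T)x-(T_n-T)y\|\le\varepsilon\|x-y\|$. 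This shows two things. First, $T_n-T$ is Lipschitz, so $T=T_n-(T_n-T)$ is Lipschitz. Second, $L(T_n-T)\le\varepsilon$ for $n\ge N$. Therefore $T_n\to T$ in $L(\cdot)$.

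\textbf{Main obstacle.} None of the analytic steps is hard. The only real subtlety is the one noted at the start: making sense of the linear structure for self-mappings of $D$. I would handle it by stating the result for $\mathscr{L}_0(D,E)$ and observing that the maps into $D$ form a closed subset, which is closed because $D$ is closed and convergence is pointwise.
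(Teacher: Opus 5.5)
Your proposal is correct and follows the same outline as the paper's proof: take the pointwise limit $Tx=\lim_n T_nx$, use closedness of $D$ to get $Tx\in D$, then show $T$ is Lipschitz. It is, however, more complete than the paper in three places. The paper asserts without justification that $\{T_nx\}$ is Cauchy, while your estimate $\|T_nx-T_mx\|\le L(T_n-T_m)\|x\|$ supplies the reason. The paper only obtains $L(T)\le 2+\sup_n L(T_n)$ and never shows $L(T_n-T)\to 0$, so it proves the limit lies in the space but not that the sequence converges in the norm; your passage $m\to\infty$ gives $L(T_n-T)\le\varepsilon$, which is what completeness actually requires. Finally, your choice to work in $\mathscr{L}_0(D,E)$, treating the self-maps of $D$ as a closed subset, repairs a point the paper passes over, since sums and scalar multiples of self-maps of a bounded $D$ need not map into $D$.
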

\begin{proof}
 It is easy to verify that \( L(\cdot) \) is a norm on \( \mathscr{L}_0(D) \). Let \( \{T_n\} \subset \mathscr{L}_0(D) \) be a Cauchy sequence, then \( \forall x \in D \), \( \{T_nx\} \subset X \) is a Cauchy sequence. Hence, \( \{T_nx\} \) converges, and we denote the limit by \( Tx \). Since \( D \) is closed, we have \( Tx \in D \). \( \forall x, y \in D, x \neq y \), take \( n \) sufficiently large such that the following holds:
\[
\|Tx - T_{n}x\| \leq \|x - y\|, \quad \|Ty - T_{n}y\| \leq \|x - y\|,
\]
then
\[
\|Tx - Ty\| \leq \|Tx - T_nx\| + \|Ty - T_ny\| + \|T_nx - T_{n}y\| \leq 2\|x - y\|+L(T_n)\|x - y\| .
\]
Since \( L(T_n) \) is bounded, \( T \in \mathscr{L}_0(D) \). Hence \( \mathscr{L}_0(D) \) is complete under \( L(\cdot) \). The proof is then completed.
\end{proof}

In the following, we assume \( 0 \in C \) without loss of generality.
\begin{Definition}
		 Let \( \mathbb{K} \) be the number field, the Banach space \(\mathscr{L}_0(D, \mathbb{K}) \) is called the Lipschitz dual space of \( D \),  and denote it by \( D_L^* \).
\end{Definition}

\begin{proposition}
For any \( x \in D \),  
\[
\|x\| = \sup_{f \in D_L^*, L(f) \leq 1} |f(x)|.
\]
\end{proposition}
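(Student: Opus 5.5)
We need to prove: for any $x\in D$, $\|x\| = \sup_{f\in D_L^*, L(f)\le 1}|f(x)|$, where $D_L^*=\mathscr{L}_0(D,\mathbb{K})$ is the space of scalar Lipschitz functions on $D$ vanishing at $0$.

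The plan is to prove the two inequalities separately. For the upper bound, fix $f\in D_L^*$ with $L(f)\le 1$. Since $f(0)=0$ and $0\in D$, the definition of the lub-Lipschitz constant gives
\[
|f(x)| = |f(x)-f(0)| \le L(f)\,\|x-0\| \le \|x\|
\]
for every $x\in D$. Taking the supremum over all such $f$ yields $\sup_{L(f)\le 1}|f(x)|\le \|x\|$.

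For the lower bound, we exhibit a single extremal functional. The first candidate is the norm itself, $f_0(y)=\|y\|$ for $y\in D$. It satisfies $f_0(0)=0$, and by the reverse triangle inequality $|f_0(y)-f_0(z)|=\big|\|y\|-\|z\|\big|\le \|y-z\|$, so $L(f_0)\le 1$. It takes real values, so it lies in $D_L^*$ whether $\mathbb{K}=\mathbb{R}$ or $\mathbb{K}=\mathbb{C}$, since $\mathbb{R}\subset\mathbb{C}$. Because $f_0(x)=\|x\|$, the supremum is attained and is at least $\|x\|$.

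Alternatively, for a linear-flavoured witness one can take, via Hahn--Banach, some $x^*\in E^*$ with $\|x^*\|=1$ and $x^*(x)=\|x\|$, and restrict it to $D$. This restriction is Lipschitz with constant $\|x^*\|=1$ and vanishes at $0$. The norm witness $f_0$ is simpler and avoids Hahn--Banach entirely, so we use it.

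There is no real obstacle. The only points to check are that $f_0$ genuinely belongs to $D_L^*$ (membership in $\mathscr{L}_0$, meaning vanishing at $0$ and having finite Lipschitz constant) and that the assumption $0\in D$ is in force, which the paper has adopted without loss of generality. Combining the two inequalities gives equality, and the supremum is in fact a maximum.
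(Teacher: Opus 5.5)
Your proof is correct and complete. The upper bound needs only $f(0)=0$ and $L(f)\le 1$. The function $f_0(y)=\|y\|$ lies in $D_L^*$, has $L(f_0)\le 1$, and satisfies $f_0(x)=\|x\|$, so the supremum is in fact attained.

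The paper states this result twice, as a lemma in Section~\ref{lips} and as a proposition in Appendix~\ref{dual}, but never proves it. So there is no proof to match against. The closest reasoning it gives is in the proof of $\|T_L^*\|=L(T)$. There it norms $Tx-Ty$ by restricting unit-norm elements of $X^*$ to $D$ via Hahn--Banach, which is your alternative witness.

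Your primary witness is the cleaner choice for this statement. It avoids Hahn--Banach entirely and uses no linear structure beyond the norm. It would also work verbatim for a general metric $d$ with base point $0$, taking $f(y)=d(y,0)$.

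What the linear witness buys is additivity, $f(u-v)=f(u)-f(v)$. The paper relies on this to turn $\|Tx-Ty\|$ into $(T_L^*f)(x)-(T_L^*f)(y)$. Your nonlinear approach covers that use as well after a translation. Take $g(z)=\|z-Ty\|-\|Ty\|$: it vanishes at $0$, has $L(g)\le 1$, and satisfies $g(Tx)-g(Ty)=\|Tx-Ty\|$.
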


\begin{Definition}
	Let \( T \in \mathscr{L}_0(D) \), and define the mapping \( T_L^*: D_L^* \to D_L^* \) as follows:
	 \[
	 (T_L^* f)(x) = f(Tx), \quad \forall f \in D_L^*, \, x \in D,
	 \]  
	 then \( T_L^* \) is called the Lipschitz dual operator of \( T \).
\end{Definition}
	
The following proposition construct the relationship between \( T_L^* \) and \(T\), demonstrating that the $\mathrm{Lip}(T) $ could be computed by $\rho(T_L^*)$.
\begin{proposition}
Let \( T \in \mathscr{L}_0(D) \), then \( T_L^* \) is a bounded linear operator on \( D_L^* \), and \( \rho(T_L^*) = \mathrm{Lip}(T) \).
\end{proposition}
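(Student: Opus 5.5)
The plan has two parts: first show that $T_L^*$ is a bounded linear operator on $D_L^*$, then identify its spectral radius with $\mathrm{Lip}(T)$ through Gelfand's formula $\rho(T_L^*)=\lim_{n\to\infty}\|(T_L^*)^n\|^{1/n}$. The natural strategy is to prove the exact identity $\|(T_L^*)^n\|=L(T^n)$ for every $n$. Taking $n$-th roots and letting $n\to\infty$ then gives $\rho(T_L^*)=\lim_n [L(T^n)]^{1/n}=\mathrm{Lip}(T)$ directly from the definition of the Lip number.

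\textbf{Well-definedness and linearity.} For $f\in D_L^*$ and $T\in\mathscr{L}_0(D)$, the function $T_L^*f=f\circ T$ satisfies $(f\circ T)(0)=f(0)=0$. It is also Lipschitz, with
\[
L(f\circ T)\le L(f)\,L(T).
\]
Hence $T_L^*$ maps $D_L^*$ into itself. Linearity is immediate because $(\alpha f+\beta g)\circ T=\alpha f\circ T+\beta g\circ T$. The same estimate shows that the operator norm satisfies $\|T_L^*\|\le L(T)$, since the norm on $D_L^*$ is $L(\cdot)$ by Lemma \ref{lemmas}; this gives boundedness.

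\textbf{Reverse inequality $\|T_L^*\|\ge L(T)$.} This is the main obstacle, and the tool is the norming lemma $\|x\|=\sup_{f\in D_L^*,L(f)\le 1}|f(x)|$. Apply it to points of the form $Tx-Ty$; this requires either $Tx-Ty\in D$ or a lemma stated for differences. A cleaner route avoids that issue with explicit norming functionals. Given $x\ne y$ in $D$, pick $\phi\in X^*$ with $\|\phi\|=1$ and $\phi(Tx-Ty)=\|Tx-Ty\|$ (Hahn--Banach). Set $f=\phi|_D-\phi(0)$; since $\phi$ is linear, $\phi(0)=0$ and this is just $\phi|_D$. Then $f(0)=0$ and $L(f)\le 1$, so
\[
\|T_L^*\|\ge L(T_L^*f)\ge \frac{|f(Tx)-f(Ty)|}{\|x-y\|}=\frac{\|Tx-Ty\|}{\|x-y\|}.
\]
Taking the supremum over $x\ne y$ gives $\|T_L^*\|\ge L(T)$, and together with boundedness, $\|T_L^*\|=L(T)$. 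In the real case this is fine. For $\mathbb{K}=\mathbb{C}$ the same argument works with complex functionals and absolute values.

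\textbf{Iterates and conclusion.} Since $(T_L^*)^n f=f\circ T^n=(T^n)_L^*f$ and $T^n\in\mathscr{L}_0(D)$, applying the identity to $T^n$ gives $\|(T_L^*)^n\|=L(T^n)$. Gelfand's spectral radius formula on the Banach space $D_L^*$ (complete by Lemma \ref{lemmas}) then yields
\[
\rho(T_L^*)=\lim_{n\to\infty}[L(T^n)]^{1/n}=\mathrm{Lip}(T).
\]
This also shows, via submultiplicativity of $L(T^n)$ and Fekete's lemma, that the limit defining $\mathrm{Lip}(T)$ exists and equals $\inf_n [L(T^n)]^{1/n}$.
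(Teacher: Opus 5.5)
Your proposal is correct and follows essentially the same route as the paper: you bound $\|T_L^*\|\le L(T)$ via $L(f\circ T)\le L(f)L(T)$, get the reverse inequality by restricting norm-one functionals in $X^*$ to $D$, and then combine $\|(T_L^*)^n\|=L(T^n)$ with Gelfand's formula. The only differences are cosmetic: the paper takes the supremum over the unit ball of $X^*$ where you pick a Hahn--Banach norming functional, and your check that $f\circ T$ vanishes at $0$ and your Fekete remark are details the paper leaves implicit.
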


\begin{proof}
It is easy to prove that \( T_L^* \) is linear. In the following, we show that \(\|T_L^*\| = L(T)\). For all \( f \in D_L^*, x,y \in D \), by definition, we have
\begin{align*}
	L({T_L^*} f) &= \sup_{x \neq y} \frac{|(T_L^* f)(x) - (T_L^*f)(y)|}{\|x - y\|} = \sup_{x \neq y} \frac{|f(Tx) - f(Ty)|}{\|x - y\|} \\
	&\leq \sup_{x \neq y} \frac{L(f) \cdot \|Tx - Ty\|}{\|x - y\|} = L(f) \cdot L(T),
\end{align*}
Thus, \( \|T_L^*\| \leq L(T) \). Conversely, for all \( f \in X^*, x,y \in D \), let \( f_D \) be the restriction of \( f \) on \( D \), then
	\begin{align*}
L(T) & = \sup_{x \neq y} \frac{\|Tx - Ty\|}{\|x - y\|} = \sup_{x \neq y} \sup_{f \in X^*, \|f\| \leq 1} \frac{|f(Tx - Ty)|}{\|x - y\|} \\
& = \sup_{x \neq y} \sup_{f \in X^*, L(f_D) \leq 1} \frac{|(T_L^* f_D)(x) - (T_L^* f_D)(y)|}{\|x - y\|} \\
& \leq \sup_{x \neq y} \sup_{f \in D_L^*, L(f) \leq 1} \frac{|(T_L^* f)(x) - (T_L^* f)(y)|}{\|x - y\|}  \\
	&\leq \sup_{x \neq y} \sup_{f \in D_L^*, L(f) \leq 1} \frac{\|T_L^* f\| \cdot \|x - y\|}{\|x - y\|} \\
&= \sup_{f \in D_L^*, L(f) \leq 1} \|T_L^* f\| = \|T_L^*\|.
	\end{align*}	
	Thus, \(\|T_L^*\| = L(T)\), i.e., \( T_L^* \) is a bounded linear operator on \( D_L^* \).
	
	Observing that $({T_L^*}^n f) (x) = {T_L^*}^{n-1} (({T_L^*} f) (x)) =  {T_L^*}^{n-1}(f(Tx)) = \cdots = f(T^n(x))$, thus \(\|{T_L^*}^n\| = L(T^n)\). By the definition, we have
	\begin{align*}
		\mathrm{Lip}(T) = \lim\limits_{n \to \infty} [L(T^n)]^{\frac{1}{n}} = \lim\limits_{n \to \infty}\|{T_L^*}^n\|^{\frac{1}{n}} = \rho(T_L^*).
		\end{align*}
		We complete the proof.
\end{proof}

\begin{proposition}
	Let \( T \in \mathscr{L}_0(D) \), then \( T_L^* \) is a completely continuous operator or compact operator.
\end{proposition}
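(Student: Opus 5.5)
The plan is to work in the setting the paper fixes for neural architectures: $D\subset\mathbb{R}^n$ is nonempty, closed, bounded and convex, hence compact, and $0\in D$. Take a bounded family $\{f_\alpha\}\subset D_L^*$ with $L(f_\alpha)\le M$. Since $f_\alpha(0)=0$, we get $|f_\alpha(x)|\le M\|x\|\le M\,\mathrm{diam}(D)$. So the family is uniformly bounded and equi-Lipschitz on $D$.

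The image family is $T_L^*f_\alpha=f_\alpha\circ T$. Because $T$ is Lipschitz and self-mapping, $L(f_\alpha\circ T)\le M\,L(T)$ (this is the norm bound $\|T_L^*\|=L(T)$ from Proposition \ref{props}). The image family is therefore again uniformly bounded and equicontinuous on the compact set $D$. The Arzelà--Ascoli theorem then gives a subsequence $f_{\alpha_k}\circ T$ converging uniformly on $D$ to some $g$. The limit $g$ is Lipschitz with $L(g)\le M\,L(T)$ and satisfies $g(0)=0$, so $g\in D_L^*$. This yields relative compactness of $T_L^*(\text{ball})$ in the topology of uniform convergence on $D$. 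The key order of steps is: (1) uniform bound from $f(0)=0$ and boundedness of $D$; (2) equi-Lipschitz bound via $\|T_L^*\|=L(T)$; (3) Arzelà--Ascoli on compact $D$; (4) closedness of $D_L^*$ under uniform limits with preserved Lipschitz bound.

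The main obstacle is upgrading uniform convergence to convergence in the norm $L(\cdot)$. Uniform convergence of equi-Lipschitz functions does not in general imply $L(f_k\circ T-g)\to0$. Indeed, if $T=I$ the operator $T_L^*$ is the identity on the infinite-dimensional space $D_L^*$, which is not compact in norm. My first attempt would be to read ``completely continuous'' in the sense used for the subsequent spectral argument, namely compactness relative to the topology of uniform convergence on $D$, which is the weak$^*$-type topology on bounded sets of $D_L^*$. Under that reading, steps (1)--(4) complete the proof. Continuity of $T_L^*$ in this topology is immediate, since $\sup_D|f\circ T-h\circ T|\le\sup_D|f-h|$.

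If norm compactness is really needed, I would try to extract it from the structure of $T$. The approach would be to split $D$ into a neighborhood of $\mathrm{Fix}(T)$ and its complement, and show that the difference quotients of $f_k\circ T$ converge uniformly on $\{x\ne y\}$. This requires a ``little Lipschitz'' control on $T$: $\|Tx-Ty\|/\|x-y\|\to0$ as $\|x-y\|\to0$ off a compact set. I expect this to be exactly where the argument can fail, so I would flag that the norm version needs that extra hypothesis.
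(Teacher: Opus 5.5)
Your key observation is correct, and it is exactly where the paper's own proof breaks down. With the norm $L(\cdot)$ that the paper puts on $D_L^*$, taking $T=I$ makes $T_L^*$ the identity of an infinite-dimensional Banach space (as soon as $D\neq\{0\}$). So the proposition as stated is false and cannot be proved.

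The paper's argument opens by asserting that an arbitrary bounded sequence $\{f_n\}\subset D_L^*$ has a subsequence converging in $L(\cdot)$. That is precisely compactness of the identity. Even granting it, the concluding estimate $|(T_L^*f_{n_k})(x)-(T_L^*f_0)(x)|\le\varepsilon\|x\|$ is only a weighted sup-norm bound, of the type your Arzel\`a--Ascoli steps (1)--(4) deliver. It is not convergence in $L(\cdot)$.

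The gaps are in your two fallbacks.

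First, compactness for uniform convergence on $D$ is not the sense the paper uses later. The decomposition of Lemma~\ref{lemma5} and the bound $\rho((T_L^*)_2)<1$ rely on the spectrum of a norm-compact operator being discrete away from $0$. The proof of Theorem~\ref{thlip} explicitly needs $\{h_n\}$ to be relatively compact in the norm of $D_L^*$ to conclude $\|h_n\|_{D_L^*}\to0$. That reading is also vacuous. By Arzel\`a--Ascoli, every bounded subset of $D_L^*$ is relatively compact for uniform convergence on the compact set $D$. Hence every bounded operator on $D_L^*$, the identity included, is ``compact'' in that sense, and nothing about $T$ is used.

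Second, no extra hypothesis on $T$ can restore norm compactness in the regime the paper cares about. Imposed on all of $D$, your little-Lipschitz condition already forces $T$ to be constant on the convex set $D$: subdivide $[x,y]$ into pieces shorter than $\delta$ and sum. Since $T(0)=0$, this gives $T\equiv0$.

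More generally, $T_L^*$ is non-compact for every nonzero $T\in\mathscr{L}_0(D)$. A sketch:
\begin{itemize}
\item Pick $x_1$ with $Tx_1\neq0$ and let $g(t)=\langle Tx_1,T(tx_1)\rangle$. This $g$ is Lipschitz with $g(1)\neq g(0)$, so $g'(t_0)\neq0$ at some point of differentiability $t_0\in(0,1)$.
\item Put $f_k(z)=\phi_k(\langle Tx_1,z\rangle)$, with $\phi_k(s)=\int_0^s r_k$ and $r_k(u)=\operatorname{sign}\sin(2^k\pi u)$. Then $f_k(0)=0$ and $L(f_k)\le\|Tx_1\|$.
\item For $m>k$, the function $\phi_k-\phi_m$ increases by $2^{-k}$ across each interval of length $2^{-k}$ on which $r_k\equiv1$.
\item Choose such an interval within $2^{1-k}$ of $g(t_0)$, and points $t_1,t_2$ near $t_0$ at which $g$ attains its endpoints. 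Differentiability at $t_0$ gives $|t_1-t_2|\lesssim 2^{-k}/|g'(t_0)|$.
\item Hence $L(T_L^*f_k-T_L^*f_m)\ge|g'(t_0)|/(2\|x_1\|)$ for all large $k<m$, so $\{T_L^*f_k\}$ has no Cauchy subsequence.
\end{itemize}

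So $T_L^*$ is never compact when $\mathrm{Lip}(T)=1$. The right conclusion is that this proposition, and the Riesz--Schauder step in Theorem~\ref{thlip} resting on it, must be abandoned rather than repaired by an additional assumption on $T$.
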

\begin{proof}
	We will show that for any bounded sequence $\{f_n\} \subset D_L^*$, $\{T_L^* f_n\} $ have a convergent subsequence.
	Since $\|f_n\| \leq M_1 $, then there exists a convergent subsequence $\{f_{n_k}\}_{k=1}^\infty$, such that $\lim\limits_{k\to \infty} f_{n_k} = f_0$. That is $\forall \varepsilon >0$, $\exists K_1$, when $k \geq K_1$, we have 
	\[\|f_{n_k} - f_0\| \leq \frac{\varepsilon}{2M_2}.\]
	
	Since $D(T) \subset C$ a bounded closed subset of $\mathbb{R}^n$, there exists a convergent subsequence $\{x_{k}\}_{k=1}^\infty$, such that $\lim\limits_{k \to \infty} x_k = x_0$. Then 
	\[\|Tx_k-Tx_0\| \leq L(T) \|x_k - x\| \to 0,  \quad k\to \infty,\]
which implies $\mathcal{R}(T) \subset D$ is a bounded closed subset of $\mathbb{R}^n$. Since the dimension $n$ is finite, then $\mathcal{R}(T) \cap D$ is a compact set. For any \(\varepsilon > 0\), there exists a finite set \(S = \{s_1, s_2, \dots, s_m\}, s_i \in D\) such that
\[
\mathcal{R}(T) \cap D \subseteq \bigcup_{i=1}^m B(s_i, \frac{\varepsilon}{4M_1}) := \mathcal{B},
\]
where \(B(x_i, \frac{\varepsilon}{4M_1})\) represents an open ball centered at \(s_i\) with radius \(\frac{\varepsilon}{4M_1}\). 
Thus we have $\mathrm{dist}(Tx,\mathcal{B}) \leq \varepsilon, \forall x \in C$, i.e.,
\[
\inf \left\{ \left \|T\left(\frac{x}{\|x\|}\right)-  y' \right\| \large| y' \in \mathcal{M} \right\} \leq \frac{\varepsilon}{4M_1},  \forall x \in C.
\]
In other word, we have 
\[
\inf \left\{ \left \|Tx-  y \right\| \large| y \in \mathcal{M} \right\} \leq \frac{\varepsilon}{4M_1} \|x\|,  \forall x \in C,
\]
where $y = y'\|x\|$. Then $\forall x \in C$, there exists $y \in \mathcal{M}, \|y\| \leq M_2 \|x\|$, such that $\left \|Tx-  y \right\| \leq \frac{\varepsilon}{4M_1}\|x\|$.
$\forall \varepsilon >0, \forall x \in C$, when $k \geq K$, we have
\begin{align*}
	& |(T_L^* f_{n_{k}})(x) - (T_L^* f_{0}) (x)| \\	
	 =& |f_{n_{k}}(Tx) - f_{0}(Tx)| \\
 = 	&|f_{n_{k}}(Tx) - f_{n_{k}}(y) + f_{n_{k}}(y) - f_{0}(y) + f_{0}(y) - f_{0}(Tx)| \\
	 \leq & |f_{n_{k}}(Tx) - f_{n_{k}}(y)| + |f_{n_{k}}(y) - f_{0}(y)| + |f_{0}(y) - f_{0}(Tx)| \\
	 \leq &  2 M \|Tx - y\| + |f_{n_{k}}(y) - f_{n_{0}}(y)|, 	 \\
	 \leq &  2 M_1 \frac{\varepsilon}{4M_1} \|x\| + \|f_{n_{k}} - f_{n_{0}}\|\cdot\|y\| \\
	 \leq & \frac{\varepsilon \|x\|}{2}+ \frac{\varepsilon}{2M_2}  \cdot M_2 \|x\|= \varepsilon \|x\|.
\end{align*}
This implies that $\lim\limits_{k \to \infty} (T_L^* f_{n_{k}} -T_L^* f_0) = 0, \forall x \in C$, i.e., $\{T_L^* f_{n_{k}}\}$ is convergence. And thus $T_L^*$ is a compact operator.

\end{proof}

\subsection{Necessary and Sufficient Conditions for Existence of the Limit Architecture } \label{b3}

\subsubsection{The Fixed Lipschitz Operator with $\mathrm{Lip}(T)\leq 1$} \label{b31}

The following theorem demonstrates that a nonlinear Lipschitz operator $T$ has fixed points when its Lip number satisfies $\mathrm{Lip}(T) \leq 1$.

\begin{theorem} \label{thlip}
	Let \( T \in \mathscr{L}_0(D) \), if $T$ satisfies quasi-asymptotic regularity property. Under the condition of Theorem \ref{th12}, if $\mathrm{Lip}(T) \leq 1$, then $T^nx \to p \in \mathrm{Fix}(T) \neq \varnothing$.
\end{theorem}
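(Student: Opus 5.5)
The plan has two stages. First I show that $\mathrm{Fix}(T)\neq\varnothing$ by approximating $T$ with contractions. Then I use the quasi-asymptotic regularity inequality (\ref{s}) along the orbit $x_n=T^nx$ to get convergence to a fixed point. The basic tool is the renorming from the proof of Proposition \ref{prop1}(iii). For every $\varepsilon>0$ there is a metric $d_\varepsilon$, strongly equivalent to $d$, with $L_\varepsilon(T)\le \mathrm{Lip}(T)+\varepsilon\le 1+\varepsilon$. Since $\mathrm{Lip}(T)$ does not depend on the strongly equivalent metric, nothing in the hypothesis is lost by this change.

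\emph{Step 1 (existence of fixed points).} Since $0\in D$ and $D$ is convex, $T_t:=tT$ maps $D$ into $D$ for $t\in(0,1)$. I do not try to compute $\mathrm{Lip}(tT)$ directly, because $(tT)^n\neq t^nT^n$ for nonlinear $T$. Instead I work in $d_\varepsilon$ with $\varepsilon$ chosen so that $t(1+\varepsilon)<1$; here I take $d_\varepsilon$ to be a norm, or at least positively homogeneous. Then $T_t$ is a strict contraction on the complete space $(D,d_\varepsilon)$, and Banach's principle gives a unique $x_t$ with $x_t=tTx_t$. Consequently
\[
\|x_t-Tx_t\|=(1-t)\|Tx_t\|\le (1-t)\,\mathrm{diam}(D)\to0 \quad (t\to1^-).
\]
In the finite-dimensional setting $D\subset\mathbb{R}^n$ used for architectures, a subsequence $x_{t_k}\to p$ exists, and continuity of $T$ gives $Tp=p$. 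In a general uniformly convex $X$, I take a weakly convergent subsequence, using boundedness and reflexivity, and invoke the demiclosedness principle of Theorem \ref{th12}. That requires $T$ to be asymptotically nonexpansive. I would argue this from $L_\varepsilon(T^n)\le(1+\varepsilon)^n$ together with the equivalence constants, but this is the delicate point.

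\emph{Step 2 (Fejér monotonicity and asymptotic regularity).} Fix $p\in\mathrm{Fix}(T)$ and set $x_n=T^nx$. By (\ref{s}),
\[
d(x_{n+1},p)\le d(x_n,p)-\psi\bigl(d(x_n,x_{n+1})\bigr).
\]
Hence $d(x_n,p)$ is nonincreasing, so $\lim_n d(x_n,p)$ exists for every fixed point $p$. Telescoping gives $\sum_n\psi(d(x_n,Tx_n))\le d(x_0,p)<\infty$. Since $\psi$ is strictly increasing with $\psi(0)=0$, it follows that $d(x_n,Tx_n)\to0$.

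\emph{Step 3 (identification of the limit).} Let $q$ be a cluster point of $(x_n)$; it exists because $D$ is bounded, compact in the finite-dimensional case and weakly in general. As in Step 1, $d(x_{n_k},Tx_{n_k})\to0$ combined with continuity (or demiclosedness) gives $q\in\mathrm{Fix}(T)$. Applying Step 2 with $p=q$, $\lim_n d(x_n,q)$ exists and equals $0$ along the subsequence, so $x_n\to q$.

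In infinite dimensions only weak cluster points are available. Uniqueness of the weak limit then follows from Lemma \ref{eqnonexpansive} with $S_n\equiv T$: two weak cluster points $q_1,q_2\in\mathrm{Fix}(T)$ satisfy $\langle q_1-q_2,J(q_1-q_2)\rangle=0$, hence $q_1=q_2$. Upgrading this to strong convergence would need extra structure, such as compactness of $D$, which holds in the finite-dimensional setting.

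\emph{Main obstacle.} I expect the main obstacle to be transferring the hypothesis $\mathrm{Lip}(T)\le1$, which is only asymptotic and holds only up to renorming, into the genuine non-expansiveness needed by the contraction argument and by Theorem \ref{th12}. A related difficulty is that (\ref{s}) is stated in the original metric $d$ and not in $d_\varepsilon$. In Step 1 I would first try to make the renormed $d_\varepsilon$ a norm by building it from norms of the iterates, and otherwise restrict to $D\subset\mathbb{R}^n$, where compactness makes demiclosedness automatic.
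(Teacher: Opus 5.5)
Your Steps 2 and 3 are sound. In the finite-dimensional setting the paper actually uses (its compactness argument for $T_L^*$ takes $D$ to be a bounded closed subset of $\mathbb{R}^n$), they already give a complete proof once you discard Step 1.

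\textbf{Step 1 is unnecessary.} $T\in\mathscr{L}_0(D)$ means $T(0)=0$, so $0\in\mathrm{Fix}(T)$ and you may take $p=0$ in Step 2. In $\mathbb{R}^n$, Brouwer's theorem would also give a fixed point.

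\textbf{Step 1 would also not go through as written.} The strongly equivalent metric from the proof of Proposition \ref{prop1}(iii) is $d^*(x,y)=\sum_k c^{k-1}d(T^{n-k}x,T^{n-k}y)$. This is not positively homogeneous when $T$ is nonlinear, so $tT$ need not be a $d^*$-contraction. In general no equivalent \emph{norm} with $L(T)\le 1+\varepsilon$ exists: on $\mathbb{R}$ every norm is a multiple of $|\cdot|$, while $\mathrm{Lip}(T)\le 1<L(T)$ is possible.

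With $p=0$, the obstacle you anticipate disappears. Inequality (\ref{s}) gives Fej\'{e}r monotonicity and $d(x_n,Tx_n)\to 0$ in the original metric. Compactness gives a cluster point $q$, continuity gives $Tq=q$, and monotonicity of $d(x_n,q)$ upgrades subsequential convergence to convergence of the whole sequence. Note that $\mathrm{Lip}(T)\le 1$ is then never used.

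\textbf{Comparison with the paper.} The paper works through the Lipschitz dual, using $\rho(T_L^*)=\mathrm{Lip}(T)=1$. It obtains asymptotic regularity of $T$ exactly as in your Step 2, then transfers it to norm asymptotic regularity of $T_L^*$ via compactness of $T_L^*$. This forces $\sigma(T_L^*)\cap\mathbb{T}=\{1\}$, and the Riesz decomposition gives $f\circ T^n\to P_\sigma f$. Convergence of $T^n x$ is then recovered through the lower bound (\ref{eqxx}), and the limit is identified as a fixed point by demiclosedness (Theorem \ref{th12}).

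That route ties the result to the spectral reading of $\mathrm{Lip}(T)$ and to the critical case $\mathrm{Lip}(T)=1$, but it needs considerably more:
\begin{itemize}
\item it uses $L(T)\le 1$ rather than just $\mathrm{Lip}(T)\le 1$;
\item it needs compactness of $T_L^*$;
\item it needs (\ref{eqxx}), which does not follow from $f$ being Lipschitz, since that only gives an upper bound.
\end{itemize}
Your argument is elementary and rigorous in $\mathbb{R}^n$. It also shows that the convergence is driven by quasi-asymptotic regularity, not by the spectral condition.

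\textbf{Infinite dimensions.} Your caution here is justified.
\begin{itemize}
\item Demiclosedness has to come from the asymptotic nonexpansiveness built into the hypotheses of Theorem \ref{th12}. It cannot be extracted from $\mathrm{Lip}(T)\le 1$, which allows, for example, $L(T^n)=n$.
\item Lemma \ref{eqnonexpansive} requires $T$ to be nonexpansive in the original norm and the norm to be Fr\'{e}chet differentiable. Neither is assumed.
\item Norm convergence cannot be proved by any method. The Genel--Lindenstrauss example gives a firmly nonexpansive self-map of a bounded closed convex subset of $\ell^2$ (translate a fixed point to $0$) that satisfies every hypothesis, with (\ref{s}) holding for $\psi(t)=t^2/(2\,\mathrm{diam}\,D)$. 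Yet its iterates do not converge in norm.
\end{itemize}
So the strong-convergence conclusion genuinely needs compactness, for instance $D\subset\mathbb{R}^n$.
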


\begin{proof}
If $\mathrm{Lip}(T)< 1$, $T$ is a contraction operator, which has a unique fixed point. 

We just consider $\mathrm{Lip}(T)=1$.  Since \( T_L^* \) is a bounded linear operator on \( D_L^* \), then $\sup\limits_{\lambda \in \sigma(T_L^*)} |\lambda| = \rho(T_L^*) = \mathrm{Lip}(T) = 1$, where $\sigma(T_L^*)$ is the spectrum set of linear operator $T_L^*$. In the following, we will show that $\sigma(T_L^*) \cap \mathbb{T} = \{1\}$, where $\mathbb{T} = \{\lambda \in \mathbb{C}: |\lambda| = 1\}$. Since $T$ is quasi-asymptotic regular, we have
\begin{align*}
d(Tx,p) \leq d(x,p)- \psi(d(x,Tx)), \forall x\in D, p \in \mathrm{Fix}(T).
\end{align*}
Let $x_n = T^n x_0$, we have
\begin{align*}
	d(T^{n+1}x_0,p) \leq d(T^nx_0,p)- \psi(d(T^nx_0,T^{n+1}x_0)), \forall x_0\in D, p \in \mathrm{Fix}(T).
\end{align*}
And thus 
\begin{align*}
\psi(d(T^nx_0,T^{n+1}x_0)) \leq d(T^nx_0,p)- d(T^{n+1}x_0,p).
\end{align*}
Taking the sum over $n=0,1,\cdots, N$, yields	
\begin{align*}
\sum_{n=0}^{N}\psi(d(T^nx_0,T^{n+1}x_0)) 
	& \le
	\sum_{n=0}^{N}
	\left(
d(T^nx_0,p)- d(T^{n+1}x_0,p) 
	\right) \\
	& = d(x_0,p)-d(T^{N+1}x_0,p) \leq d(x_0,p).
\end{align*}
Let $N \to \infty$, one has 
\begin{align*}
	\sum_{n=0}^{\infty}\psi(d(T^nx_0,T^{n+1}x_0)) <\infty.
\end{align*}
This says that 
\[
\psi(d(T^nx_0,T^{n+1}x_0)) \to 0, \forall x_0 \in D.
\]
Since $\psi$ is a strictly increasing function, we can obtain
\[
d(T^nx_0,T^{n+1}x_0)\to 0,\forall x_0 \in D
\]
i.e., $T$ is asymptotic regular.
Since $T$ is a non-expansive operator, we have $L(T^n) \leq (L(T))^n \leq 1$, which means that
\begin{align*}
	\sup_{n\neq 1} \|(T_L^n)^n\| \leq 1,
\end{align*}
this says that $\{T_L^n\}_{n\geq 0}$ are bounded.

For any $f\in D_L^*$, let
\[
h_n=(T_L^*)^{n+1}f-(T_L^*)^n f= (T_L^*)^{n}(T_L^*-I) f.
\]
Since
\[
(T_L^*)^{n} f=f\circ T^n,
\]
we have
\[
h_n=f\circ T^{n+1}-f\circ T^n.
\]
For any fixed $x\in D$, it follows that
\[
h_n(x)=f(T^{n+1}x)-f(T^n x).
\]
Since $f$ is Lipschitz continuous, we obtain
\[
|h_n(x)|
=
|f(T^{n+1}x)-f(T^n x)|
\leq L(f)\|T^{n+1}x-T^n x\|.
\]
By the asymptotic regularity of $T$, i.e.,
\[
\|T^{n+1}x-T^n x\|\to 0.
\]
we have
\[
h_n(x)\to 0,\qquad \forall x\in D.
\]
That is,
\(
h_n\to 0
\)
in the pointwise sense.

Next, we show that $\{T_L^*\}_{n\geq 0}$ is relatively compact in $D_L^*$.

Observing that
\[
h_n=(T_L^*)^{n+1}f-(T_L^*)^n f= (T_L^*)^{n}(T_L^*-I) f,
\]
let
\[
g=(T_L^*-I)f,
\]
then
\[
h_n=(T_L^*)^{n} g.
\]
When $n\geq 1$, we have
\[
h_n=T_L^*((T_L^*)^{n-1}g).
\]
Since $\{T_L^n\}_{n\geq 0}$ is bounded, there exists a constant $M>0$ such that
\[
\|(T_L^*)^{n-1}g\|\leq M\|g\|,\qquad n\geq 1.
\]
Therefore, the set
\[
\{(T_L^*)^{n-1}g:n\geq 1\}
\]
is bounded in $D_L^*$.

Moreover, since $T_L^*$ is a compact operator, it maps bounded sets into relatively compact sets. Hence
\(
\{T_L^*((T_L^*)^{n-1}g):n\geq 1\}
\)
is relatively compact in $D_L^*$.
Equivalently,
\(
\{h_n:n\geq 1\}
\)
is relatively compact in $D_L^*$.

We now use relative compactness together with pointwise convergence to derive norm convergence.

Let $\{h_{n_k}\}$ be an arbitrary subsequence of $\{h_n\}$. Since $\{h_n\}$ is relatively compact, there exist a further subsequence $\{h_{n_{k_j}}\}$ and some $h\in D_L^*$ such that
\[
\|h_{n_{k_j}}-h\|_{D_L^*}\to 0.
\]

Since the norm on $D_L^*$ is the Lipschitz norm, namely
\[
\|u\|_{D_L^*}=L(u),
\]
and $u(0)=0$, norm convergence implies pointwise convergence. Indeed, for any $x\in D$,
\[
|h_{n_{k_j}}(x)-h(x)|
\leq L(h_{n_{k_j}}-h)\|x-0\|
=
\|h_{n_{k_j}}-h\|_{D_L^*}\|x\|.
\]
Hence
\[
h_{n_{k_j}}(x)\to h(x).
\]

On the other hand, we have already proved that the whole sequence converges pointwise to $0$. Therefore, we have \(
h_{n_{k_j}}(x)\to 0.
\)
It follows that
\[
h(x)=0,\qquad \forall x\in D.
\]
Thus \(h =0 \).
This shows that every subsequence of $\{h_n\}$ has a further subsequence that converges to $0$ in the norm of $D_L^*$.
Consequently, we must have
\[
\|h_n\|_{D_L^*}\to 0.
\]
Otherwise, if this were not true, then there would exist $\varepsilon>0$ and a subsequence $\{h_{n_k}\}$ such that
\[
\|h_{n_k}\|_{D_L^*}\geq \varepsilon.
\]
However, by relative compactness, this subsequence has a further subsequence converging to $0$ in norm, which gives a contradiction.
Therefore,
\[
\|h_n\|_{D_L^*}\to 0.
\]
That is,
\[
\|(T_L^*)^{n+1}f-(T_L^*)^n f\|_{D_L^*}\to 0.
\]
Since $f\in D_L^*$ is arbitrary, it follows that
\(T_L^*\)
is asymptotically regular.

Now let $\lambda$ be an eigenvalue of $A$ with $|\lambda|=1$. Suppose that
\[
T_L^* f=\lambda f,\qquad f\neq 0.
\]
Then
\[
(T_L^*)^n f=\lambda^n f.
\]
Hence
\[
(T_L^*)^{n+1}f-(T_L^*)^n f
=
\lambda^{n+1}f-\lambda^n f
=
\lambda^n(\lambda-1)f.
\]
Taking norms, we obtain
\[
\|(T_L^*)^{n+1}f-(T_L^*)^n f\|
=
|\lambda|^n|\lambda-1|\|f\|.
\]
Since $|\lambda|=1$, it follows that
\[
\|(T_L^*)^{n+1}f-(T_L^*)^n f\|
=
|\lambda-1|\|f\|.
\]
On the other hand, by the asymptotic regularity of $T_L^*$,
\[
\|(T_L^*)^{n+1}f-(T_L^*)^n f\|\to 0.
\]
Therefore,
\[
|\lambda-1|\|f\|=0.
\]
Since $f\neq 0$, we must have \(\lambda=1\). Therefore, we have
$\sigma(T_L^*) \cap \mathbb{T} = \{1\}$, where $\mathbb{T} = \{\lambda \in \mathbb{C}: |\lambda| = 1\}$.

%Thus, we obtain the following conclusion:
%\[
%\boxed{
%	\text{If } A \text{ is asymptotically regular, then the only eigenvalue of } A
%	\text{ on the unit circle is } 1.
%}
%\]
%
%If, in addition, $A$ is compact, then every nonzero spectral value of $A$
%is an eigenvalue. Hence, if
%\[
%\lambda\in\sigma(A),\qquad |\lambda|=1,
%\]
%then $\lambda\neq 0$, and thus $\lambda$ is an eigenvalue of $A$. By the
%argument above, we obtain
%\[
%\lambda=1.
%\]
%Consequently,
%\[
%\sigma(A)\cap \mathbb T\subset \{1\},
%\]
%where
%\[
%\mathbb T=\{\lambda\in\mathbb C:|\lambda|=1\}.
%\]
%

%Observing that $\forall f \in D_L^*, x \in D$, we have
%\begin{align*}
%	|((T_L^*)^{n+1}f)(x) - ((T_L^*)^{n}f)(x) |  = |f(T^{n+1}x) - f(T^n(x))|   =	 L(f) \|T^{n+1}x - T^{n}x\| \to 0.
%\end{align*} 
%This yields $(T_L^*)^{n}f \to 0, \forall x \in D$. 
%
%
%Based on this, we have
%\begin{align*}
%	|(T_L^n)^{n+1} f - (T_L^n)^{n} f| = (T_L^n)^{n} (T_L^n-I) f 
%\end{align*}

%If $T$ is a project operator, i.e., $T^2=T$, then one has
%\begin{align*}
%	((T_L^*)^2 f)(x) & = 	T_L^*(T_L^*f)(x) \\
%	& =  T_L^*f(Tx)  \\ 
%		& = f(T(Tx)) \\
%		& = f(T^2x) \\
%		& = f(Tx) \\
%		& = T^*_L f(x),\quad \forall f \in D_L^*, \, x \in D.
%\end{align*}
%This yields
%\begin{align*}
%(T_L^*)^2 = T^*_L,
%\end{align*}
%which shows that $\sigma(T_L^*) \cap \mathbb{T} = \{1\}$, where $\mathbb{T} = \{\lambda \in \mathbb{C}: |\lambda| = 1\}$.

Let $\sigma = {1}, \bar{\sigma} = \sigma(T_L^*) - \sigma$, then we have the following decomposition by the Lemma \ref{lemma5}
\[
D_L^* = \mathcal{M}_\sigma + \mathcal{M}_{\bar{\sigma}},
\]
where $\mathcal{M}_\sigma = \{f \in D_L^* | P_{\sigma} f = f\}, \mathcal{M}_{\bar{\sigma}} = \{f \in D_L^* | P_{\sigma} f = 0\}$, and $P_{\sigma}$ is the orthogonal projection onto the proper subspace corresponding to the part $\sigma$ of the spectrum defined in Lemma \ref{lemma5}. Meanwhile, the study of  \( T_L^* \) could be decomposed into 
\[
T_L^* = (T_L^*)_1 + (T_L^*)_2,  
\]
where
\[
(T_L^*)_1 = T_L^*|_{\mathcal{M}_\sigma},  (T_L^*)_2 = T_L^*|_{\mathcal{M}_{\bar{\sigma}}}.
\]
Then, $\forall f \in D_L^*$, we have 
\begin{align*}
f &= f_1 + f_2 = P_{\sigma}f +(I-P_{\sigma}) f, \\
 T_L^*f &= T_L^*f_1 + T_L^*f_2 = (T_L^*)_1f+(T_L^*)_2f.
\end{align*}
Furthermore, one has 
\[
(T_L^*)^2f = T_L^*(T_L^*f) = T_L^*(T_L^*f_1 + T_L^*f_2) = (T_L^*)^2f_1 + (T_L^*)^2f_2 =  [(T_L^*)^2]_1f+[(T_L^*)^2]_2f.
\]
Similarly one has
\[
(T_L^*)^nf =  [(T_L^*)^n]_1f+[(T_L^*)^n]_2f.
\]
Since $\rho((T_L^*)_1) =1, \rho((T_L^*)_2) <1$, we have
\[
(T_L^*)^nf \to (T_L^*)_1f + 0 = P_{\sigma}f,  n \to \infty.
\]
Recall that $ f(T^n) = (T_L^*)^nf \to P_{\sigma}f, n \to \infty$, which means $f(T^n)$ converges as $n \to \infty$.

On the other hand, $f$ is Lipschitz, then there exist some constant $\mathbb{K}$ and $n_1\in \mathbb{N}^+$, such that
\begin{align} \label{eqxx}
|f(T^{n_1+1}(x)) - f(T^{n_1}(x))| \geq \mathbb{K} \|T^{n_1+1}(x) - T^{n_1}(x)\|.
\end{align}
In the following, we will show that $\{T^n x\}$ is a Cauchy sequence for each $x \in D$. Indeed, let $k,l \in \mathbb{N}$ with $k > l$. Then $\forall \varepsilon>0$, $\exists \mathbb{N}$, such that when $k, l>\mathbb{N}$, we have 
\[
\| T^kx - T^lx \|\leq  \frac{1}{\mathbb{K}}|f(T^{k}(x)) - f(T^{l}(x))| < \frac{\varepsilon}{\mathbb{K}}, \forall x \in D.
\]
Therefore, we have that $\{T^n x\}$ is a Cauchy sequence and hence it converges strongly to some point of $D$. 
Let $x_n = T^{n-1}x, n \geq 1$, then $x_{n+1} = Tx_n, x_1 \in D$. Since $\{x_n\}$ is bounded, we assume that $x_n \rightharpoonup x$.
Since $T^{n+1}x - T^n x \to 0$, we have $Tx_n - x_n \to 0$. By Theorem \ref{th12}, we have $Tx - x \to 0$, i.e., $x \in \mathrm{Fix}(T)$.
\end{proof}

Based on the above theorem, we can obtain the following necessary and sufficient conditions for the existence of limit architecture.
\begin{theorem} \label{th11}
	Suppose that the basic block $T_i=T, i \in [K]$ of foundation models satisfies self-mapping condition and quasi-asymptotic regularity property. Then the necessary and sufficient conditions for the existence of limit architecture $f^*_{\mathcal{W}}$ is $\mathrm{Lip}(T) \leq 1$.
	\end{theorem}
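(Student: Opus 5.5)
The proof splits into sufficiency and necessity. The limit architecture is $f^*=\lim_{L\to\infty}T^L f_0$, so we need convergence of the Picard iterates $T^n f_0$ for every initial architecture $f_0\in D$.

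\textbf{Sufficiency.} Assume $\mathrm{Lip}(T)\le 1$. If $\mathrm{Lip}(T)<1$, then by the renorming construction in the proof of Proposition~\ref{prop1}(iii) there is a strongly equivalent metric $d^*$ under which $L^*(T)<1$. The Banach contraction principle then gives $T^nf_0\to x^*$, the unique fixed point. If $\mathrm{Lip}(T)=1$, we invoke Theorem~\ref{thlip} directly. Its hypotheses are that $T\in\mathscr{L}_0(D)$, that $D$ is closed, bounded and convex in a uniformly convex space, and that $T$ is quasi-asymptotically regular. These are exactly the self-mapping and quasi-asymptotic regularity assumptions, after translating so that $0\in D$ and $T(0)=0$ when $0\in\mathrm{Fix}(T)$; otherwise we apply Theorem~\ref{thlip} after the shift used in the Lipschitz dual construction. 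The theorem yields $T^nf_0\to p\in\mathrm{Fix}(T)$, so $f^*=p$ exists. As observed after Eq.~(\ref{eqlimit}), this limit is a fixed point of $T$.

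\textbf{Necessity.} Suppose $T^nx$ converges for every $x\in D$. We want $\mathrm{Lip}(T)\le 1$.
\begin{itemize}
\item Since $D$ is bounded, $\sup_n\|T^nx-T^ny\|\le \mathrm{diam}(D)$, so the orbits stay bounded.
\item The key is to turn this into a bound on $L(T^n)$. Pointwise convergence plus boundedness of $D$ alone do not bound $L(T^n)$, so we pass to the Lipschitz dual. By Proposition~\ref{props}, $\|(T_L^*)^n\|=L(T^n)$ and $\rho(T_L^*)=\mathrm{Lip}(T)$.
\item For each $f\in D_L^*$, the function $(T_L^*)^nf=f\circ T^n$ converges pointwise, and $|f\circ T^n(x)|\le L(f)\,\mathrm{diam}(D)$.
\item Using the compactness of $T_L^*$ (the proposition preceding this subsection), we upgrade pointwise convergence to norm boundedness of the orbit $\{(T_L^*)^nf\}$ for each $f$, arguing by relative compactness as in the proof of Theorem~\ref{thlip}.
\item The uniform boundedness principle then gives $\sup_n\|(T_L^*)^n\|<\infty$. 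Hence $\rho(T_L^*)=\lim_n\|(T_L^*)^n\|^{1/n}\le 1$, i.e.\ $\mathrm{Lip}(T)\le1$.
\end{itemize}

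For a more elementary alternative, argue by contradiction. If $\mathrm{Lip}(T)>1$, then by Proposition~\ref{prop1}(iii) there is $x_0$ with $\rho(T'(x_0))>1$. Linearizing near the limit fixed point $p=f^*$, which exists by the convergence assumption, gives $\rho(T'(p))\le 1$ by stability of convergent fixed points under all nearby initializations. To transfer this to $\sup_x\rho(T'(x))$, we use Proposition~\ref{prop1}(ii), which identifies $\mathrm{Lip}(T)$ with $\sup_{p\in\mathrm{Fix}(T)}\rho(T'(p))$. Strictly speaking this requires already knowing $\mathrm{Lip}(T)\le1$, so the dual-operator route is the preferred one.

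\textbf{Main obstacle.} The hard step is necessity, specifically passing from orbit convergence of $T$ to the operator-norm bound $\sup_n\|(T_L^*)^n\|<\infty$. Pointwise convergence of $f\circ T^n$ does not directly control Lipschitz seminorms. The first thing to try is to combine the compactness of $T_L^*$ with the Banach--Steinhaus theorem applied on $D_L^*$.
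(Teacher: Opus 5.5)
\textbf{Sufficiency.} This half is the paper's own argument: Theorem~\ref{thlip} when $\mathrm{Lip}(T)=1$, and a contraction after renorming when $\mathrm{Lip}(T)<1$.

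\textbf{Necessity.} This half has a gap, and it cannot be closed. The failing step is the passage from pointwise convergence of $f\circ T^n$ to $\sup_n L(f\circ T^n)<\infty$. Compactness of $T_L^*$ only maps \emph{bounded} sets to relatively compact ones, so you need $\{(T_L^*)^{n-1}f\}$ bounded beforehand. In the proof of Theorem~\ref{thlip} that boundedness came from $L(T^n)\le 1$, which is your conclusion. Without it, Banach--Steinhaus has no pointwise bound to act on. The compactness proposition is also false: $T=I$ meets every hypothesis, yet $T_L^*$ is the identity on the infinite-dimensional space $D_L^*$.

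Worse, under your reading of ``the limit architecture exists'' (convergence of $T^nf_0$ for each $f_0$), necessity is false. On $D=[-1,1]$ let $g$ be an odd $M$-Lipschitz zigzag with $0\le g(x)\le x$ on $[0,1]$ and $|g'|=M$ a.e. (for example $M/2$ teeth of height $2^{-k-1}$ on each $[2^{-k-1},2^{-k}]$, with $M$ even). Put $T(x)=x/2+g(x)/4$. Then:
\begin{itemize}
\item $T(0)=0$, $\mathrm{Fix}(T)=\{0\}$, and $|Tx|\le\frac34|x|$, so $T$ is a self-map and $T^n\to0$ even uniformly.
\item Since $0\le Tx\le x$ for $x\ge0$, we get $|Tx|=|x|-|x-Tx|$, so $T$ is quasi-asymptotically regular with $\psi(t)=t$.
\item $T$ is piecewise affine with $|T'|\ge M/4-1/2$ a.e. Off a countable set, $(T^n)'(x)=\prod_{i<n}T'(T^ix)$, which gives $L(T^n)\ge(M/4-1/2)^n$. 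Hence $\mathrm{Lip}(T)>1$ once $M>6$.
\end{itemize}
The underlying reason is that in the paper's finite-dimensional setting, quasi-asymptotic regularity alone already yields convergence. Brouwer gives a fixed point, and the orbit is Fej\'er monotone and asymptotically regular. Any cluster point $q$ then satisfies $Tq=q$ by continuity, so $d(T^nf_0,q)\downarrow0$. Thus $\mathrm{Lip}(T)\le1$ is not what drives convergence, and it cannot be necessary for it.

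\textbf{The paper's necessity argument.} It is the single line ``$L(T^N)\ge C^N\to\infty$, contradiction''. That is a contradiction only if the limit is taken in the Lipschitz norm $L(\cdot)$ of $\mathscr{L}_0(D)$. There, convergence forces $\sup_N L(T^N)<\infty$, and $\mathrm{Lip}(T)\le1$ follows at once.

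Under that reading, however, sufficiency breaks, because Theorem~\ref{thlip} delivers only pointwise convergence. Take $T(x)=x-x^3/3$ on $[-1,1]$. It has $\mathrm{Lip}(T)=1$, is quasi-asymptotically regular with $\psi(t)=t$, and $T^n\to0$ pointwise. Yet $L(T^n-0)\ge|(T^n)'(0)|=1$, so $T^n$ has no $L(\cdot)$-limit, since any such limit would have to equal the pointwise limit $0$.

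The statement therefore holds only if its two directions use different modes of convergence. Your observation that pointwise convergence does not control $L(T^n)$ is exactly right, and it is the fatal point, not a technical obstacle that Banach--Steinhaus can get around.
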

\begin{remark}
	If the limit architecture $f^*_{\mathcal{W}}$ of foundation models exists, it satisfies
	  	\[	f^*_{\mathcal{W}} = \lim\limits_{N \to \infty} T^N f_0,\]
	and the convergere result depends on the initial architecture $f_0$. This means different initial architectures tend to yield different properties of different limit architectures.
\end{remark}

\begin{proof}
	\textbf{Sufficiency.} It holds by Theorem \ref{thlip}.

	\textbf{Necessity.} Proof by contradiction. We assume $\mathrm{Lip}(T) =C>1$, then we have 
	\[[L(T^N)]^{\frac{1}{N}} \geq C,\]
	i.e., \[[L(T^N)] \geq C^N  \to \infty, \ \text{as} \ N \to \infty. \]
This makes a contradiction. We complete the proof.
	
\end{proof}

\subsubsection{Finite Lipschitz Operators with $\mathrm{Lip}(T_i) \leq 1, i=[r]$}

%\begin{theorem} \label{th112}
%	Suppose that the basic block $\{T_i\}$ of the foundation models satisfies self-mapping condition, and are chosen from finite lipschitz operators $\{T_1, \cdots, T_C \}$. Then the necessary and sufficient conditions for the existence of limit architecture $f^*_{\mathcal{W}}$ is $\mathrm{Lip}(T_i) \leq 1, i \in [C]$.
%\end{theorem}
%
%\begin{theorem}
%	Suppose that basic functional blocks $T_i=T, i=1,2,\cdots$ of the foundation models are chosen from $S_i, i \in [r]$ satisfying self-mapping condition, where $r$ is finite constant, and 	\(
%	\bigcap_{i=1}^r \mathrm{Fix}(S_i) \neq \emptyset.
%	\)
%	Then the necessary and sufficient conditions for the existence of limit architecture $f^*$ in Eq.(\ref{eqlimit}) is $\mathrm{Lip}(S_i) \leq 1, i \in [r]$.
%\end{theorem}

The Theorem \ref{th1b} could be rewritten as the following expression:

\begin{theorem}
	%Let $E$ be a uniformly convex Banach space which satisfies Opial's condition
	%or whose norm is Fr\'echet differentiable. 
	%Let $D$ be a nonempty closed convex
	%subset of $E$, and 
	
		Suppose that the basic functional blocks $T_i, i = 1,2,\ldots$ of a foundation model are selected from a finite collection of operators $\{S_i\}_{i=1}^r$, and \(
	\bigcap_{i=1}^r \mathrm{Fix}(S_i) \neq \varnothing
	\). Assume that each $S_i$ satisfies the self-mapping condition and quasi-asymptotic regularity property.
	Then the necessary and sufficient condition for existence of the limit architecture in Eq.~(\ref{eqlimit}) is $\mathrm{Lip}(S_i) \leq 1$ for all $i \in [r]$.

%	Let $\{T_1,T_2,\ldots,T_r\}$ be the basic functional blocks of the foundation models satisfying self-mapping condition, such that
%	\[
%\bigcap_{i=1}^r \mathrm{Fix}(T_i) \neq \emptyset.
%	\]
%	And each $T_i$ satisfies $\gamma$-strongly quasi-nonexpansive property, i.e., there exists $\gamma_i>0$, such that \(
%	\|T_ix-p\|^2\le \|x-p\|^2-\gamma_i\|x-T_ix\|^2, \forall x \in D, p \in \bigcap_{i=1}^r \mathrm{Fix}(T_i), i =1, 2, \cdots, r.\)
%
%%	Let $W_n (n=1,2,\cdots)$ be $W$-mappings generated by $T_1,T_2,\ldots,T_r$
%%	and $\alpha_{1_1},\alpha_{1_2},\cdots,\alpha_{1r}, \cdots, \alpha_{n_1},\alpha_{n_2},\cdots,\alpha_{nr},\cdots \in \{0,1\}$,
%%	\begin{equation}\label{eq:W-mapping}
%%		\begin{aligned}
%%			W_{1} &= T_1^{\alpha_{1_1}} T_2^{\alpha_{1_2}} \cdots T_r^{\alpha_{1_r}},\\
%%			W_{2} &= T_1^{\alpha_{2_1}} T_2^{\alpha_{2_2}} \cdots T_r^{\alpha_{2_r}},\\
%%			&\ \vdots \\
%%			W_n &= T_1^{\alpha_{n_1}} T_2^{\alpha_{n_2}} \cdots T_r^{\alpha_{n_r}}.
%%		\end{aligned}
%%	\end{equation}
%	Suppose $f_0\in D$, and $\{f_n\}$ is given by
%	\[
%	f_{n+1}=Uf_n,\quad n\ge 1.
%	\]
%	Then the necessary and sufficient conditions for the existence of limit architecture $f^*= \lim\limits_{n\to \infty} f_n$ is $\mathrm{Lip}(T_i) \leq 1, i \in [r]$.

\end{theorem}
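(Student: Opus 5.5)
The plan is to mirror the proof of Theorem \ref{th11}: sufficiency from a convergence argument for the orbit $f_{n+1}=T_{n+1}f_n$, and necessity by contradiction via the growth of $L(S_j^n)$.

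\textbf{Sufficiency.} Assume $\mathrm{Lip}(S_i)\le 1$ for all $i\in[r]$. First, for each $S_i$, I would use the renorming construction in the proof of Proposition \ref{prop1}(3) (the metric $d^*$) together with the quasi-asymptotic regularity to treat each $S_i$ as nonexpansive with respect to points of $F:=\bigcap_{i=1}^r\mathrm{Fix}(S_i)$. The difficulty is that the renormings for the different $S_i$ are different. Because $r$ is finite, I would take the maximum of the $r$ strongly equivalent metrics. Alternatively, and more simply, I would work only with the quasi-nonexpansive inequality $d(S_ix,p)\le d(x,p)-\psi_i(d(x,S_ix))$ for $p\in F$, which is all that is needed; its validity with respect to common fixed points is what the hypothesis $F\neq\varnothing$ buys.

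Next, I fix $p\in F$ and set $a_n=d(f_n,p)$. Then $a_{n+1}\le a_n-\psi(d(f_n,T_{n+1}f_n))$ with $\psi=\min_i\psi_i$, which is still strictly increasing with $\psi(0)=0$. So $a_n$ is nonincreasing and converges (Lemma \ref{lm1} with $b_n=c_n=0$), the orbit is bounded, and summing as in Theorem \ref{thlip} gives $\sum_n\psi(d(f_n,T_{n+1}f_n))<\infty$, hence $d(f_n,T_{n+1}f_n)\to 0$.

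To identify weak cluster points, I would use finiteness of $\{S_i\}$. Any $S_j$ that is used infinitely often yields a subsequence $f_{n_k}$ with $\|f_{n_k}-S_jf_{n_k}\|\to 0$. I will extract a weak limit $q$ and apply the demiclosedness principle (Theorem \ref{th12}) to $S_j$ to get $q\in\mathrm{Fix}(S_j)$. Operators used only finitely often are irrelevant to the tail. To conclude that $q\in F$ for all indices simultaneously, I would need the orbit to nearly fix every recurring $S_j$ along a common subsequence; here I would assume or arrange a quasi-periodic ordering, or else restrict $F$ to the recurring operators, which suffices for the limit.

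Uniqueness of the weak limit then follows from Lemma \ref{eqnonexpansive} (Reich): two weak cluster points $q_1,q_2\in F$ satisfy $\langle q_1-q_2,J(q_1-q_2)\rangle=0$, so $q_1=q_2$. Strong convergence is obtained by the Lipschitz-dual route of Theorem \ref{thlip}. Since $X=\mathbb R^n$ in the paper's setting, weak and strong convergence coincide, and this step is essentially free. Hence $f_n\to f^*\in F$ and the limit architecture exists.

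\textbf{Necessity.} Suppose $\mathrm{Lip}(S_j)=C>1$ for some $j$. The sequence may use $S_j$ infinitely often, for example $T_i=S_j$ for all large $i$. Then, as in Theorem \ref{th11}, $L(S_j^N)\ge C^N\to\infty$, and the products $\prod T_i$ cannot converge as operators. The statement quantifies over arbitrary selections, so such a selection must be admissible.

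\textbf{Main obstacle.} I expect the main obstacle to be the cluster-point step: showing that the weak limit lies in the common fixed set $F$ when the order of selection is arbitrary. My first attempt would be to exploit finiteness of $r$ and pass to a subsequence along which each recurring $S_j$ appears within bounded gaps, using $d(f_n,f_{n+1})\to 0$ to transfer $\|f_n-S_jf_n\|\to 0$ across the gap.
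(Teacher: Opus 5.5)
The cluster-point step you flag is a genuine gap, and neither of your proposed fixes closes it. Transferring $\|f_n-S_jf_n\|\to 0$ across gaps via $d(f_n,f_{n+1})\to 0$ requires infinitely many windows of bounded length that each contain all recurring blocks. For $r\ge 3$ an admissible selection can separate the occurrences of $S_1$ and $S_2$ by ever longer runs of $S_3$. Since you only know $\sum_n\psi(d(f_n,f_{n+1}))<\infty$, the displacement across such runs need not vanish.

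Assuming a quasi-periodic order changes the statement, although it is essentially what the paper does. Its sufficiency proof treats only $f_{n+1}=Uf_n$ for one fixed composite $U=S_1^{\alpha_1}\cdots S_r^{\alpha_r}$. It shows that $U$ is quasi-asymptotically regular with $\tilde\psi(t)=\psi(t/r)$, applies Theorem~\ref{thlip} to the single operator $U$, and then identifies $\mathrm{Fix}(U)=\bigcap_i\mathrm{Fix}(S_i)$.

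For arbitrary orders, the missing idea is a first-exit argument that uses the \emph{strict} Fej\'er decrease rather than asymptotic regularity.
\begin{itemize}
\item Let $R$ be the set of indices used infinitely often, $F_R=\bigcap_{j\in R}\mathrm{Fix}(S_j)\supseteq F$, and $n_0$ such that only blocks $S_j$ with $j\in R$ occur after $n_0$. Then $d(f_n,p)\downarrow c(p)$ for every $p\in F_R$.
\item Take $f_{n_k}\to x^*$ with $n_k\ge n_0$, and suppose some recurring block moves $x^*$. Let $m_k\ge n_k$ be the first index with $T_{m_k+1}x^*\neq x^*$; it exists because that block recurs.
\item The blocks applied in between fix $x^*$, so quasi-nonexpansiveness gives $d(f_{m_k},x^*)\le d(f_{n_k},x^*)\to 0$.
\item Pass to a subsequence on which $T_{m_k+1}=S_i$ for a single $i$, so that $i\in R$ and $S_ix^*\neq x^*$. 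Continuity gives $f_{m_k+1}\to S_ix^*$, hence $d(S_ix^*,p)=c(p)=d(x^*,p)$ for $p\in F_R$. This contradicts $d(S_ix^*,p)\le d(x^*,p)-\psi(d(x^*,S_ix^*))<d(x^*,p)$.
\end{itemize}
So $x^*\in F_R$, and Fej\'er monotonicity toward $x^*$ yields $f_n\to x^*$. Demiclosedness and Reich's lemma then become unnecessary; Reich's lemma in any case needs genuinely nonexpansive maps, not just the quasi-nonexpansive inequality you retained.

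Your necessity step has a separate defect. The bound $L(S_j^N)\ge C^N$ only excludes convergence of $\prod T_i$ in the Lipschitz seminorm $L(\cdot)$. Your sufficiency part proves, and $f^*=(\prod_i T_i)f_0$ refers to, convergence of the orbit for each $f_0$. Exponential growth of $L(S_j^N)$ is compatible with $S_j^Nf_0$ converging for every $f_0$.

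Note that the completed sufficiency argument above never uses $\mathrm{Lip}(S_i)\le 1$. It uses only quasi-asymptotic regularity, continuity, $F\neq\varnothing$ and compactness of $D$. So in the pointwise sense the Lip condition plays no role in existence, and your growth argument cannot yield necessity. In the $L(\cdot)$ sense, on the other hand, sufficiency fails: $Tx=x-x^2/2$ on $D=[0,1]$ satisfies $|Tx-0|=|x-0|-|x-Tx|$, $\mathrm{Lip}(T)=1$ and $T^nx\to 0$ for every $x$, yet $L(T^n)=1$ for all $n$.

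You need to fix one mode of convergence and use it in both directions. The paper's necessity argument, which infers convergence of $L(T_i^N)$ from the existence of the limit, has the same problem.
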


\begin{proof}
	\textbf{Sufficiency. }	
		Let $U$ be generated by $S_1,S_2,\ldots,S_r$ and $\alpha_{1},\alpha_{2},\cdots,\alpha_{r} \in \{0,1\}$, 
	\begin{align*}
		U= S_1^{\alpha_{1}} S_2^{\alpha_{2}} \cdots S_r^{\alpha_{r}}.
	\end{align*}	
	We let $S_k^0 = I, k=1,2,\cdots, r$.
		Suppose $f_0\in D$, and $\{f_n\}$ is given by
	\[
	f_{n+1}=Uf_n,\quad n\ge 1.
	\]	
	Let $f_0\in D$ and $p\in\bigcap_{i=1}^r \mathrm{Fix}(S_i)$. By the definition of $U$,
	we have $U\in \mathscr{L}_0(D)$, and
	\begin{align*}
	\mathrm{Lip}(U) \leq 1.
	\end{align*}
	Since each $T_i$ satisfies quasi-asymptotic regularity property, then there exists an increasing function
	\(
	\psi:\mathbb{R}_+\to\mathbb{R}_+,
	\psi(0)=0,
\)
	such that, for every \(i=1,\ldots,r\),
	\begin{equation}
		d(x,p)
		\leq
		d(T_i x,p)
		-
		\psi\bigl(d(T_i x,x)\bigr),
	\forall	x\in D,\quad p\in C .
		\label{eq:single-expansive1}
	\end{equation}
		Fix \(x\in D\) and \(p\in C\), and define
	\[
	x_0=x,
	x_k=S_k^{\alpha_k}x_{k-1},
	\quad k=1,\ldots,r,
	\]
	then \(	x_r=Ux.\)
	Since \(p\in\bigcap_{i=1}^r \mathrm{Fix}(S_i)\) , applying \eqref{eq:single-expansive1} to \(S_k^{\alpha_k}\) and \(x_{k-1}\) gives
	\[
	d(x_{k-1},p)
	\leq
	d(S_k^{\alpha_k}x_{k-1},p)
	-
	\psi\bigl(d(S_k^{\alpha_k}x_{k-1},x_{k-1})\bigr).
	\]
	Since \(S_kx_{k-1}=x_k\), this becomes
	\begin{equation}
		d(x_{k-1},p)
		\leq
		d(x_k,p)
		-
		\psi\bigl(d(x_k,x_{k-1})\bigr).
		\label{eq:step}
	\end{equation}
	Equivalently,
	\[
	d(x_k,p)-d(x_{k-1},p)
	\geq
	\psi\bigl(d(x_k,x_{k-1})\bigr).
	\]
		Summing this inequality over \(k=1,\ldots,r\), we obtain
	\begin{align}
		d(x_r,p)-d(x_0,p)
		&=
		\sum_{k=1}^{r}
		\bigl(d(x_k,p)-d(x_{k-1},p)\bigr)
		\nonumber\\
		&\geq
		\sum_{k=1}^{r}
		\psi\bigl(d(x_k,x_{k-1})\bigr).
		\label{eq:sum}
	\end{align}
	Hence,
	\begin{equation}
		d(x_0,p)
		\leq
		d(x_r,p)
		-
		\sum_{k=1}^{r}
		\psi\bigl(d(x_k,x_{k-1})\bigr).
		\label{eq:pre-final}
	\end{equation}
			It remains to bound the summation term from below in terms of \(d(x_r,x_0)\). By the triangle inequality,
	\[
	d(x_r,x_0)
	\leq
	\sum_{k=1}^{r} d(x_k,x_{k-1}).
	\]
	Therefore, there exists at least one index \(k_0\in\{1,\ldots,r\}\) such that
	\[
	d(x_{k_0},x_{k_0-1})
	\geq
	\frac{1}{r}d(x_r,x_0).
	\]
	Since \(\psi\) is increasing and nonnegative, we have
	\[
	\sum_{k=1}^{r}
	\psi\bigl(d(x_k,x_{k-1})\bigr)
	\geq
	\psi\bigl(d(x_{k_0},x_{k_0-1})\bigr)
	\geq
	\psi\left(\frac{1}{r}d(x_r,x_0)\right).
	\]
	Substituting this into \eqref{eq:pre-final} yields
	\[
	d(x_0,p)
	\leq
	d(x_r,p)
	-
	\psi\left(\frac{1}{r}d(x_r,x_0)\right).
	\]
	Using \(x_0=x\) and \(x_r=Ux\), we obtain
	\[
	d(x,p)
	\leq
	d(Ux,p)
	-
	\psi\left(\frac{1}{r}d(Ux,x)\right).
	\]
	Thus, by defining
	\[
	\widetilde{\psi}(t)
	=
	\psi\left(\frac{t}{r}\right),
	\]
	we obtain
	\[
	d(x,p)
	\leq
	d(Ux,p)
	-
	\widetilde{\psi}\bigl(d(Ux,x)\bigr).
	\]

	By Theorem \ref{thlip}, we have that $\{f_{n}\}$ converges strongly to some fixed point of $U$ on $D$. In the following, we need to prove that $\mathrm{Fix}(U) = \bigcap_{i=1}^r \mathrm{Fix}(T_i)$. 
	
	(1) $\bigcap_{i=1}^r \mathrm{Fix}(T_i) \subset  \mathrm{Fix}(U) $. If $p \in \bigcap_{i=1}^r \mathrm{Fix}(T_i)$, then we have $T_i p = p, i=1,2, \cdots, r$. By the definition of $U$, we have $Up = p$, i.e., $p \in \mathrm{Fix}(U)$. 
	
	(2) $\mathrm{Fix}(U) \subset \bigcap_{i=1}^r \mathrm{Fix}(T_i)$. 
	
	Let $z\in \mathrm{Fix}(U)$, and take $p \in \bigcap_{i=1}^r \mathrm{Fix}(T_i) $, then $\forall \alpha \in [0,1]$, we have
	\begin{align*}
		\|z-p \| & = \|\alpha Uz+(1-\alpha)z- \alpha T_1^{\alpha_{1}} T_2^{\alpha_{2}} \cdots T_r^{\alpha_{r}} z - (1-\alpha)p \| \\
		& = \|\alpha T_1^{\alpha_{1}} T_2^{\alpha_{2}} \cdots T_r^{\alpha_{r}} (z-p) + (1-\alpha)(z-p)\|  \\
%		& \leq \| T_1^{\alpha_{1}} T_2^{\alpha_{2}} \cdots T_{r-1}^{\alpha_{r-1}} \|  \cdot \|z-p\| [\alpha (\mathrm{Lip}(T_r^{\alpha_{r}})+\varepsilon) + (1-\alpha)]  \\
%			& \leq \| T_1^{\alpha_{1}} T_2^{\alpha_{2}} \cdots T_{r-1}^{\alpha_{r-1}} \|  \cdot \|z-p\| (1+\alpha\varepsilon)  \\
%				& \leq \| T_1^{\alpha_{1}} T_2^{\alpha_{2}} \cdots T_{r-1}^{\alpha_{r-1}} \|  \cdot \|z-p\| (1+\alpha\varepsilon) \\
	& \leq   \alpha(1+\varepsilon)^{\alpha_r } \|T_1^{\alpha_{1}} T_2^{\alpha_{2}} \cdots T_{r-1}^{\alpha_{r-1}} z-p\| +(1-\alpha) \|z-p\| \\
	& \leq   \alpha(1+\varepsilon)^{\alpha_{r-1} + \alpha_r } \|T_1^{\alpha_{1}} T_2^{\alpha_{2}} \cdots T_{r-2}^{\alpha_{r-2}} z-p\| +(1-\alpha) \|z-p\| \\
	& \leq \cdots \\
		& \leq   \alpha(1+\varepsilon)^{\sum_{i=2}^r \alpha_r } \|T_1^{\alpha_{1}}z-p\| +(1-\alpha) \|z-p\| \\
					& \leq   \alpha(1+\varepsilon)^{\sum_{i=1}^r \alpha_r } \|z-p\| +(1-\alpha) \|z-p\| \\
					& =  \|z-p\|,	\end{align*}
	where $L(T_i^{\alpha_{i}}) = \mathrm{Lip}(T_i^{\alpha_{i}})+\varepsilon \leq 1+ \varepsilon, \varepsilon \geq 0$ by the definition of the Lip number. Since $E$ is strictly convex, we have $T_1^{\alpha_{1}}z -p = z-p$, and thus $T_1z  = z$. By such method, we have $T_iz  = z, i=2,\cdots, r$. This implies $z \in \bigcap_{i=1}^r \mathrm{Fix}(T_i)$. Thus, we have $\mathrm{Fix}(U) \subset \bigcap_{i=1}^r \mathrm{Fix}(T_i)$.
	
	Therefore, we have $\mathrm{Fix}(U) = \bigcap_{i=1}^r \mathrm{Fix}(T_i)$. This implies that $\{f_{n}\}$ converges strongly to common fixed point of $T_1,T_2, \cdots, T_r$.
	
		\textbf{Necessity.} Proof by contradiction. We assume $\mathrm{Lip}(T_i) =C>1, i=1,2,\cdots, r$, then we have 
	\[[L({T_i}^N)]^{\frac{1}{N}} \geq C,\]
	i.e., \[[L({T_i}^N)] \geq C^N  \to \infty, \ \text{as} \ N \to \infty. \]
	Observe the case $f_{n+1}=Uf_n, U = T_i$ when $n>K$ for some constant $K>0$, which implies that \([L({T_i}^N)]\) convergence. 
	This makes a contradiction. We complete the proof.

	\end{proof}

\subsubsection{Countable Lipschitz Operators with $\mathrm{Lip}(T_i) \leq 1$}

	We say $T$ is a generalized projection operator, if it satisfies the following properties: (\romannumeral1) $T|_{\mathrm{Fix}(T)} = I$; (\romannumeral2) $T$ is quasi-asymptotic regular; (\romannumeral3) $\mathrm{Lip}(T) \leq 1$.

\begin{theorem} \label{operator}
	Suppose that the basic functional blocks $T_i$ ($i = 1,2,\ldots$) of a foundation model satisfy the self-mapping condition and quasi-asymptotic regularity property. Then the necessary and sufficient condition for the existence of the limit architecture $f^*$ in Eq.~(\ref{eqlimit}) are that:
	
	(\romannumeral1) $\mathrm{Lip}(T_i) \leq 1$ for all $i \geq K_0$, for some constant $K_0$;
	
	(\romannumeral2) there exists a generalized projection operator $T$ such that $\|T_i - T\|  \leq \epsilon_i$, and $\sum_{i=1}^\infty \epsilon_i < \infty$.

	%	Suppose that basic functional blocks $T_i, i=1,2,\cdots$ of the foundation models satisfies self-mapping condition, then the necessary and sufficient conditions for the existence of limit architecture $f^*$ in Eq.(\ref{eqlimit}) are (1) $\mathrm{Lip}(T_i)\leq 1, i >K $ for some constant $K$; (2) there exist a fixed operator $T$ with $\mathrm{Lip}(T)\leq 1$, such that $\|T_i-T\|= \epsilon_i$, where $\sum_{i=1}^\infty \epsilon_i < \infty$. 
\end{theorem}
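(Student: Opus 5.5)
Sufficiency will be handled with a perturbed fixed-point iteration, and necessity by contradiction, in the same way as Theorem \ref{th11}.

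\textbf{Sufficiency.} Put $f_{n} = T_n f_{n-1}$. Discard the first $K_0$ blocks by replacing $f_0$ with $f_{K_0}$; this is harmless because a finite prefix does not affect convergence. From then on every $T_i$ with $i\ge K_0$ has $\mathrm{Lip}(T_i)\le 1$.

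The generalized projection operator $T$ has $\mathrm{Lip}(T)\le 1$ and is quasi-asymptotically regular. Theorem \ref{thlip} therefore gives $\mathrm{Fix}(T)\neq\varnothing$; fix some $p\in \mathrm{Fix}(T)$. Write
\[
f_{n+1} = T f_n + (T_{n+1}-T)f_n .
\]
Since $D$ is bounded, $\|(T_{n+1}-T)f_n\|\le \epsilon_{n+1}$ up to the constant implicit in the operator norm $\|T_i-T\|$ (sup norm over $D$). The quasi-non-expansiveness of $T$ at $p$ then gives
\[
\|f_{n+1}-p\| \le \|f_n-p\| - \psi(\|f_n - Tf_n\|) + \epsilon_{n+1}.
\]

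Lemma \ref{lm1}, applied with $b_n=0$ and $c_n=\epsilon_{n+1}$, shows that $\lim_n \|f_n-p\|$ exists for every $p\in\mathrm{Fix}(T)$. Summing the inequality also gives $\sum_n \psi(\|f_n-Tf_n\|)<\infty$, hence $\|f_n - Tf_n\|\to 0$, i.e.\ asymptotic regularity of the perturbed orbit.

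Next I take a weak cluster point $q$ of the bounded sequence $\{f_n\}$. Theorem \ref{th12} (demiclosedness, using $\mathrm{Lip}(T)\le1$ to get asymptotic non-expansiveness after renorming) gives $q\in\mathrm{Fix}(T)$. To upgrade this to strong convergence of the whole sequence I plan two steps. First, I show uniqueness of weak cluster points with Lemma \ref{eqnonexpansive}, applied to the maps $S_n=T_n$, whose common fixed points in the limit lie in $\mathrm{Fix}(T)$. Second, I pass to strong convergence using the finite dimensionality of $D\subset\mathbb{R}^n$, where weak and strong convergence coincide. This finite-dimensional reduction is what I would lean on. Finally $T|_{\mathrm{Fix}(T)}=I$ identifies the limit $f^*=q$ as a fixed point of $T$, so the limit architecture $f^*=\lim_n (\prod_{i\le n}T_i) f_0$ exists.

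\textbf{Necessity.} Suppose the products $\prod_{i=1}^n T_i$ converge on $D$. For (i), assume $\mathrm{Lip}(T_i)>1$ for infinitely many $i$. I would argue as in Theorem \ref{th11}: along those indices, powers and products would have Lipschitz constants growing geometrically, and this contradicts the uniform boundedness of $L(\prod_{i\le n}T_i)$ implied by convergence. Uniform boundedness should follow from convergence via the Banach space structure of $\mathscr{L}_0(D)$ (Lemma \ref{lemmas}) together with a uniform-boundedness argument.

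For (ii), the natural candidate is a limit operator $T$. Convergence of $f_n$ for every starting point forces $T_{n+1}f_n - f_n\to 0$. I would define $T$ as the map sending each point to the limit of its orbit, $T=\lim_n\prod_{i\le n}T_i$, or alternatively as a cluster point of the $T_i$ in $\mathscr{L}_0(D)$. This $T$ restricts to the identity on its fixed set and inherits $\mathrm{Lip}(T)\le1$ and quasi-asymptotic regularity, so it is a generalized projection operator.

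\textbf{Main obstacle.} The hard step is extracting the summable rate $\sum\epsilon_i<\infty$ in necessity: convergence alone only yields $\|T_i-T\|\to0$. My first attempt would be to choose $\epsilon_i=\sup_{x\in D}\|T_i x - Tx\|$ and exploit absolute convergence of the telescoping series $\sum\|f_{n+1}-f_n\|$ uniformly over $f_0\in D$. If this fails, I would pass to a subsequence of blocks, or weaken the claim to summability along the orbit.
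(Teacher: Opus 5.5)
\textbf{Sufficiency.} This half is correct and follows the paper's route: quasi-Fej\'er monotonicity with respect to $\mathrm{Fix}(T)$, then Lemma~\ref{lm1}, then showing that cluster points lie in $\mathrm{Fix}(T)$. Your version is tighter than the paper's in two places. Bounding $\|T_{n+1}f_n-Tf_n\|$ directly by $\sup_{x\in D}\|T_{n+1}x-Tx\|$ avoids the paper's detour through $T_L^*$ and the lower bound (\ref{eqxx}). Keeping the $\psi$-term gives $\|f_n-Tf_n\|\to0$, which the paper only asserts. Your appeals to Lemma~\ref{eqnonexpansive} and Theorem~\ref{th12} are misapplied: the $T_n$ need not share a fixed point, and $\mathrm{Lip}(T)\le1$ does not give $L(T^n)\to1$. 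Both are also unnecessary. In $\mathbb{R}^n$ a cluster point $q$ satisfies $q=Tq$ by continuity, and $\lim_n\|f_n-q\|$ exists and vanishes along a subsequence, so $f_n\to q$. Two further remarks: $\mathrm{Fix}(T)\neq\varnothing$ already follows from Brouwer's theorem, and condition (i) is never used.

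\textbf{Necessity.} Here there are genuine gaps.

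Pointwise, or even uniform, convergence of the nonlinear maps $A_n=T_n\circ\cdots\circ T_1$ does not bound $L(A_n)$. The uniform boundedness principle applies only to the linear operators $(A_n)_L^*$, and it would need $\sup_nL(F\circ A_n)<\infty$ for every $F\in D_L^*$, which pointwise convergence does not give. So the argument of Theorem~\ref{th11} that you want to imitate works only if convergence is meant in the norm $L(\cdot)$.

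Even then, a bound on $L(A_n)$ says nothing about $\mathrm{Lip}(T_i)=\lim_mL(T_i^m)^{1/m}$. That quantity involves powers of a single block, which never occur in the product. For instance, if $T_1$ is the constant map onto a point fixed by every later block (a metric projection, hence admissible), then $A_n$ is constant whatever the later blocks are.

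For (ii), convergence of orbits only gives $T_{n+1}(A_nx)-A_nx\to0$. That is information about $T_{n+1}$ on $A_n(D)$, a set that may collapse to a point. Setting $T=\lim_nA_n$ confuses the limit architecture with a limit of the blocks. A cluster point of $\{T_i\}$ in $\mathscr{L}_0(D)$ would need a compactness you do not have. The summable rate is left open.

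These gaps cannot be closed, because the necessity direction is false as stated.

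\emph{Condition (ii) is not necessary.} In the closed unit disk $D\subset\mathbb{R}^2$ let $P_1(x_1,x_2)=(x_1,0)$ and $P_2(x_1,x_2)=(0,x_2)$, and set $T_{2k-1}=P_1$, $T_{2k}=P_2$. Each block is the metric projection onto a segment through $0$. It is therefore a self-map with $\mathrm{Lip}=1$, and it is quasi-asymptotically regular, since $\|P_jx-p\|\le\|x-p\|-\tfrac14\|x-P_jx\|^2$ on $D$. Because $P_2P_1=0$, we get $A_n=0$ for $n\ge2$, so the limit architecture exists in every sense. Yet $\sum_i\|T_i-T\|<\infty$ would force $\|P_1-P_2\|=0$.

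\emph{Condition (i) is not necessary.} On $D=[0,1]$ put $Tx=x\,e^{\phi(\log x)}$ and $T0=0$, with $\phi$ periodic and piecewise linear, $e^{\phi}\in[\tfrac14,\tfrac34]$, and $|1+\phi'|\ge13$. Then $|Tx|=|x|-|x-Tx|$, so $T$ is quasi-asymptotically regular with $\psi(t)=t$. Also $0\le T^nx\le(\tfrac34)^n$. But $|(T^n)'|\ge(13/4)^n$ a.e., so $\mathrm{Lip}(T)\ge13/4$. Taking $T_1\equiv0$ and $T_i=T$ for $i\ge2$ gives $A_n\equiv0$, so the products converge while (i) fails.

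The paper's own necessity proof does not escape this. Its claim (i) is a one-line assertion. It asserts that $\lim_kT_k$ exists because $\{A_k\}$ converges. Its ``contradiction'' for summability is only that the upper bound $M\sum_{i\le K}\epsilon_i$ diverges, which contradicts nothing.
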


\begin{proof}
	\textbf{Necessity.} Since the limit architecture exists, we have $0 = \prod_{i=1}^{\infty}T_i - \lim\limits_{N \to \infty} (\prod_{i=1}^{N}T_i)$, i.e., 
	\[0 = \lim\limits_{N \to \infty} \left( \prod_{i={N+1}}^{\infty}T_i -I \right)\prod_{i={1}}^{N}T_i, \]
	which means that
	\begin{align} \label{proj}
	\lim\limits_{k\to \infty} T_{k} = \lim\limits_{k\to \infty }  I_{|\mathbb{R}_k}, 
	\end{align}
	where $\mathbb{R}_k = \cap_{i=1}^k \mathbb{R}(T_i)$.
	In other words, $T_{k+1}A_k - A_k \to 0, k \to \infty $, where $A_k = \Pi_{i=1}^k T_i$. We claim that there exists a constant $K>0$, such that $\mathrm{Lip}(T_i) \leq 1, i\geq K$. Otherwise, if $\mathrm{Lip}(T_i) > 1 $, we would make a contradiction.

	On the other hand, we claim that there exists a common generalized projection operator $T$, such that $\|T_i-T\|\leq \epsilon_i$ and $\sum_{i=1}^\infty \epsilon_i < \infty$. Since $\{A_{k} \}$ converges, then $\lim\limits_{k\to \infty} T_k$ exists, and we denote it by $T = \lim\limits_{k\to \infty} T_k$, and $T$ satisfies $\mathrm{Fix}(T) \leq 1$ and quasi-asymptotic regularity property. 	
	 In the following, we show that $\mathrm{Fix}(T) \neq \varnothing$.
For any \(x\in D\), let	\(p=\lim_{k\to \infty} A_kx\), then
	\[
	\begin{aligned}
		\|Tp-p\|
		&\leq \|Tp-T_{k+1}p\|
		+\|T_{k+1}p-T_{k+1}A_kx\|  \\
		&\quad +\|T_{k+1}A_kx-A_kx\|
		+\|A_kx-p\|.
	\end{aligned}
	\]
	Since \(T_{k+1}\) is nonexpansive, we have
	\[
	\|T_{k+1}p-T_{k+1}A_kx\|
	\leq
	\|p-A_kx\|.
	\]
	Therefore,
	\[
	\|Tp-p\|
	\leq
	\|Tp-T_{k+1}p\|
	+2\|p-A_kx\|
	+\|T_{k+1}A_kx-A_kx\|.
	\]
	By the assumptions,
	\[
	T_{k+1}p\to Tp,
		A_kx\to p,
	\]
	and
	\[
	T_{k+1}A_kx-A_kx\to 0.
	\]
	Hence,
	\[
	\|Tp-p\|=0,
	\]
	which implies \(Tp=p\), that is $p\in \operatorname{Fix}(T)$. That is $Tp = \lim\limits_{k\to \infty} T_k p = p$, which means $T|_{\mathrm{Fix}(T)} = I$.

	Thus for any $\epsilon>0$, these exists an integer  $N>0$, such that for all $k>N$, we have 
	\[
	\|T_k-T\| < \epsilon.
	\]
	Let $\epsilon_k = \|T_k-T\|$, then we want to claim that $\sum_{i=1}^\infty \epsilon_i < \infty$.
	
	Suppose	$\sum_{i=1}^\infty \epsilon_i = \infty$. For $\forall k \geq 0$ and $\forall f_0 \in D$, we have
	\begin{align*}
		\|A_kf_0 - T^kf_0\| & = \sup_{F\in D_L^*, L(F)\leq 1} |F(A_kf_0) - F(T^kf_0)| \\
		& = \sup_{F\in D_L^*, L(F)\leq 1} \|((A_k)^*_L F)(f_0) - ((T^k)^*_LF)(f_0)\|,  
	\end{align*}
	where \( T_L^*: D_L^* \to D_L^* \) be the Lipschitz dual operator of \( T \), and satisfies that 
	\((T_L^* F)(x) = F(Tx), \forall F \in D_L^*, \, x \in D\).
	For any $ F \in D_L^*, L(F) \leq 1, f_0 \in D$ and $K>0$, since
	\begin{align*}
		& \|((A_K)^*_L F)(f_0) - ((T^K)^*_LF)(f_0)\| \\
		= &\|({T_1}_L^*{T_2}_L^* \cdots {T_K}_L^* F)(f_0) - ((T_L^*)^K F)(f_0)\|  \\
		=& \| [(T^*+E_1)T_2^* \cdots T_K^* F - (T^*)^K F](f_0) \| \\
		=& \| [T^*T_2^* \cdots T_K^* F + E_1T_2^* \cdots T_K^* F- (T^*)^K F](f_0) \| \\
		=& \| [T^*T_2^* \cdots T_K^* F + E_1T_2^* \cdots T_K^* F- (T^*)^K F](f_0) \| \\ 
		=& \| [T^*(T^*+E_2) \cdots T_K^* F + E_1T_2^* \cdots T_K^* F- (T^*)^K F](f_0) \| \\
		=& \| [(T^*)^2 \cdots T_K^* F + E_1T_2^* \cdots T_K^* F + T^*E_2 T_3^* \cdots T_K^* F - (T^*)^K F](f_0) \| \\
		=& \cdots \\
		=&  \|[ (T^*)^K F +E_1T_2^* \cdots T_K^* F + T^*E_2 T_3^* \cdots T_K^* F + \cdots + (T^*)^{K-1}E_KF - (T^*)^K F](f_0) \| \\ 
		= & \|[ E_1T_2^* \cdots T_K^* F + T^*E_2 T_3^* \cdots T_K^* F + \cdots + (T^*)^{K-1}E_KF](f_0)  \| \\
		\leq &   \sum_{i=1}^K \|E_i\| \prod_{{i<j \leq K}} L(T_j) \|f_0\| \leq M  \sum_{i=1}^K \epsilon_i 
	\end{align*}
	where $M = \sup_{i \in [K]}\prod_{i<j \leq K} L(T_j) \|f_0\|$. 
	Since the sequences of $\{A_k\}, \{T^k\}$ converge, we have 
	$
	\lim\limits_{k\to \infty} \|A_k - T^k\| 
	$ is bounded. While $\lim\limits_{k\to \infty}\sum_{i=1}^K \epsilon_i = \infty$. This leads to a contradiction

\textbf{	Sufficiency.  	}
	%	Let \( T_L^*: D_L^* \to D_L^* \) be the Lipschitz dual operator of \( T \), and satisfies that 
	%\((T_L^* f)(x) = f(Tx), \forall f \in D_L^*, \, x \in D\).
	Given a common generalized projection operator $T \in \mathscr{L}_0(D)$, we know that $\mathrm{Fix}(T) = \varnothing$. We assume that $p \in \mathrm{Fix}(T)$.
	%$\mathrm{Lip}(T)\leq1$, $T$ has a fixed point $p$ by Theorem \ref{thlip}.
	%, i.e., $T^nx \to p\in F(T), n \to \infty$ 
	We will show $\prod_{i=1}^{\infty}T_i$ exists. 
	
	Since $p \in \text{Fix}(T)$, then we have for any $k>0$:
	\begin{align} \label{all1}
		\|A_{k} - p\| & \leq \|T_kA_{k-1}-TA_{k-1}\|+\|TA_{k-1}-p\|.
	\end{align}
	For the first term, observing that
	\begin{align*}
		\|{T_i}_L^* - T_L^* \| & = \sup_{F \in D_L^*, L(F)\leq 1} \|({T_i}_L^* - T_L^*)F\|  = \sup_{F \in D_L^*, L(F)\leq 1} \|{T_i}_L^*F - T_L^*F\| \\
		&  = \sup_{F \in D_L^*, L(F)\leq 1} \|F \circ T_i - F\circ T \|  \leq \| T_i -  T \| \leq \epsilon_i ,
	\end{align*}
	hence we have (first inequallity holds by Eq.(\ref{eqxx}))
	\begin{align*}
		\|TA_{k-1} - T_kA_{k-1}\| & \leq \frac{1}{\mathbb{K}}\| F(TA_{k-1}) - F(T_kA_{k-1}) \| = \frac{1}{\mathbb{K}} \|T_L^* F(A_{k-1}) - {T_k}_L^* F(A_{k-1})\| \\
		& = \frac{1}{\mathbb{K}} \|T_L^* F(A_{k-1}) - (T_L^*+E_k) F(A_{k-1})\| \\
		&= \frac{1}{\mathbb{K}} \|E_k F\circ A_{k-1}\| \leq \frac{\epsilon_k L R}{\mathbb{K}},
	\end{align*} 
	where $R = \sup_{A_{k-1}\in \mathcal{L}_0(D)}\|A_{k-1}\|, E_k = {T_k}_L^* - T^*_L$ and $\|E_k\| \leq \epsilon_k$.
	
%	There exists a strongly equivalent functional $\|\cdot\|_s$ of the norm $\|\cdot\|$,  such that $\mathrm{Lip}(T)$ could be approximated arbitrarily closely by $L(T)$ with respect to strong equivalence of distances. In other word, $\forall \varepsilon>0$, there exists a distance $\|\cdot\|_s$ that is strongly equivalent to $\|\cdot\|$, such that the corresponding $L(T)$ denoted as $L^*(T)$, satisfing
%	\[
%	\text{Lip}(T) \leq L^*(T) \leq \text{Lip}(T) + \varepsilon.
%	\]
%	Since $\mathrm{Lip}(T)\leq1$, we have 
%	\begin{align*}
%		\|Tx_k-p\|_s  & \leq  L^*(T) \|x_k-p\|_s \leq  (\text{Lip}(T) + \varepsilon) \|x_k-p\|_s\\
%		& \leq (1 + \varepsilon) \|x_k-p\|_s.
%	\end{align*}
	Since $T$ is non-expansive, we have \[\|Tx_k-p\| \leq \|x_k-p\|.\]
	Combing the above inequality, we can obtain the following estimation for Eq.(\ref{all1}):
	\begin{align*} 
		\|A_{k} - p\| & \leq  \|A_{k-1}-p\|  + \frac{\epsilon_k L R}{\mathbb{K}}.
	\end{align*}
	Since $\sum_{i=1}^\infty \epsilon_k < \infty$, by Lemma \ref{lm1}, we have $\lim\limits_{k\to \infty} \|A_{k} - p\|$ exists. This means that $\lim\limits_{k\to \infty} A_k$ exists, i.e., $\prod_{i=1}^{\infty}T_i$ exists.

%	The following illustrates that $A_k$ converges to the fixed point of $T$.

%		Observing that $\forall f \in D_L^*, x \in D$, we have
%	\begin{align*}
%		\|Tx - T_kx\| & \leq \frac{1}{K}| f(Tx) - f(T_kx) | = \frac{1}{K} |(T_L^* f)(x) - ({T_k}_L^*) f(x)| \\
%		& = \frac{1}{K} |(T_L^* f) f(x) - (T_L^*f+E_kf) (x)| = \frac{1}{K} |(E_k f)(x)| \leq \frac{\varepsilon_k L}{K},
%	\end{align*} 
%	and
%	\begin{align*}
%		\|Tx_{k} - T_kx_{k}\| & \leq \frac{1}{K}| f(Tx_{k}) - f(T_kx_{k}) | = \frac{1}{K} |T_L^* f(x_k) - {T_k}_L^* f(x_k)| \\
%		& = \frac{1}{K} |T_L^* f(x_k) - (T_L^*+E_k) f(x_k)| = \frac{1}{K} |E_k f(x)| \leq \frac{\varepsilon_k L R}{K}
%	\end{align*} 
%	where $R = \sup_{x\in D}\|x\|, E_k = {T_k}_L^* - T^*_L$ and $\|E_k\| = \varepsilon_k$.
	
	Next, we will show ${A_k}$ converges weakly to $z \in \text{Fix}(T)$.
	Observing that
	\begin{align*}
		\|TA_{k} - A_{k} \| & \leq \|TA_{k} - T_kA_{k}\| +\|T_kA_{k} - A_{k} \| \\
		& \leq \frac{\varepsilon_k LR}{\mathbb{K}} + \|T_kA_{k} - A_{k} \|,
	\end{align*}
	hence 
	\[
	\lim\limits_{k\to \infty }\|TA_{k} - A_{k} \| = 0.
	\]

	Suppose that $E$ satisfies the Opial's condition. Since $\{A_k\}$ is bounded and closed, we can choose a subsequence $\{A_{k_i}\}$ of $\{A_k\}$ converging weakly to $z$. We shall show that $z$ is a fixed point of $T$. Suppose $z \notin \text{Fix}(T)$. Then $z \neq T z$. Using the Opial's condition, we have
	\begin{align*}	
		\liminf_{i \to \infty} \|A_{k_i} - z\| & < \liminf_{i \to \infty} \|A_{k_i} - T z\| \\
		& \leq \liminf_{i \to \infty} (\|A_{k_i} - T A_{k_i}\| + \|T A_{k_i} - T z\|) \\
		& \leq \liminf_{i \to \infty} (\|A_{k_i} - T A_{k_i}\| + \|A_{k_i} - z\|) \\
		& = \liminf_{i \to \infty} \|A_{k_i} - z\|.
	\end{align*}
	This is a contradiction and hence we have $z \in \text{Fix}(T)$.
	
	Next, we shall show that the set of weakly cluster points of $\{A_k\}$ consists of one point. Let $\{A_{k_i}\}$ and $\{A_{k_j}\}$ be subsequences of $\{A_k\}$ converging weakly $z_1$ and $z_2$, respectively. Then both $z_1$ and $z_2$ are fixed points of $T$ and thus $\lim_{k \to \infty} \|A_k - z_1\|$ and $\lim_{k \to \infty} \|A_k - z_2\|$ exist. We claim $z_1 = z_2$. If not, we have
	\begin{align*}	
		\lim_{k \to \infty} \|A_k - z_1\| & = \lim_{i \to \infty} \|A_{k_i} - z_1\| \\
		& < \lim_{i \to \infty} \|A_{n_i} - z_2\|  \\
		& = \lim_{n \to \infty} \|A_n - z_2\| \\
		& = \lim_{j \to \infty} \|A_{n_j} - z_2\| \\
		& < \lim_{j \to \infty} \|A_{n_j} - z_1\| \\
		& = \lim_{n \to \infty} \|A_n - z_1\|.
	\end{align*}	
	This is a contradiction and we get $z_1 = z_2$. Therefore $\{A_k\}$ converges weakly to $z \in \text{Fix}(T)$.
	
	On the other hand, suppose that $D$ has a Fr\'{e}chet differentiable norm. Let $\{A_{k_i}\}$ be a subsequence of $\{A_k\}$ coverging weakly to $z$. Then since \(\lim\limits_{k\to \infty }\|TA_{k_i} - A_{k_i} \| = 0\), by the demiclosedness principle for asymptotically nonexpansive mapping, we have \(\lim\limits_{k\to \infty }\|Tz - z \| = 0\), i.e., $z \in \text{Fix}(T)$.
	By Lemma \ref{eqnonexpansive}, if $z_1,z_2 \in \omega_w(A_k)$, then we have
	\[
	\langle z_1-z_2,J(z_1-z_2)\rangle =0.
	\]
	Thus, $z_1=z_2$. Therefore we have verified that $\omega_w(x_k)$ contains at most one point.
	
	Consequently, $\{A_k\}$ converges weakly to $z \in \text{Fix}(T)$.

\end{proof}

\subsection{Proof of Theorem \ref{the2}} \label{existence}

\begin{proposition}[Convergence of GD method for smooth convex functions \cite{bach2024learning}] \label{gd}
	Assume that $\ell$ is $L$-smooth and convex, with a global minimizer $W^*$. 
	Choosing $\gamma_t \leq 1/L$, the iterates $(W_K)_{t \ge 0}$ of gradient descent on $\ell$ satisfy,
	\[
	\ell(W_K) - F(W^*) \le \frac{L}{2t}\|\theta_0 - \eta_*\|_2^2 , \ for\  t>0.
	\]
\end{proposition}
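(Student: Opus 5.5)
This is the standard $O(1/t)$ rate for gradient descent on a smooth convex function. I will follow the classical descent-lemma plus telescoping argument. Throughout, I read the garbled symbols in the statement as $F=\ell$, $\theta_0=W_0$, $\eta_*=W^*$ and $t=K$; the step size is assumed constant, $\gamma_k\equiv\gamma\le 1/L$. The iteration is $W_{k+1}=W_k-\gamma\nabla\ell(W_k)$.

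\textbf{Step 1: descent lemma.} From $L$-smoothness,
\[
\ell(W_{k+1})\le \ell(W_k)-\gamma\bigl(1-\tfrac{L\gamma}{2}\bigr)\|\nabla\ell(W_k)\|^2\le \ell(W_k)-\tfrac{\gamma}{2}\|\nabla\ell(W_k)\|^2 .
\]
This shows that $\ell(W_k)$ is non-increasing in $k$.

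\textbf{Step 2: one-step inequality.} Convexity gives $\ell(W_k)-\ell(W^*)\le\langle\nabla\ell(W_k),W_k-W^*\rangle$. Substituting into Step 1,
\[
\ell(W_{k+1})-\ell(W^*)\le \langle\nabla\ell(W_k),W_k-W^*\rangle-\tfrac{\gamma}{2}\|\nabla\ell(W_k)\|^2 .
\]
Now expand $\|W_{k+1}-W^*\|^2=\|W_k-W^*\|^2-2\gamma\langle\nabla\ell(W_k),W_k-W^*\rangle+\gamma^2\|\nabla\ell(W_k)\|^2$. Comparing the two expressions, the right-hand side above equals exactly $\frac{1}{2\gamma}\bigl(\|W_k-W^*\|^2-\|W_{k+1}-W^*\|^2\bigr)$.

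\textbf{Step 3: telescoping.} Sum the one-step inequality over $k=0,\dots,t-1$. The right-hand side telescopes, giving
\[
\sum_{k=1}^{t}\bigl(\ell(W_k)-\ell(W^*)\bigr)\le\tfrac{1}{2\gamma}\|W_0-W^*\|^2 .
\]
Each summand is at least $\ell(W_t)-\ell(W^*)$ by the monotonicity from Step 1, so the sum is at least $t\bigl(\ell(W_t)-\ell(W^*)\bigr)$. This yields
\[
\ell(W_t)-\ell(W^*)\le\frac{\|W_0-W^*\|^2}{2\gamma t}.
\]
Taking $\gamma=1/L$ gives the stated bound $\frac{L}{2t}\|W_0-W^*\|^2$.

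\textbf{Main obstacle.} The only delicate point is the algebraic matching in Step 2, which needs $\gamma\le 1/L$ so that the coefficient $1-L\gamma/2$ is at least $\frac12$. A second subtlety is that for a general step $\gamma<1/L$ the constant comes out as $\frac{1}{2\gamma}$ rather than $\frac{L}{2}$. So the statement should be read with $\gamma=1/L$, or with $L$ replaced by $1/\gamma$.
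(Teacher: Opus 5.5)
Your argument is correct. The paper gives no proof of its own: it cites Bach's textbook, which uses the same ingredients (descent lemma, convexity, and the exact identity for $\|W_{k+1}-W^*\|^2$). The cited proof packages them as a non-increasing potential $t(\ell(W_t)-\ell(W^*))+\frac{L}{2}\|W_t-W^*\|^2$, which is equivalent to your telescoping plus monotonicity.

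You are also right that the constant $\frac{L}{2}$ requires $\gamma=1/L$, as in the cited source. Under the paper's ``$\gamma_t\le 1/L$'' the stated bound fails for small steps: for $\ell(w)=\frac{L}{2}w^2$ with $W_0=1$, $t=2$ and $\gamma L<1-2^{-1/4}$, one gets $\ell(W_2)>\frac{L}{4}$. So your $\frac{1}{2\gamma t}\|W_0-W^*\|^2$ is the correct general form for constant steps.
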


%\begin{theorem} [\cite{van1996weak,ledoux1991probability}] \label{number}
%	Let $X_1,\dots,X_n$ be i.i.d.\ samples drawn from a distribution $P$.
%	For any measurable function $f$, define	
%	\[
%	Pf := \mathbb E[f(X)], \qquad
%	P_n f := \frac1n\sum_{i=1}^n f(X_i).
%	\]
%	Let $\mathcal F$ be a Glivenko--Cantelli class of Banach-valued functions. Then
%	\[
%	\sup_{f\in\mathcal F}
%	\|P_n f-Pf\|\to0 \quad a.s.
%	\]
%\end{theorem}

The following presents the proof process of Theorem \ref{the2}.
\begin{proof}
	Observing that
\begin{align*}
	& \mathcal{E}(N,P,K) - \mathcal{E}(\infty,\infty,\infty)    \\
	= &	\underbrace{\mathcal{E}(N,P,K) - \mathcal{E}(N,P,\infty)}_{\text{weight error}} + \underbrace{\mathcal{E}(N,P,\infty) - \mathcal{E}(N,\infty,\infty)}_{\text{architecture error}} + \underbrace{\mathcal{E}(N,\infty,\infty) - \mathcal{E}(\infty,\infty,\infty)}_{\text{sample error}}.
\end{align*}
We should illustrate that the limit of weight error, architecture error and sample error exist, and then the limit \(\lim\limits_{N,P,K \to \infty} \mathcal{E}(N,P,K)\) exists. Theorem \ref{operator} and Proposition \ref{gd} imply the limit of architecture error and weight error exist. Observing that the training data $\{(x_i,y_i), i \in [N]\}$ are independently and identically distributed samples from the distribution \( \mathbb{P}(x,y) \), then $\ell(f_{{W}}(x_i),y_i), i \in [N]$ are independently and identically distributed. Since the expectation of $\ell(f_{{W}}(x_i),y_i), i \in [N]$, i.e., $R(f_{W}) = \mathbb{E}_{(x,y)\sim \mathbb{P}(x,y)} \ell(f_{{W}}(x),y)$, is bounded by the definition.
Based on strong law of large numbers, the sample error exists. Combining the above analysis, we can deduce that $\mathcal{E}(N,P,K) - \mathcal{E}(\infty,\infty,\infty)$ exists. This completes the proof.

\end{proof}

%\begin{theorem}[Mourier's Theorem \cite{mourier1953elements}]
%	Let $B$ be a separable Banach space and let $\{X_n\}_{n\ge1}$ be a sequence of
%	independent and identically distributed $B$-valued random variables.
%	Assume that
%	\[
%	\mathbb{E}\|X_1\| < \infty .
%	\]
%	Then $X_1$ is Bochner integrable and
%	\[
%	\frac{1}{n}\sum_{k=1}^{n} X_k
%	\;\xrightarrow{\text{a.s.}}\;
%	\mathbb{E}[X_1]
%	\quad \text{in } B .
%	\]
%	That is,
%	\[
%	\left\|
%	\frac{1}{n}\sum_{k=1}^{n} X_k - \mathbb{E}[X_1]
%	\right\|
%	\to 0
%	\quad \text{almost surely}.
%	\]
%\end{theorem}
\section{Scaling Law of Foundation Models}

\subsection{Scaling Law of Model Size} \label{modelsize}

\subsubsection{The Fixed Lipschitz Operator with $\mathrm{Lip}(T) \leq 1$}

\begin{theorem}
Suppose that the basic functional blocks of foundation models share a common Lipschitz operator $T$ satisfying the self-mapping condition, i.e., $T_i = T$ for all $i = 1,2,\ldots$, and the foundation model is constructed by $f_{k+1} = T(f_k)$, with $T$ satisfing $\gamma$-strongly quasi-nonexpansive propety, that is, \(\|Tf - f^* \|^2 \leq \|f-f^*\|^2 -\gamma \|Tf-f\|^2\), for any $f^*\in \mathrm{Fix}(T), f\in \mathbb{D}(T)$, and for some constant $\gamma>0$. Assume further that there exists a constant $C>0$, such that $\|f-f^*\| \leq C \|f-T_kf\|$ for any integer $k$,  and the loss function is $L_0$-Lipschitz continuous. Then the limit architecture satisfies the following estimation:
	\[
	|\mathcal{E}(N,P,\infty) - \mathcal{E}(N,\infty,\infty)| \lesssim  P^{-\frac{1}{2} }.
	\]

	%	(1) If the neural model is constructed in the ``T-mode'', i.e., \( f_{k+1} = T(f_k), T: D \to D, f_0 \in D, k=0,1,\cdots \), and the Lip number satisfies \( \mathrm{Lip}(T) < 1 \), then the limit architecture satisfies the following estimate:  
	%	\[
	%	\|f_K(x) - f^*(x)\| \leq  \mathrm{Lip}(T))^K  \|f_0(x)-f^*(x)\|.
	%	\]
	%	(2) If the network is constructed in the ``A-mode'', i.e., \( f_{k+1} = f_k - \alpha_k A(f_k) \), where \( A = I - T \), \( f_0 = I \), and \( \alpha_k \in (0, 1) \), then when \( m(A) > 0 \), the limit architecture satisfies the following estimate:  
	%	\[
	%	\mathcal{E}(N, P, 0) - \mathcal{E}(N, \infty, 0) \leq e^{-m(A)} \sum_{k=0}^P \alpha_k \|f_0(x) - f^*(x)\| = \mathcal{O}\left(e^{-m(A) \log P} \vee e^{-m(A) P^{1-s}}\right).
	%	\]
\end{theorem}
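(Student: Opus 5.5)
We will reduce the performance gap to a distance between the depth-$P$ architecture and its limit, and then bound that distance with the $\gamma$-strong quasi-nonexpansiveness. Write $f_k = T^k f_0$ and $f^* = \lim_k f_k$, which exists by Theorem \ref{th1a}: strong quasi-nonexpansiveness implies quasi-asymptotic regularity with $\psi(t)=\gamma t^2$-type behaviour, and nonexpansiveness on $\mathrm{Fix}(T)$ gives $\mathrm{Lip}(T)\le 1$. We identify the model size $P$ with the depth $k$ up to a constant factor, since the width is fixed and each block has the same number of parameters.

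\textbf{Reduction step.} Since $g$ is right-continuous and the loss is $L_0$-Lipschitz, we have
\[
|R(f_{W^*,P}) - R(f_{W^*,\infty})| \le L_0\, \mathbb{E}\|f_P(x)-f^*(x)\|,
\]
and we assume (or absorb into $\lesssim$) that $g$ is locally Lipschitz near the limit value. Thus it suffices to show $\|f_P - f^*\| \lesssim P^{-1/2}$.

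\textbf{Main estimate.} We apply the $\gamma$-strong quasi-nonexpansive inequality along the orbit with fixed point $f^*$:
\[
\|f_{k+1}-f^*\|^2 \le \|f_k-f^*\|^2 - \gamma\|f_{k+1}-f_k\|^2 .
\]
Telescoping from $0$ to $P-1$ gives $\gamma\sum_{k<P}\|f_{k+1}-f_k\|^2 \le \|f_0-f^*\|^2$. The increments $\|f_{k+1}-f_k\|=\|Tf_k-f_k\|$ are nonincreasing, because $T$ is nonexpansive and $\|T f_{k+1}-f_{k+1}\|=\|Tf_{k+1}-Tf_k\|\le\|f_{k+1}-f_k\|$. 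Hence
\[
P\,\|f_P-Tf_P\|^2\le \sum_{k<P}\|f_{k+1}-f_k\|^2 \le \gamma^{-1}\|f_0-f^*\|^2,
\]
so $\|f_P-Tf_P\|\le (\gamma P)^{-1/2}\|f_0-f^*\|$. The error bound hypothesis $\|f-f^*\|\le C\|f-Tf\|$ then yields
\[
\|f_P-f^*\|\le C\gamma^{-1/2}\|f_0-f^*\|\,P^{-1/2}.
\]
Here $f^*$ should be chosen as the projection of $f_P$ onto $\mathrm{Fix}(T)$, or the hypothesis read with the orbit's limit point; $\|f_0-f^*\|$ is bounded by $\mathrm{diam}(D)$.

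\textbf{Main obstacle.} The subtle point is consistency of the fixed point. The hypothesis only gives distance to \emph{some} $f^*\in\mathrm{Fix}(T)$, while the architecture error compares with the specific limit $\lim_k f_k$. I would handle this by setting $d_k=\mathrm{dist}(f_k,\mathrm{Fix}(T))$, which is nonincreasing. Fejér monotonicity then gives $\|f_m-f_P\|\le 2d_P$ for all $m\ge P$, so letting $m\to\infty$ yields $\|f^*-f_P\|\le 2d_P\lesssim P^{-1/2}$. Finally, we take expectation over $x$, using uniform boundedness of $D$ to make the constants independent of $x$, and combine this with the reduction step to conclude.
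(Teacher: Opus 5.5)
Your argument is the paper's own proof. You telescope the $\gamma$-inequality, use monotonicity of the residuals $\|Tf_k-f_k\|$ to get $\|Tf_P-f_P\|\le\|f_0-f^*\|/\sqrt{\gamma P}$, apply the error bound, and pass to $\mathcal{E}$ through $L_0$ and a Lipschitz constant for $g$ (which the paper also simply assumes). Your Fej\'er step $\|f_m-f_P\|\le 2\,\mathrm{dist}(f_P,\mathrm{Fix}(T))$ for $m\ge P$ is a real improvement. The paper uses one symbol $f^*$ both for an arbitrary fixed point and for the limit of the orbit, and never reconciles the two. If the error bound is read literally, for every $f^*\in\mathrm{Fix}(T)$, then taking $f$ to be a second fixed point shows $\mathrm{Fix}(T)$ is a singleton and the issue disappears. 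Under your distance reading, your step is exactly what is needed.

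\textbf{The gap: monotonicity of the residuals.} Your justification (``nonexpansiveness on $\mathrm{Fix}(T)$ gives $\mathrm{Lip}(T)\le 1$, hence $T$ is nonexpansive'') is not valid. Strong quasi-nonexpansiveness only controls distances to fixed points, not $\|Tf_{k+1}-Tf_k\|$.

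\emph{Counterexample.} On $D=[-1,1]$ let $T(x)=0$ for $x\le\tfrac12$, $T(x)=3x-\tfrac32$ on $[\tfrac12,\tfrac23]$, and $T(x)=\tfrac12$ on $[\tfrac23,1]$. Then $\mathrm{Fix}(T)=\{0\}$, the $\gamma$-inequality holds with $\gamma=1$, the error bound holds with $C=4$, and even $\mathrm{Lip}(T)=0$ because $T^2\equiv 0$. Yet from $f_0=\tfrac23$ the residuals are $\tfrac16$ and then $\tfrac12$. So $P\|f_P-Tf_P\|^2\le\sum_{k<P}\|f_{k+1}-f_k\|^2$ fails at $P=1$.

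The paper takes the same step, deriving it from $\mathrm{Lip}(T)\le 1$ (a standing hypothesis from Theorem~\ref{th8}) by passing to an equivalent norm in which $T$ is nonexpansive. The example shows this is not available either: every norm on $\mathbb{R}$ is a multiple of $|\cdot|$ and gives $L(T)=3$. In any case, the $\gamma$-inequality is tied to the original norm.

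\emph{Repair.} Your own tools close the gap without any nonexpansiveness. Let $d_k=\mathrm{dist}(f_k,\mathrm{Fix}(T))$ as you defined it.
\begin{itemize}
\item Telescoping against any $p\in\mathrm{Fix}(T)$ gives an index $k^\ast<P$ with $\|f_{k^\ast+1}-f_{k^\ast}\|\le\|f_0-p\|/\sqrt{\gamma P}$.
\item The error bound and Fej\'er monotonicity then give $d_P\le d_{k^\ast}\le C\|f_0-p\|/\sqrt{\gamma P}$.
\item Hence $\|f_P-f^\infty\|\le 2d_P$ for the orbit limit $f^\infty$.
\end{itemize}
This also shows directly that the orbit is Cauchy. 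So you do not need Theorem~\ref{th1a}, which relies on $\mathrm{Lip}(T)\le 1$, a condition not among the stated hypotheses. In fact the same two ingredients, applied at a nearest fixed point of $f_k$, give $d_{k+1}^2\le(1-\gamma/C^2)\,d_k^2$. That is a geometric rate, so under these hypotheses the bound $P^{-1/2}$ is far from sharp.
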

\begin{proof}
	Based on Theorem \ref{th1a}, $T^Nf_0$ convergence. Let $ f^*\in \mathrm{Fix}(T)$.
%	\begin{align*}
%	\|f_K- T f_K\| & = \|(f_K-f^*)-(Tf_K-f^*)\| \\
%	& = \|f_K-f^*\|^2 + \|Tf_K-f^*\|^2 -2\left\langle f_K-f^*, Tf_K-f^*  \right\rangle .
%	\end{align*}
Since $T$ satisfies $\gamma$-strongly quasi-nonexpansive propety
then we have 	\[
	\|f_{k+1} - f^* \|^2 = \|Tf_k - f^* \|^2 \leq \|f_k-f^*\|^2 -\gamma \|f_k-Tf_k\|^2.\]
	This leads to 
	\[
\gamma \|f_k-Tf_k\|^2 \leq \|f_k-f^*\|^2 - \|f_{k+1} - f^* \|^2.
	\]
Taking the sum over $k=0,1,\cdots, K$, yields	
	\[
	\gamma \sum_{k=0}^K \|Tf_k-f_k\|^2 \leq \|f_0-f^*\|^2.
	\]
	Since $\mathrm{Lip}(T) \leq 1$, $T$ is a non-expansive mapping with an equivalent norm (without loss of generality, we assume $\|\cdot\|$), then we have
	\[
	\|Tf_k-f_k\|^2 = \|Tf_k-Tf_{k-1}\|^2\leq \|f_k - f_{k-1}\|^2 = \|Tf_{k-1} - f_{k-1}\|^2.
	\]
	Therefore, we obtain the following estimation, 
	\[
	(K+1)\|Tf_K-f_K\|^2 \leq \sum_{k=0}^K \|Tf_k-f_k\|^2 \leq \frac{1}{\gamma}\|f_0-f^*\|^2.
	\]
	This yields 
		\[
	\|Tf_K-f_K\|^2 \leq \ \frac{1}{\gamma(K+1)}\|f_0-f^*\|^2,
	\]
	that is 
			\[
	\|Tf_K-f_K\| \leq \ \frac{\|f_0-f^*\|}{\sqrt{\gamma(K+1)}}.
	\]	
		Observing that 
	\begin{align*}
		| \mathcal{E}(N,P,\infty) - \mathcal{E}(N,\infty,\infty) | & \leq L_gL_0  \|f_K-f^*\|  \\
 & \leq L_gL_0 C  \|f_K-Tf_K\|  \\
 & \leq \frac{L_gL_0 C \|f_0-f^*\|}{\sqrt{\gamma(K+1)}} \lesssim P^{-\frac{1}{2}}.
	\end{align*}
		where $L_0$ is the Lipschitz constant of $\ell$ with respected to $f$, $L_g$ is the Lipschitz constant of the function $g$, and the second inequality holds since $\|f-f^*\| \leq C \|f-T_kf\|$ for any integer $k$. We assume the width $d$ of $f_K$ is finite, and $P = K \cdot d$.
%	Since $\mathrm{Lip}(T) < 1$, there exists an equivalent norm (without loss of generality, we assume $\|\cdot\|$), such that
%	\[
%	\|Tf-Tg\| \leq \mathrm{Lip}(T) \|f-g\|, \forall f,g \in D.
%	\]
%	Observing that 
%	\begin{align*}
%		| \mathcal{E}(N,P,\infty) - \mathcal{E}(N,\infty,\infty) | & \leq L_gL_0  \|f_K-f^*\| = L_0 \|T^K f_0 - T^K f^*\| \\
%		& \leq L_0 (\mathrm{Lip}(T))^K \|f_0-f^*\| \lesssim  (\mathrm{Lip}(T))^P,
%	\end{align*}
%	where $L_0$ is the Lipschitz constant of $\ell$ with respected to $f$, and $L_g$ is the Lipschitz constant of the function $g$. We assume the width $d$ of $f_K$ is finite, and $P = K \cdot d$.
\end{proof}

	\subsubsection{Finite Lipschitz Operators with $\mathrm{Lip}(T_i) \leq 1, i=[r]$}

\begin{theorem}[Scaling Law of Model Size] \label{aaaaa}
	Let $\{T_1,T_2,\ldots,T_r\}$ be the basic functional blocks of the foundation models satisfying self-mapping condition and $\mathrm{Lip}(T_i)\leq 1, i=1,2,\cdots,r$, such that
	\[
	\bigcap_{i=1}^r \mathrm{Fix}(T_i) \neq \emptyset.
	\]
	Let $U$ be generated by $T_1,T_2,\ldots,T_r$ and $\alpha_{1},\alpha_{2},\cdots,\alpha_{r} \in \{0,1\}$, 
	\begin{align*}
		U= T_1^{\alpha_{1}} T_2^{\alpha_{2}} \cdots T_r^{\alpha_{r}}.
	\end{align*}	
	%Suppose that the basic block $T_i=T, i \in [K]$ satisfies self-mapping condition and $\mathrm{Lip}(T) < 1$. 
	Suppose that the foundation model is constructed by $f_{k+1} = Uf_k$, with each $T_k$ satisfing $\gamma$-strongly quasi-nonexpansive propety, that is, \(\|T_kf - f^* \|^2 \leq \|f-f^*\|^2 -\gamma \|T_kf-f\|^2\), for any $f^*\in \cap_{i=1}^r\mathrm{Fix}(T_i), f\in \bigcap_{i=1}^r \mathbb{D}(T_i)$, and for some constant $\gamma>0$. Assume further that there exists a constant $C>0$, such that $\|f-f^*\| \leq C \|f-Uf\|$ for any integer $k$, and the loss function is $L_0$-Lipschitz continuous. Then the limit architecture satisfies the following estimation:
%	The foundation model is constructed by $f_{n+1}=Uf_n,\ n\ge 1, f_0 \in D$, and we assume there a constant $\gamma>0$, such that \(\|f_{k+1} - f^* \|^2 \leq \|f_{k+1}-f^*\|^2 -\gamma \|f_k-f_{k+1}\|^2\), and a constant $C>0$, such that $\|f_k-f^*\| \leq C \|f_k-f_{k+1}\|$, for any  $ f^*\in \mathrm{Fix}(T)$ and integer $k$. If the loss function is $L_0$-Lipschitz continuous with respect to the model output, and the performance function is $L_g$-Lipschitz, then we have the following estimation:  
	\[
	|\mathcal{E}(N,P,\infty) - \mathcal{E}(N,\infty,\infty)| \lesssim   P^{-\frac{1}{2} }.
	\]
	
	%	(1) If the neural model is constructed in the ``T-mode'', i.e., \( f_{k+1} = T(f_k), T: D \to D, f_0 \in D, k=0,1,\cdots \), and the Lip number satisfies \( \mathrm{Lip}(T) < 1 \), then the limit architecture satisfies the following estimate:  
	%	\[
	%	\|f_K(x) - f^*(x)\| \leq  \mathrm{Lip}(T))^K  \|f_0(x)-f^*(x)\|.
	%	\]
	%	(2) If the network is constructed in the ``A-mode'', i.e., \( f_{k+1} = f_k - \alpha_k A(f_k) \), where \( A = I - T \), \( f_0 = I \), and \( \alpha_k \in (0, 1) \), then when \( m(A) > 0 \), the limit architecture satisfies the following estimate:  
	%	\[
	%	\mathcal{E}(N, P, 0) - \mathcal{E}(N, \infty, 0) \leq e^{-m(A)} \sum_{k=0}^P \alpha_k \|f_0(x) - f^*(x)\| = \mathcal{O}\left(e^{-m(A) \log P} \vee e^{-m(A) P^{1-s}}\right).
	%	\]
\end{theorem}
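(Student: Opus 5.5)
The plan mirrors the proof of the preceding single-operator scaling law, with $T$ replaced by the composite block $U=T_1^{\alpha_1}\cdots T_r^{\alpha_r}$. Fix $f^*\in\bigcap_{i=1}^r\mathrm{Fix}(T_i)$. By the inclusion $\bigcap_i\mathrm{Fix}(T_i)\subset\mathrm{Fix}(U)$ from the proof of Theorem~\ref{th1b}, $f^*$ is also a fixed point of $U$. Theorem~\ref{th1b} further guarantees that $f_k=U^kf_0$ converges to a common fixed point, so the limit architecture, and hence $\mathcal{E}(N,\infty,\infty)$, is well defined.

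\textbf{Step 1 (descent inequality for $U$).} Show that $U$ inherits a strong quasi-nonexpansive property. Write $x_0=f$ and $x_j=T_j^{\alpha_j}x_{j-1}$, so that $x_r=Uf$. Applying the $\gamma$-strong quasi-nonexpansiveness of each active $T_j$ at $f^*$ gives
\[
\|x_j-f^*\|^2\le\|x_{j-1}-f^*\|^2-\gamma\|x_j-x_{j-1}\|^2 .
\]
Telescope over $j=1,\dots,r$, then use $\|Uf-f\|^2\le r\sum_j\|x_j-x_{j-1}\|^2$, which follows from Cauchy--Schwarz. The result is
\[
\|Uf-f^*\|^2\le\|f-f^*\|^2-\tfrac{\gamma}{r}\|Uf-f\|^2 .
\]

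\textbf{Step 2 (residual decay).} Apply Step 1 along the orbit $f_k$ and sum over $k=0,\dots,K$ to get $\tfrac{\gamma}{r}\sum_{k=0}^K\|Uf_k-f_k\|^2\le\|f_0-f^*\|^2$. Next we need monotonicity of the residuals. Since $\mathrm{Lip}(U)\le\prod_i\mathrm{Lip}(T_i)\le1$, the renorming argument from Proposition~\ref{prop1}(iii) lets us treat $U$ as nonexpansive in an equivalent norm, and we assume that norm without loss of generality as the paper does. Then $\|Uf_k-f_k\|\le\|Uf_{k-1}-f_{k-1}\|$. Combining the two facts,
\[
(K+1)\|Uf_K-f_K\|^2\le \tfrac{r}{\gamma}\|f_0-f^*\|^2 ,
\]
so $\|Uf_K-f_K\|\le \sqrt{r}\,\|f_0-f^*\|/\sqrt{\gamma(K+1)}$.

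\textbf{Step 3 (transfer to $\mathcal{E}$).} The error bound assumption gives $\|f_K-f^*\|\le C\|f_K-Uf_K\|$. We also use that $g$ is Lipschitz with constant $L_g$ and $\ell$ is $L_0$-Lipschitz in the model output. Together these bound $|\mathcal{E}(N,P,\infty)-\mathcal{E}(N,\infty,\infty)|$ by $L_gL_0C\sqrt{r}\,\|f_0-f^*\|/\sqrt{\gamma(K+1)}$. Finally, with finite width $d$ each $U$-step uses at most $r$ blocks, so $P\asymp Krd$. This yields the rate $P^{-1/2}$, with constants depending on $r,d,\gamma,C$.

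\textbf{Main obstacle.} Step 2 is the delicate point. Monotonicity of $\|Uf_k-f_k\|$ needs $U$ to be genuinely nonexpansive, while Step 1's inequality is stated in the original norm. Passing to an equivalent norm in which $L(U)\le1+\varepsilon$ could break Step 1. If this causes trouble, I would avoid monotonicity altogether. The summability bound already gives $\min_{k\le K}\|Uf_k-f_k\|\lesssim K^{-1/2}$. Fejér monotonicity of $\|f_k-f^*\|$, which follows directly from Step 1, then lets the error bound $\|f_k-f^*\|\le C\|f_k-Uf_k\|$ propagate from that index $k$ to $K$. This route gives the same $K^{-1/2}$ rate without any renorming.
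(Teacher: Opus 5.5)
Steps 1 and 3 follow the paper's proof. It proves the same lemma that $U$ is $\frac{\gamma}{r}$-strongly quasi-nonexpansive, and it closes with the same transfer via $\|f_K-f^*\|\le C\|f_K-Uf_K\|$.

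Your primary Step 2 is also the paper's argument: assert $\mathrm{Lip}(U)\le\prod_i\mathrm{Lip}(T_i)^{\alpha_i}\le1$, renorm ``without loss of generality'', and use monotone residuals. Your suspicion of it is justified, and for more than the norm mismatch you point out.

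\textbf{Submultiplicativity fails.} Since $\mathrm{Lip}(T)=\rho(T_L^*)$ (Proposition~\ref{props}) is a spectral radius, $\mathrm{Lip}$ is not submultiplicative. On $[0,1]^2$ take $T_1(x,y)=(y,0)$ and $T_2(x,y)=(0,\tau(x))$ with $\tau(x)=1-|2x-1|$. Both have constant squares, so $\mathrm{Lip}(T_1)=\mathrm{Lip}(T_2)=0$. Yet $(T_1T_2)^n(x,y)=(\tau^n(x),0)$, so $\mathrm{Lip}(T_1T_2)=2$.

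\textbf{Renorming is too weak.} Even if $\mathrm{Lip}(U)=1$ is known, the renorming in the proof of Proposition~\ref{prop1}(iii) only gives a metric with $L^*(U)\le1+\varepsilon$. Residuals are then monotone only up to a factor $1+\varepsilon$ per step. This loses the factor $K+1$ unless $\varepsilon\lesssim1/K$, and then the equivalence constants are in general $K$-dependent.

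\textbf{Your fallback is correct.} First, $\min_{k\le K}\|Uf_k-f_k\|^2\le\frac{r}{\gamma(K+1)}\|f_0-f^*\|^2$. Applying the error bound at the minimizing index and then Fej\'er monotonicity from Step 1 gives $\|f_K-f^*\|\le C\sqrt{r}\,\|f_0-f^*\|/\sqrt{\gamma(K+1)}$. Everything happens in the original norm and uses only the stated hypotheses. Make this the proof rather than the backup; it forgoes only a bound on the last residual, which is never used. It also shows $f_K\to f^*$ directly, so you do not need Theorem~\ref{th1b} for existence of the limit.

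\textbf{A stronger rate is available.} Step 1 combined with the error bound already gives $\|f_{k+1}-f^*\|^2\le\bigl(1-\frac{\gamma}{rC^2}\bigr)\|f_k-f^*\|^2$. That is a geometric rate in $K$, hence in $P$, so under these hypotheses the $P^{-1/2}$ bound holds with considerable room to spare.
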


\begin{lemma}
	Let
	\(
	U=T_1^{\alpha_1}T_2^{\alpha_2}\cdots T_r^{\alpha_r},
	\)
	where \(\alpha_1,\ldots,\alpha_r\in \{0,1\}\), and set
	\(
	m=\sum_{i=1}^r \alpha_i .
	\)
	Then \(U\) is \(\frac{\gamma}{r}\)-strongly quasi-nonexpansive, that is, for every
	\(z \in \bigcap_{i=1}^r \mathrm{Fix}(T_i)\) and every \(x\in \bigcap_{i=1}^r \mathbb{D}(T_i)\),
	\[
	\|Ux-z\|^2
	\leq
	\|x-z\|^2
	-
	\frac{\gamma}{r}\|Ux-x\|^2 .
	\]
\end{lemma}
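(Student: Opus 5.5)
The approach is to telescope along the intermediate points of the composition and then recombine the per-step decrements with Cauchy--Schwarz. Write the active factors of $U$ in order of application. Let $i_1,\ldots,i_m$ be the indices with $\alpha_{i_j}=1$, ordered so that the rightmost factor acts first. Fix $z\in\bigcap_{i=1}^r \mathrm{Fix}(T_i)$ and $x\in\bigcap_{i=1}^r\mathbb{D}(T_i)$, and set $x_0=x$ and $x_j=T_{i_{m-j+1}}x_{j-1}$ for $j=1,\ldots,m$, so that $x_m=Ux$. The case $m=0$ is trivial, since then $U=I$. This is the same chain device used in the sufficiency proof of Theorem \ref{th1b}, but now with the squared, $\gamma$-strong inequality in place of $\psi$.

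\textbf{Step 1 (per-factor decrease).} Since $z$ is a common fixed point, apply the $\gamma$-strongly quasi-nonexpansive hypothesis to each single factor at the point $x_{j-1}$:
\[
\|x_j-z\|^2\le \|x_{j-1}-z\|^2-\gamma\|x_j-x_{j-1}\|^2,\qquad j=1,\ldots,m.
\]
Summing these telescopes to
\[
\|Ux-z\|^2\le\|x-z\|^2-\gamma\sum_{j=1}^m\|x_j-x_{j-1}\|^2 .
\]
One point needs checking here: each intermediate $x_{j-1}$ must lie in the domain where the hypothesis applies. Under the standing self-mapping assumption ($T_i:D\to D$, with the domains equal to $D$) this is immediate, and I will state it explicitly.

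\textbf{Step 2 (recombining the increments).} By the triangle inequality, $\|Ux-x\|\le\sum_{j=1}^m\|x_j-x_{j-1}\|$. By Cauchy--Schwarz,
\[
\|Ux-x\|^2\le m\sum_{j=1}^m\|x_j-x_{j-1}\|^2\le r\sum_{j=1}^m\|x_j-x_{j-1}\|^2,
\]
using $m\le r$. Substituting this into Step 1 gives $\|Ux-z\|^2\le\|x-z\|^2-\frac{\gamma}{r}\|Ux-x\|^2$, which is the claim. In fact the argument gives the sharper constant $\gamma/m$.

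\textbf{Main obstacle.} The only delicate point is the domain and fixed-point bookkeeping in Step 1. The hypothesis is stated for $f\in\bigcap_i\mathbb{D}(T_i)$, so I must make sure the intermediate iterates stay in that set. The self-mapping condition resolves this. Everything else is the elementary inequality $(\sum a_j)^2\le m\sum a_j^2$, which needs no Hilbert-space structure beyond the norm. The squared strong quasi-nonexpansiveness is assumed directly in the given norm, so no renorming via $\mathrm{Lip}(T_i)\le 1$ is required.
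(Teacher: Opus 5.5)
Your proof is correct and is essentially the paper's argument: you telescope the per-factor $\gamma$-strong quasi-nonexpansive inequality along the chain of intermediate points, then bound $\|Ux-x\|^2$ by Cauchy--Schwarz on the increments. The only difference is cosmetic. The paper pads $U$ with identity factors to get exactly $r$ steps, whereas you keep only the $m$ active factors, which gives the sharper constant $\gamma/m$; that is the constant the paper actually invokes later, in the proof of Theorem~\ref{aaaaa}.
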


\begin{proof}
	We write \(U\) as a composition of \(m\) mappings:
	\[
	U=S_rS_{r-1}\cdots S_1,
	\]
	where each \(S_j\) is equal to one of the mappings \(T_i\) or $I$. For a fixed
	\(x\in \bigcap_{i=1}^r \mathbb{D}(T_i)\), define
	\[
	x_0=x,\qquad x_j=S_jx_{j-1},\quad j=1,\ldots,m.
	\]
	Then \(x_r=Ux\). Since \(z\in F\), we have \(z\in \operatorname{Fix}(S_j)\)
	for every \(j=1,\ldots,r\). Hence, by the \(\gamma\)-strong
	quasi-nonexpansiveness of \(S_j\), we obtain
	\[
	\|x_j-z\|^2
	=
	\|S_jx_{j-1}-z\|^2
	\leq
	\|x_{j-1}-z\|^2
	-
	\gamma\|S_jx_{j-1}-x_{j-1}\|^2 .
	\]
	Equivalently,
	\[
	\|x_j-z\|^2
	\leq
	\|x_{j-1}-z\|^2
	-
	\gamma\|x_j-x_{j-1}\|^2 .
	\]
	Summing this inequality over \(j=1,\ldots,r\), we get
	\[
	\|x_m-z\|^2
	\leq
	\|x_0-z\|^2
	-
	\gamma\sum_{j=1}^r \|x_j-x_{j-1}\|^2 .
	\]
	Since \(x_0=x\) and \(x_r=Ux\), this gives
	\[
	\|Ux-z\|^2
	\leq
	\|x-z\|^2
	-
	\gamma\sum_{j=1}^r \|x_j-x_{j-1}\|^2 .
	\]
	On the other hand,
	\[
	Ux-x=x_r-x_0
	=
	\sum_{j=1}^r (x_j-x_{j-1}).
	\]
	By the Cauchy--Schwarz inequality,
	\[
	\|Ux-x\|^2
	\leq
	r\sum_{j=1}^r \|x_j-x_{j-1}\|^2 .
	\]
	Thus,
	\[
	\sum_{j=1}^r \|x_j-x_{j-1}\|^2
	\geq
	\frac{1}{r}\|Ux-x\|^2 .
	\]
	Substituting this estimate into the previous inequality yields
	\[
	\|Ux-z\|^2
	\leq
	\|x-z\|^2
	-
	\frac{\gamma}{r}\|Ux-x\|^2 .
	\]
	Therefore, \(U\) is \(\frac{\gamma}{r}\)-strongly quasi-nonexpansive.
\end{proof}

Based on above lemma, we provide the proof of Theorem \ref{aaaaa}.
\begin{proof}
	Let $ f^*\in \bigcap_{i=1}^r \mathrm{Fix}(T_i)$,  since $T_k$ satisfies $\gamma$-strongly quasi-nonexpansive propety, 
	%	\begin{align*}
		%	\|f_K- T f_K\| & = \|(f_K-f^*)-(Tf_K-f^*)\| \\
		%	& = \|f_K-f^*\|^2 + \|Tf_K-f^*\|^2 -2\left\langle f_K-f^*, Tf_K-f^*  \right\rangle .
		%	\end{align*}
	then we have that \(U\) is a \(\frac{\gamma}{r}\)-strongly quasi-nonexpansive operator, that is	
	\[
\|Uf - f^* \|^2 \leq \|f-f^*\|^2 -\frac{\gamma}{m} \|f-Uf\|^2.\]
		This leads to 
	\[
\frac{\gamma}{m} \|f_k-Uf_k\|^2 \leq \|f_k-f^*\|^2 - \|f_{k+1} - f^* \|^2.
	\]
	Taking the sum over $k=0,1,\cdots, K$, yields	
	\[
	\frac{\gamma}{m} \sum_{k=0}^K \|Uf_k-f_k\|^2 \leq \|f_0-f^*\|^2.
	\]
	Since $\mathrm{Lip}(U) \leq \mathrm{Lip}(T_1)^{\alpha_1} \cdot \mathrm{Lip}(T_2)^{\alpha_2} \cdots \mathrm{Lip}(T_r)^{\alpha_r} \leq 1 $, $U$ is a non-expansive mapping with an equivalent norm (without loss of generality, we assume $\|\cdot\|$), then we have
	\[
	\|Uf_k-f_k\|^2 = \|Uf_k-Uf_{k-1}\|^2\leq \|f_k - f_{k-1}\|^2 = \|Uf_{k-1} - f_{k-1}\|^2.
	\]
	Therefore, we obtain the following estimation, 
	\[
	(K+1)\|Uf_K-f_K\|^2 \leq \sum_{k=0}^K \|Uf_k-f_k\|^2 \leq \frac{r}{\gamma}\|f_0-f^*\|^2.
	\]
	This yields 
	\[
	\|Tf_K-f_K\|^2 \leq \ \frac{r}{\gamma(K+1)}\|f_0-f^*\|^2,
	\]
	that is 
	\[
	\|Tf_K-f_K\| \leq \ \frac{\|f_0-f^*\|}{\sqrt{\gamma(K+1)}}.
	\]	
	Observing that 
	\begin{align*}
		| \mathcal{E}(N,P,\infty) - \mathcal{E}(N,\infty,\infty) | & \leq L_gL_0  \|f_K-f^*\|  \\
		& \leq L_gL_0 C  \|f_K-Tf_K\|  \\
		& \leq \frac{\sqrt{r}L_gL_0 C \|f_0-f^*\|}{\sqrt{\gamma(K+1)}} \lesssim P^{-\frac{1}{2}}.
	\end{align*}
	where $L_0$ is the Lipschitz constant of $\ell$ with respected to $f$, $L_g$ is the Lipschitz constant of the function $g$, and the second inequality holds since $\|f-f^*\| \leq C \|f-T_kf\|$ for any integer $k$. We assume the width $d$ of $f_K$ is finite, and $P = K \cdot d$.	
\end{proof}
\subsubsection{Countable Lipschitz Operators with $\mathrm{Lip}(T_i) \leq 1$}

\begin{theorem}
	%	Assume the functional blocks $\{T_i\}_{i=1}^K$ of foundation models belong to the class of nonlinear Lipschitz operators, and $\ell$ is $L$-Lipschitz in its first argument. 
	%	If the foundation model is constructed by \( f_{k+1} = T_i(f_k) \), and the Lip number satisfies \( \text{Lip}(T_i) < 1 \), then the limit architecture satisfies the following estimation:  
	%	\[
	%	|\mathcal{E}(N, P, \infty) - \mathcal{E}(N, \infty, \infty)| \lesssim  (\mathrm{Lip}(T))^P %=   e^{-P |\ln(\mathrm{Lip}(T))|}.
	%	%\leq \frac{(\mathrm{Lip}(T))^K}{1 - \mathrm{Lip}(T)} \|f_0(x) - f^*(x)\| = \mathcal{O}\left(e^{-P |\ln(\mathrm{Lip}(T))|}\right).
	%	\]
	%	(2) If the network is constructed in the ``A-mode'', i.e., \( f_{k+1} = f_k - \alpha_k A(f_k) \), where \( A = I - T \), \( f_0 = I \), and \( \alpha_k \in (0, 1) \), then when \( m(A) > 0 \), the limit architecture satisfies the following estimation:  
	%	\[
	%	\mathcal{E}(N, P, 0) - \mathcal{E}(N, \infty, 0) \leq e^{-m(A)} \sum_{k=0}^P \alpha_k \|f_0(x) - f^*(x)\| = \mathcal{O}\left(e^{-m(A) \log P} \vee e^{-m(A) P^{1-s}}\right).
	%	\]
	Assume that the basic functional blocks $\{T_i\}_{i=1}^K$ of a foundation model belong to the class of nonlinear Lipschitz operators, and the basic functional blocks satisfy the self-mapping condition and $\mathrm{Lip}(T_i) \leq 1$ for all $i > K_0$ for some integer $K_0$, and also, there exists a non-expansive operator $T$ such that $\|T_i - T\| \leq \epsilon_i$, where $\sum_{i=1}^\infty \epsilon_i < \infty$. 
	Suppose that the foundation model is constructed by $f_{k+1} = T_k(f_k)$, with each $T_k$ satisfing $\gamma$-strongly quasi-nonexpansive propety, that is, \(\|T_kf - f^* \|^2 \leq \|f-f^*\|^2 -\gamma \|T_kf-f\|^2\), for any $f^*\in \cap_{i=1}^\infty\mathrm{Fix}(T_i), f\in \cap_{i=1}^\infty \mathbb{D}(T_k)$, and for some constant $\gamma>0$. Assume further that there exists a constant $C>0$, such that $\|f-f^*\| \leq C \|f-T_kf\|$ for any integer $k$, and the loss function is $L_0$-Lipschitz continuous. Then the limit architecture satisfies the following estimation:
	\[
	\big|\mathcal{E}(N, P, \infty) - \mathcal{E}(N, \infty, \infty)\big|
	\;\lesssim\; P^{-\frac{1}{2}}.
	\]
\end{theorem}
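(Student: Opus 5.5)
I will follow the same template as the two preceding cases (fixed $T$ and finite family $U$): derive an $O(K^{-1/2})$ bound on a residual from the strong quasi-nonexpansiveness telescoping sum, then use the error-bound hypothesis $\|f-f^*\|\le C\|f-T_kf\|$ and the Lipschitz continuity of $\ell$ and $g$ to transfer it to the performance gap, with $P=K\cdot d$ for finite width $d$.

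First, by Theorem \ref{operator}, the hypotheses (condensing property plus $\mathrm{Lip}(T_i)\le 1$ for $i>K_0$) guarantee that $f_K=A_Kf_0$ converges to a limit architecture $f^\infty$. That limit lies in $\mathrm{Fix}(T)$, and I will assume the natural reading that $\cap_i\mathrm{Fix}(T_i)\neq\varnothing$ so a common $f^*$ exists. Then $\mathcal{E}(N,\infty,\infty)$ is realized by $f^\infty$, and
\[
|\mathcal{E}(N,P,\infty)-\mathcal{E}(N,\infty,\infty)|\le L_gL_0\|f_K-f^\infty\|.
\]
Second, fix $f^*\in\cap_i\mathrm{Fix}(T_i)$. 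The $\gamma$-strong quasi-nonexpansiveness of each $T_k$ gives
\[
\gamma\|T_kf_k-f_k\|^2\le\|f_k-f^*\|^2-\|f_{k+1}-f^*\|^2 .
\]
Summing over $k=0,\dots,K$ yields $\gamma\sum_{k\le K}\|T_kf_k-f_k\|^2\le\|f_0-f^*\|^2$. The blocks $T_k$ with $k\le K_0$ form only a finite prefix, which can be absorbed into a modified initial architecture $f_{K_0}$ and a constant.

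Third, and this is the main obstacle, I need monotonicity of the residuals in order to pass from the sum to the last term. With a single operator, nonexpansiveness gives $\|Tf_k-f_k\|\le\|Tf_{k-1}-f_{k-1}\|$, but here the operators change from step to step. I would compare through $T$:
\[
\|T_kf_k-f_k\|=\|T_kf_k-T_{k-1}f_{k-1}\|\le\|T_kf_k-T_{k-1}f_k\|+\|T_{k-1}f_k-T_{k-1}f_{k-1}\|,
\]
which is at most $(\epsilon_k+\epsilon_{k-1})R+\|T_{k-1}f_{k-1}-f_{k-1}\|$, where $R$ bounds $\|f_k\|$ on the bounded set $D$ (via the Lipschitz-dual estimate used in the sufficiency proof). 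Writing $r_k=\|T_kf_k-f_k\|$, this gives $r_K\le r_k+2R\sum_{j>k}\epsilon_j$ for every $k\le K$.

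Fourth, I combine these. One option is to average: $(K+1)\,r_K^2\lesssim\sum_k r_k^2+K\bigl(R\sum_{j}\epsilon_j\bigr)^2$, but the second term is not small. It is better to split the sum at $K/2$, which gives
\[
r_K\le \min_{K/2\le k\le K}r_k+2R\sum_{j>K/2}\epsilon_j\le \sqrt{\frac{2}{\gamma K}}\,\|f_0-f^*\|+2R\sum_{j>K/2}\epsilon_j .
\]
This yields the rate $K^{-1/2}$ provided the tail satisfies $\sum_{j>K/2}\epsilon_j=O(K^{-1/2})$. I expect this step to need either a mild tail-rate assumption or reading the $\lesssim$ as holding up to this summable-tail term. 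Finally, I apply $\|f_K-f^\infty\|\le C\,r_K$, using the error-bound hypothesis with $f^*=f^\infty$, and $K\asymp P/d$, which gives $|\mathcal{E}(N,P,\infty)-\mathcal{E}(N,\infty,\infty)|\lesssim P^{-1/2}$.
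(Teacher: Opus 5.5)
Your proposal does not prove the stated bound. Steps three and four give only $r_K\lesssim K^{-1/2}+R\sum_{j>K/2}\epsilon_j$, and the hypothesis $\sum_j\epsilon_j<\infty$ gives no rate for that tail. For example, $\epsilon_j=1/(j\log^2 j)$ has a tail decaying only like $1/\log K$. So you need a hypothesis the theorem does not contain.

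You were right that residual monotonicity is the delicate point when the operators vary. The paper's own proof asserts $\|T_kf_k-f_k\|^2=\|T_kf_k-T_kf_{k-1}\|^2\le\|T_kf_{k-1}-f_{k-1}\|^2$. This tacitly treats $f_k$ as $T_kf_{k-1}$, whereas the construction gives $f_k=T_{k-1}f_{k-1}$, so the paper does not resolve the issue either. Your step $\|T_{k-1}f_k-T_{k-1}f_{k-1}\|\le\|f_k-f_{k-1}\|$ also needs $T_{k-1}$ to be genuinely nonexpansive, which $\mathrm{Lip}(T_{k-1})\le 1$ does not give; a Jordan block with eigenvalue $1$ is a counterexample. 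Routing that step through the nonexpansive limit $T$ would repair that part, but not the tail.

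The missing idea is that the residual never needs to be monotone. Apply the error bound before passing from the sum to the last term, and use the quantity that is monotone for any family of quasi-nonexpansive maps with a common fixed point. Write $d_k=\|f_k-f^*\|$. The strong quasi-nonexpansive inequality gives $d_{k+1}\le d_k$ for every $k$, whatever $T_k$ is (Fej\'er monotonicity). Combined with $d_k\le C\|T_kf_k-f_k\|$, your telescoped sum becomes $\sum_{k=0}^K d_k^2\le \frac{C^2}{\gamma}d_0^2$. Hence $(K+1)d_K^2\le\frac{C^2}{\gamma}d_0^2$, that is, $d_K\le Cd_0/\sqrt{\gamma(K+1)}$, which is exactly the bound the paper ends with. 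Combining the two hypotheses at each single step even gives $d_{k+1}^2\le(1-\gamma/C^2)\,d_k^2$, a geometric rate.

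This argument uses neither $\mathrm{Lip}(T_k)\le 1$, nor the condensing property, nor Theorem~\ref{th1c}. Since $d_K\to 0$, it also shows the limit architecture is the common fixed point $f^*$ itself. That removes your unproved identification of the limit, a point of $\mathrm{Fix}(T)$, with a point where the error bound applies. The estimate $|\mathcal{E}(N,P,\infty)-\mathcal{E}(N,\infty,\infty)|\le L_gL_0\,d_K$ together with $K\asymp P/d$ then finishes as you planned.
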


%\begin{theorem}
%	Let the sequence of \( \{T_i\} \) be the basic functional blocks of the foundation models satisfying self-mapping condition, and $\mathrm{Lip}(T_i)<1, i=1,2,\cdots$. Suppose $f_0\in D$, and $\{f_n\}$ is given by
%\[
%f_{n+1}=T_n f_n,\quad n\ge 1.
%\]
%and we assume there a constant $\gamma>0$, such that \(\|f_{k+1} - f^* \|^2 \leq \|f_{k+1}-f^*\|^2 -\gamma \|f_k-f_{k+1}\|^2\), and a constant $C>0$, such that $\|f_k-f^*\| \leq C \|f_k-f_{k+1}\|$, for any  $ f^*\in \mathrm{Fix}(T)$ and integer $k$. If the loss function is $L_0$-Lipschitz continuous with respect to the model output, and the performance function is $L_g$-Lipschitz, then we have the following estimation: 
%	\[
%	|\mathcal{E}(N,P,\infty) - \mathcal{E}(N,\infty,\infty)| \lesssim  P^{-\frac{1}{2}}.
%	\]
%	
%	%	(1) If the neural model is constructed in the ``T-mode'', i.e., \( f_{k+1} = T(f_k), T: D \to D, f_0 \in D, k=0,1,\cdots \), and the Lip number satisfies \( \mathrm{Lip}(T) < 1 \), then the limit architecture satisfies the following estimate:  
%	%	\[
%	%	\|f_K(x) - f^*(x)\| \leq  \mathrm{Lip}(T))^K  \|f_0(x)-f^*(x)\|.
%	%	\]
%	%	(2) If the network is constructed in the ``A-mode'', i.e., \( f_{k+1} = f_k - \alpha_k A(f_k) \), where \( A = I - T \), \( f_0 = I \), and \( \alpha_k \in (0, 1) \), then when \( m(A) > 0 \), the limit architecture satisfies the following estimate:  
%	%	\[
%	%	\mathcal{E}(N, P, 0) - \mathcal{E}(N, \infty, 0) \leq e^{-m(A)} \sum_{k=0}^P \alpha_k \|f_0(x) - f^*(x)\| = \mathcal{O}\left(e^{-m(A) \log P} \vee e^{-m(A) P^{1-s}}\right).
%	%	\]
%\end{theorem}

\begin{proof}
Let $ f^*\in \cap_{i=1}^\infty\mathrm{Fix}(T_i)$,  
	%	\begin{align*}
		%	\|f_K- T f_K\| & = \|(f_K-f^*)-(Tf_K-f^*)\| \\
		%	& = \|f_K-f^*\|^2 + \|Tf_K-f^*\|^2 -2\left\langle f_K-f^*, Tf_K-f^*  \right\rangle .
		%	\end{align*}
	Since $T_k$ satisfies $\gamma$-strongly quasi-nonexpansive propety
	then we have 	\[
	\|f_{k+1} - f^* \|^2 = \|T_kf_k - f^* \|^2 \leq \|f_k-f^*\|^2 -\gamma \|f_k-T_kf_k\|^2.\]
	This leads to 
	\[
	\gamma \|f_k-T_kf_k\|^2 \leq \|f_k-f^*\|^2 - \|f_{k+1} - f^* \|^2.
	\]
	Taking the sum over $k=0,1,\cdots, K$, yields	
	\[
	\gamma \sum_{k=0}^K \|T_kf_k-f_k\|^2 \leq \|f_0-f^*\|^2.
	\]
	Since $\mathrm{Lip}(T_k) \leq 1$, $T_k$ is a non-expansive mapping with an equivalent norm (without loss of generality, we assume $\|\cdot\|$), then we have
	\[
	\|T_kf_k-f_k\|^2 = \|T_kf_k-T_kf_{k-1}\|^2\leq \|f_k - f_{k-1}\|^2 = \|T_kf_{k-1} - f_{k-1}\|^2.
	\]
	Therefore, we obtain the following estimation, 
	\[
	(K+1)\|T_Kf_K-f_K\|^2 \leq \sum_{k=0}^K \|T_kf_k-f_k\|^2 \leq \frac{1}{\gamma}\|f_0-f^*\|^2.
	\]
	This yields 
	\[
	\|T_Kf_K-f_K\|^2 \leq \ \frac{1}{\gamma(K+1)}\|f_0-f^*\|^2,
	\]
	that is 
	\[
	\|T_Kf_K-f_K\| \leq \ \frac{\|f_0-f^*\|}{\sqrt{\gamma(K+1)}}.
	\]	
	Observing that 
	\begin{align*}
		| \mathcal{E}(N,P,\infty) - \mathcal{E}(N,\infty,\infty) | & \leq L_gL_0  \|f_K-f^*\|  \\
		& \leq L_gL_0 C  \|f_K-T_Kf_K\|  \\
		& \leq \frac{L_gL_0 C \|f_0-f^*\|}{\sqrt{\gamma(K+1)}} \lesssim P^{-\frac{1}{2}}.
	\end{align*}
	where $L_0$ is the Lipschitz constant of $\ell$ with respected to $f$, $L_g$ is the Lipschitz constant of the function $g$, and the second inequality holds since $\|f-f^*\| \leq C \|f-T_kf\|$ for any integer $k$. We assume the width $d$ of $f_K$ is finite, and $P = K \cdot d$.
	%	Since $\mathrm{Lip}(T) < 1$, there exists an equivalent norm (without loss of generality, we assume $\|\cdot\|$), such that
	%	\[
	%	\|Tf-Tg\| \leq \mathrm{Lip}(T) \|f-g\|, \forall f,g \in D.
	%	\]
	%	Observing that 
	%	\begin{align*}
		%		| \mathcal{E}(N,P,\infty) - \mathcal{E}(N,\infty,\infty) | & \leq L_gL_0  \|f_K-f^*\| = L_0 \|T^K f_0 - T^K f^*\| \\
		%		& \leq L_0 (\mathrm{Lip}(T))^K \|f_0-f^*\| \lesssim  (\mathrm{Lip}(T))^P,
		%	\end{align*}
	%	where $L_0$ is the Lipschitz constant of $\ell$ with respected to $f$, and $L_g$ is the Lipschitz constant of the function $g$. We assume the width $d$ of $f_K$ is finite, and $P = K \cdot d$.
\end{proof}

\subsection{Scaling Law of Data Size} \label{dara}
	
We recall a well-known result in machine learning. For its proof, we refer the reader to Lemma A.5 in \cite{edelman2022inductive}.
\begin{lemma}\label{generalizationLemma}
	Let $\mathcal{D} = \{x_i,y_i\}_{i=1}^N$ be the training dataset as above, and let $\ell: \mathbb{R}\times\mathbb{R}\rightarrow\mathbb{R}$ be a $G$-bounded loss function that is $L_0$-Lipschitz in its first argument.  
	Consider a function class $\mathcal{F}$ such that $|f| \leq A < \infty$ for all $f\in\mathcal{F}$ and 
	$\log\mathcal{N}_{\infty}(\mathcal{F};\epsilon;\{x_i\}_{i=1}^N) \leq C_{\mathcal{F}}/\epsilon^2$ for all $\{x_i\}_{i=1}^N\in X^N$. Then for any $\delta > 0$, with probability at least $1-\delta$, simultaneously for all $f\in\mathcal{F}$, it holds that
	\begin{align*}
		|R(f) - \hat{R}(f)| \leq 4cL_0\sqrt{\frac{C_{\mathcal{F}}}{N}}\left( 1+ \log\left( A\sqrt{\frac{N}{C_{\mathcal{F}}}} \right) + 2G\sqrt{\frac{\log1/\delta}{2N}} \right),
	\end{align*}
	for some constant $c>0$, where $\hat{R}(f)\overset{\triangle}{=} \ell(f_{W},\mathcal{D}) =  \frac{1}{N}\sum_{i=1}^{N} \ell(f_{W}(x_i),y_i).$
\end{lemma}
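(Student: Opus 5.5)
This is a cited result (Lemma A.5 of Edelman et al.), so I would reproduce the standard covering-number argument. It has three steps: a bounded-differences concentration, a symmetrization to Rademacher complexity, and a Dudley entropy integral bound.

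\emph{Step 1: concentration.} Consider the loss class $\ell\circ\mathcal{F}=\{(x,y)\mapsto \ell(f(x),y): f\in\mathcal{F}\}$ and the supremum $\Phi(\mathcal{D})=\sup_{f\in\mathcal{F}}|R(f)-\hat R(f)|$. Because $\ell$ is $G$-bounded, replacing one sample changes $\Phi$ by at most $2G/N$. McDiarmid's inequality then gives, with probability at least $1-\delta$,
\[
\Phi\le \mathbb{E}\Phi+2G\sqrt{\log(1/\delta)/(2N)}.
\]

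\emph{Step 2: symmetrization.} The standard ghost-sample argument gives $\mathbb{E}\Phi\le 2\,\mathbb{E}\,\widehat{\mathfrak{R}}_N(\ell\circ\mathcal{F})$. Since $\ell$ is $L_0$-Lipschitz in its first argument, any $\epsilon$-cover of $\mathcal{F}$ in the empirical sup-metric on $\{x_i\}$ is an $L_0\epsilon$-cover of $\ell\circ\mathcal{F}$. Hence
\[
\log\mathcal{N}_\infty(\ell\circ\mathcal{F};L_0\epsilon;\{x_i\})\le C_{\mathcal{F}}/\epsilon^2 ,
\]
and the empirical $L_2$ covering numbers are dominated by these sup-norm covers. (Talagrand's contraction lemma would work just as well.)

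\emph{Step 3: Dudley's entropy integral, the main computation.} For any $\alpha>0$,
\[
\widehat{\mathfrak{R}}_N(\mathcal{F})\le 4\alpha+\frac{12}{\sqrt N}\int_\alpha^{A}\sqrt{\log\mathcal{N}_\infty(\mathcal{F};\epsilon;\{x_i\})}\,d\epsilon\le 4\alpha+\frac{12\sqrt{C_{\mathcal{F}}}}{\sqrt N}\log(A/\alpha).
\]
The upper limit is $A$ because a single function covers at scale $A$. The $1/\epsilon^2$ entropy makes the integrand $1/\epsilon$, which produces the logarithm. Choosing $\alpha=\sqrt{C_{\mathcal{F}}/N}$ gives
\[
\widehat{\mathfrak{R}}_N\lesssim \sqrt{C_{\mathcal{F}}/N}\,\bigl(1+\log(A\sqrt{N/C_{\mathcal{F}}})\bigr).
\]

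Combining the three steps with the factor $L_0$ and absolute constants yields the stated bound. It matches the lemma's form provided one factors $4cL_0\sqrt{C_{\mathcal{F}}/N}$ out of the whole expression, reading the McDiarmid term in a loose, constant-absorbing form. The main obstacle is the bookkeeping rather than any idea. One has to check that the sup-norm cover legitimately dominates the $L_2(P_N)$ cover used in chaining; it does, since $\|\cdot\|_{L_2(P_N)}\le\max_i|\cdot|$. One also has to handle the case $\alpha>A$, which is trivial because the integral vanishes. Finally, all constants have to be absorbed into $c$. Because the lemma is quoted from the literature, I would ultimately defer these constant details to that reference.
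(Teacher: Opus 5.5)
Your Steps 1--3 are the standard argument and follow the same route as the proof the paper defers to (Lemma A.5 of the cited reference): McDiarmid, then symmetrization with the $L_0$-Lipschitz transfer of covers, then Dudley's integral cut at $\alpha=\sqrt{C_{\mathcal F}/N}$. The gap is the final step, where you read the McDiarmid term ``in a loose, constant-absorbing form.'' What Steps 1--3 actually give is the additive bound
\[
\sup_{f\in\mathcal F}|R(f)-\hat R(f)|\le 4cL_0\sqrt{\frac{C_{\mathcal F}}{N}}\Bigl(1+\log\bigl(A\sqrt{N/C_{\mathcal F}}\bigr)\Bigr)+2G\sqrt{\frac{\log(1/\delta)}{2N}}.
\]
The statement instead multiplies the concentration term by $4cL_0\sqrt{C_{\mathcal F}/N}$. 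The ratio of your term to the stated one is $\sqrt{N/C_{\mathcal F}}/(4cL_0)$, which is unbounded, so no constant $c$ can absorb it.

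No other argument can close this gap either, because the statement as printed is false. Take $\ell(u,y)=\min\{|y|,1\}$ with $y\sim\mathrm{Bernoulli}(1/2)$ and $N$ odd. This loss is $1$-bounded and $L_0$-Lipschitz in $u$ for every $L_0>0$. Yet $|R(f)-\hat R(f)|=|\tfrac12-\bar y|\ge\tfrac1{2N}$ surely, while the right-hand side tends to $0$ as $L_0\to 0$. Similarly, a singleton class meets the entropy hypothesis for every $C_{\mathcal F}>0$, and the right-hand side tends to $0$ as $C_{\mathcal F}\to 0$.

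The $N^{-1/2}$ fluctuation of a single empirical mean does not scale with $L_0\sqrt{C_{\mathcal F}}$, so it must stay a separate additive term. The printed statement appears to have moved it inside the parentheses. You should therefore state and prove the additive form above. The $N^{-1/2}$ rate (up to the logarithm) claimed in Theorem~\ref{th9} survives, since the concentration term is itself $O(N^{-1/2})$.

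A smaller point: the case $\alpha>A$ is not ``trivial'' for the target form. There Dudley gives only $4\alpha$ (or the trivial bound $A$). Meanwhile $1+\log(A\sqrt{N/C_{\mathcal F}})$ is negative once $A\sqrt{N/C_{\mathcal F}}<e^{-1}$. Since the entropy hypothesis still holds when $C_{\mathcal F}$ is enlarged, the right-hand side (in either form) becomes negative for large $C_{\mathcal F}$. The bound should therefore be read under $C_{\mathcal F}\le A^2N$, or with $\log$ replaced by $\log_+$ and $\alpha=\min\{A,\sqrt{C_{\mathcal F}/N}\}$.
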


Here, we assume that \(E(\infty,\infty,\infty)\) exists, and we can express it more explicitly as follows:
\begin{align*}
	E(\infty,\infty,\infty) = R(f_{W^\dagger}) - \inf_{f_W}R(f_W) = 0, 
\end{align*}
where 
\[
W^\dagger = \arg \min_{W\in\mathcal{W}}\mathbb{E}_{(x,y)\sim\mathbb{P}(x,y)}\ell(f_W(x),y) = \arg \min_{W\in\mathcal{W}}R(f_W). 
\]
As the sample size \(N\) tends to infinity, the empirical error converges to the population error. Under the realizable PAC learning framework, we may identify \(\inf_{f_W} R(f_W) = 0\). Hence, if \(g(0) = 0\), we have  
\begin{align}\label{red1} 
	\mathcal{E}(N, \infty, \infty) - \mathcal{E}(\infty, \infty, \infty)
	= \mathcal{E}(N, \infty, \infty)
	= g\bigl(R(f_W) - \inf_{f_W} R(f_W)\bigr),
\end{align}
which implies that estimating \(\mathcal{E}(N, \infty, \infty) - \mathcal{E}(\infty, \infty, \infty)\) is equivalent to estimating \(R(f_W)\). Moreover, in a well-optimized setting, we may regard
\begin{align}\label{red2} 
\hat{R}	= \frac{1}{N} \sum_{i=1}^{N} \ell\bigl(f_{W^\dagger}(x_i), y_i\bigr) = 0
\end{align}
for the optimal parameter \(W^\dagger\). Combining formulas (\ref{red1}) and (\ref{red2}) with Lemma~\ref{generalizationLemma}, we obtain
\begin{align}\label{estimateSLDS1}
	\begin{split}
		\mathcal{E}(N, \infty, \infty) & - \mathcal{E}(\infty, \infty, \infty) = g(R(f_{W^\dagger})) \\ 
		& \leq  L_g4cL_0\sqrt{\frac{C_{\mathcal{F}}}{N}}\left( 1+ \log\left( A\sqrt{\frac{N}{C_{\mathcal{F}}}} \right) + 2G\sqrt{\frac{\log1/\delta}{2N}} \right)
	\end{split}
\end{align}
holds with probability at least $1-\delta$, where $L_g$ is the Lipschitz constant of the function $g$. For a suitably estimated constant \(C_{\mathcal{F}}\) derived from the covering number \(\mathcal{N}_{\infty}(\mathcal{F}; \epsilon; \{x_i\}_{i=1}^N)\), we obtain
\begin{align}\label{formula-ScalingLawOfDataSize}
	\mathcal{E}(N, \infty, \infty) - \mathcal{E}(\infty, \infty, \infty) \lesssim \frac{1}{\sqrt{N}} \log N \approx \frac{1}{\sqrt{N}},
\end{align}
where the final approximation \(\approx\) indicates that the lower-order \(\log N\) factor is neglected. The inequality (\ref{formula-ScalingLawOfDataSize}) is precisely a power-law relationship, which is a prototypical instance of a scaling law. 
Based on the above analysis, and in particular formula (\ref{estimateSLDS1}), the crucial task is to obtain a meaningful estimate of the covering number of the limiting architecture proposed in Section \ref{sec4}.

\begin{remark}\label{notZeroRemark}
	If we consider the agnostic PAC learning setting, the quantity \(\inf_{f_W} R(f_W)\) can no longer be assumed to be zero. Moreover, outside the well-optimized case, we may have \(\ell(f,\mathcal{D}) > 0\). In this case, formula \((\ref{estimateSLDS1})\) becomes
	\begin{align}\label{estimateSLDS2}
		\begin{split}
			\mathcal{E}(N, \infty, \infty) - & \mathcal{E}(\infty, \infty, \infty)
			= g\bigl(R(f_{W^*}) - \inf_{f_W} R(f_W)\bigr) \\
			&\leq L_g \left(\hat{R}(f)  - \inf_{f_W} R(f_W)  \right) \\
			&\quad{}+ L_g 4cL_0 \sqrt{\frac{C_{\mathcal{F}}}{N}}
			\left( 1 + \log\!\left( A\sqrt{\frac{N}{C_{\mathcal{F}}}} \right) + 2G\sqrt{\frac{\log(1/\delta)}{2N}} \right).
		\end{split}
	\end{align}
	Here, the terms \(\inf_{f_W} R(f_W)\) and \(\hat{R}(f)\) can hardly be removed, as they represent intrinsic errors, which are typically assumed to be small.
\end{remark}

\subsubsection{Covering Number of General Foundation Model}
In order to obtain an appropriate estimate of the covering number, we may need to impose certain stability conditions on the mappings \(T_i : \mathbb{R}^{n} \to \mathbb{R}^{n}\). For practical applications, the data inputted into the neural network may be represented as a matrix (for instance, in transformer models); therefore, it is preferable to consider the mapping of the \(i\)-th layer as \(T_i : \mathbb{R}^{s \times n} \to \mathbb{R}^{s \times n}\), where \(s\) is a positive integer. To emphasize the dependence on the parameters, we write this mapping as \(T_i(\cdot; W_i)\), where \(W_i\) denotes the trainable parameters. Let \(X = (x_1,\ldots,x_s)^{\top} \in \mathbb{R}^{s \times d}\), and consider the initial mapping \(f_0\) which maps \(\mathbb{R}^{s \times d}\) to \(\mathbb{R}^{s \times n}\). The overall mapping is then given by
\[
Y = f_W^*(X) = \left( \prod_{i=1}^{\infty} T_i(\cdot; W_i) \right) f_0(X).
\]
\begin{assumption}\label{stability-ScalingLawOfDataSize}
	We assume that the parameter \(W_i\) of the \(i\)-th layer mapping $T_i$ can be decomposed into \(B\) blocks, denoted by \(\{W_{ib}\}_{b=1}^{B}\), where each \(W_{ib}\) has \(n\) rows so that the product \(W_{ib}^{\top} Z^{\top}\) is well defined for any \(Z \in \mathbb{R}^{s \times n}\). For some fixed integer $K > 0$, let $M_1(\cdot,\cdot)$, $\{M_{2b}(\cdot,\cdot)\}_{b=1}^B$, and $M_3(\cdot)$ be functions depending on the layer parameters \(W_i\). For each layer of the foundation model, we assume
	\begin{align}\label{stabilityW-ScalingLawOfDataSize}
		\|T_i(Z;W_i) - T_i(Z;\hat{W}_i)\|_{*} \leq M_1(W_i,\hat{W}_i)\sum_{b=1}^B M_{2b}(W_i,\hat{W}_i)\|(W_{ib} - \hat{W}_{ib})^{\top}g_{b}(Z)^{\top}\|_{2,\infty},
	\end{align}
	and 
	\begin{align}\label{stabilityZ-ScalingLawOfDataSize}
		\|T_i(Z;W_i) - T_i(\hat{Z};W_i)\|_{*} \leq M_3(W_i)\|Z - \hat{Z}\|_{*},
	\end{align}
	where $\|\cdot\|_{*}$ represents a norm defined on matrix space, such as the $(p,q)$ matrix norm $\|\cdot\|_{pq}$ or the spectral norm of the matrix, and $\{g_b\}_{b=1}^{B}$ denotes a sequence of functions. For instance, $g_b$ may be chosen as the identity function in certain applications.
\end{assumption}
The assumptions (\ref{stabilityW-ScalingLawOfDataSize}) and (\ref{stabilityZ-ScalingLawOfDataSize}) characterize stability with respect to the trainable parameters and the data, respectively, and are crucial for estimating the covering number.

%%% ------------------------------------------------------------------------------------------------
\begin{theorem}[Logarithm of the Covering Number for Foundation Models]
	\label{theorem-coveringNumber}
	Let $W_{1:K} = (W_1,\dots,W_K)$ be the collection of parameters of $\{T_i\}_{i=1}^{K}$,
	and define the composition of the first $K$ layers by
	\[
	f_{K}(Z;W_{1:K})
	= \biggl(\prod_{k=1}^{K}T_{k}(\,\cdot\,;W_k)\biggr)Z
	= T_{K}\bigl(\cdots T_2(T_1(Z;W_1);W_2)\cdots;W_K\bigr),
	\]
	where $Z = f_0(X)$. Assume that Assumption~\ref{stability-ScalingLawOfDataSize} holds. Define the function class
	\[
	\mathcal{F}_K
	= \bigl\{
	f_{K}(\,\cdot\,;W_{1:K})
	:
	\|W_{kb}\|_{2,1} \leq \mathcal{B}_{kb}^W < \infty,\;
	k\in[K],\; b\in[B]
	\bigr\},
	\]
	which induces the infinite-depth function class $\mathcal{F} = \bigcup_{K}\mathcal{F}_{K}$.
	Suppose the following conditions hold:
	\begin{enumerate}
		\item[(A1)]
		For each function $g_b$ in Assumption~\ref{stability-ScalingLawOfDataSize},
		\[
		\|g_b(f_K(Z;W_{1:K}))^T\|_{2,\infty} \leq \mathcal{B}_{Kb}^Z < \infty \quad \text{for all } K\in\mathbb{K}^{+} \text{ and } b\in[B].
		\]
		\item[(A2)]
		The functions $M_1(\,\cdot\,,\,\cdot\,)$, $\{M_{2b}(\,\cdot\,,\,\cdot\,)\}_{b\in[B]}$, and $M_3(\,\cdot\,)$
		from Assumption~\ref{stability-ScalingLawOfDataSize}
		are uniformly bounded layer-wise:
		\[
		M_1(W_k,\hat{W}_k) \leq M_{1k},\quad
		M_{2b}(W_k,\hat{W}_k) \leq M_{2bk},\quad
		M_3(W_k)\leq M_{3k},
		\]
		where $M_{1k}$, $M_{2bk}$, and $M_{3k}$ are finite constants for all $k\in[K]$.
		\item[(A3)]
		The following infinite-depth summability condition holds:
		\[
		C_{\mathcal{F}} := \sum_{k=1}^{\infty}
		M_{1k}^{2/3}
		\biggl(\prod_{j=k+1}^{\infty} M_{3j}\biggr)^{2/3}
		\sum_{b=1}^{B}
		\bigl(\mathcal{B}_{kb}^Z \mathcal{B}_{kb}^W M_{2bk}^2\bigr)^{1/3}
		< \infty.
		\]
	\end{enumerate}
	Let $\{Z_j = f_0(X_j)\}_{j=1}^{N}$ be a finite set of inputs such that $\sup_{j\in[N]}\|Z_j^T\|_{2,\infty} \leq \min_{b\in[K]}\mathcal{B}_{1b}^Z$. Then we have the following estimate for the logarithm of the covering number:
	\[
	\log\mathcal{N}_{\infty}\bigl(
	\mathcal{F};\, \epsilon;\,
	\{Z_j\}_{j=1}^{N};\, \|\cdot\|_{*}
	\bigr)
	\lesssim
	\frac{C_{\mathcal{F}}}{\epsilon^2}.
	\]
\end{theorem}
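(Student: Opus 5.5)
The approach is the standard layer-peeling covering argument (as in Edelman et al.\ and Bartlett et al.\ for transformers), combined with a per-layer Zhang-type bound for linear classes. The infinite depth is handled by the summability condition (A3). Fix $K$ and inputs $\{Z_j\}_{j=1}^N$. I will build an $\epsilon$-cover of $\mathcal{F}_K$ layer by layer, with a bound uniform in $K$. Since $\mathcal{F}=\bigcup_K\mathcal{F}_K$ and the bound does not depend on $K$, passing to the union then only needs care with the nesting $\mathcal{F}_K\subset\mathcal{F}_{K+1}$. That nesting can be arranged by allowing identity-like layers, or by noting that the tail products $\prod_{j>k}M_{3j}$ control all depths at once.

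\textbf{Step 1 (telescoping error).} Take $W_{1:K}$ and cover parameters $\hat W_{1:K}$, and write $\hat Z^{(k)}=f_k(Z;\hat W_{1:k})$. Insert the hybrid maps that use $\hat W$ on the first $k$ layers and $W$ afterwards. By \eqref{stabilityZ-ScalingLawOfDataSize} and \eqref{stabilityW-ScalingLawOfDataSize},
\[
\|f_K(Z;W_{1:K})-f_K(Z;\hat W_{1:K})\|_*\le \sum_{k=1}^K\Bigl(\prod_{j=k+1}^K M_{3j}\Bigr)M_{1k}\sum_{b=1}^B M_{2bk}\,\|(W_{kb}-\hat W_{kb})^\top g_b(\hat Z^{(k-1)})^\top\|_{2,\infty}.
\]
Here the constants from (A2) replace $M_1,M_{2b},M_3$. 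The product $\prod_{j=k+1}^K M_{3j}$ is bounded by the infinite tail product in (A3), which I assume is finite (this is implicit in (A3)).

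\textbf{Step 2 (per-block linear covers).} For each $(k,b)$, I cover the linear class $\{W_{kb}^\top g_b(\hat Z^{(k-1)})^\top:\|W_{kb}\|_{2,1}\le\mathcal{B}^W_{kb}\}$ at scale $\epsilon_{kb}$ in the $(2,\infty)$ norm over the $N$ inputs. The inputs have $\|g_b(\cdot)^\top\|_{2,\infty}\le\mathcal{B}^Z_{kb}$ by (A1). Zhang's covering lemma for $\ell_{2,1}$-bounded linear maps (as used in Edelman et al.) gives log-cardinality $\lesssim (\mathcal{B}^Z_{kb}\mathcal{B}^W_{kb})^2/\epsilon_{kb}^2$, up to log factors that I absorb into $\lesssim$. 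The cover is built sequentially, because the cover at layer $k$ depends on the already-chosen $\hat W_{1:k-1}$ through $\hat Z^{(k-1)}$. The total log-cardinality is therefore a sum over $(k,b)$ of the worst-case conditional counts, and these are uniform thanks to (A1).

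\textbf{Step 3 (allocation).} Set $\alpha_{kb}=M_{1k}M_{2bk}\prod_{j>k}M_{3j}$. I need $\sum_{k,b}\alpha_{kb}\epsilon_{kb}\le\epsilon$ and want to minimize $\sum_{k,b}(\mathcal{B}^Z_{kb}\mathcal{B}^W_{kb})^2/\epsilon_{kb}^2$. A Lagrange/H\"older computation gives $\epsilon_{kb}\propto(\mathcal{B}^Z\mathcal{B}^W)^{2/3}\alpha_{kb}^{-1/3}$. The optimal value is $\epsilon^{-2}\bigl(\sum_{k,b}(\alpha_{kb}\mathcal{B}^Z_{kb}\mathcal{B}^W_{kb})^{2/3}\bigr)^3$. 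The mixed exponents in (A3) as stated, $M_{1k}^{2/3}$ together with $(\mathcal{B}^Z\mathcal{B}^W M_{2bk}^2)^{1/3}$, do not exactly match this expression. I would match them by bounding each $\alpha_{kb}$ term via a weighted AM--GM, or else simply state the result with the H\"older-optimal constant and note that it is dominated by a power of $C_{\mathcal{F}}$. Either way the conclusion is $\lesssim C_{\mathcal{F}}/\epsilon^2$, possibly with $C_{\mathcal{F}}^3$ absorbed into the constant. Finiteness comes directly from (A3), uniformly in $K$.

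\textbf{Main obstacle.} The delicate point is the passage $K\to\infty$. The cover must work for every depth simultaneously, and the hybrid-map telescoping needs the tail products $\prod_{j>k}M_{3j}$ to be finite. Otherwise the early-layer errors are amplified without bound. I would first try to handle this by truncating at a depth $K_\epsilon$ beyond which the tail sum in (A3) is below $\epsilon^{2/3}$. For layers beyond $K_\epsilon$, any parameter choice, including $\hat W_{kb}=0$, gives error at most $\sum_{k>K_\epsilon}\alpha_{kb}\mathcal{B}^Z_{kb}\mathcal{B}^W_{kb}$. I would control this sum by (A3) and allot it a fraction $\epsilon/2$ of the error budget.
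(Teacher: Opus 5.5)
Your outline is the paper's own route: unroll the recursion \eqref{iterativeFormula-Step1}, cover each block sequentially with the Edelman/Zhang lemma, allocate the $\epsilon_{kb}$ by a Lagrangian, then let $K\to\infty$ via (A3). Your Step~3 computation is correct. The step meant to turn it into $C_{\mathcal{F}}/\epsilon^2$ fails, and no version of it can succeed.

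Write $a_{kb}=\mathcal{B}^Z_{kb}\mathcal{B}^W_{kb}$ and $c_{kb}=M_{1k}M_{2bk}\prod_{j=k+1}^{K}M_{3j}$. The optimal allocation gives $\epsilon^{-2}\bigl(\sum_{k,b}(a_{kb}c_{kb})^{2/3}\bigr)^3$, whereas $C_{\mathcal{F}}=\sum_{k,b}a_{kb}^{1/3}c_{kb}^{2/3}$. Scaling the $a$'s by $\lambda$ multiplies the first quantity by $\lambda^2$ and the second by $\lambda^{1/3}$. So no weighted AM--GM can compare them with a universal constant. Your fallback $\bigl(\sum(ac)^{2/3}\bigr)^3\le(\sup_{k,b}a_{kb})\,C_{\mathcal{F}}^3$ needs a uniform bound that is not assumed. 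Hiding $(\sup a)\,C_{\mathcal{F}}^2$ inside $\lesssim$ then empties the stated dependence on $C_{\mathcal{F}}$.

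In fact the stated bound cannot hold with a constant independent of the weight bounds. Take one linear layer $Z\mapsto ZW_{11}$ padded with identity layers ($\mathcal{B}^W_{kb}=0$, $M_{3k}=1$ for $k\ge 2$), so $C_{\mathcal{F}}=(\mathcal{B}^Z_{11}\mathcal{B}^W_{11})^{1/3}$. Replacing $(\mathcal{B}^W_{11},\epsilon)$ by $(\lambda\mathcal{B}^W_{11},\lambda\epsilon)$ leaves $\log\mathcal{N}_\infty$ unchanged but multiplies $C_{\mathcal{F}}/\epsilon^2$ by $\lambda^{-5/3}$.

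The paper reaches $C_{\mathcal{F}}$ only through an algebra slip. The optimum it reports for \eqref{optim-coverNumber} is not even the objective evaluated at its own proposed $\epsilon_{kb}$; that value is $\epsilon_K^{-2}S^2\sum_{k,b}a_{kb}^{4/3}c_{kb}^{2/3}$, where $S$ is the sum it reports. What the argument actually proves is $\log\mathcal{N}_\infty\lesssim\epsilon^{-2}\bigl(\sum_{k,b}(a_{kb}c_{kb})^{2/3}\bigr)^3\log(nsN)$. That is the statement you should aim for, and it serves Theorem~\ref{th9} equally well.

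The passage to infinite depth has two further holes, which the paper's ``extend $K$ to infinity'' shares.

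First, your bound $\prod_{j=k+1}^{K}M_{3j}\le\prod_{j=k+1}^{\infty}M_{3j}$ requires the omitted factors to be at least $1$. In the contractive regime used for MLPs ($M_{3j}\le\gamma<1$), the infinite products are $0$, so (A3)'s $C_{\mathcal{F}}$ is literally $0$. The quantity that must be finite is the finite-product version $\sup_K\sum_{k\le K}\sum_b(a_{kb}c_{kb})^{2/3}$, where $c_{kb}$ depends on $K$. This is what the MLP verification in Appendix~\ref{mlp} actually bounds by a geometric series.

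Second, a $K$-uniform bound on $\log\mathcal{N}_\infty(\mathcal{F}_K)$ does not bound $\log\mathcal{N}_\infty(\bigcup_K\mathcal{F}_K)$. The classes need not be nested, since an identity layer is generally not representable (e.g.\ by $\sigma(ZW_{2})W_{1}$ with ReLU). Your zero-weight truncation does not fix this for two reasons:
\begin{itemize}
\item Its cover elements still depend on $K$.
\item Its tail error $\sum_{k>K_\epsilon}$ includes the last layers, whose factors $c_{Kb}=M_{1K}M_{2bK}$ carry no contraction, so the tail is small only if the weight bounds decay (as in Remark~\ref{remark:exponential-decay-special-case}).
\end{itemize}
You need a separate approximation argument. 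For example, in the contractive MLP case $\|f_K(Z)^\top\|_{2,\infty}\le\gamma^K\mathcal{B}^Z$, so every $f\in\mathcal{F}_K$ with $K\ge K_\epsilon$ lies within $\epsilon$ of $0$. The union then costs only an additive $\log K_\epsilon$ over $\max_{K<K_\epsilon}\log\mathcal{N}_\infty(\mathcal{F}_K)$.
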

%%% ------------------------------------------------------------------------------------------------
\begin{proof}
	\textbf{Step 1: Recurrent Formula.}
	For two sets of parameters $W_{1:K}$ and $\hat{W}_{1:K}$, obviously, we have 
	\begin{align*}
		& \|f_{K}(Z;W_{1:K}) -  f_{K}(Z;\hat{W}_{1:K})\|_{*} \\
		 \leq &
		\|T_K(f_{K-1}(Z;W_{1:K-1});W_K) - T_K(f_{K-1}(Z;\hat{W}_{1:K-1});\hat{W}_K)\|_{*} \\
		\leq & \|T_K(f_{K-1}(Z;W_{1:K-1});W_K) - T_K(f_{K-1}(Z;\hat{W}_{1:K-1});W_K)\|_{*} \\
		& \qquad + \|T_K(f_{K-1}(Z;\hat{W}_{1:K-1});W_K) - T_P(f_{P-1}(Z;\hat{W}_{1:K-1});\hat{W}_K)\|_{*} \\
		= & I_1 + I_2.
	\end{align*}
	For $I_1$, we adopt formula (\ref{stabilityZ-ScalingLawOfDataSize}) of Assumption \ref{stability-ScalingLawOfDataSize}, which yields
	\begin{align*}
		I_1 \leq & M_3(W_K)\|f_{K-1}(Z;W_{1:K-1}) - f_{K-1}(Z;\hat{W}_{1:K-1})\|_{*} \\
		\leq & M_{3K}\|f_{K-1}(Z;W_{1:K-1}) - f_{K-1}(Z;\hat{W}_{1:K-1})\|_{*}.
	\end{align*}
	Similarly, using formula (\ref{stabilityW-ScalingLawOfDataSize}), we obtain 
	\begin{align*}
		I_2 \leq & M_1(W_K,\hat{W}_K)\sum_{b=1}^{B}M_2(W_K,\hat{W}_K)\|(W_{Kb} - \hat{W}_{Kb})^\top f_{K-1}(Z;\hat{W}_{1:K-1})^\top\|_{2,\infty} \\
		\leq & M_{1K}\sum_{b=1}^{B}M_{2bK}\|(W_{Kb} - \hat{W}_{Kb})^\top g_b(f_{K-1}(Z;\hat{W}_{1:K-1}))^\top\|_{2,\infty}.
	\end{align*}
	Plugging the estimation of $I_1$ and $I_2$, we finally obtain
	\begin{align}\label{iterativeFormula-Step1}
		\begin{split}
			\|f_{K}(Z;W_{1:K}) - & f_{K}(Z;\hat{W}_{1:K})\|_{*} 
			\leq M_{3K}\|f_{K-1}(Z;W_{1:K-1}) - f_{K-1}(Z;\hat{W}_{1:K-1})\|_{*} \\
			& + M_{1K}\sum_{b=1}^{B}M_{2bK}\|(W_{Kb} - \hat{W}_{Kb})^\top g_b(f_{K-1}(Z;\hat{W}_{1:K-1}))^\top\|_{2,\infty}.
		\end{split}
	\end{align}

	\textbf{Step 2: Construct a Cover.} 
	To estimate the covering number for the $K$-depth foundation model, our objective is to demonstrate that for every $\epsilon > 0$, and a collection of inputs $\{Z_j = f_0(X_j)\}_{j=1}^{N}$, there exists a cover $\mathcal{C}_K$ such that for all $W_{1:K}$, there exists some $v \in \mathcal{C}_P$ satisfying 
	\[
	\max_{1\leq j\leq N}\|f_{K}(Z_j;W_{1:K}) - v\| \leq \epsilon. 
	\]
	Our cover $\mathcal{C}_K$ shall be properly constructed, comprising vectors in the form $\{f_{K}(Z_j;\hat{W}_{1:K})\}_{j=1}^{N}$. We will construct the cover iteratively by identifying finite collections of matrices $\hat{W}_{1:K}$ for each layer.
	
	First observe that for the last terms in (\ref{iterativeFormula-Step1}), we have 
	\begin{align*}
		& \max_{j\in[N]}\left\|(W_{Kb} - \hat{W}_{Kb})^\top g_b(f_{K-1}(Z_j;\hat{W}_{1:K-1}))^\top \right\|_{2,\infty} \\
		=& \max_{j\in[N],i\in[s]}\|(W_{Kb} - \hat{W}_{Kb})^\top g_b(f_{K-1}(Z_j;\hat{W}_{1:K-1}))_i\|_2,
	\end{align*}
	where $g_b(f_{K-1}(Z_j;\hat{W}_{1:K-1}))_i\in\mathbb{R}^n$ is the $i$-th row of the matrix $g_k(f_{K-1}(Z_j;\hat{W}_{1:K-1}))\in\mathbb{R}^{s\times n}$.
	Recall our assumptions on $\{W_{Kb}\}$, we can employ Lemma 4.6 of \cite{edelman2022inductive} to obtain a cover for the function class 
	\[
	\mathcal{F}_{Kb} = \{x\rightarrow W_{Kb}^Tx\,:\, W_{Kb}\in\mathbb{R}^{n\times d}, \|W_{Kb}\|_{2,1}\leq \mathcal{B}_{Kb}^W\}
	\]
	with 
	\begin{align}\label{cover-estimate-Pk}
		\log\mathcal{N}_{\infty}(\mathcal{F}_{Kb};\epsilon;\{g_b(f_{K-1}(Z_j;\hat{W}_{1:K-1}))_i\}_{i\in[s],j\in[N]};\|\cdot\|_2)
		\lesssim \left(\frac{\mathcal{B}_{Kb}^Z \mathcal{B}_{Kb}^W}{\epsilon}\right)^2 \log(nsN).
	\end{align}
	
	Now let us build a cover for $f_1(\cdot;W_{1})$ with inputs $\{Z_j=f_0(X_j)\}_{j=1}^N$. Concerned with $f_1(\cdot;W_{1})$, the iterative formula (\ref{iterativeFormula-Step1}) reduces to 
	\begin{align}\label{stability-f_2}
		\|f_1(Z_j;W_{1}) - f_1(Z_j;\hat{W}_{1})\|_{*} \leq M_{11}\sum_{b=1}^{B}M_{2b1}\|(W_{1b} - \hat{W}_{1b})^\top Z_j^\top\|_{2,\infty}. 
	\end{align}
	Because $\sup_{j\in[N]}\|Z_j^\top\|_{2,\infty} \leq \min_{b\in[B]}\mathcal{B}_{1b}^Z$, as for deriving estimate (\ref{cover-estimate-Pk}), we could find a $\epsilon_{1b}$-cover of $\mathcal{F}_{1b}$ with 
	\begin{align}\label{cover-estimate-1k}
		\log\mathcal{N}_{\infty}(\mathcal{F}_{1b};\epsilon_{1b};\{Z_j\}_{j\in[N]};\|\cdot\|_2)
		\lesssim \left(\frac{\mathcal{B}_{1b}^Z \mathcal{B}_{Kb}^W}{\epsilon_{1b}}\right)^2 \log(nsN).
	\end{align}
	Considering inequality (\ref{stability-f_2}), we can derive an $\epsilon_1$-cover for $f_1(\cdot;W_{1:2})$ and inputs $Z_1,\ldots,Z_N$ from the $\epsilon_{1b}$-cover for $\mathcal{F}_{1b}$, where 
	\begin{align*}
		\epsilon_1 = \sum_{b=1}^B M_{11}M_{2b1} \epsilon_{1b}.
	\end{align*}
	In addition, we obviously have the following estimate of the covering number
	\begin{align}\label{coverNumber-f_2}
		\log\mathcal{N}_{\infty}(\mathcal{F}_1;\epsilon_1;\{Z_j\}_{j\in[N]};\|\cdot\|_{*}) \lesssim 
		\sum_{b=1}^{B}\left(\frac{\mathcal{B}_{1b}^Z \mathcal{B}_{1b}^W}{\epsilon_{1b}}\right)^2 \log(nsN),
	\end{align}
	where 
	\[
	\mathcal{F}_1 = \left\{ f_1(\cdot;W_{1}) \,:\, \|W_{1b}\|_{2,1} \leq \mathcal{B}_{1b}^W \right\}.
	\]
	With these preparations, we can utilize the iterative formula (\ref{iterativeFormula-Step1}) to construct an $\epsilon_2$-cover for $f_2(\cdot,W_{1:2})$ and inputs $Z_1,\ldots,Z_N$, where 
	\begin{align}
		\epsilon_2 = M_{32}\epsilon_1 + \sum_{b=1}^B M_{12}M_{2b2} \epsilon_{2k}.
	\end{align}
	In addition, we have the estimate of the covering number 
	\begin{align}\label{coverNumber-f_3}
		\log\mathcal{N}_{\infty}(\mathcal{F}_2;\epsilon_1;\{Z_j\}_{j\in[N]};\|\cdot\|_{*}) \lesssim
		\sum_{k=1}^{2}\sum_{b=1}^{B}\left(\frac{\mathcal{B}_{kb}^Z \mathcal{B}_{kb}^W}{\epsilon_{kb}}\right)^2 \log(nsN),
	\end{align}
	where 
	\[
	\mathcal{F}_2 = \left\{ f_2(\cdot;W_{1:2}) \,:\, \|W_{kb}\|_{2,1} \leq B_{kb}^W \text{ for }k\in[2]\text{ and }b\in[B] \right\}.
	\]
	Based on the estimate (\ref{iterativeFormula-Step1}), we can derive through iteration:
	\begin{align}\label{sum-epsilonP}
		\epsilon_K = \sum_{k=1}^K \alpha_k \sum_{b=1}^B M_{2bk} \epsilon_{kb},
	\end{align}
	where $\alpha_k = M_{1k} \prod_{j=k+1}^K M_{3j} $.
	Similarly, an estimate of the covering number can be expressed as follows:
	\begin{align}\label{coverNumber-f_P}
		\log\mathcal{N}_{\infty}(\mathcal{F}_P;\epsilon_P;\{Z_j\}_{j\in[N]};\|\cdot\|_{*}) \lesssim
		\sum_{p=1}^{P}\sum_{k=1}^{K}\left(\frac{B_{pk}^Z B_{pk}^W}{\epsilon_{pk}}\right)^2 \log(nsN),
	\end{align}
	where 
	\begin{align*}
		\mathcal{F}_K = \left\{ f_{K}(\cdot;W_{1:K+1}) \,:\, \|W_{kb}\|_{2,1} \leq B_{kb}^W \text{ for }k\in[K]\text{ and }b\in[B] \right\}.
	\end{align*}
	As observed in (\ref{sum-epsilonP}) and (\ref{coverNumber-f_P}), the parameters $\{\epsilon_{kb}\}_{k\in[K], b\in[B]}$ can be optimized to achieve the minimal upper bound of the logarithm of the covering number. Specifically speaking, we need to solve the following optimization problem
	\begin{align}\label{optim-coverNumber}
		\begin{split}
			\min_{\epsilon_{kb}} & \sum_{k=1}^{K}\sum_{b=1}^{B}\left(\frac{\mathcal{B}_{kb}^Z \mathcal{B}_{kb}^W}{\epsilon_{kb}}\right)^2 \\
			\text{subject to } & \sum_{k=1}^K \alpha_k \sum_{b=1}^B M_{2bk} \epsilon_{kb} = \epsilon_K.
		\end{split}
	\end{align}
	For the problem stated in (\ref{optim-coverNumber}), employing a standard Lagrangian analysis as outlined in Appendix E of \cite{bishop2006pattern}, the optimal value is given by
	\begin{align*}
		\frac{1}{\epsilon_K^2} \sum_{k=1}^{K}\sum_{b=1}^{B}\left( \mathcal{B}_{kb}^Z \mathcal{B}_{kb}^W \right)^{1/3} \left( \alpha_k M_{2kb} \right)^{2/3},
	\end{align*}
	and it is attained at
	\begin{align*}
		\epsilon_{kb} = \frac{\epsilon_K }{\sum_{k=1}^{K}\sum_{b=1}^{B}\left( \mathcal{B}_{kb}^Z \mathcal{B}_{kb}^W \right)^{1/3} \left( \alpha_k M_{2bk} \right)^{2/3}} \left( \frac{\mathcal{B}_{kb}^Z \mathcal{B}_{kb}^W}{\alpha_k M_{2bk}} \right)^{1/3}.
	\end{align*}
	All of the aforementioned illustrations indicates 
	\begin{align}\label{coverNumber-f_P-final}
		\log\mathcal{N}_{\infty}(\mathcal{F}_K;\epsilon_K;\{Z_j\}_{j\in[N]};\|\cdot\|_{*}) \lesssim
		\frac{1}{\epsilon_K^2} \sum_{k=1}^{K}\sum_{b=1}^{B}\left( \mathcal{B}_{kb}^Z \mathcal{B}_{kb}^W \right)^{1/3} \left( \alpha_k M_{2bk} \right)^{2/3}.
	\end{align}
	According to the following assumption 
	\begin{align*}
		\sum_{k=1}^{\infty} M_{1k}^{2/3} \prod_{j=k+1}^{\infty} M_{3j}^{2/3} \sum_{b=1}^{B} \left( \mathcal{B}_{kb}^Z  \mathcal{B}_{kb}^W M_{2bk}^2 \right)^{1/3} < \infty,
	\end{align*}
	we can obviously extend the depth parameter $K$ to infinity, which yields the bound 
	\begin{align*}
		\log\mathcal{N}_{\infty}(\mathcal{F};\epsilon;\{Z_j\}_{j\in[N]};\|\cdot\|_{*}) \lesssim
		\frac{1}{\epsilon^2} \sum_{k=1}^{\infty} M_{1k}^{2/3} \prod_{j=k+1}^{\infty} M_{3j}^{2/3} \sum_{b=1}^{B} \left( \mathcal{B}_{kb}^Z \mathcal{B}_{kb}^W M_{2bk}^2 \right)^{1/3},
	\end{align*}
	which completes the proof. 
\end{proof}

\subsubsection{Covering Number of MLP} \label{mlp}
% In this subsection, let us consider the MLP model. For the MLP, we could define $T_i(X) = \sigma(XW_{i2})W_{i1}$, where $W_i = (W_{i1},W_{i2})$ with $W_{i1},W_{i2}\in\mathbb{R}^{n\times n}$ and $X\in\mathbb{R}^{s\times n}$. We can define the overall infinite-depth model iteratively as follows: 
% \begin{align*}
	% f_{P}(X) = T_P(f_{P-1}(X)),
	% \end{align*}
% where $f_1(X) = \sigma(XW_{12})W_{11}$ and $Z:=f_0(X) = X$. In order to apply the general Theorem \ref{theorem-coveringNumber}, we firsly calculate the quantities introduced in Assumption \ref{stability-ScalingLawOfDataSize}.

% %% --------------------------------------------------------------------------------------------------
In this subsection, we specialize the general framework for foundation models established above to the classical Multi-Layer Perceptron (MLP) architecture. We first verify that the MLP layer mapping satisfies Assumption \ref{stability-ScalingLawOfDataSize}, then check the three conditions in Theorem \ref{theorem-coveringNumber}, and finally derive the corresponding covering number bound for infinite-depth MLPs.

We first formalize the infinite-depth MLP model. Let $\sigma: \mathbb{R} \to \mathbb{R}$ be an \emph{$L_{\sigma}$-Lipschitz continuous activation function} satisfying $\sigma(0)=0$ (e.g., ReLU, Leaky ReLU, or sigmoid), which obeys
\begin{equation}
	|\sigma(a) - \sigma(b)| \leq L_{\sigma} |a - b|, \quad \forall a,b \in \mathbb{R},
\end{equation}
and is extended element-wise to matrices of arbitrary dimensions. For the $i$-th layer of the MLP, we define the layer mapping as
\begin{equation}
	T_i(Z; W_i) = \sigma(Z W_{i2}) W_{i1},
\end{equation}
where the trainable parameters are decomposed into two blocks $W_i = (W_{i1}, W_{i2}) \in \mathbb{R}^{n \times n} \times \mathbb{R}^{n \times n}$ (i.e., $K=2$ blocks, each with exactly $n$ columns, satisfying the dimension requirement in Assumption \ref{stability-ScalingLawOfDataSize}), and $Z \in \mathbb{R}^{s \times n}$ is the input matrix with batch size $s$. The overall MLP mapping is defined iteratively as
\begin{equation}
	f_K(Z; W_{1:K}) = T_K\bigl(T_{K-1}\bigl(\cdots T_1(Z; W_1)\cdots\bigr); W_K\bigr),
\end{equation}
with the initial mapping $Z := f_0(X) = X$ for input $X \in \mathbb{R}^{s \times n}$. The infinite-depth MLP function class is defined as $\mathcal{F}_{MLP} = \bigcup_{k=1}^\infty \mathcal{F}_k$, where
\begin{equation}
	\mathcal{F}_K = \bigl\{ f_K(\cdot; W_{1:K}) : \|W_{k1}\|_{2,1} \leq \mathcal{B}_{k1}^W < \infty,\ \|W_{k2}\|_{2,1} \leq \mathcal{B}_{k2}^W < \infty,\ k \in [K] \bigr\},
\end{equation}
and $\|\cdot\|_{2,1}$ denotes the matrix $(2,1)$-norm (sum of the $\ell_2$-norms of the columns). For consistency with the stability analysis, we choose the matrix norm $\|\cdot\|_* = \|(\cdot)^\top\|_{2,\infty}$ (maximum of the $\ell_2$-norms of the columns) throughout this subsection.

\textbf{Verification of Assumption \ref{stability-ScalingLawOfDataSize}}. 
We now verify that the MLP layer mapping satisfies both the input stability and parameter stability conditions in Assumption \ref{stability-ScalingLawOfDataSize}.

\textbf{Input Stability (Equation \eqref{stabilityZ-ScalingLawOfDataSize})}. 
For fixed parameters $W_i$, consider the difference of the layer mappings with respect to inputs $Z, \hat{Z} \in \mathbb{R}^{s \times n}$:
\begin{equation}
	(T_i(Z; W_i) - T_i(\hat{Z}; W_i))^\top = W_{i1}^\top\bigl[\sigma(Z W_{i2}) - \sigma(\hat{Z} W_{i2})\bigr]^\top.
\end{equation}
Taking the $(2,\infty)$-norm on both sides and applying the submultiplicative inequality $\|AB\|_{2,\infty} \leq \|A\|_{2}\|B\|_{2,\infty}$ (where $\|\cdot\|_2$ denotes the spectral norm), we obtain
\begin{equation}
	\|(T_i(Z; W_i) - T_i(\hat{Z}; W_i))^\top\|_{2,\infty} \leq \|W_{i1}\|_2\|\sigma(Z W_{i2})^\top - \sigma(\hat{Z} W_{i2})^\top\|_{2,\infty}.
\end{equation}
By the $L_{\sigma}$-Lipschitz continuity of $\sigma$, we have $\|(\sigma(A) - \sigma(B))^\top\|_{2,\infty} \leq L_{\sigma} \|(A - B)^\top\|_{2,\infty}$ for any matrices $A,B$ of compatible dimensions. Substituting this into the above inequality yields
\begin{align*}
	\|(T_i(Z; W_i) - T_i(\hat{Z}; W_i))^\top\|_{2,\infty} \leq & L_{\sigma} \|W_{i1}\|_2\|W_{i2}^\top(Z - \hat{Z})^T\|_{2,\infty}\\
	\leq & L_{\sigma} \|W_{i1}\|_2\|W_{i2}\|_2\|W_{i2}^\top(Z - \hat{Z})^\top\|_{2,\infty}.
\end{align*}
This exactly matches the input stability condition \eqref{stabilityZ-ScalingLawOfDataSize} with
\begin{equation}
	M_3(W_i) = L_{\sigma} \|W_{i1}\|_2 \|W_{i2}\|_2.
\end{equation}

\textbf{Parameter Stability (Equation \eqref{stabilityW-ScalingLawOfDataSize})}.
For fixed input $Z$, consider the difference of the layer mappings with respect to parameters $W_i = (W_{i1}, W_{i2})$ and $\hat{W}_i = (\hat{W}_{i1}, \hat{W}_{i2})$. We split the difference via the triangle inequality:
\begin{equation}
	(T_i(Z; W_i) - T_i(Z; \hat{W}_i))^T = \Delta_1 + \Delta_2,
\end{equation}
where
\begin{equation}
	\Delta_1 = (W_{i1} - \hat{W}_{i1})^T\sigma(Z W_{i2})^\top, \quad \Delta_2 = \hat{W}_{i1}^T\bigl[\sigma(Z W_{i2}) - \sigma(Z \hat{W}_{i2})\bigr]^\top.
\end{equation}

For $\Delta_1$, applying the submultiplicative inequality gives
\begin{equation}
	\|\Delta_1\|_{2,\infty} \leq \|W_{i1} - \hat{W}_{i1}\|_2\|\sigma(Z W_{i2})^\top\|_{2,\infty}.
\end{equation}
By the Lipschitz continuity of $\sigma$ and $\sigma(0)=0$, we have $\|\sigma(Z W_{i2})^\top\|_{2,\infty} \leq L_{\sigma} \|W_{i2}^\top Z^\top\|_{2,\infty} \leq L_{\sigma} \|W_{i2}\|_2\mathcal{B}^Z$, where $\mathcal{B}^Z = \min_b\mathcal{B}_{1b}^Z$. Obviously, we obtain
\begin{equation}
	\|\Delta_1\|_{2,\infty} \leq L_{\sigma} \mathcal{B}^Z \|W_{i2}\|_2 \cdot \|(W_{i1} - \hat{W}_{i1})^\top g_1(Z)^\top\|_{2,\infty},
\end{equation}
where $g_1(Z) = I_n$ is the constant identity function.

For $\Delta_2$, applying the submultiplicative inequality and the Lipschitz continuity of $\sigma$ yields
\begin{align*}
	\|\Delta_2\|_{2,\infty} \leq & \|\hat{W}_{i1}\|_2\|(\sigma(Z W_{i2}) - \sigma(Z \hat{W}_{i2}))^\top\|_{2,\infty} \\
	\leq & L_{\sigma} \|\hat{W}_{i1}\|_2 \|(W_{i2} - \hat{W}_{i2})^\top Z^\top\|_{2,\infty} \\
	\leq & L_{\sigma} \|\hat{W}_{i1}\|_2 \|(W_{i2} - \hat{W}_{i2})^\top g_2(Z)^\top\|_{2,\infty},
\end{align*}
where $g_2(Z) = Z$. 

Combining the bounds for $\Delta_1$ and $\Delta_2$, we arrive at
\begin{equation}
	\|T_i(Z; W_i) - T_i(Z; \hat{W}_i)\|_{*} \leq M_1(W_i, \hat{W}_i) \sum_{k=1}^2 M_{2k}(W_i, \hat{W}_i) \|(W_{ik} - \hat{W}_{ik})^\top g_k(Z)^\top\|_{2,\infty},
\end{equation}
where 
\begin{equation}
	M_1(W_i, \hat{W}_i) = L_{\sigma}, \quad M_{21}(W_i, \hat{W}_i) = \mathcal{B}^Z \|W_{i2}\|_2, \quad M_{22}(W_i, \hat{W}_i) =  \|\hat{W}_{i1}\|_2,
\end{equation}
and $g_1(Z) = I_n$, $g_2(Z) = Z$. This exactly matches the parameter stability condition \eqref{stabilityW-ScalingLawOfDataSize}, completing the verification of Assumption \ref{stability-ScalingLawOfDataSize}.

To apply Theorem \ref{theorem-coveringNumber}, we need to verify the three conditions (A1)--(A3). We first impose a standard uniform contraction condition on the MLP: there exists a constant $\gamma \in (0,1)$ such that
\begin{equation}
	L_{\sigma} \mathcal{B}_{k1}^W \mathcal{B}_{k2}^W \leq \gamma < 1, \quad \forall k \geq 1.
\end{equation}
This condition ensures that the infinite-depth MLP mapping is well-defined (i.e., the limit $\lim_{K \to \infty} f_K(Z)$ exists for all bounded inputs $Z$) and is widely used in the analysis of infinite-depth neural networks.

\begin{enumerate}
	\item[\textbf{(A1)}] \textbf{Uniform Boundedness of $g_k(f_K(Z))$.}
	For $k=1$, $g_1(Z) = I_n$, so we have
	\begin{equation}
		\|g_1(f_K(Z))^\top\|_{2,\infty} = \|I_n^\top\|_{2,\infty} = 1, \quad \forall K \in \mathbb{Z}^+.
	\end{equation}
	We set $\mathcal{B}_{K1}^Z = 1 < \infty$ for all $K$.
	For $k=2$, $g_2(Z) = Z$, so we need to bound $\|f_K(Z)^\top\|_{2,\infty}$ uniformly over $K$. We proceed by induction:
	\begin{itemize}
		\item Base case ($K=0$): Assume that $\|X^\top\|_{2,\infty} \leq \mathcal{B}^Z < \infty$, thus it follows that 
		$\|f_0(X)^\top\|_{2,\infty} \leq \mathcal{B}^Z < \infty$.
		\item Inductive step: Assume $\|f_{K-1}(Z)\|_{2,\infty} \leq \mathcal{B}^Z$ for some $K \geq 1$. Then
		\begin{align*}
			\|f_K(Z)\|_{2,\infty} = & \|W_{K1}^\top \sigma(f_{K-1}(Z) W_{K2})^\top\|_{2,\infty} \\
			\leq & L_{\sigma} \|W_{K1}\|_2 \|W_{K2}\|_2 \|f_{K-1}(Z)^\top\|_{2,\infty}   \\
			\leq & \gamma B^Z \leq B^Z.
		\end{align*}
	\end{itemize}
	Condition (A1) is consequently satisfied under the assumption $\|X^\top\|_{2,\infty} \leq B^Z < \infty$.
	\item[\textbf{(A2)}] \textbf{Layer-wise Uniform Boundedness of Stability Constants.}
	Using the spectral norm bound $\|W\|_2 \leq \|W\|_{2,1}$ for any matrix $W$, we derive the following uniform bounds:
	\begin{align*}
		M_1(W_k, \hat{W}_k) &= L_{\sigma} =: M_{1k}, \quad 
		M_{21}(W_k, \hat{W}_k)\leq \mathcal{B}^Z \mathcal{B}_{k2}^W =: M_{21k}, \\
		M_{22}(W_k, \hat{W}_k) &\leq \mathcal{B}_{k1}^W =: M_{22k}, \quad 
		M_3(W_k)\leq L_{\sigma} \mathcal{B}_{k1}^W \mathcal{B}_{k2}^W \leq \gamma =: M_{3k}.
	\end{align*}
	All constants $M_{1k}, M_{21k}, M_{22k}, M_{3k}$ are finite and uniformly bounded layer-wise, so Condition (A2) is satisfied.
	\item[\textbf{(A3)}] \textbf{Infinite-Depth Summability Condition.}
	We need to verify that the constant $C_{\mathcal{F}}$ defined in Theorem \ref{theorem-coveringNumber} is finite:
	\begin{equation}
		C_{\mathcal{F}} := \sum_{k=1}^{\infty} M_{1k}^{2/3} \biggl(\prod_{j=k+1}^{\infty} M_{3j}\biggr)^{2/3} \sum_{b=1}^{2} \bigl(\mathcal{B}_{kb}^Z \mathcal{B}_{kb}^W M_{2bk}^2\bigr)^{1/3} < \infty.
	\end{equation}
	By the uniform contraction condition, $M_{3j} \leq \gamma < 1$ for all $j$, so the infinite product satisfies
	\begin{equation}
		\prod_{j=k+1}^{\infty} M_{3j} \leq \lim_{K \to \infty} \gamma^{K-k} = 0,
	\end{equation}
	and the partial sum of the product terms is bounded by a convergent geometric series:
	\begin{equation}
		\sum_{k=1}^K \biggl(\prod_{j=k+1}^K M_{3j}\biggr)^{2/3} \leq \sum_{k=0}^\infty \gamma^{2k/3} = \frac{1}{1 - \gamma^{2/3}} < \infty.
	\end{equation}
\end{enumerate}

Thus all conditions of Theorem \ref{theorem-coveringNumber} have been satisfied, leading us to directly obtain the following bound on the covering number for infinite-depth MLPs.

\begin{corollary}[Logarithm of the Covering Number for Infinite-Depth MLPs]
	\label{corollary:mlp-covering-number}
	Let $\{X_j\}_{j=1}^N$ represent a finite set of inputs, where $\max_{j \in [N]} \|X_j\|_{2,\infty} \leq \mathcal{B}^Z < \infty$. Consider an infinite-depth MLP that satisfies the uniform contraction condition, specifically $L_{\sigma} \mathcal{B}_{k1}^W \mathcal{B}_{k2}^W \leq \gamma < 1$, with both $\mathcal{B}_{k1}^W$ and $\mathcal{B}_{k2}^W$ being uniformly bounded for all $k \geq 1$. Then the logarithm of the covering number of the infinite-depth MLP function class $\mathcal{F}$ satisfies
	\begin{equation}
		\log\mathcal{N}_{\infty}\bigl(
		\mathcal{F}_{MLP};\, \epsilon;\,
		\{Z_j\}_{j=1}^{N};\, \|\cdot\|_{2,\infty}
		\bigr)
		\lesssim
		\frac{C_{\mathcal{F}}}{\epsilon^2},
	\end{equation}
	where $C_{\mathcal{F}_{MLP}} < \infty$ is the finite constant defined in Condition (A3) of Theorem \ref{theorem-coveringNumber}.
\end{corollary}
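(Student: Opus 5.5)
The statement to prove is Corollary~\ref{corollary:mlp-covering-number}. It is a direct specialization of Theorem~\ref{theorem-coveringNumber}, so the plan is to check the hypotheses of that theorem for the MLP class and read off the bound. We use the two-block decomposition $W_i=(W_{i1},W_{i2})$, so the theorem's block count is $B=2$. We take the norm $\|\cdot\|_*=\|(\cdot)^\top\|_{2,\infty}$ and the functions $g_1(Z)=I_n$, $g_2(Z)=Z$ introduced above.

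\textbf{Step 1: Assumption~\ref{stability-ScalingLawOfDataSize}.} This is exactly the verification already carried out above. Input stability holds with $M_3(W_i)=L_\sigma\|W_{i1}\|_2\|W_{i2}\|_2$, by submultiplicativity $\|AB\|_{2,\infty}\le\|A\|_2\|B\|_{2,\infty}$ and the entrywise Lipschitz bound on $\sigma$. Parameter stability comes from splitting the difference into $\Delta_1+\Delta_2$. This gives $M_1=L_\sigma$, $M_{21}=\mathcal{B}^Z\|W_{i2}\|_2$ and $M_{22}=\|\hat W_{i1}\|_2$.

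\textbf{Step 2: conditions (A1)--(A3).} For (A2) we use $\|W\|_2\le\|W\|_{2,1}$. This bounds each $M$-constant by a constant built from the uniformly bounded radii $\mathcal{B}^W_{kb}$, and it gives $M_{3k}\le\gamma<1$. For (A1), $g_1$ contributes the constant $1$. For $g_2$ we induct on $K$: $\|f_K(Z)^\top\|_{2,\infty}\le\gamma\|f_{K-1}(Z)^\top\|_{2,\infty}\le\mathcal{B}^Z$, using $\sigma(0)=0$ so that the layer maps $0$ to $0$. This gives $\mathcal{B}^Z_{kb}\le\max(1,\mathcal{B}^Z)$ uniformly in $k$.

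\textbf{Step 3: (A3), the main obstacle.} The bounds from Step 2 only give each summand of $C_{\mathcal F}$ the form $\mathrm{const}\cdot(\prod_{j>k}M_{3j})^{2/3}$. The contraction condition forces the infinite tail product to vanish, so the series as literally written is trivially finite but degenerate. That is not useful on its own. Instead I will work with the finite-depth bound~(\ref{coverNumber-f_P-final}) for $\mathcal{F}_K$, with $\alpha_k=M_{1k}\prod_{j=k+1}^K M_{3j}\le L_\sigma\gamma^{K-k}$. The resulting sum is then bounded uniformly in $K$ by $C\sum_{m\ge0}\gamma^{2m/3}=C/(1-\gamma^{2/3})$, a convergent geometric series. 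So $\sup_K \log\mathcal N_\infty(\mathcal F_K;\epsilon;\{Z_j\})\lesssim C_{\mathcal F}/\epsilon^2$, with $C_{\mathcal F}$ read as this uniform constant. This is the constant identified in (A3), interpreted through finite truncations.

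\textbf{Step 4: passing to the union.} $\mathcal F=\bigcup_K\mathcal F_K$ is an increasing-depth union, so a uniform-in-$K$ bound does not immediately cover it. Here the contraction helps again: the depth-$K$ network and its limit differ by $O(\gamma^K)$ on the bounded inputs. Hence an $\epsilon/2$-cover of $\mathcal F_{K_\epsilon}$, with $K_\epsilon\sim\log(1/\epsilon)$, serves as an $\epsilon$-cover for all deeper members. This costs only a constant factor in $\epsilon$, which the $\lesssim$ absorbs. As in Theorem~\ref{theorem-coveringNumber}, the $\log(nsN)$ factor is also absorbed into $\lesssim$.
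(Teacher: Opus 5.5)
Your route is the paper's: verify Assumption~\ref{stability-ScalingLawOfDataSize} with $g_1=I_n$ and $g_2(Z)=Z$, check (A1)--(A3), and invoke Theorem~\ref{theorem-coveringNumber}. Your Step~3 is also what the paper's (A3) check actually does: it notes $\prod_{j>k}M_{3j}=0$, then bounds the \emph{finite} partial sums $\sum_{k\le K}(\prod_{j=k+1}^{K}M_{3j})^{2/3}$ by $1/(1-\gamma^{2/3})$. Your explicit remark goes further than the paper and is correct. Under the contraction condition the literal constant in (A3) is $0$, so read literally the corollary would assert covering number $1$ at every scale; it must be replaced by the $\sup_K$ of the finite-depth sums. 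Step~4 is a genuine addition. The paper passes from $\mathcal{F}_K$ to $\bigcup_K\mathcal{F}_K$ with ``we can obviously extend $K$ to infinity'', which does not follow from a $K$-uniform bound, so an argument like yours is needed.

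As written, though, Step~4 only handles depths $K\ge K_\epsilon$, and the classes are not nested. Since $\sigma(0)=0$ and each layer contracts $\|\cdot\|_*$ by $\gamma$, every $f\in\mathcal{F}_K$ satisfies $\|f(Z_j)\|_*\le\gamma^K\mathcal{B}^Z$. Shallow members are therefore in general far from everything in $\mathcal{F}_{K_\epsilon}$, and your cover of $\mathcal{F}_{K_\epsilon}$ does not cover them. The fix is short. Once $2\gamma^{K_\epsilon}\mathcal{B}^Z\le\epsilon$, a single element covers $\bigcup_{K\ge K_\epsilon}\mathcal{F}_K$. Add to it $\epsilon$-covers of $\mathcal{F}_1,\dots,\mathcal{F}_{K_\epsilon-1}$. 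This adds only $\log K_\epsilon\sim\log\log(1/\epsilon)$ to your uniform bound, which $\lesssim$ absorbs.

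You import the paper's computations verbatim, so you also inherit two slips; both are harmless here.

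First, in the $\Delta_1$ bound with $g_1=I_n$, replacing $\|W_{i1}-\hat W_{i1}\|_2$ by $\|(W_{i1}-\hat W_{i1})^\top\|_{2,\infty}$ goes the wrong way, because the spectral norm dominates the largest row norm. The fix costs a factor $\sqrt{n}$ in $M_{21}$, which is fine at finite width.

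Second, the Lagrangian optimum behind (\ref{coverNumber-f_P-final}) is actually $\epsilon^{-2}\bigl(\sum_{k,b}(\mathcal{B}^Z_{kb}\mathcal{B}^W_{kb}\alpha_kM_{2bk})^{2/3}\bigr)^3$, not the stated sum. Your bound $\alpha_k\le L_\sigma\gamma^{K-k}$ controls this quantity uniformly in $K$ just as well, so only the form of the constant changes.
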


\subsubsection{Covering Number of Transformer} \label{trans}
In this subsection, our focus will be on the transformer architecture defined in Example \ref{ex7}. For the transformer architecture, it should be noted that the layer operator $T_i(\cdot;W_i)$ can be defined as follows:
\begin{align}
	\begin{split}
		T_i(Z;W_i) = \textstyle \prod_{\text{norm}}\Bigg( f_0(X) + \sigma\Bigg( \textstyle \prod_{\text{norm}} \Bigg( 
		f_0(X) + f(Z; \{W_{i1}, W_{i2}\})
		\Bigg) \Bigg) W_{i3}\Bigg),
	\end{split}
\end{align}
where 
\[
f(Z;\{W_{i1}, W_{i2}\}) := \text{RowSoftmax}(ZW_{i1}Z^T)ZW_{i2},
\]
and $X\in\mathbb{R}^{s\times d}$, $f_0(X)\in\mathbb{R}^{s\times n}$, $Z\in\mathbb{R}^{s\times n}$, $W_{i1}\in\mathbb{R}^{n\times n}$, $W_{i2}\in\mathbb{R}^{n\times k}$, and $W_{i3}\in\mathbb{R}^{k\times n}$. The parameters $W_{i1}$, $W_{i2}$, and $W_{i3}$ are commonly denoted as $W_Q W_K^{T}$, $W_V$, and $W_C$, corresponding to the query–key projection matrix, the value projection matrix, and the output (or context) projection matrix, respectively. $\prod_{\text{norm}}$ denotes the LayerNorm. Based on this defintion of $T_i(Z;W_i)$, we could define the iterative structure as follows:
\begin{align}
	f_P(Z;W_{1:P+1}) = T_P(f_{P-1}(Z;W_{1:P});W_P),
\end{align}
where $Z:=f_0(X)$. In the following, we define $\|A\|_{*} := \|A^\top\|_{2,\infty}$ as in the MLP case. 

\textbf{Verification of Assumption \ref{stability-ScalingLawOfDataSize}}. We now confirm that the transformer architecture meets both the input stability and parameter stability criteria outlined in Assumption \ref{stability-ScalingLawOfDataSize}. 

\textbf{Input Stability (Inequality (\ref{stabilityZ-ScalingLawOfDataSize}))}. Let us denote
$$
\Delta_{Z\hat{Z}} = \sigma\Bigg( \textstyle \prod_{\text{norm}} \Bigg( 
f_0(X) + f(Z; \{W_{i1}, W_{i2}\})
\Bigg) \Bigg) 
- 
\sigma\Bigg( \textstyle \prod_{\text{norm}} \Bigg( 
f_0(X) + f(\hat{Z}; \{W_{i1}, W_{i2}\})
\Bigg) \Bigg).
$$
For fixed parameters $W_i$, utilizing Lemmas A.9 and A.14 from \cite{edelman2022inductive}, we can derive the difference between the layer mappings with respect to the inputs $Z, \hat{Z} \in \mathbb{R}^{s \times n}$:
\begin{align}\label{input-stability-transformer}
	\begin{split} 
		\|(T_i(Z;W_i) - & T_i(\hat{Z};W_i))^T\|_{2,\infty} \leq
		\|W_{i3}^T \Delta_{Z\hat{Z}}^T \|_{2,\infty} \leq \|W_{i3}\|_2 \|\Delta_{Z\hat{Z}}^T \|_{2,\infty} \\
		\leq & L_{\sigma}\|W_{i3}\|_2 \|f(Z;\{W_{i1},W_{i2}\})^\top- f(\hat{Z};\{W_{i1},W_{i2}\})^\top\|_{2,\infty} \\
		\leq & L_{\sigma}\|W_{i3}\|_2 \|W_{i2}\|_2 (1 + 4\|W_{i1}\|_2) \|(Z-\hat{Z})^\top\|_{2,\infty},
	\end{split}
\end{align}
which implies
\begin{align}\label{constantM3W-transformer}
	M_3(W_i) = L_{\sigma}\|W_{i3}\|_2 \|W_{i2}\|_2 (1 + 4\|W_{i1}\|_2).
\end{align}

\textbf{Parameter Stability (Inequality (\ref{stabilityW-ScalingLawOfDataSize}))}. For a fixed input $Z$, consider the difference in layer mappings with respect to the parameters $W_i = (W_{i1}, W_{i2}, W_{i3})$ and $\hat{W}_i = (\hat{W}_{i1}, \hat{W}_{i2}, \hat{W}_{i3})$. Define
$$
g_3(Z) = \sigma\Bigg( \textstyle \prod_{\text{norm}} \Bigg( 
f_0(X) + f(Z; \{\hat{W}_{i1}, \hat{W}_{i2}\})
\Bigg) \Bigg). 
$$
Then, employing Lemma A.13 from \cite{edelman2022inductive}, we obtain
\begin{align}\label{parameter-stability-transformer}
	\begin{split} 
		& \|(T_i(Z;W_i) - T_i(Z;\hat{W}_i))^\top\|_{2,\infty}  \\
		\leq & L_{\sigma}\|W_{i3}\|_2 \|f(Z;\{W_{i1},W_{i2}\})^\top - f(Z;\{\hat{W}_{i1},\hat{W}_{i2}\})^\top \|_{2,\infty}  \\
		& + L_{\sigma}\|W_{i3}\|_2\|(W_{i2} - \hat{W}_{i2})^\top Z^\top\|_{2,\infty}  
		+ \|(W_{i3} - \hat{W}_{i3})^Tg_3(Z)^\top\|_{2,\infty} \\
		\leq & L_{\sigma}\|W_{i3}\|_2 \|W_{i2}\|_2 \|(W_{i1} - \hat{W}_{i1})^\top Z^\top\|_{2,\infty}  \\
		& + L_{\sigma}\|W_{i3}\|_2\|(W_{i2} - \hat{W}_{i2})^\top Z^\top\|_{2,\infty}  
		+ \|(W_{i3} - \hat{W}_{i3})^\top g_3(Z)^\top\|_{2,\infty},
	\end{split}
\end{align}
which indicates $g_1(Z) = g_2(Z) = Z$ and 
\begin{align}\label{constantsMWW-transformer}
	& M_1(W_i, \hat{W}_i) = 1, \quad M_{11}(W_i, \hat{W}_i) = 2L_{\sigma}\|W_{i2}\|_2\|W_{i3}\|_2, \\
	& M_{22}(W_i, \hat{W}_i) = L_{\sigma}\|W_{i3}\|_2, \quad M_{i3}(W_i, \hat{W}_i) = 1.
\end{align}

In order to apply Theorem \ref{theorem-coveringNumber}, it is necessary to verify the three conditions (A1)--(A3). This process is more complex compared to the MLP case.
\begin{enumerate}
	\item[\textbf{(A1)}] \textbf{Uniform Boundedness of $g_k(f_K(Z))$}. For $k=1,2$, we have 
	\begin{align*}
		\|g_k(f_K(Z;W_{1:K}))^\top\|_{2,\infty} = \|f_K(Z;W_{1:K})^\top \|_{2,\infty} \leq 1,
	\end{align*}
	where we utilized the boundedness of the operator $\textstyle \prod_{\text{norm}}$ (an illustrative example is provided in \cite{edelman2022inductive}). For the instance when $k=3$, we have established that 
	\begin{align*}
		\|g_3(f_K(Z;W_{1:K}))^\top \|_{2,\infty} = & \left\|\sigma\left( 
		\textstyle \prod_{\text{norm}}\left( 
		f_0(X) + f(f_{K-1}(Z;W_{1:K});\{\hat{W}_{i1},\hat{W}_{i2}\})
		\right)
		\right)^\top \right\|_{2,\infty} \\
		\leq & L_{\sigma}\left\| 
		\textstyle \prod_{\text{norm}}\left( 
		f_0(X) + f(f_{K-1}(Z;W_{1:K});\{\hat{W}_{i1},\hat{W}_{i2}\})
		\right)^\top \right\|_{2,\infty} \\
		\leq & L_{\sigma},
	\end{align*}
	where the bounded nature of the operator $\textstyle \prod_{\text{norm}}$ was also employed.
	From these illustrations, we know that 
	\begin{align}\label{BZ-transformer}
		\mathcal{B}_{K1}^Z =\mathcal{B}_{K2}^Z = 1, \text{ and }\mathcal{B}_{K3}^Z = L_{\sigma}, \quad \text{ for all }K\in\mathbb{Z}^{+}.
	\end{align}
	\item[\textbf{(A2)}] \textbf{Layer-wise Uniform Boundedness of Stability Constants}. Notice the definition of the infinite-depth function class $\mathcal{F}_{\text{trans}} = \bigcup_{K=1}^\infty \mathcal{F}_K$, where 
	\begin{align*}
		\mathcal{F}_K = \bigl\{ f_K(\cdot; W_{1:K}) : \|W_{kj}\|_{2,1} \leq \mathcal{B}_{kj}^W < \infty,\ j\in[3],\ k \in [K] \bigr\}.
	\end{align*}
	Combining the above definition of $\mathcal{F}$ and formulas shown in (\ref{constantsMWW-transformer}) and (\ref{constantM3W-transformer}), we find that 
	\begin{align}
		M_{1k} = 1, \,\, M_{21k} \leq 2L_{\sigma}\mathcal{B}_{k2}^W\mathcal{B}_{k3}^W,\,\, M_{22k} \leq L_{\sigma}\mathcal{B}_{k3}^W,\,\, M_{23k} \leq 1, 
	\end{align}
	and $M_{3k}\leq L_{\sigma}\mathcal{B}_{k2}^W\mathcal{B}_{k3}^W(1+4\mathcal{B}_{k1}^W)$.
	\item[\textbf{(A3)}] \textbf{Infinite-Depth Summability Condition}. With the layer-wise boundedness and stability constants fully characterized in (A1) and (A2), we now substitute these results into the definition of $C_{\mathcal{F}}$ to derive its explicit closed-form upper bound, and further establish verifiable sufficient conditions to ensure the infinite-depth summability requirement $C_{\mathcal{F}} < \infty$ holds for the function class $\mathcal{F}_{\text{trans}}$.
	
	Recall the formal definition of the summability constant:
	\[
	C_{\mathcal{F}} := \sum_{k=1}^{\infty}
	M_{1k}^{2/3}
	\biggl(\prod_{j=k+1}^{\infty} M_{3j}\biggr)^{2/3}
	\sum_{k=1}^{3}
	\bigl(B_{kb}^Z B_{kb}^W M_{2bk}^2\bigr)^{1/3}.
	\]
	From (A2), we have $M_{1k} = 1$ for all $k \in \mathbb{Z}^+$, which immediately simplifies the first term to $M_{1k}^{2/3} = 1$. For the infinite product term, substituting the upper bound of $M_{3j}$ from (A2) yields:
	\begin{align}
		\biggl(\prod_{j=k+1}^{\infty} M_{3j}\biggr)^{2/3}
		&\leq \biggl(\prod_{j=k+1}^{\infty} L_{\sigma} \mathcal{B}_{j2}^W \mathcal{B}_{j3}^W \left(1 + 4B_{j1}^W\right)\biggr)^{2/3} \nonumber \\
		&= \prod_{j=k+1}^{\infty} L_{\sigma}^{2/3} \left(\mathcal{B}_{j2}^W \mathcal{B}_{j3}^W \left(1 + 4\mathcal{B}_{j1}^W\right)\right)^{2/3}. \label{eq:product-term-A3}
	\end{align}
	
	Next, we evaluate the inner sum over $k=1,2,3$ by substituting the $\mathcal{B}_{kb}^Z$ bounds from \eqref{BZ-transformer} and $M_{2bk}$ bounds from (A2):
	\begin{align}
		\sum_{k=1}^{3} & \bigl(\mathcal{B}_{kb}^Z \mathcal{B}_{kb}^W M_{2bk}^2\bigr)^{1/3}
		= \bigl(\mathcal{B}_{k1}^Z \mathcal{B}_{k1}^W M_{21k}^2\bigr)^{1/3} + \bigl(\mathcal{B}_{k2}^Z \mathcal{B}_{k2}^W M_{22k}^2\bigr)^{1/3} + \bigl(\mathcal{B}_{k3}^Z B_{k3}^W M_{23k}^2\bigr)^{1/3} \nonumber \\
		\leq & \left( 1 \cdot \mathcal{B}_{k1}^W \cdot \left(2L_{\sigma} \mathcal{B}_{k2}^W \mathcal{B}_{k3}^W\right)^2 \right)^{1/3} + \left( 1 \cdot \mathcal{B}_{k2}^W \cdot \left(L_{\sigma} \mathcal{B}_{k3}^W\right)^2 \right)^{1/3} + \left( L_{\sigma} \cdot \mathcal{B}_{k3}^W \cdot 1^2 \right)^{1/3} \nonumber \\
		= & \left(\mathcal{B}_{k1}^W\right)^{1/3} \left(2L_{\sigma}\mathcal{B}_{k2}^W \mathcal{B}_{k3}^W\right)^{2/3} + \left(\mathcal{B}_{k2}^W\right)^{1/3} \left(L_{\sigma}\mathcal{B}_{k3}^W\right)^{2/3} + \left(L_{\sigma}\mathcal{B}_{k3}^W\right)^{1/3}. \label{eq:inner-sum-A3}
	\end{align}
	
	Substituting \eqref{eq:product-term-A3} and \eqref{eq:inner-sum-A3} back into the definition of $C_{\mathcal{F}}$, we obtain the explicit closed-form upper bound:
	\begin{align}
		C_{\mathcal{F}} &\leq \sum_{k=1}^{\infty} \left( \prod_{j=k+1}^{\infty} L_{\sigma}^{2/3} \left(\mathcal{B}_{j2}^W \mathcal{B}_{j3}^W \left(1 + 4\mathcal{B}_{j1}^W\right)\right)^{2/3} \right) \nonumber \\
		&\quad \times \left(\mathcal{B}_{k1}^W\right)^{1/3} \left(2L_{\sigma}\mathcal{B}_{k2}^W \mathcal{B}_{k3}^W\right)^{2/3} + \left(\mathcal{B}_{k2}^W\right)^{1/3} \left(L_{\sigma}\mathcal{B}_{k3}^W\right)^{2/3} + \left(L_{\sigma}\mathcal{B}_{k3}^W\right)^{1/3}. \label{eq:C_F-upper-bound}
	\end{align}
\end{enumerate}

Considering all of the illustrations shown in this subsection, we actually prove the following theorem which gives the estimate of the logarithmic covering number. 
\begin{corollary}[Covering Number Bound for Infinite-Depth Transformer Architectures] 
	\label{corollary-transformer-covering}
	Let the transformer layer operator $T_i(\cdot; W_i)$ be defined as
	\begin{align*}
		T_i(Z;W_i) = \textstyle \prod_{\text{norm}}\Bigg( f_0(X) + \sigma\Bigg( \textstyle \prod_{\text{norm}} \Bigg( 
		f_0(X) + \text{RowSoftmax}(ZW_{i1}Z^T)ZW_{i2}
		\Bigg) \Bigg) W_{i3}\Bigg),
	\end{align*}
	where $X\in\mathbb{R}^{s\times d}$ is the input data, $f_0(X)\in\mathbb{R}^{s\times n}$ is the initial embedding, $Z\in\mathbb{R}^{s\times n}$ is the layer input, and $W_{i1}\in\mathbb{R}^{n\times n}$, $W_{i2}\in\mathbb{R}^{n\times k}$, $W_{i3}\in\mathbb{R}^{k\times n}$ denote the query-key projection, value projection, and output projection matrices, respectively. We impose the following standard assumptions:
	\begin{enumerate}
		\item[(H1)] The activation function $\sigma: \mathbb{R} \to \mathbb{R}$ is Lipschitz continuous with constant $L_\sigma > 0$;
		\item[(H2)] The normalization operator $\textstyle \prod_{\text{norm}}$ is contractive with respect to the norm $\|A\|_* := \|A^\top\|_{2,\infty}$, i.e.,
		\[
		\|\textstyle \prod_{\text{norm}}(A)\|_* \leq \|A\|_*, \quad \forall A \in \mathbb{R}^{s \times n};
		\]
		\item[(H3)] The initial embedding satisfies $\|f_0(X)\|_* \leq 1$ for all valid input $X$.
	\end{enumerate}
	
	For any $K \in \mathbb{Z}^+$, define the $K$-layer composite mapping via the iterative rule
	\[
	f_K(Z;W_{1:K+1}) = T_K\bigl(f_{K-1}(Z;W_{1:K-1}); W_K\bigr), \quad Z := f_0(X),
	\]
	where $W_p = (W_{p1}, W_{p2}, W_{p3})$ collects the parameters of the $p$-th layer. We define the finite-depth function class
	\[
	\mathcal{F}_K = \bigl\{ f_K(\cdot; W_{1:K}) : \|W_{kj}\|_{2,1} \leq \mathcal{B}_{kj}^W < \infty,\ j\in[3],\ k \in [K] \bigr\},
	\]
	and the infinite-depth transformer function class
	\[
	\mathcal{F}_{\text{trans}} = \bigcup_{k=1}^\infty \mathcal{F}_K.
	\]
	Assume that the following infinite series converges
	\begin{align}
		\sum_{k=1}^\infty \left( \prod_{j=k+1}^\infty \gamma_j \right) \cdot T_k < \infty, \label{eq:C_F-general-corollary}
	\end{align}
	where for each layer index $j \geq 1$, the layer-wise decay factor $\gamma_j$ and inner sum term $T_p$ are defined as
	\begin{align*}
		\gamma_j &:= L_\sigma^{2/3} \left( \mathcal{B}_{j2}^W \mathcal{B}_{j3}^W (1 + 4\mathcal{B}_{j1}^W) \right)^{2/3}, \\
		T_k &:= \left(\mathcal{B}_{k1}^W\right)^{1/3} \left(2L_{\sigma}\mathcal{B}_{k2}^W\mathcal{B}_{k3}^W\right)^{2/3} + \left(\mathcal{B}_{k2}^W\right)^{1/3} \left(L_{\sigma}\mathcal{B}_{k3}^W\right)^{2/3} + \left(L_{\sigma}\mathcal{B}_{k3}^W\right)^{1/3}.
	\end{align*}
	Then, for any finite set of inputs $\{Z_j = f_0(X_j)\}_{j=1}^N$, we have 
	$$
	C_{\mathcal{F}} = \sum_{k=1}^\infty \left( \prod_{j=k+1}^\infty \gamma_j \right) \cdot T_k < \infty,
	$$
	and
	\[
	\log\mathcal{N}_{\infty}\bigl(
	\mathcal{F}_{\text{trans}};\, \epsilon;\,
	\{X_j\}_{j=1}^{N};\, \|\cdot\|_{*}
	\bigr)
	\lesssim
	\frac{C_{\mathcal{F}}}{\epsilon^2}.
	\]
\end{corollary}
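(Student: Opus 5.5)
The plan is to obtain Corollary~\ref{corollary-transformer-covering} as a direct instance of Theorem~\ref{theorem-coveringNumber}, using the verifications carried out in this subsection. First I will fix the norm $\|A\|_*=\|A^\top\|_{2,\infty}$ and split the parameters of each layer into $B=3$ blocks $(W_{i1},W_{i2},W_{i3})$. Inequality~(\ref{input-stability-transformer}) then gives the input-stability constant $M_3(W_i)$ in (\ref{constantM3W-transformer}). Inequality~(\ref{parameter-stability-transformer}) gives the parameter-stability form required by Assumption~\ref{stability-ScalingLawOfDataSize}, with $g_1=g_2=\mathrm{id}$, with $g_3$ the post-activation normalized map, and with $M_1=1$. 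Hypothesis (H1) is what lets Lemmas A.9, A.13 and A.14 of \cite{edelman2022inductive} be invoked. Hypothesis (H2) replaces the LayerNorm Lipschitz/boundedness step, so that $\prod_{\text{norm}}$ may be dropped from each difference at cost at most $1$.

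Second, I will verify (A1)--(A2). For (A1), (H2) and (H3) give $\|f_K(Z)\|_*\le 1$ uniformly in $K$, since every layer output is a normalized quantity. This yields $\mathcal{B}^Z_{K1}=\mathcal{B}^Z_{K2}=1$ and $\mathcal{B}^Z_{K3}=L_\sigma$ as in (\ref{BZ-transformer}); the bound on the first input $Z_j=f_0(X_j)$ also follows from (H3). For (A2), I will use $\|W\|_2\le\|W\|_{2,1}\le\mathcal{B}^W$ inside the expressions for $M_{2bk}$ and $M_{3k}$. This gives the bounds listed in (A2), which are uniform in the parameters over the class $\mathcal{F}_K$.

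Third, I will substitute these bounds into the definition of $C_{\mathcal{F}}$ in (A3). The product factor is then bounded by $\prod_{j>k}\gamma_j$, as in (\ref{eq:product-term-A3}), and the inner sum over blocks is bounded by $T_k$, as in (\ref{eq:inner-sum-A3}). Hence $C_{\mathcal{F}}\le\sum_k(\prod_{j>k}\gamma_j)T_k$, and this is finite by assumption (\ref{eq:C_F-general-corollary}). Theorem~\ref{theorem-coveringNumber} then yields $\log\mathcal{N}_\infty\lesssim C_{\mathcal{F}}/\epsilon^2$ for the union class $\mathcal{F}_{\text{trans}}$.

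The main obstacle is the passage to infinite depth inside Theorem~\ref{theorem-coveringNumber}. The finite-depth bound (\ref{coverNumber-f_P-final}) involves the finite products $\alpha_k=\prod_{j=k+1}^K M_{3j}$. Replacing them by infinite products is only monotone when $M_{3j}\ge1$; when $M_{3j}<1$ the infinite product is a lower bound, not an upper bound. I would therefore first define $C_{\mathcal{F}}$ with $\sup_K\prod_{j=k+1}^K\gamma_j$ in place of the infinite product, and check that (\ref{eq:C_F-general-corollary}) is read in that sense, or assume $\gamma_j\ge1$ so that the finite products are dominated by the infinite ones. With that reading, the bound is uniform in $K$, and the union over $K$ is covered by a nested cover argument.
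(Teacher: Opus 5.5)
Your route is the paper's: instantiate Theorem~\ref{theorem-coveringNumber} with $B=3$ and the constants computed in this subsection. The obstacle you flag is real; the paper only says the depth can ``obviously'' be sent to infinity. In the regime of Remark~\ref{remark:exponential-decay-special-case} one has $\gamma_j\to0$, so $\prod_{j>k}\gamma_j=0$ for every $k$ and the stated $C_{\mathcal{F}}$ is $0$. (The MLP verification even records $\prod_{j>k}M_{3j}=0$.) Read literally, the conclusion would then give $\log\mathcal{N}_\infty\lesssim0$, which fails for any nondegenerate class. Your $\sup_{K\ge k}\prod_{j=k+1}^{K}\gamma_j$ reading is the right repair. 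The alternative $\gamma_j\ge1$ forces $\gamma_j\to1$ and excludes that regime.

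The union step does not go through as you describe. The $\mathcal{F}_K$ are not nested: the skip connection re-injects $f_0(X)$ rather than $Z$, so in general no admissible layer is the identity. A bound uniform in $K$ therefore does not by itself bound $\mathcal{N}_\infty(\bigcup_K\mathcal{F}_K)$. What works is the following. Under the sup reading the $k$-th factor is at least $1$, so finiteness forces $\sum_kT_k<\infty$ and hence $\mathcal{B}^W_{k3}\to0$. Then for $K\ge K_\epsilon$, using the properties of $\prod_{\text{norm}}$ noted below, every element of $\mathcal{F}_K$ lies within $L_\sigma\mathcal{B}^W_{K3}\le\epsilon$ of $\prod_{\text{norm}}(f_0(X))$. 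That function itself lies in $\mathcal{F}_K$ (take $W_{K3}=0$), so only finitely many depths contribute, at an additive cost of about $\log K_\epsilon$.

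Two further steps fail as written. First, (H2) is only the norm bound $\|\prod_{\text{norm}}(A)\|_*\le\|A\|_*$, not nonexpansiveness (compare $a\mapsto a\sin(1/a)$). So it does not let you drop $\prod_{\text{norm}}$ from differences. It also only gives $\|f_K\|_*\le\|f_0(X)+\sigma(\cdot)W_{K3}\|_*$, which can exceed $1$, so (A1) with $\mathcal{B}^Z_{K1}=\mathcal{B}^Z_{K2}=1$ and $\mathcal{B}^Z_{K3}=L_\sigma$ does not follow. These steps, and the lemmas of \cite{edelman2022inductive} you cite, need $\prod_{\text{norm}}$ to be nonexpansive with range in the unit ball (the row-wise projection used there), together with $\sigma(0)=0$.

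Second, you take (\ref{coverNumber-f_P-final}) as a black box, but its Lagrangian step is wrong. Set $a_{kb}=\mathcal{B}^Z_{kb}\mathcal{B}^W_{kb}$ and $c_{kb}=\alpha_kM_{2bk}$. Then
$\min\{\sum a_{kb}^2\epsilon_{kb}^{-2}:\sum c_{kb}\epsilon_{kb}=\epsilon\}=\epsilon^{-2}\bigl(\sum(a_{kb}c_{kb})^{2/3}\bigr)^3$,
by H\"older, attained at $\epsilon_{kb}\propto(a_{kb}^2/c_{kb})^{1/3}$. The paper's value $\epsilon^{-2}\sum a_{kb}^{1/3}c_{kb}^{2/3}$ scales like $t^{1/3}$ instead of $t^2$ when all $a_{kb}$ are multiplied by $t$, so it does not bound the cost of the cover you build. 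What the argument actually yields is $\log\mathcal{N}_\infty\lesssim\tilde C^{3}\log(nsN)/\epsilon^2$, where $\tilde C$ replaces $T_k$ by
$\tilde T_k=(2L_\sigma\mathcal{B}^W_{k1}\mathcal{B}^W_{k2}\mathcal{B}^W_{k3})^{2/3}+(L_\sigma\mathcal{B}^W_{k2}\mathcal{B}^W_{k3})^{2/3}+(L_\sigma\mathcal{B}^W_{k3})^{2/3}$.
The corollary's $T_k$ and its linear dependence on $C_{\mathcal{F}}$ both come from this error.
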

\begin{remark}[Practically Relevant Special Case: Exponentially Decaying Weight Bounds]
	\label{remark:exponential-decay-special-case}
	A widely adopted, easily implementable, and theoretically sufficient condition for $C_{\mathcal{F}} < \infty$ is the exponential decay of layer-wise weight bounds, which aligns with standard weight decay regularization in deep learning practice. Formally, assume there exist constants $C_W > 0$ and $\rho \in (0,1)$ such that for all $k \geq 1$,
	\[
	\max\left\{ \mathcal{B}_{k1}^W, \mathcal{B}_{k2}^W, \mathcal{B}_{k3}^W \right\} \leq C_W \cdot \rho^k.
	\]
	Under this assumption, we have:
	\begin{itemize}
		\item The layer-wise decay factor satisfies $\gamma_j \leq C_\gamma \cdot \rho^{4j/3}$ for some constant $C_\gamma > 0$ independent of $j$, which ensures the infinite product $\prod_{j=k+1}^\infty \gamma_j$ decays super-exponentially fast as $k \to \infty$;
		\item The inner sum term satisfies $T_k \leq C_T \cdot \rho^{k/3}$ for some constant $C_T > 0$ independent of $k$, which guarantees the absolute convergence of $\sum_{k=1}^\infty T_k$.
	\end{itemize}
	This exponential decay condition relaxes the restrictive smallness requirements for the parameters $\{W_i\}_{i=1}^{\infty}$, allows for larger weight norms in shallow layers (consistent with the empirical behavior of pre-trained transformer models), and provides a rigorous theoretical justification for the effectiveness of weight decay in improving the generalization of deep transformers.
\end{remark}

\end{appendices}

%%===========================================================================================%%
%% If you are submitting to one of the Nature Portfolio journals, using the eJP submission   %%
%% system, please include the references within the manuscript file itself. You may do this  %%
%% by copying the reference list from your .bbl file, paste it into the main manuscript .tex %%
%% file, and delete the associated \verb+\bibliography+ commands.                            %%
%%===========================================================================================%%

% common bib file
%% if required, the content of .bbl file can be included here once bbl is generated
%%\input sn-article.bbl

\end{document}